\documentclass[11pt]{article} 

\usepackage{fullpage}
\usepackage{xcolor}
\usepackage[T1]{fontenc}
\usepackage{natbib}
\usepackage{float}
\usepackage{hyperref}
\usepackage{placeins}
\usepackage{url}
\usepackage{microtype}
\usepackage{graphicx}
\usepackage{subfigure}
\usepackage{booktabs}
\usepackage{bbm}
\usepackage{comment}
\usepackage{amsmath}
\usepackage{amssymb}
\usepackage{amsthm}
\usepackage{algorithmic}
\usepackage{algorithm}
\usepackage{xcolor}
\usepackage{xfrac}
\usepackage{mathtools}
\usepackage{physics}
\usepackage{wrapfig}
\usepackage{nicefrac}
\usepackage{tikz}
\newtheorem{theorem}{Theorem}[section]
\newtheorem{lemma}[theorem]{Lemma}
\newtheorem{proposition}[theorem]{Proposition}
\newtheorem{corollary}[theorem]{Corollary}

\newtheorem{assumption}[theorem]{Assumption}

\theoremstyle{definition}
\newtheorem{definition}[theorem]{Definition}

\theoremstyle{remark}

\usepackage{float}

\title{Beyond Overparameterization: Provable Learning of Input-Convex Multi-Layer Polynomial Networks with Active Queries}

\author{
Jinqi Tang\textsuperscript{\textdagger}
\quad
Qian Chen\textsuperscript{\textdagger}
\quad
Shihong Ding\textsuperscript{\textdagger}
\quad
Cong Fang\textsuperscript{\textdagger}
\\[0.6em]
\textsuperscript{\textdagger}Peking University
}
\date{}

\begin{document}

\maketitle
\begin{abstract}
The theoretical understanding of multi-layer neural networks is largely confined to overparameterized settings, which obscure parameter identifiability and incur high sample complexity. Neural tangent kernel (NTK)  provides a general theory for wide networks, but does not offer efficient sample-complexity guarantees. Recent feature-learning results go beyond kernel methods for single-neuron, multi-index, and hierarchical targets. However, the analysis is often restricted to shallow or specific architectures and to the overparameterized regime.
We break this paradigm to achieve parameter-level recovery of deep target networks, albeit by using active data queries. Specifically, we study $L$-layer polynomial networks with even degree-$k$ monomial activations and nonnegative higher-layer weights. This structure makes the target network input-convex, while the optimization landscape remains highly nonconvex with respect to the parameters. Leveraging input convexity and active queries, we propose \textbf{ASPIRE} (\textbf{A}ctive \textbf{S}am\textbf{P}ling for \textbf{I}terative \textbf{R}ecovery via \textbf{E}igendirections), a layerwise sampling-based diagonalization algorithm that recovers all network parameters to $\delta$-accuracy with sample complexity
$
\widetilde O_{k,L}\left(d^{L^2+O(L)}\delta^{-2e}\right)
$
in polynomial time. To our knowledge, this is the \emph{first} parameter-recovery guarantee for deep target networks whose exponent grows only polynomially with depth, as well as the \emph{first} justification for the effectiveness of using high-quality data in neural network training, with a remarkably \emph{exponential} separation.
\end{abstract}

\section{Introduction}
Deep neural networks have achieved remarkable empirical success,
yet their theoretical understanding remains incomplete, in part
because training is highly nonconvex. A major line of theory
studies the overparameterized regime, where this difficulty becomes
more tractable. In particular, the neural tangent kernel (NTK)
framework shows that sufficiently wide networks can be approximated
by kernel methods, yielding analyzable training dynamics and
provable guarantees~\citep{jacot2018neural,lee2019wide,jiang2025understanding,guillen2026finitewidth}.

However, kernel approximations largely suppress feature learning
and can therefore fail to exploit the hierarchical structure of
deep networks~\citep{yang2021feature}. Beyond the kernel regime,
recent theory has established provable learning of low-dimensional
and hierarchical features~
\citep{damian2022representations,wang2024hierarchical,fu2025hierarchical,ren2026hierarchy}. However, these analyses are often restricted to shallow or specific network architectures and to the overparameterized regime. Under overparameterization, the model may lose the exact structure of the target, often leading to high sample complexity by ignoring inherent structures, as well as a lack of robustness under diverse environments.  This makes exact-parameterization learning particularly appealing. However, there are technical barriers.  Existing works typically can only explore the low-dimensional structures in the first or second layer, whereas the structures in the remaining layers cannot be explored. This is because that the input distributions for the rest layers are often much more complex (see more discussion in Section~\ref{sec:challengens}) and the core technique, Stein's Lemma~\citep{stein1981estimation,yang2017estimating} cannot be applyied. Thus, it remains widely open how to learn multi-layer networks under exact parameterization, even when the activation function is polynomial and the input data distribution is designed.

In recent years, the dominant success of large language models has witnessed the effectiveness of using useful data for training, as data quality, selection, and organization can substantially affect model training and downstream performance \citep{gu2025data,peng2025dataman,sachdeva2026dataefficient,subramanyam2026scaling}, with data access itself treated as part of the learning algorithm. Theoretically, it would be of great interest to provide separation results demonstrating the efficiency of data for neural network training. We study a strong oracle model where the learner can query the target function at arbitrary inputs. We further consider the target drawn from a depth-$L$  polynomial network with non-increasing widths, monomial activations, and nonnegative higher-layer weights. These assumptions endow the target network with a convex structure \citep{amos2017input}, allowing us to explore the strength of data. Input convex networks have been extensively used in scientific problems, where neural networks provide a flexible alternative to traditional methods for approximating operators. Input-convex neural networks offer a principled way to approximate convex functions, with wide-ranging applications spanning optimal transport \citep{makkuva2020optimal}, single-cell perturbation modeling \citep{bunne2023cellot}, model predictive control \citep{wang2024icnn}, and power-system reliability \citep{christianson2025contingency}. 

The input convexity does not make the training problem convex \citep{amos2017input}, it preserves the core optimization difficulty of deep learning. Instead, we propose a layerwise sampling-based diagonalization algorithm that succesfully recovers all parameters to \(\delta\)-accuracy with sample complexity
$\widetilde O_{k,L}\left(d^{L^2+O(L)}\delta^{-2e}\right) $
in polynomial time. This substantially improves the dimension dependence over previous provable approaches for this multi-layer setting. Treating the network as an unstructured degree-\(k^{L-1}\) polynomial leads to an \(O(d^{k^{L-1}})\) sample complexity through direct polynomial interpolation. Similarly, using kernel methods requires \(d^{k^{L-1} -o(1)}\) samples in the non-degenerate case \citep{ghorbani2020linearized}. Many existing feature-learning guarantees rely on a small intrinsic
dimension \(r\ll d\). When \(r\asymp d\), however, dimension reduction
alone does not remove the exponential-in-\(L\) growth of the exponent
of \(d\) in the sample complexity.

The main idea of our proposed algorithm is to construct two second-order moment matrices from active samples drawn uniformly from chosen convex sublevel sets. Under this sampling distribution, the even structure of the network induces coordinatewise sign symmetry, which diagonalizes the corresponding latent moments and allows the joint spectral structure of the two matrices to reveal the hidden directions.
This enables layerwise recovery through spectral decomposition, with a two-Hessian variant for the final hidden layer and output weights. We further establish finite-sample error bounds and validate the theory numerically. 

As a result, our theory breaks the \emph{doubly exponential}-in-depth sample-complexity barrier of the overparameterized paradigm, reducing the exponent from $k^{L-1}$ to $L^2+O(L)$. It also demonstrates that exact parameter recovery of certain deep networks is provably achievable  under active quesries and provides the \emph{first} justification for the effectiveness of
using high-quality data in neural network training with an  exponential
separation.

\section{Related Work}

\paragraph{NTK and Feature Learning.}
Foundational work established the NTK and lazy-training perspective for sufficiently wide neural networks \citep{jacot2018neural,lee2019wide,arora2019exact,du2019gradient,allenzhu2019convergence,chizat2019lazy}. Subsequent work has further developed this framework by studying initialization, kernel evolution, and finite-width corrections \citep{chen2024impacts,jiang2025understanding,guillen2026finitewidth}.
Beyond this regime, provable feature learning encompasses
single-neuron and multi-index models
\citep{yehudai2020single,wu2022single,damian2022representations,mousavi2023lowdim},
as well as nonlinear and hierarchical features in deeper
compositional models
\citep{nichani2023provable,wang2024hierarchical,fu2025hierarchical,tabanelli2026compositional,ren2026hierarchy,ren2026separation}.
These results primarily concern prediction, representations, or
hidden subspaces. We take the next step
toward estimating the target network itself: our algorithm is able to recover all parameters to arbitrarily small error.

\paragraph{Parameter Recovery in Neural Networks.}
Prior work establishes parameter identifiability up to inherent
symmetries for deep neural networks under suitable
conditions \citep{fefferman1994reconstructing,bonapellissier2023parameter}.
Turning these structural guarantees into provably efficient
reconstruction algorithms, however, is substantially more demanding.
For ReLU networks, \citet{rolnick2020reverse} exploit the geometry
of activation boundaries to reconstruct architectures and parameters,
although their analysis does not establish provably efficient
end-to-end recovery.
For polynomial networks, tensor-decomposition arguments establish
generic finite identifiability of deep architectures under suitable
conditions \citep{usevich2025identifiability}.
Related work provides efficient black-box reconstruction of sums
of powers of low-degree polynomials under suitable degree and
non-degeneracy conditions \citep{garg2020learning}.
For deep polynomial networks, existing results either establish identifiability without an efficient recovery procedure, or efficiently recover polynomial components without exploiting their shared hidden-layer structure.
Our algorithm bridges both gaps by exploiting this shared structure iteratively: each recovered layer unlocks query access to the next, yielding end-to-end recovery of all network parameters with rigorous complexity guarantees.

\paragraph{Input-Convex Architectures and Sampling Methods.}
A standard way to guarantee input convexity is to impose
architectural conditions such as convex nondecreasing activations
and nonnegative higher-layer weights \citep{amos2017input}.
Such architectures have found applications in optimal transport,
computational biology, chemical engineering, and energy systems
\citep{makkuva2020optimal,bunne2023cellot,wang2024icnn,christianson2025contingency}. In our setting, convexity permits sampling from convex body using Hit-and-Run, whose polynomial-time mixing is
classical \citep{lovasz1999hit,lovasz2006hitandrun}.
Recent analyses sharpen its guarantees through localization and
Poincar\'e-based bounds
\citep{chen2026hitandrun,zhang2026hitandrun,kook2026spectral}.
Our noisy Hit-and-Run analysis further makes this sampling step \emph{robust} to
the approximation errors introduced by prefix inversion, enabling
its repeated use throughout parameter recovery.

\section{Problem Setup and Preliminaries}
\label{sec:problem-prelim}
Consider a depth-\(L\) fully connected multi-layer polynomial neural
network with architecture
\[
    d=r_0\ge r=r_1\ge r_2\ge\cdots\ge r_{L-1}>r_L=1,
\]
no biases, and monomial activation \(\sigma(t)=t^k\).
We focus on non-increasing-width networks with at least two neurons per hidden layer, as it's a common structural condition for ensuring parameter identifiability \citep{phuong2020functional,usevich2025identifiability}. The bias-free formulation yields a homogeneous polynomial network, a simplified but canonical setting widely studied in the theory of polynomial neural networks \citep{kileel2019expressive,kubjas2024geometry,finkel2025activation,usevich2025identifiability}.
Let weight matrices
\(
    \mathbf W_\ell\in\mathbb R^{r_{\ell-1}\times r_\ell},
    1\le \ell\le L-1,
\)
and 
\(
    \mathbf W_L:=\mathbf a
    \in\mathbb R^{r_{L-1}\times1}.
\)
Define
\[
    \mathbf h_1(\mathbf x)=\mathbf W_1^\top\mathbf x,
    \quad 
    \mathbf h_\ell(\mathbf x)
    =
    \mathbf W_\ell^\top
    \sigma(\mathbf h_{\ell-1}(\mathbf x)),
    \quad 
    2\le\ell\le L-1,
\]
where the activation is applied coordinatewise. The target
function is
\(
    f^\star(\mathbf x)
    =
    \mathbf a^\top
    \sigma(\mathbf h_{L-1}(\mathbf x)).
\)
We are given only real function-value access to \(f^\star\):
for any \(\mathbf x\in\mathbb R^d\), one query returns
\(f^\star(\mathbf x)\). Our goal is to \emph{recover}
\(\mathbf W_1,\ldots,\mathbf W_{L-1}\) and \(\mathbf a\)
from such queries, i.e., using a network with the identical architecture to learn every parameter of the target function. To describe multi-layer networks more conveniently, we introduce prefix and suffix networks as follows. 

\begin{definition}[Prefix and suffix networks]
\label{def:prefix-suffix}
For \(1\le\ell\le L-1\), define the prefix representation
\[
    \boldsymbol{\phi}_\ell(\mathbf x)
    :=
    \begin{cases}
        \mathbf x,
        & \ell=1,\\[1mm]
        \sigma(\mathbf h_{\ell-1}(\mathbf x)),
        & 2\le\ell\le L-1.
    \end{cases}
\]
We further define the suffix polynomials recursively by
\(
    h^{(L-1)}(\mathbf u)
    :=
    \mathbf a^\top\mathbf u^{\odot k},
\)
and, for \(1\le\ell\le L-2\),
\(
    h^{(\ell)}(\mathbf u)
    :=
    h^{(\ell+1)}
    \left(
        \mathbf W_{\ell+1}^\top
        \mathbf u^{\odot k}
    \right).
\)
Finally, define
\(
    F_\ell(\mathbf y)
    :=
    h^{(\ell)}(\mathbf W_\ell^\top\mathbf y),
    \mathbf y\in\mathbb R^{r_{\ell-1}}.
\)
Then, for every \(1\le\ell\le L-1\),
\(
    f^\star(\mathbf x)
    =
    F_\ell(\boldsymbol{\phi}_\ell(\mathbf x)).
\)
\end{definition}


\subsection{Assumptions}
\label{subsec:assumptions}
In this section, we state the assumptions needed for our analysis.

\begin{assumption}[Evenness and nonnegativity]
\label{ass:input-convexity}
The activation degree \(k\ge 4\) is an even integer. Moreover, all higher-layer weight matrices are entrywise nonnegative:
$
    \mathbf W_\ell \ge \mathbf 0,
    2\le \ell\le L.
$
\end{assumption}

The even monomial activation together with nonnegative higher-layer
weights ensures that the network is
input-convex. Convex activation and nonnegativity is a standard mechanism for enforcing input
convexity in neural networks \citep{amos2017input,makkuva2020optimal,bunne2023cellot,wang2024icnn,christianson2025contingency}.We further exclude \(k=2\), since the final suffix then reduces to a quadratic form whose decomposition into the final-hidden-layer directions and output weights is generally non-unique \citep{ottaviani2025generalized,usevich2025identifiability}.

\begin{assumption}[Normalization]
\label{ass:normalization}
We normalize the first-layer columns by
\(
    \|\mathbf w_{1,j}\|_2=1,
    1\le j\le r.
\)
For every \(2\le\ell\le L\), we impose
\(
    \mathbf1_{r_{\ell-1}}^\top\mathbf w_{\ell,j}=1,
    1\le j\le r_\ell.
\)
\end{assumption}

Assumption~\ref{ass:normalization} serves two purposes. For the hidden layers, it selects a canonical representative from the layerwise rescaling equivalence class induced by homogeneity \citep{kileel2019expressive,usevich2025identifiability}. The normalization of the output weights additionally fixes the overall scale of the realized function, in line with standard scale-normalization conventions in neural-network theory \citep{bach2017breaking,yehudai2020single,damian2022representations}. Together with Assumption~\ref{ass:input-convexity}, these normalizations keep the network at a dimension-independent scale under Gaussian inputs:
$$
    \mathbb E_{\mathbf{x}\sim\mathcal N(\mathbf0,\mathbf I_d)}
    \bigl[f^\star(\mathbf{x})^2\bigr]
    =
    \Theta_{k,L}(1).
$$

\begin{assumption}[Parameter conditioning and relative row balance]
\label{ass:conditioning}
There exist constants \(\kappa_0,\mu>0\) such that each \(\mathbf W_\ell\), \(1\le\ell\le L\), has full column rank, and
\[
    \operatorname{cond}(\mathbf W_\ell)\le\kappa_0
    \quad\text{for all } 1\le\ell\le L-1,
    \qquad
    \min_{1\le i\le r_{\ell-1}}
    \bigl(\mathbf W_\ell\mathbf 1_{r_\ell}\bigr)_i
    \ge \mu\,\frac{r_\ell}{r_{\ell-1}}
    \quad\text{for all } 2\le\ell\le L.
\]
\end{assumption}

Since column normalization fixes the average row sum at \(r_\ell/r_{\ell-1}\), the relative row-balance condition simply prevents any hidden neuron from having vanishingly small aggregate influence on the next layer. The condition-number bound further prevents hidden directions
from becoming nearly linearly dependent. Comparable quantitative
non-degeneracy assumptions are standard in provable feature-learning analyses
\citep{damian2022representations,mousavi2023lowdim,
nichani2023provable,wang2024hierarchical,fu2025hierarchical}.

\begin{assumption}[Uniform relative spectral gap]
\label{ass:spectral-gap}
For each \(1\le\ell\le L-2\), define the latent sublevel set
\(
    H_\ell
    :=
    \left\{
        \mathbf u\in\mathbb R^{r_\ell}:
        h^{(\ell)}(\mathbf u)\le1
    \right\},
\)
and let
\(
    \mathbf U_\ell
    \sim
    \operatorname{Unif}(H_\ell).
\)
For \(1\le i\le r_\ell\), define
\(
    \theta_{\ell,i}
    :=
    \mathbb E[U_{\ell,i}^2]\,
    \mathbb E
    \left[
        \left(
            \partial_i h^{(\ell)}(\mathbf U_\ell)
        \right)^2
    \right].
\)
We assume that there exists \(\gamma\ge1\) such that
\[
    \dfrac{
        \displaystyle
        \min_{i\neq j}
        |\theta_{\ell,i}-\theta_{\ell,j}|
    }{
        \displaystyle
        \max_i\theta_{\ell,i}
    }
    \ge
    \gamma^{-1},
    \qquad
    1\le\ell\le L-2.
\]
\end{assumption}

Heuristically, it requires sufficiently strong mixing among the neurons in the network, thereby ruling out certain non-identifiable cases.
If the normalized spectral signatures are in
random position, their minimum spacing is typically of order
\(r_\ell^{-2}\). This suggests that it is reasonable to assume that \(\gamma\) scales polynomially with \(r\). Thus, in this sense, it does not affect the dominant term in the complexity bound of Theorem~\ref{thm:main}. In Appendix~\ref{sec:weight-based-assumptions}, we will construct a concrete example that satisfies all of these assumptions, demonstrating that they are non-vacuous.

\subsection{Challenges}\label{sec:challengens}
\label{subsec:challenges}
Despite the structural assumptions above, two fundamental challenges remain: the optimization difficulties of direct gradient methods and the limitations of overparameterized approaches.

\paragraph{Difficulties of Direct Gradient Methods.}
Although the target network is convex in its input, direct fitting in parameter space remains highly nonconvex. We show that, under our assumptions, there still exist local minima and high-order saddle points in the learning objective, which implies that local search methods such as gradient descent are inefficient.


\begin{proposition}[Local minima and fourth-order saddle barriers in direct gradient-based fitting]
\label{prop:fourth-order-barrier}
Consider realizable matched two-layer teacher--student networks with quartic activation
$\sigma(t)=t^4$ trained under empirical squared loss.
There exist instances satisfying all applicable assumptions in
Subsection~\ref{subsec:assumptions} for which direct parameter fitting exhibits,
respectively, a non-global local minimum and a non-optimal stationary point
whose first descent appears only at fourth order.
\end{proposition}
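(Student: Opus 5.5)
We prove the statement by exhibiting two explicit small instances. In both we take \(L=2\), \(k=4\), \(d=r=2\), so that \(f(\mathbf x)=\sum_{j}a_j(\mathbf w_j^\top\mathbf x)^4\) with \(\mathbf a\ge\mathbf 0\), \(\mathbf 1^\top\mathbf a=1\) and \(\|\mathbf w_j\|_2=1\). We fit by the empirical squared loss \(\widehat{\mathcal L}(\mathbf W,\mathbf a)=\frac1n\sum_i(f_{\mathbf W,\mathbf a}(\mathbf x_i)-f^\star(\mathbf x_i))^2\), working over the student parameters with the normalizations of Assumption~\ref{ass:normalization} imposed as constraints. This keeps the instances inside the assumed class. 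As the teacher we use \(\mathbf W^\star=\mathbf I_2\) and \(\mathbf a^\star=(1/2,1/2)\). Then \(\operatorname{cond}=1\), the row sums of \(\mathbf a^\star\) equal \(1/2\ge\mu\cdot\frac12\), and Assumption~\ref{ass:spectral-gap} is vacuous since \(L-2=0\). For the samples we pick a finite design \(\{\mathbf x_i\}\) that is unisolvent for binary quartic forms (at least five generic points). Then \(\widehat{\mathcal L}=0\) exactly when the student quartic form equals \(f^\star\), and the landscape is governed by the quadratic form \(\|p-p^\star\|_n^2\) on the five-dimensional space of binary quartics, composed with the map \((\mathbf W,\mathbf a)\mapsto p\).

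\textbf{Saddle with fourth-order descent.} We parametrize \(\mathbf w_j=(\cos t_j,\sin t_j)\), so the student is \(p=\sum_j a_j(\cos t_j x_1+\sin t_j x_2)^4\), and take the stationary point where both student neurons collapse onto one direction: \(t_1=t_2=0\), \(\mathbf a=(1/2,1/2)\). The model is then \(x_1^4\). The first step is to check stationarity. The derivative in \(t_j\) is \(4a_j x_1^3x_2\), and \(p-p^\star=\tfrac12(x_1^4-x_2^4)\). If the design is symmetric under \((x_1,x_2)\mapsto(x_1,-x_2)\), then \(x_1^3x_2\) is odd while \(p-p^\star\) is even, so the empirical inner product vanishes. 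The \(\mathbf a\)-gradient restricted to the simplex is proportional to \(\langle x_1^4-x_1^4,\cdot\rangle=0\). So the point is stationary. Next we expand the loss along \(t_1=s,\ t_2=-s\). By symmetry the odd terms cancel, and the change in \(p\) is \(\tfrac12[(\cos s\,x_1+\sin s\,x_2)^4+(\cos s\,x_1-\sin s\,x_2)^4]-x_1^4=-6s^2x_1^4+6s^2x_1^2x_2^2+O(s^4)\cdot(\ldots)\). The loss change is \(2\langle p-p^\star,\Delta p\rangle+\|\Delta p\|^2\). We then choose the design weights so that \(\langle x_1^4-x_2^4,\,-x_1^4+x_1^2x_2^2\rangle_n=0\). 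This is possible by adjusting the relative mass of a few points, for instance mixing axis points and diagonal points. With that choice the \(s^2\) term vanishes and the \(s^4\) term is \(\|\Delta p_2\|^2+2\langle p-p^\star,\Delta p_4\rangle\). We then tune the design a second time so that this \(s^4\) coefficient is negative along the chosen curve, while the Hessian stays PSD in all other directions, namely single-neuron rotations and \(\mathbf a\)-perturbations. The result is a stationary non-optimal point that is second- and third-order stationary but has a fourth-order descent direction.

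\textbf{Non-global local minimum.} We use the same parametrization and look for a point with \(t_1\ne t_2\), \(p\ne p^\star\), and a positive-definite reduced Hessian in \((t_1,t_2,a_1)\). Our candidate is \(t_1=\pi/4,\ t_2=-\pi/4\) by symmetry, with \(a\) optimized. The fitted form is then \(\tfrac18(x_1+x_2)^4+\tfrac18(x_1-x_2)^4\) projected onto the teacher under \(\|\cdot\|_n\). Gradient vanishing in \(t_1-t_2\) follows from the reflection symmetry \(x_2\mapsto -x_2\), which swaps the neurons. Gradient vanishing in \(t_1+t_2\) follows from the symmetry \(x_1\leftrightarrow x_2\). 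The remaining task is to choose the design weights so that the \(3\times3\) Hessian there is positive definite. This gives a strict local minimum with positive loss, and realizability gives global minimum \(0\).

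The main obstacle is the explicit choice of design. We must simultaneously enforce the symmetries, which give stationarity, and sign conditions on the second- and fourth-order coefficients. We would first try a weighted symmetric design on the eight points \(\{\pm e_1,\pm e_2,(\pm1,\pm1)/\sqrt2\}\) with two free weights, and verify the sign conditions by direct symbolic computation.
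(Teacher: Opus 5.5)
\textbf{The gap is in the local-minimum half.} Everything there rests on the Hessian at $(t_1,t_2,a_1)=(\pi/4,-\pi/4,\tfrac12)$ being positive definite. You defer this to ``choosing the design weights'' and never establish it, and the design you propose fails for every choice of the two weights.

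Write design points as $\rho(\cos\phi,\sin\phi)$. The candidate residual is $p-p^\star=-\tfrac14\rho^4\cos4\phi$. Rotate both neurons by a common angle $s$ with $\mathbf a$ fixed, and assume the design is symmetric under $x_2\mapsto-x_2$. Then $\widehat{\mathcal L}(s)=\tfrac1{16}(1-y)\bigl[C(1-y)+Sy\bigr]$, where $y=\sin^22s$ and $C,S$ are the empirical averages of $\rho^8\cos^24\phi$ and $\rho^8\sin^24\phi$. Your candidate is just the teacher rotated by $\pi/4$, so this path reaches zero loss at $s=\pi/4$. Along it $\widehat{\mathcal L}''(0)=\tfrac12(S-2C)$, so a local minimum requires $S>2C$.

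On $\{\pm e_1,\pm e_2,(\pm1,\pm1)/\sqrt2\}$ every point has $\sin4\phi=0$. Hence $S=0$, the curvature along $t_1=t_2$ is $-C<0$, and the candidate is a saddle. This design also spans only four projective directions, so it is not unisolvent: $x_1x_2(x_1^2-x_2^2)$ vanishes on it.

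The missing idea is to weight heavily the samples where the candidate already fits the teacher exactly while the residual Jacobian stays nondegenerate. Here that is the orbit of $\phi=\pi/8$, where $\cos4\phi=0$ but $\sin4\phi=\pm1$. If the per-point weight on that orbit exceeds the weight on the axis orbit, then the $(t_1,t_2)$-block is positive definite, the $a_1$-entry is positive, and the cross terms vanish by the dihedral symmetry. This is exactly the paper's device. It uses a few misfit samples whose gradient contributions cancel, plus exact-fit anchor samples with invertible residual Jacobian replicated $M$ times, so that $M\mathbf J^\top\mathbf J$ dominates the residual curvature.

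\textbf{The saddle half goes through after one correction.} Along $t_1=s$, $t_2=-s$ the $s^2$ term of $\Delta p$ is $s^2(-2x_1^4+6x_1^2x_2^2)$, not $s^2(-6x_1^4+6x_1^2x_2^2)$. So the cancellation condition is $\langle x_1^4-x_2^4,\,x_1^4-3x_1^2x_2^2\rangle_n=0$. Under it the Hessian is positive semidefinite, with kernel spanned by $t_1-t_2$ and $a_1$.

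On that kernel, write $p=\tfrac12(A+B)+b(A-B)$, with $A,B$ the two neuron powers and $b=a_1-\tfrac12$. The loss is then $\|\tfrac12(A+B)-p^\star\|_n^2+b^2\|A-B\|_n^2$. The first term is even in $s$ and the second is $O(b^2s^2)$, so no cubic terms arise. The $s^4$ coefficient reduces to $\|{-2x_1^4+6x_1^2x_2^2}\|_n^2-\|x_1^4-x_2^4\|_n^2$. Heavy weight on $\pm e_2$, where $x_1=0$, makes this negative without disturbing the other conditions.

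The paper's saddle avoids this tuning altogether. Its second student neuron is orthogonal to both samples, so the quartic activation kills the first three derivatives of the misfit residual automatically.
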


For the quartic teacher \(f^\star(\mathbf{x})=\frac12(\mathbf{e}_1^\top\mathbf{x})^4+\frac12(\mathbf{e}_3^\top\mathbf{x})^4\), we construct one instance with a strict non-global local minimum and another with a fourth-order saddle. Thus direct fitting can face both bad local minima and higher-order saddles; more generally, finding a fourth-order local minimum is NP-hard~\citep{anandkumar2016efficient}. See Appendix~\ref{sec:direct-fitting-barriers} for details.

\paragraph{Limitations of Overparameterization.}
Overparameterized methods are not naturally suited to exact parameter recovery: while they may achieve good prediction, the enlarged parameterization obscures the underlying target parameters. Moreover, fixed-kernel methods face a severe sample-complexity barrier. Building on the lower bound of \citet[Theorem~3]{ghorbani2020linearized} for rotationally invariant kernels, we show that the infinite-width NTK of our setting requires \(d^{k^{L-1}-o(1)}\) samples.

\begin{proposition}[Barrier for the NTK]
\label{prop:ntk-gaussian-lower-bound}
For every fixed \(\eta>0\), learning our target class with the infinite-width NTK to constant Gaussian population error requires \(\Omega_{k,L,\eta}(d^{k^{L-1}-\eta})\) samples.
\end{proposition}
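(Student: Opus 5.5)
We would reduce Proposition~\ref{prop:ntk-gaussian-lower-bound} to \citet[Theorem~3]{ghorbani2020linearized}. That result says the following. Let $K$ be a rotationally invariant kernel on the sphere (or under Gaussian inputs), and let $n\le d^{D-\eta}$ with $D$ an integer. Then, with high probability, any predictor in the span of $\{K(\cdot,\mathbf x_i)\}_{i\le n}$ (in particular kernel ridge regression) essentially captures only the projection of the target onto polynomials of degree $<D$. Its excess risk is therefore at least $(1-o_d(1))\|\mathsf P_{\ge D} f^\star\|_{L^2}^2$.

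The plan has three steps.

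\textbf{Step 1: the NTK is rotationally invariant.} For a fully connected bias-free network with i.i.d.\ rotationally invariant first-layer initialization, the infinite-width NTK has the form $K(\mathbf x,\mathbf x')=\kappa(\|\mathbf x\|,\|\mathbf x'\|,\langle\mathbf x,\mathbf x'\rangle)$. The kernel is computed recursively through dual activations of $\sigma(t)=t^k$, which act coordinatewise on covariance entries. Higher-layer contributions preserve this invariance because they depend on $\mathbf x,\mathbf x'$ only through the Gram entries. Under Gaussian inputs the norms concentrate, so we can invoke the Gaussian/sphere version of the theorem directly, handling the radial part as in Ghorbani et al.

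\textbf{Step 2: an admissible target with a large top-degree component.} We exhibit a target in our class whose degree-$D:=k^{L-1}$ component has $\Theta_{k,L}(1)$ mass. Note that $f^\star$ is a homogeneous polynomial of degree $k^{L-1}$. Its Hermite expansion contains components of degree $k^{L-1},k^{L-1}-2,\dots$, so it suffices to lower bound the norm of the top-degree Hermite component. That component equals the homogeneous polynomial $f^\star$ itself, up to lower-order corrections. We take a concrete instance satisfying Assumptions~\ref{ass:input-convexity}--\ref{ass:spectral-gap}, for example the example of Appendix~\ref{sec:weight-based-assumptions}. For this instance we compute $\|\mathsf P_{D} f^\star\|^2 = \frac{1}{D!}\|\nabla^{D} f^\star\|_F^2$, which is the symmetric-tensor Frobenius norm. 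The nonnegativity of the higher-layer weights forces the coefficients to add without cancellation, and the normalization $\mathbb E[f^{\star2}]=\Theta(1)$ then gives a constant lower bound on the top component. The degree-$D$ component is orthogonal to all lower-degree polynomials, so the risk is at least a constant $c_{k,L}>0$.

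\textbf{Step 3: conclusion.} Applying the theorem with $n\le d^{D-\eta}$ gives population error at least $c_{k,L}-o_d(1)$. This is bounded away from zero, which yields the claimed $\Omega_{k,L,\eta}(d^{k^{L-1}-\eta})$ sample requirement.

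The main obstacle is Step~2 together with verifying the hypotheses of the cited theorem. The theorem needs a quantitative lower bound on the degree-$\ge D$ projection that is uniform in $d$. It also needs the kernel's Gegenbauer/Hermite coefficients to be nonzero at the relevant degrees, since the theorem's conditions involve the kernel's degree-$\ell$ coefficients. I would first try a target with widths fixed independently of $d$ except for the first layer, or even $\mathbf W_1$ with orthonormal columns. This makes the tensor $\nabla^D f^\star$ explicit and reduces the constant-mass check to a finite computation. For the kernel conditions, I would expand the dual activation of $t^k$ as a polynomial in the correlation and check the positivity of its coefficients.
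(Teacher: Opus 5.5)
Your skeleton matches the paper's: rotation invariance of the NTK, an admissible target with nonvanishing degree-$q$ energy ($q=k^{L-1}$), then Theorem~3 of Ghorbani et al.\ with $m=q-1$. However, two steps fail as written.

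\textbf{Step 2.} Nonnegativity together with $\mathbb E[f^{\star2}]=\Theta(1)$ does not give a constant top-degree component, because the coefficient mass can be spread over many monomials. Take the family of Appendix~\ref{sec:weight-based-assumptions} with $\mathbf W_1=(\mathbf e_1,\ldots,\mathbf e_r)$ and $r\to\infty$. Then $f^\star$ is a small perturbation of $\frac1r\sum_{i\le r}x_i^{q}$, whose top Hermite component has squared norm $q!/r\to0$, while $\mathbb E[f^{\star2}]$ stays of order one. So using that example with $r$ tied to $d$ defeats the argument.

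Your fallback is the correct fix, and it is what the paper does. Fix all hidden widths: $r=3$, $\mathbf W_1=(\mathbf e_1,\mathbf e_2,\mathbf e_3)$, $\mathbf a=\frac13\mathbf 1_3$, and a fixed $3\times3$ matrix certified by Proposition~\ref{prop:weight-based-assumptions}. Then $f_d^\star(\mathbf x)=g(x_1,x_2,x_3)$ with $g$ independent of $d$. Moreover $g(1,0,0)>0$ via the path through the first neuron, and $\sup_d\|f_d^\star\|_{L^2}<\infty$. The uniform norm bound is needed because the theorem's error term is $\epsilon\|f_d\|\,\|P_{>m}f_d\|$, not $o(1)\|P_{>m}f_d\|^2$.

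Also, the theorem is stated on $S^{d-1}(\sqrt d)$ with spherical-harmonic projections. What must be shown is therefore $\liminf_d\|P_qf_d^\star\|_{L^2(\nu_d)}^2>0$, not a Gaussian Hermite norm. The paper obtains $\liminf_d\|P_qf_d^\star\|^2\ge q!\,g(1,0,0)^2$ by pairing with the normalized zonal harmonic in direction $\mathbf e_1$ and using the Gegenbauer-to-Hermite limit.

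\textbf{Step 1.} ``The norms concentrate'' does not transfer the spherical result to Gaussian data. Here $K(\mathbf x,\mathbf z)=\|\mathbf x\|^q\|\mathbf z\|^q\psi\bigl(\langle\mathbf x,\mathbf z\rangle/(\|\mathbf x\|\|\mathbf z\|)\bigr)$ is not an inner-product kernel, and both the sample points and the test measure change. The missing idea is exact homogeneity.
\begin{itemize}
\item The NTK satisfies $K(s\mathbf x,t\mathbf z)=s^qt^qK(\mathbf x,\mathbf z)$ for $s,t>0$.
\item With $R_i=\|\mathbf X_i\|/\sqrt d$ and $\mathbf U_i=\mathbf X_i/R_i$, this gives $K(\cdot,\mathbf X_i)=R_i^qK(\cdot,\mathbf U_i)$, so Gaussian and spherical samples span exactly the same kernel functions.
\item For $q$-homogeneous functions, the Gaussian risk equals $\mathbb E[R^{2q}]\ge1$ times the spherical risk.
\end{itemize}
The lower bound then transfers pathwise, simultaneously for every coefficient vector.

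\textbf{Kernel coefficients.} You do not need positivity of the kernel's Gegenbauer coefficients. The lower-bound half of the theorem holds for any fixed rotationally invariant kernel with square-integrable spherical sections. That check would in fact fail: since $k$ is even, the NTK is even in each argument, so every odd-degree coefficient vanishes.
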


The proof, given in Appendix~\ref{subsec:ntk-gaussian-lower-bound}, constructs an admissible target with a nonvanishing degree-\(k^{L-1}\) spherical-harmonic component, applies the rotationally invariant kernel lower bound on the sphere, and transfers it to Gaussian inputs using homogeneity.


Recent feature-learning methods improve this scaling by using low-dimensional or hierarchical structure \citep{damian2022representations,wang2024hierarchical,fu2025hierarchical}, but they do not remove the exponential dependence on depth in the exponent of $d$. In their analysis, by assuming a low-dimensional structure of the first layer, they extract the subspace of the first-layer weights, typically using Stein's Lemma~\citep{stein1981estimation}. However, the lifted space loses information needed to identify the original hidden directions. As only the span of the weights can be discovered, one needs to overparameterize the weights in the first layer to guarantee approximation. This makes the input distributions for the subsequent layers complex, as the input often lies on a low-dimensional manifold where Stein's Lemma can no longer be applied to further extract the span of the weights for the remaining dimensions.

\section{Algorithm}
\label{sec:algorithm-main}

In this section, we present our recovery algorithm and state our main results. The algorithm recovers each non-final hidden layer via column-space recovery followed by ASPIRE, and then recovers the final hidden layer and output weights via two-Hessian decomposition. We begin with some useful oracles derived from function-value queries, which serve as building blocks for both stages. 

\subsection{Derived Oracles from Function-Value Queries}
\label{subsec:trick}

Since \(f^\star\) is a homogeneous polynomial, function-value access yields several useful derived oracles.

\paragraph{Directional Derivatives.}
We first describe how to recover directional derivatives. For a degree-\(q\) polynomial \(p\), fix a point \(\mathbf x\), a direction \(\mathbf v\), and an order \(0\le m\le q\). The restriction \(\psi(t):=p(\mathbf x+t\mathbf v)\) is univariate of degree at most \(q\), so its values at any \(q+1\) distinct nodes \(t_0,\ldots,t_q\) determine \(\psi\) exactly, and hence its \(m\)-th derivative \(\psi^{(m)}(0)=D_{\mathbf v}^m p(\mathbf x)\). Thus each directional derivative is recovered exactly using \(q+1\) function-value queries; in particular, gradients and Hessians follow from first- and second-order directional derivatives.

\paragraph{Complex Queries.}
Function-value access at real inputs can also be used to evaluate the complexification of a homogeneous polynomial. Let \(p\) be homogeneous of degree \(q\), and consider a complex point \(\mathbf z=\mathbf u+i\mathbf v\). Choose an integer \(N>2q\) and let \(\theta_j=2\pi j/N\), \(j=0,\ldots,N-1\). The degree-\(q\) Fourier coefficient of these evaluations then recovers the complex value exactly:
$$
p_{\mathbb C}(\mathbf z)
=
\dfrac{2^q}{N}
\sum_{j=0}^{N-1}
e^{iq\theta_j}p(\mathbf u\cos\theta_j+\mathbf v\sin\theta_j).
$$
Thus, a complex query to \(p_{\mathbb C}\) can be simulated using \(O(q)\) real function-value queries.

\paragraph{Suffix Oracles.}
Given the preceding layers, we can further simulate direct query access to the suffix polynomial $F_\ell$. Fix $2\le\ell\le L-1$, suppose that $\mathbf W_1,\ldots,\mathbf W_{\ell-1}$ are known, and consider a desired suffix query $\mathbf y\in\mathbb R^{r_{\ell-1}}$. Starting from $\mathbf z_{\ell-1}=\mathbf y$, we invert the prefix recursively from layer $\ell-1$ down to layer $1$. At layer $j$, choose coordinatewise principal $k$-th roots $\boldsymbol\xi_j\in\mathbb C^{r_j}$ satisfying $\boldsymbol\xi_j^{\odot k}=\mathbf z_j$, and then set $\mathbf z_{j-1}=\mathbf W_j(\mathbf W_j^\top\mathbf W_j)^{-1}\boldsymbol\xi_j$. Because $\mathbf W_j$ has full column rank, this choice satisfies $\mathbf W_j^\top\mathbf z_{j-1}=\boldsymbol\xi_j$, and hence $(\mathbf W_j^\top\mathbf z_{j-1})^{\odot k}=\mathbf z_j$. Repeating this recursion produces a complex input $\mathbf z_0\in\mathbb C^d$ whose forward propagation through the first $\ell-1$ layers reaches exactly $\mathbf y$. Combining this prefix inversion with the preceding complex-evaluation procedure, we can evaluate $f^\star_{\mathbb C}(\mathbf z_0)=F_\ell(\mathbf y)$ using only real function-value queries to $f^\star$. 

\subsection{Layerwise Recovery}

We now turn to the core layerwise recovery procedure. Assuming the preceding weights \(\mathbf W_1,\ldots,\mathbf W_{\ell-1}\) have been recovered sufficiently accurately, we recover \(\mathbf W_\ell\) in two stages: first estimate its column space from gradient information, and then apply \textsc{ASPIRE} within the recovered subspace to identify the individual hidden directions.
\paragraph{Stage I: Column-Space Recovery.}
At suitable query points, the chain rule yields suffix gradients spanning \(\operatorname{col}(\mathbf W_\ell)\). We choose query points \(\mathbf x_1,\ldots,\mathbf x_m\) such that the recovered prefix Jacobians \(\widehat{\mathbf J}_{\ell,i}:=D\widehat{\boldsymbol{\phi}}_\ell(\mathbf x_i)\) have full row rank. For each query point \(\mathbf x_i\), we estimate the suffix gradient without explicitly evaluating the full gradient \(\nabla f^\star(\mathbf x_i)\). For every \(j=1,\ldots,r_{\ell-1}\), set \(\mathbf v_{i,j}:=\widehat{\mathbf J}_{\ell,i}^\top\mathbf e_j\) and recover the directional derivative \(b_{i,j}:=D_{\mathbf v_{i,j}}f^\star(\mathbf x_i)\) from function-value queries using polynomial interpolation stated in Subsection~\ref{subsec:trick}. Collecting these derivatives into \(\mathbf b_i:=(b_{i,1},\ldots,b_{i,r_{\ell-1}})^\top\), we compute \(\widehat{\mathbf g}_i:=(\widehat{\mathbf J}_{\ell,i}\widehat{\mathbf J}_{\ell,i}^\top)^{-1}\mathbf b_i\), which approximates the corresponding suffix gradient \(\mathbf g_i\). We then form the empirical second moment \(\widehat{\mathbf M}_{\mathrm{span},\ell}:=\frac1m\sum_{i=1}^m\widehat{\mathbf g}_i\widehat{\mathbf g}_i^\top\). Because the true suffix gradients lie in \(\operatorname{col}(\mathbf W_\ell)\), and sufficiently diverse query points make these gradients span this subspace, the leading \(r_\ell\)-dimensional eigenspace of \(\widehat{\mathbf M}_{\mathrm{span},\ell}\) estimates \(\operatorname{col}(\mathbf W_\ell)\). We therefore let \(\widehat{\mathbf B}_\ell\in\mathbb R^{r_{\ell-1}\times r_\ell}\) contain the orthonormal eigenvectors corresponding to the \(r_\ell\) largest eigenvalues of \(\widehat{\mathbf M}_{\mathrm{span},\ell}\), yielding \(\operatorname{col}(\widehat{\mathbf B}_\ell)\approx\operatorname{col}(\mathbf W_\ell)\).

\paragraph{Stage II: Individual-Direction Recovery.}
Recovering the column space does not identify the individual hidden directions within it. Writing
\(\mathbf W_\ell=\mathbf B_\ell\mathbf C_\ell\), \textsc{ASPIRE} (Algorithm~\ref{alg:aspire}) resolves this ambiguity by actively designing a sampling distribution that creates an identifiable spectral structure. We sample approximately uniformly from a convex sublevel set of the reduced suffix and form position and gradient second moments. After mapping the samples into the latent coordinates induced by \(\mathbf C_\ell\), the sublevel set becomes symmetric under independent sign flips of each coordinate. This symmetry cancels all cross-coordinate interactions in both moments, so they share a common diagonal structure in the latent basis. As a result, combining the two moments isolates the latent coordinate directions, which correspond exactly to the columns of \(\mathbf C_\ell\); Assumption~\ref{ass:spectral-gap} ensures that these directions are separated and hence identifiable.

\begin{algorithm}[t]
\caption{\textbf{A}ctive \textbf{S}am\textbf{P}ling for
\textbf{I}terative \textbf{R}ecovery via \textbf{E}igendirections
(\textbf{ASPIRE})}
\label{alg:aspire}
\small
\begin{algorithmic}
\REQUIRE Layer \(1\le\ell\le L-2\), recovered subspace basis
\(\widehat{\mathbf B}_\ell\), simulated suffix oracle
\(\widehat F_\ell\), number of samples \(m\), and sampling
accuracy parameters.

\STATE Let \(G_\ell(\mathbf z):=
F_\ell(\widehat{\mathbf B}_\ell\mathbf z)\) and
\(K_\ell^\sharp:=\{\mathbf z:G_\ell(\mathbf z)\le1\}\)
denote the reduced suffix and target sampling body.
\STATE Define the available reduced oracle
\(\widetilde F_\ell(\mathbf z):=
\widehat F_\ell(\widehat{\mathbf B}_\ell\mathbf z)\).

\FOR{\(i=1,\ldots,m\)}
    \STATE Independently run noisy Hit-and-Run from
    \(\mathbf0\) using \(\widetilde F_\ell\), as in
    Lemma~\ref{lem:noisy-hit-and-run}, and obtain endpoint
    \(\mathbf z_i\).
    \STATE Estimate \(\nabla G_\ell(\mathbf z_i)\) by
    \(\mathbf q_i\), using evaluations of \(\widetilde F_\ell\)
    and the stable interpolation procedure of
    Lemma~\ref{lem:stable-gradient-from-suffix}.
\ENDFOR

\STATE Form the empirical moments
\(\widehat{\boldsymbol{\Sigma}}_{x,\ell}
:=\frac1m\sum_{i=1}^m\mathbf z_i\mathbf z_i^\top\)
and
\(\widehat{\boldsymbol{\Sigma}}_{g,\ell}
:=\frac1m\sum_{i=1}^m\mathbf q_i\mathbf q_i^\top\).
\STATE Compute normalized eigenvectors
\(\widehat{\mathbf v}_{\ell,1},\ldots,
\widehat{\mathbf v}_{\ell,r_\ell}\) of
\(\widehat{\mathbf A}_\ell
:=\widehat{\boldsymbol{\Sigma}}_{g,\ell}
\widehat{\boldsymbol{\Sigma}}_{x,\ell}\).

\FOR{\(j=1,\ldots,r_\ell\)}
    \STATE Set
    \(\widehat{\mathbf u}_{\ell,j}
    :=\widehat{\mathbf B}_\ell\widehat{\mathbf v}_{\ell,j}\).
    \IF{\(\ell=1\)}
        \STATE Set
        \(\widehat{\mathbf w}_{1,j}
        :=\widehat{\mathbf u}_{1,j}/
        \|\widehat{\mathbf u}_{1,j}\|_2\),
        with an arbitrary choice of sign.
    \ELSE
        \STATE Set
        \(\widehat{\mathbf w}_{\ell,j}
        :=|\widehat{\mathbf u}_{\ell,j}|/
        (\mathbf1^\top|\widehat{\mathbf u}_{\ell,j}|)\),
        where the absolute value is coordinatewise.
    \ENDIF
\ENDFOR

\STATE \textbf{return}
\(\widehat{\mathbf W}_\ell
:=(\widehat{\mathbf w}_{\ell,1},\ldots,
\widehat{\mathbf w}_{\ell,r_\ell})\).
\ENSURE An estimate of \(\mathbf W_\ell\), up to column
permutation and, for \(\ell=1\), column signs.
\end{algorithmic}
\end{algorithm}

\subsection{Final Hidden Layer and Output Weights}

The final suffix has coincident spectral signatures, so we use two Hessians. Take $\mathbf y^{(2)}=\mathbf1_{r_{L-2}}$ and $\mathbf y^{(1)}=\mathbf1_{r_{L-2}}+\tau\mathbf u$, where $\mathbf u\sim\operatorname{Unif}(S^{r_{L-2}-1})$ and $\tau>0$ is sufficiently small and fixed. Construct symmetric estimates $\widehat{\mathbf H}_j\approx\nabla^2F_{L-1}(\mathbf y^{(j)})$, $j=1,2$, by fixed-node interpolation of approximate suffix queries (Subsection~\ref{subsec:trick}).

Let $\widehat{\mathbf B}\in\mathbb R^{r_{L-2}\times r_{L-1}}$ contain orthonormal eigenvectors for the $r_{L-1}$ largest eigenvalues of $\widehat{\mathbf H}_2$, and set $\widehat{\mathbf M}_j=\widehat{\mathbf B}^\top\widehat{\mathbf H}_j\widehat{\mathbf B}$, $j=1,2$. Under the prefix-accuracy condition of Appendix~F, $\widehat{\mathbf M}_2$ is invertible. The eigenvectors of $\widehat{\mathbf M}_1\widehat{\mathbf M}_2^{-1}$ estimate the projected hidden directions. Mapping them through $\widehat{\mathbf B}$, taking coordinatewise absolute values, and normalizing columns to unit $\ell_1$-norm yields $\widehat{\mathbf W}_{L-1}$. Finally, set
$
\widehat{\mathbf a}:=
\dfrac{1}{k(k-1)}
\operatorname{diag}\left(
\widehat{\mathbf W}_{L-1}^{\dagger}
\widehat{\mathbf H}_2
\widehat{\mathbf W}_{L-1}^{\dagger\top}
\right).
$

\section{Main Results}
Combining the preceding steps, we are now ready to state our main parameter recovery theorem, together with two corollaries on generalization error.

\begin{theorem}[Parameter recovery]
\label{thm:main}
Let $k\ge4$ be even and $L\ge3$.  Define the conditioning envelope
$
\kappa:=\max\{2,\kappa_0,\mu^{-1}\}
$. Suppose that
Assumptions~\ref{ass:input-convexity},~\ref{ass:normalization},~\ref{ass:conditioning}, and~\ref{ass:spectral-gap} hold.
For every $0<\delta<\dfrac12$, our algorithm \footnote{The algorithm is given the target architecture and the
parameters $\kappa_0,\mu,\gamma$ from
Subsection~\ref{subsec:assumptions}.} makes at most
$
\widetilde O_{k,L}\left(
dr+r^{L^2+O(L)}
\gamma^{O(L)}
\kappa^{O(L^2)}
\delta^{-2\mathrm e}
\right)
\label{eq}
$
real function-value queries to $f^\star$ and, with probability
at least $0.99$, returns
$\widehat{\mathbf W}_1,\cdots,
\widehat{\mathbf W}_{L-1},\widehat{\mathbf a}$
satisfying
$$
\max_{1\le\ell\le L-1}
\|\widehat{\mathbf W}_{\ell}-\mathbf W_{\ell}\|_2
\le\delta,
\qquad
\|\widehat{\mathbf a}-\mathbf a\|_1\le\delta,
$$
after a consistent relabeling of the hidden neurons and
a suitable choice of the first-layer column signs.
For fixed $k,L$, the algorithm runs in polynomial time \footnote{We work in the unit-cost exact real-arithmetic model.}.
\end{theorem}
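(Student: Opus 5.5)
We prove Theorem~\ref{thm:main} by induction over layers, using a backward error budget. Fix target accuracies $\delta_1\le\delta_2\le\cdots\le\delta_{L-1}\le\delta$. The accuracy demanded of layer $\ell$ is chosen so that the prefix errors it induces, propagated through the later suffix oracles, stay below what layer $\ell+1$ can tolerate. For $\ell<L-1$ the inductive statement is: if $\|\widehat{\mathbf W}_j-\mathbf W_j\|_2\le\delta_j$ for all $j<\ell$ (after relabeling and sign fixing), then Stage~I followed by ASPIRE returns $\widehat{\mathbf W}_\ell$ with error at most $\delta_\ell$, with failure probability at most $0.01/L$. A union bound over the $L$ stages then gives overall success probability $0.99$.

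\textbf{Step 1: perturbation of the simulated oracles.} Write the prefix inversion of Subsection~\ref{subsec:trick} with $\widehat{\mathbf W}_j$ in place of $\mathbf W_j$. On a bounded region of $\mathbf y$ we bound $|\widehat F_\ell(\mathbf y)-F_\ell(\mathbf y)|$. The principal $k$-th roots and the pseudo-inverses are Lipschitz on the relevant domain, with constants depending on $\kappa_0$. The complex evaluation then multiplies the error by at most $2^{q}$ times a polynomial in the norm of $\mathbf z_0$, where $q=k^{\ell-1}$. This yields an error of order $\operatorname{poly}(r)^{O(\ell)}\kappa^{O(\ell)}\sum_{j<\ell}\delta_j$. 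Lemma~\ref{lem:stable-gradient-from-suffix} converts this bound into errors on gradients and Hessians. Stage~I then follows from Davis--Kahan: the population moment $\mathbb E[\mathbf g\mathbf g^\top]$ has range exactly $\operatorname{col}(\mathbf W_\ell)$, and its $r_\ell$-th eigenvalue is bounded below through Assumption~\ref{ass:conditioning}.

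\textbf{Step 2: ASPIRE at population level.} Put $\mathbf u=\mathbf C_\ell^\top\mathbf z$. Then $K^\sharp_\ell$ is a linear image of $H_\ell$. Because $k$ is even and $\mathbf W_{\ell+1}\ge0$, $h^{(\ell)}$ is invariant under coordinatewise sign flips, so $\operatorname{Unif}(H_\ell)$ is sign-symmetric. Hence $\mathbb E[\mathbf U\mathbf U^\top]$ and $\mathbb E[\nabla h\nabla h^\top]$ are both diagonal. Transforming back, $\Sigma_g\Sigma_x=\mathbf C_\ell D\,\mathbf C_\ell^{-1}$ with $D=\operatorname{diag}(\theta_{\ell,i})$, so its eigenvectors are the columns of $\mathbf C_\ell$. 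Assumption~\ref{ass:spectral-gap} gives a relative eigengap $\gamma^{-1}$. We then need three perturbation estimates. First, the noisy Hit-and-Run of Lemma~\ref{lem:noisy-hit-and-run} runs in polynomial time with total-variation error $\epsilon$, since $K^\sharp_\ell$ is sandwiched between balls of radii with ratio $\operatorname{poly}(r)\kappa^{O(\ell)}$. Second, the $m$ samples are bounded, so matrix Bernstein gives concentration with $m=\widetilde O(\operatorname{poly}(r)\gamma^2\kappa^{O(\ell)}\delta_\ell^{-2})$. Third, eigenvector perturbation for a diagonalizable non-symmetric matrix is controlled by $\operatorname{cond}(\mathbf C_\ell)\gamma$ times the error. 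The final absolute-value and $\ell_1$ normalization is Lipschitz because $\mathbf W_\ell\ge0$ and its column sums equal $1$.

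\textbf{Step 3: final layer.} Here $\nabla^2F_{L-1}(\mathbf y)=k(k-1)\mathbf W_{L-1}\operatorname{diag}(\mathbf a\odot(\mathbf W_{L-1}^\top\mathbf y)^{\odot(k-2)})\mathbf W_{L-1}^\top$. The generalized eigenproblem $M_1M_2^{-1}$ recovers the columns of $\mathbf W_{L-1}$, with eigenvalues $\bigl((\mathbf w^\top\mathbf y^{(1)})/(\mathbf w^\top\mathbf 1)\bigr)^{k-2}$. A random $\mathbf u$ separates these eigenvalues with gap $\operatorname{poly}(r)^{-1}\tau$ with probability $1-0.01/L$, by anti-concentration of distinct linear forms. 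The matrix $M_2$ is well conditioned because $\mathbf W_{L-1}^\top\mathbf 1\ge\mu$ and Assumption~\ref{ass:conditioning} bounds $\mathbf a$ from below. The estimate of $\mathbf a$ follows from the diagonal formula by pseudo-inverse perturbation.

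\textbf{Main obstacle: solving the error recursion.} Each layer's tolerance is $\delta_{\ell+1}\approx r^{-O(\ell)}\kappa^{-O(\ell)}\gamma^{-1}$ times the previous one, raised to a power coming from the root-extraction Hölder loss. The first-layer demand is therefore $\delta_1=\delta^{c}\,r^{-O(L^2)}\kappa^{-O(L^2)}$, and the sample count $\delta_1^{-2}$ gives the stated $r^{L^2+O(L)}$ exponent. The delicate point is the $\delta^{-2\mathrm e}$ factor. Principal roots near zero are only Hölder continuous, so I would first try to keep queries in a region where $|\mathbf z_j|$ is bounded below, which makes the roots Lipschitz. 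Failing that, I would choose per-layer accuracy exponents $(1+1/L)^{L}\le\mathrm e$ so that the compounded exponent stays bounded by $2\mathrm e$. Summing the per-stage costs, plus $dr$ for Stage~I at $\ell=1$, gives the query bound. Every step is an eigendecomposition or a Hit-and-Run run of polynomial length, which gives the polynomial runtime.
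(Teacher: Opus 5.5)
Your overall architecture matches the paper: a backward tolerance schedule, Stage~I plus ASPIRE for each intermediate layer, a two-Hessian closing step, and a union bound. However, the part that produces the claimed exponents is missing, and the mechanism you propose for it is the wrong one.

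\textbf{The $\delta^{-2\mathrm e}$ exponent.} In Step~1 you effectively compare approximate and exact backward root chains and assert that principal $k$-th roots are Lipschitz. They are not: they are only H\"older at $0$, and they are discontinuous across the branch cut once the iterates $\widehat{\mathbf R}_j\boldsymbol\xi_j$ become complex. Since Hit-and-Run must query near the hyperplanes $u_i=0$, you also cannot keep $|z_j|$ bounded below.

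The paper never compares two backward chains. It bounds the \emph{forward} residual of the single computed preimage. It writes $\mathbf W_j^\top\widetilde{\mathbf z}_{j-1}=\boldsymbol\xi_j+\boldsymbol\Delta_j$ and tracks the scale-free ratio $E_j=\|\widetilde{\mathbf z}_j-\mathbf z_j\|_2/\|\mathbf z_j\|_2$ through $\|\mathbf z_j\|_2=\|\boldsymbol\xi_j\|_{2k}^k$. This works for any choice of roots and gives $E_j\le C_k\kappa b_j(\varepsilon+\kappa E_{j-1})$. The result is a \emph{linear} multiplicative oracle error $\xi_\ell\le C\kappa^{O(\ell)}r^{\ell/2}\varepsilon_{<\ell}$ on all of $\mathbb R^{r_\ell}$ (Lemmas~\ref{lem:approximate-prefix-inversion} and~\ref{lem:multiplicative-suffix-oracle}). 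So there is no root-extraction loss at all. Had there been one, it would be a fixed power per root, compounding exponentially in depth, and could not simply be ``chosen'' to equal $1+1/L$.

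The super-linear exponent comes from the sampler instead. In Lemma~\ref{lem:noisy-hit-and-run}, the one-step kernel error at $\mathbf x$ is $\min\{1,q\tau/D(\mathbf x)\}$, where $D=1-F$ is the boundary depth. Because $\mathbb E[D^{-1}]$ already diverges under $\operatorname{Unif}(K)$, the paper bounds this error by $(q\tau/D)^{1/(1+\delta)}$. It then controls $\sup_t\mathbb E[D(\mathbf X_t)^{-1/(1+\delta)}]$ along the exact chain, using the Markov inequality for $\nabla^2F$ and homothetic chord comparisons. This forces $\xi\lesssim c_\delta\nu^{1+\delta}/\operatorname{poly}(r)$. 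Taking $\delta=1/L$ keeps $c_\delta$ an $L$-dependent constant and gives $\alpha=1+1/L$ and $p_\star=\alpha^{L-3}<\mathrm e$. You use the sampler only for its running time and TV guarantee and never feed this requirement into your recursion. So the $\delta^{-2\mathrm e}$ bound is not established.

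\textbf{The $r$-exponent.} A per-layer loss of $r^{O(\ell)}$ only gives $\tau_1^{-1}=r^{O(L^2)}$ after unrolling, not the stated $r^{L^2/2+O(L)}$. The leading coefficient requires the per-layer factor to be exactly $r^{\ell/2+O(1)}$. The paper gets this from $P_{\ell-1}\le r^{1/2+\vartheta_k(\ell-1)}$ with $\vartheta_k=(k-1)/(2k)<1/2$, and from $\omega_\ell P_{\ell-1}\lesssim\kappa^4 r^{\ell/2}$. It then uses the identity $\sum_{\ell=2}^{L-2}\ell\alpha^{\ell-2}=L^2-L-2Lp_\star$. Your additive Lipschitz bound through $f^\star_{\mathbb C}$, polynomial in $\|\mathbf z_0\|$ with degree tied to $q=k^{\ell-1}$, would instead produce factors like $\operatorname{poly}(r)^{k^{\ell-1}}$.

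\textbf{Two further steps are asserted rather than proved.}
\begin{itemize}
\item The non-symmetric eigenvector bound must carry the pencil factor $\rho_\ell=\|\boldsymbol\Sigma_x\|_2\|\boldsymbol\Sigma_g\|_2/\max_i\theta_i$. Assumption~\ref{ass:spectral-gap} measures gaps relative to $\max_i\theta_i$, not to the size of the perturbation. The paper bounds $\rho_\ell\le(n+q)^2\omega^2$ using $\mathbb E[U_i\partial_i h^{(\ell)}(\mathbf U)]=q/(n+q)$.
\item The Stage~I eigenvalue lower bound does not follow from Assumption~\ref{ass:conditioning} alone. $F_\ell$ is degenerate on $\mathbb R^{r_{\ell-1}}$, and Lemma~\ref{lem:stable-gradient-from-suffix} presupposes $\widehat{\mathbf B}_\ell$. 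The paper instead queries points whose first-layer activations are uniform on $[1,2]^r$. It proves a uniform lower bound on the suffix Hessian there and converts it into a covariance lower bound through a polynomial inverse inequality.
\end{itemize}
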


Using $r\le d$ and suppressing the dependence on $\gamma$ and $\kappa$ gives a simpler
bound appearing in abstract and introduction.
Gaussian stability then converts parameter accuracy into
prediction accuracy, yielding the following corollary.

\begin{corollary}[Gaussian generalization error]
\label{cor:gaussian_generalization_error}
Under the assumptions of Theorem~\ref{thm:main}, for every
$0<\varepsilon<\dfrac12$, our algorithm is able to make at most
$
\widetilde O_{k,L}\left(
dr+r^{L^2+O(L)}
\gamma^{O(L)}
\kappa^{O(L^2)}
\varepsilon^{e}
\right)
$
real function-value queries to $f^\star$ and, with probability
at least $0.99$, returns a polynomial network $\widehat f$
with the target architecture satisfying
$
\mathbb E_{\mathbf x\sim\mathcal N(\mathbf0,\mathbf I_d)}
\left[
\bigl(\widehat f(\mathbf x)-f^\star(\mathbf x)\bigr)^2
\right]
\le\varepsilon.
$
\end{corollary}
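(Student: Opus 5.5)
We derive the corollary from Theorem~\ref{thm:main} by a Gaussian stability (perturbation) bound. We output the network $\widehat f$ with the target architecture and the recovered parameters $\widehat{\mathbf W}_1,\ldots,\widehat{\mathbf W}_{L-1},\widehat{\mathbf a}$. After the consistent relabeling of hidden neurons and the first-layer sign choice, the realized function is unchanged. Permutations of hidden units, together with the evenness of $\sigma$, leave $f^\star$ invariant, so we may compare $\widehat f$ and $f^\star$ parameter-by-parameter. The task is then to show
\[
\mathbb E_{\mathbf x\sim\mathcal N(\mathbf 0,\mathbf I_d)}\bigl[(\widehat f(\mathbf x)-f^\star(\mathbf x))^2\bigr]\le C_{k,L}\,\mathrm{poly}(r)\,\kappa^{O(L)}\,\delta^2
\]
when all parameter errors are at most $\delta<1/2$. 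We then pick $\delta$ as a suitable power of $\varepsilon/(\mathrm{poly}(r)\kappa^{O(L)})$ and substitute it into the query bound of Theorem~\ref{thm:main}. This substitution replaces $\delta^{-2e}$ by a polynomial in $\varepsilon^{-1}$, with the $r$- and $\kappa$-factors absorbed into $r^{L^2+O(L)}\kappa^{O(L^2)}$.

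For the stability bound we telescope over layers. Let $f^{(j)}$ be the network that uses the estimated weights in layers $1,\ldots,j$ and the true weights in the remaining layers, so $f^{(0)}=f^\star$ and $f^{(L)}=\widehat f$. Then
\[
\widehat f-f^\star=\sum_{j=1}^{L}\bigl(f^{(j)}-f^{(j-1)}\bigr).
\]
Each difference swaps a single weight matrix. Using $\|\cdot\|_{L^2}$ and the triangle inequality, it suffices to bound each term. In term $j$, the hidden representation entering layer $j$ is the fixed polynomial $\widehat{\boldsymbol\phi}_j(\mathbf x)$. Swapping $\mathbf W_j$ for $\widehat{\mathbf W}_j$ changes $\mathbf h_j$ by $(\widehat{\mathbf W}_j-\mathbf W_j)^\top\widehat{\boldsymbol\phi}_j(\mathbf x)$, which has norm at most $\delta\|\widehat{\boldsymbol\phi}_j(\mathbf x)\|_2$.

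We then propagate this change through the remaining layers, which are fixed polynomials. The identity $a^k-b^k=(a-b)\sum a^{i}b^{k-1-i}$ and operator-norm bounds on the subsequent layers give
\[
|f^{(j)}(\mathbf x)-f^{(j-1)}(\mathbf x)|\le\delta\, P_j\bigl(\|\mathbf x\|_2\bigr),
\]
where $P_j$ is a polynomial of degree at most $k^{L-1}$ whose coefficients are controlled by the norms of the (true and estimated) weights. All weight norms are bounded: $\|\mathbf W_1\|_2\le\sqrt r$ follows from unit columns. For $\ell\ge2$, nonnegative entries and unit column sums give $\|\mathbf W_\ell\|_2\le\sqrt{r_{\ell-1}}$. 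The estimated weights are within $\delta<1/2$ of these. Taking the Gaussian second moment of $P_j(\|\mathbf x\|)$ via moments of $\|\mathbf x\|^{2q}$ yields a bound $d^{O(k^{L})}$.

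The main obstacle is that this crude bound is dimension- and degree-heavy. To keep the factors $\mathrm{poly}(r)$ with $k,L$-dependent constants, we should avoid $\|\mathbf x\|$ and work instead with $\mathbf W_1^\top\mathbf x$. Since $\operatorname{cond}(\mathbf W_1)\le\kappa_0$ and the columns are unit, $\mathbf W_1^\top\mathbf x$ is Gaussian in $\mathbb R^r$ with covariance of norm at most $\kappa_0^2$, and $(\widehat{\mathbf W}_1-\mathbf W_1)^\top\mathbf x$ is Gaussian with covariance of norm at most $\delta^2$. All later quantities are polynomials of these $r$-dimensional Gaussians. Hypercontractivity (or Gaussian moment bounds, $\mathbb E|g|^{2q}\le (2q)^q$) then bounds each term by $C_{k,L}\,r^{O(k^{L})}\delta^2$. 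Because this factor is a polynomial in $r$ raised to a power depending only on $k,L$, we fold it into the choice of $\delta$: it contributes $\varepsilon$-independent $r$-powers to the query count. If keeping the exponent at $L^2+O(L)$ is required, we would instead use the normalization that gives $\mathbb E[f^\star(\mathbf x)^2]=\Theta_{k,L}(1)$, applied layerwise. This bounds each $\mathbb E\|\sigma(\mathbf h_\ell)\|^{2}$ by $O_{k,L}(1)\cdot r_\ell$ and yields an $r^{O(L)}$ factor. Success with probability $0.99$ is inherited directly from Theorem~\ref{thm:main}.
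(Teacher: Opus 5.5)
There is a genuine gap, at the point you flag as the ``main obstacle.'' Your reduction to a stability bound and the final substitution are fine. The problem is the stability bound itself.

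\textbf{What your route actually gives.} The estimate you prove is $\mathbb E[(\widehat f-f^\star)^2]\le C_{k,L}\,r^{O(k^{L})}\delta^2$. Bounding $|f^{(j)}-f^{(j-1)}|$ by a polynomial in a Euclidean norm via operator norms implicitly uses $\|\mathbf u^{\odot k}\|_2\le\|\mathbf u\|_2^k$ at every layer. This raises $\|\mathbf W_1^\top\mathbf x\|_2\approx\sqrt r$ to powers $k^{j}$, even though $\mathbb E\|\mathbf h_\ell\|_2^2=O_{k,L}(r_\ell)$ at every layer.

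With $\delta\asymp(\varepsilon/r^{O(k^L)})^{1/2}$, this puts $r^{O(k^L)}$ into the query count, which is not $\widetilde O_{k,L}(r^{L^2+O(L)})$. An $r$-exponent polynomial rather than exponential in $L$ is exactly what the statement asserts, so ``folding the factor into the choice of $\delta$'' does not prove it. Controlling $\mathbf W_1^\top\mathbf x$ through its covariance norm $\kappa_0^2$ likewise yields $\kappa^{O(k^L)}$, not $\kappa^{O(L^2)}$.

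Your fallback sentence is where the proof would have to happen, and as written it is not an argument. The facts $\mathbb E[f^\star(\mathbf x)^2]=\Theta(1)$ and $\mathbb E\|\sigma(\mathbf h_\ell)\|_2^2=O(r_\ell)$ are second-moment facts about the true network only. Pushing a perturbation through $t\mapsto t^k$ needs higher coordinatewise moments, and needs them for $\widehat f$ and the hybrids $f^{(j)}$ as well.

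\textbf{The missing idea.} The paper uses a coordinatewise $L_p$ recursion that exploits that \emph{both} networks are normalized. Let $\mathbf H_\ell,\widehat{\mathbf H}_\ell$ be the true and recovered preactivations at $\mathbf X\sim\mathcal N(\mathbf 0,\mathbf I_d)$. Let $M_{\ell,p}$ be the largest $L_p$ norm of a coordinate of either, and $D_{\ell,p}:=\max_j\|(\widehat{\mathbf H}_\ell-\mathbf H_\ell)_j\|_{L_p}$.

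First layer: the columns of $\mathbf W_1$ and $\widehat{\mathbf W}_1$ are unit vectors, so every first-layer coordinate is exactly $\mathcal N(0,1)$. Hence no $\kappa$ enters, and $D_{1,p}\le C_p\delta$.

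Higher layers: the columns of $\mathbf W_\ell$ and $\widehat{\mathbf W}_\ell$, $\ell\ge2$, are probability vectors; the algorithm outputs $|\widehat{\mathbf u}|/\mathbf 1^\top|\widehat{\mathbf u}|$. So each coordinate is a convex combination of $k$-th powers from the previous layer, and Minkowski gives $M_{\ell,p}\le M_{\ell-1,kp}^k=O_{k,L,p}(1)$, with no $r$.

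Now split $(\widehat{\mathbf H}_\ell-\mathbf H_\ell)_j=(\widehat{\mathbf w}_{\ell,j}-\mathbf w_{\ell,j})^\top\widehat{\mathbf H}_{\ell-1}^{\odot k}+\mathbf w_{\ell,j}^\top(\widehat{\mathbf H}_{\ell-1}^{\odot k}-\mathbf H_{\ell-1}^{\odot k})$.
\begin{itemize}
\item The first term is at most $\delta\,\bigl\|\,\|\widehat{\mathbf H}_{\ell-1}^{\odot k}\|_2\bigr\|_{L_p}\le C\sqrt r\,\delta$.
\item The second term is at most $C\,D_{\ell-1,2p}$, by convexity, $|u^k-v^k|\le k|u-v|(|u|^{k-1}+|v|^{k-1})$, and H\"older.
\end{itemize}
Thus $D_{\ell,p}\le C(\sqrt r\,\delta+D_{\ell-1,2p})$. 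The same split at the output, with $\|\widehat{\mathbf a}-\mathbf a\|_1\le\delta$, gives $\mathbb E[(\widehat f-f^\star)^2]\le C_{k,L}\,r\,\delta^2$.

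Taking $\delta=c_{k,L}\sqrt{\varepsilon/r}$ turns $\delta^{-2\mathrm e}$ into $(r/\varepsilon)^{\mathrm e}$, and $r^{\mathrm e}$ is absorbed into $r^{O(L)}$. (The $\varepsilon^{\mathrm e}$ in the displayed bound should read $\varepsilon^{-\mathrm e}$.) Your proposal never uses that the recovered weights satisfy this normalization. That fact is what makes the estimated and hybrid networks as well-behaved as $f^\star$.
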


The recovered polynomial network can also be converted into
a ReLU network by replacing its monomial activations with
piecewise-linear approximations.

\begin{corollary}[ReLU network]
\label{cor:relu}
Under the assumptions of Theorem~\ref{thm:main}, for every
$0<\varepsilon<\dfrac12$, our algorithm, with ReLU postprocessing,
makes at most
$
\widetilde O_{k,L}\left(
dr+r^{L^2+O(L)}
\gamma^{O(L)}
\kappa^{O(L^2)}
\varepsilon^{e}
\right)
$
real function-value queries to $f^\star$ and, with probability
at least $0.99$, returns a ReLU network
$f_{\mathrm{ReLU}}$ with biases satisfying
$
\mathbb E_{\mathbf x\sim\mathcal N(\mathbf0,\mathbf I_d)}
\left[
\bigl(f_{\mathrm{ReLU}}(\mathbf x)
-f^\star(\mathbf x)\bigr)^2
\right]
\le\varepsilon.
$
The network has $L$ layers
with hidden widths
$
r_\ell^{\mathrm{ReLU}}
=\widetilde O_{k,L}\left(
r_\ell\varepsilon^{-1/4}
\right)
$.
\end{corollary}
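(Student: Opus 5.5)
The last statement is Corollary~\ref{cor:relu}. The plan is to run the recovery of Theorem~\ref{thm:main} (through Corollary~\ref{cor:gaussian_generalization_error}) at accuracy $\varepsilon/2$. This gives a polynomial network $\widehat f$ with the target architecture and Gaussian error at most $\varepsilon/2$. We then replace each monomial activation $t\mapsto t^k$ in $\widehat f$ by a piecewise-linear interpolant $\widetilde\sigma$ on a truncation interval $[-R_\ell,R_\ell]$, and take $\widetilde\sigma$ constant (clamped) outside that interval. A univariate piecewise-linear function with $N$ breakpoints is exactly a one-hidden-layer ReLU combination of $N+O(1)$ units with biases. Each hidden neuron of $\widehat f$ therefore becomes a block of $N$ ReLU units. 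The outer linear combination is absorbed into the next layer's weight matrix, so the depth stays $L$ and the widths become $r_\ell N$. The query complexity is unchanged, since the postprocessing uses no queries.

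The error is handled by splitting $\mathbb E[(f_{\mathrm{ReLU}}-f^\star)^2]\le 2\,\mathbb E[(f_{\mathrm{ReLU}}-\widehat f)^2]+2\,\mathbb E[(\widehat f-f^\star)^2]$. The second term is at most $\varepsilon$ by Corollary~\ref{cor:gaussian_generalization_error}. For the first term, fix a good event $E=\{\|\mathbf x\|_2\le \sqrt d+C\sqrt{\log(1/\varepsilon)}\}$. By Assumptions~\ref{ass:normalization} and~\ref{ass:conditioning} and the $\delta$-accuracy of $\widehat{\mathbf W}_\ell$, all pre-activations of $\widehat f$ are bounded on $E$ by some $R_\ell$. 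This needs care to be dimension-free. The row sums of higher layers are normalized and $\mathbf w_{1,j}$ are unit vectors, so $|\mathbf w_{1,j}^\top\mathbf x|$ is $O(\sqrt{\log(r/\varepsilon)})$ with high probability, uniformly over $j$ by a union bound. I would define $E$ through these projections rather than through $\|\mathbf x\|_2$, which gives $R_\ell=\widetilde O_{k,L}(1)$.

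On $E$, we propagate the layerwise interpolation errors. If $\widetilde\sigma$ is uniformly $\eta$-close to $t^k$ on $[-R,R]$ and the downstream maps are Lipschitz on the bounded region with constant $\mathrm{poly}_{k,L}(\log(1/\varepsilon))$, the output error is $\widetilde O_{k,L}(\eta)$. Take $\eta\asymp\varepsilon^{1/2}$ up to logs. Piecewise-linear interpolation of $t^k$ with spacing $h$ has error $O(k^2R^{k-2}h^2)$, so $N=R/h=\widetilde O_{k,L}(\eta^{-1/2})=\widetilde O_{k,L}(\varepsilon^{-1/4})$. This matches the claimed width $r_\ell\varepsilon^{-1/4}$. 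Off $E$, the clamped $f_{\mathrm{ReLU}}$ is bounded and $\widehat f$ is a polynomial with Gaussian moments of order $O_{k,L}(1)$. Cauchy--Schwarz then gives $\mathbb E[(f_{\mathrm{ReLU}}-\widehat f)^2\mathbf 1_{E^c}]\le \mathrm{poly}\cdot\Pr(E^c)^{1/2}$, which is $\le\varepsilon$ once the constant $C$ in the tail radius is chosen large enough.

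The main obstacle is the error propagation through $L$ layers on the good event. The inner errors are amplified by derivatives of $t^k$ at later layers, with magnitude $kR^{k-1}$, and this compounds multiplicatively in $L$. I would bound the amplification by $\prod_\ell (k R_\ell^{k-1}\|\widehat{\mathbf W}_{\ell+1}\|)$. Each factor is $\widetilde O_{k,L}(1)$ because $\|\widehat{\mathbf W}_\ell\|_2\le\kappa_0+\delta$ and the nonnegative normalized columns keep $\ell_\infty$ bounds of pre-activations stable. The resulting constant depends only on $k$, $L$, and $\kappa$, plus polylogarithmic factors, and it is absorbed into the $\widetilde O_{k,L}$.
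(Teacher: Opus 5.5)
Your proposal is correct and is essentially the paper's proof. The common steps are: recover at $\delta\asymp\sqrt{\varepsilon/r}$; replace $t^k$ by a clamped linear spline written with $M+1$ ReLUs; control all features on the event $\max_j|\widehat{\mathbf w}_{1,j}^\top\mathbf x|\le R$ with $R\asymp\sqrt{\log(r/\varepsilon)}$; and bound the complement by Gaussian tails. On that event you should use the recovered unit-norm columns $\widehat{\mathbf w}_{1,j}$, not the true ones, so that no $\delta\|\mathbf x\|_2\sim\delta\sqrt d$ term enters.

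The paper differs only cosmetically. It uses homogeneity to apply a single spline on $[-1,1]$ to $R^{q}\widehat f(\mathbf x/R)$, instead of your layer-dependent intervals $[-R_\ell,R_\ell]$. It bounds the tail via $\mathbb E\bigl[|Z|^{2q}\mathbf 1_{\{|Z|>R\}}\bigr]\le C_q e^{-R^2/4}$ rather than by Cauchy--Schwarz.

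Two small repairs are needed.
\begin{itemize}
\item Propagate the spline errors purely in $\ell_\infty$. The higher layers are convex combinations, so they are nonexpansive in the max norm, and $|\widehat{\mathbf a}^\top\mathbf e|\le\|\widehat{\mathbf a}\|_1\|\mathbf e\|_\infty\le2\|\mathbf e\|_\infty$. Do not route the bound through $\|\widehat{\mathbf W}_\ell\|_2\le\kappa_0+\delta$: that introduces factors of $\kappa$ (and, in $\ell_2$, of $\sqrt r$), which cannot be absorbed into $\widetilde O_{k,L}(r_\ell\varepsilon^{-1/4})$.
\item Run the recovery at accuracy $\varepsilon/4$ rather than $\varepsilon/2$. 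Otherwise the factor-$2$ split leaves no room for the spline term.
\end{itemize}
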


\section{Proof Sketch}
\label{sec:proof-sketch}
In this section, we provide a proof sketch of the main parameter recovery theorem.
To establish the correctness of our algorithm, we first consider a simple setting that captures the essential idea, and then explain how the sample complexity of the algorithm is derived.

\subsection{A Simple Setting}
We consider the population-level first-layer recovery when \(d=r\), ignoring column-space estimation, prefix inversion, and finite-sample errors, and directly analyze the correctness of ASPIRE.

Let \(K_1:=\{\mathbf x:f^\star(\mathbf x)\le1\}\) and draw \(\mathbf X\sim\operatorname{Unif}(K_1)\). Since \(f^\star(\mathbf x)=h^{(1)}(\mathbf W_1^\top\mathbf x)\), the change of variables \(\mathbf U:=\mathbf W_1^\top\mathbf X\) gives \(\mathbf U\sim\operatorname{Unif}(H_1)\), where \(H_1:=\{\mathbf u:h^{(1)}(\mathbf u)\le1\}\). Because \(k\) is even, \(h^{(1)}\) is invariant under coordinatewise sign flips. Hence the distribution of \(\mathbf U\) has the same symmetry. Let \(\boldsymbol\Lambda_1:=\mathbb E[\mathbf U\mathbf U^\top]\). For \(i\neq j\), flipping only the \(i\)-th coordinate leaves the distribution unchanged but changes \(U_iU_j\) to \(-U_iU_j\). Therefore \(\mathbb E[U_iU_j]=-\mathbb E[U_iU_j]=0\), so \(\boldsymbol\Lambda_1\) is diagonal, with diagonal entries \(\lambda_{1,i}:=\mathbb E[U_i^2]\). Similarly, let \(\mathbf D_1:=\mathbb E[\nabla h^{(1)}(\mathbf U)\nabla h^{(1)}(\mathbf U)^\top]\). From \(h^{(1)}(\mathbf S\mathbf u)=h^{(1)}(\mathbf u)\) for every diagonal sign matrix \(\mathbf S\), we have \(\nabla h^{(1)}(\mathbf S\mathbf u)=\mathbf S\nabla h^{(1)}(\mathbf u)\). Thus, for \(i\neq j\), flipping the \(i\)-th coordinate changes \(\partial_i h^{(1)}(\mathbf U)\partial_j h^{(1)}(\mathbf U)\) to its negative while preserving its distribution. Hence \((\mathbf D_1)_{ij}=0\), so \(\mathbf D_1\) is diagonal, with \(d_{1,i}:=\mathbb E[(\partial_i h^{(1)}(\mathbf U))^2]\). Since \(\mathbf X=\mathbf W_1^{-\top}\mathbf U\) and \(\nabla f^\star(\mathbf X)=\mathbf W_1\nabla h^{(1)}(\mathbf U)\), we obtain \(\boldsymbol\Sigma_{x,1}=\mathbf W_1^{-\top}\boldsymbol\Lambda_1\mathbf W_1^{-1}\) and \(\boldsymbol\Sigma_{g,1}=\mathbf W_1\mathbf D_1\mathbf W_1^\top\). Therefore
$$
\boldsymbol\Sigma_{g,1}\boldsymbol\Sigma_{x,1}
=
\mathbf W_1
\operatorname{diag}(\theta_{1,1},\ldots,\theta_{1,r})
\mathbf W_1^{-1},\qquad 
\theta_{1,i}:=\lambda_{1,i}d_{1,i}.$$
By Assumption~\ref{ass:spectral-gap}, the \(\theta_{1,i}\)'s are distinct. Hence the eigendirections of \(\boldsymbol\Sigma_{g,1}\boldsymbol\Sigma_{x,1}\) are exactly the columns of \(\mathbf W_1\). After normalization, ASPIRE recovers \(\mathbf W_1\) up to column permutation and sign.

\subsection{Dimension and Accuracy Dependence}
\label{subsec:proof-sketch-finite-sample}

Throughout this subsection, we treat \(k\) and \(L\) as fixed constants and suppress dependence on \(\kappa,\gamma\), logarithmic factors, and failure probabilities, focusing on dimension and accuracy. The full proof is technically involved, so we present only the main ideas here and defer the details to the appendix.

\begin{proposition}[Stable column-space recovery]
\label{prop:sketch-column-space}
Fix $2\le\ell\le L-2$, and $\varepsilon_B<1/2$.
Suppose all preceding layers satisfy
$\|\widehat{\mathbf W}_j-\mathbf W_j\|_2
\lesssim\varepsilon_B r^{-7/2}$.
Then algorithm can recover an orthonormal basis
$\widehat{\mathbf B}_\ell$ such that, after orthogonal alignment,
$\|\widehat{\mathbf B}_\ell-\mathbf B_\ell\|_2\le\varepsilon_B$ using
$\widetilde O(r^7\varepsilon_B^{-2})$ queries.
\end{proposition}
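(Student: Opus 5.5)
The plan is to treat Stage I as a perturbed eigenspace problem and control three things in turn: the population signal, the bias coming from the inexact prefix, and the sampling fluctuations. Davis--Kahan then turns these into the bound on $\widehat{\mathbf B}_\ell$.

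\emph{Population matrix and its gap.} Write $\mathbf M_{\mathrm{span},\ell}:=\mathbb E[\mathbf g\mathbf g^\top]$, where $\mathbf g=\nabla F_\ell(\boldsymbol\phi_\ell(\mathbf x))=\mathbf W_\ell\nabla h^{(\ell)}(\mathbf W_\ell^\top\boldsymbol\phi_\ell(\mathbf x))$ and the query points $\mathbf x$ are drawn from a fixed distribution, say Gaussian, possibly conditioned so that the prefix Jacobian is well conditioned. This matrix has rank exactly $r_\ell$ and its column space is $\operatorname{col}(\mathbf W_\ell)$. Its $r_\ell$-th eigenvalue is at least $\sigma_{\min}(\mathbf W_\ell)^2\,\lambda_{\min}\bigl(\mathbb E[\nabla h^{(\ell)}\nabla h^{(\ell)\top}]\bigr)$. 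I would lower bound the latent gradient covariance by a $\mathrm{poly}(r)^{-1}$ quantity. One route is to use that $\partial_i h^{(\ell)}(\mathbf u)=k u_i^{k-1}\cdot(\text{positive suffix factor})$, which is bounded below thanks to row balance (Assumption~\ref{ass:conditioning}). Since $u_i^{k-1}$ is odd in $u_i$ while the suffix factor is even, the sign-symmetry argument from the simple setting gives a diagonal-dominance lower bound. The eigengap is therefore $\gtrsim \mathrm{poly}(r)^{-1}$, and tracking powers of $r$, including the normalization $\mathbf1^\top\mathbf w=1$ which makes entries of order $1/r$, I expect it to be of order $r^{-c}$ for the constant $c$ that produces the final $r^7$.

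\emph{Bias from the inexact prefix.} The estimate $\widehat{\mathbf g}_i=(\widehat{\mathbf J}\widehat{\mathbf J}^\top)^{-1}\widehat{\mathbf J}\nabla f^\star(\mathbf x_i)$ is formed with $\widehat{\mathbf J}$ in place of the true Jacobian $\mathbf J$. Because $\nabla f^\star=\mathbf J^\top\mathbf g$, the estimate equals $\mathbf g$ exactly when $\widehat{\mathbf J}=\mathbf J$. The perturbation satisfies $\|\widehat{\mathbf g}-\mathbf g\|\lesssim \mathrm{cond}(\mathbf J)^{O(1)}\|\widehat{\mathbf J}-\mathbf J\|\,\|\mathbf g\|$. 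The Jacobian error is Lipschitz in the prefix weights, with a $\mathrm{poly}(r)$ constant on a high-probability ball of inputs. The hypothesis $\|\widehat{\mathbf W}_j-\mathbf W_j\|\lesssim\varepsilon_B r^{-7/2}$ is meant to make this bias at most a constant times $\varepsilon_B$ times the eigengap. The directional derivatives $b_{i,j}$ are exact by interpolation (Subsection~\ref{subsec:trick}), so they contribute no error.

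\emph{Concentration and conclusion.} Since $\widehat{\mathbf g}_i$ is a polynomial in Gaussian inputs, I would use truncation plus matrix Bernstein, or hypercontractivity for polynomials, to get $\|\widehat{\mathbf M}-\mathbb E\widehat{\mathbf M}\|\lesssim \mathrm{poly}(r)\sqrt{\log(r)/m}$. Setting this below $\varepsilon_B\cdot\text{gap}$ gives $m=\widetilde O(r^{c'}\varepsilon_B^{-2})$ points. Each point costs $r_{\ell-1}(k^{L}+1)=O(r)$ queries, and with the Davis--Kahan $\sin\Theta$ theorem plus orthogonal Procrustes alignment this yields $\|\widehat{\mathbf B}_\ell-\mathbf B_\ell\|_2\le\varepsilon_B$ with $\widetilde O(r^7\varepsilon_B^{-2})$ total queries. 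The main obstacle is the first step, a quantitative $\mathrm{poly}(r)^{-1}$ lower bound on the smallest eigenvalue of the latent gradient covariance under the pushed-forward (non-Gaussian) prefix distribution, together with keeping $\mathbf J$ well conditioned. There I would first try anticoncentration of polynomials (Carbery--Wright) to rule out small Jacobian singular values and small gradient coordinates.
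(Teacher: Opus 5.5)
Your skeleton matches the paper's: a population gradient moment with column space $\operatorname{col}(\mathbf W_\ell)$, the bias of $\widehat{\mathbf g}$ through $(\widehat{\mathbf J}^\top)^\dagger(\mathbf J-\widehat{\mathbf J})^\top\mathbf g$, matrix Bernstein, and Davis--Kahan with Procrustes alignment. However, the step that carries all the content is not established, and the route you propose for it fails. That step is the lower bound on $\lambda_{r_\ell}$ of the population moment, which you correctly flag as the main obstacle.

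For $\ell\ge2$ the latent input $\mathbf U=\mathbf W_\ell^\top\boldsymbol\phi_\ell(\mathbf x)$ lies in the nonnegative orthant, because $\boldsymbol\phi_\ell=\mathbf h_{\ell-1}^{\odot k}\ge\mathbf 0$ ($k$ even) and $\mathbf W_\ell\ge\mathbf 0$. So there is no coordinatewise sign symmetry to exploit. Indeed $\nabla h^{(\ell)}(\mathbf U)\ge\mathbf 0$ entrywise, and $\mathbb E[\nabla h^{(\ell)}\nabla h^{(\ell)\top}]$ is entrywise nonnegative and typically dominated by the rank-one part $\mathbf m\mathbf m^\top$, where $\mathbf m:=\mathbb E[\nabla h^{(\ell)}(\mathbf U)]$.

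Its smallest eigenvalue is therefore governed by $\operatorname{Cov}(\nabla h^{(\ell)}(\mathbf U))$, that is, by whether the latent gradient genuinely fluctuates in every direction. The odd/even parity of $\partial_ih^{(\ell)}$ is irrelevant on the orthant. Sign symmetry is what ASPIRE manufactures later, by sampling uniformly from a sublevel set in latent coordinates; it is not available at pushed-forward prefix points.

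Carbery--Wright does not repair this. It bounds small-ball probabilities of a polynomial relative to its own $L^2$ norm, so it cannot lower-bound $\mathbb E[(\mathbf v^\top\nabla h^{(\ell)}(\mathbf U))^2]$ uniformly over unit $\mathbf v$. Your exponent ``$r^{-c}$ for the $c$ that produces $r^7$'' is therefore reverse-engineered rather than derived.

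Gaussian queries add a second obstruction. Hidden states pass through zero, so $\operatorname{diag}(\mathbf h_j^{\odot(k-1)})$ degenerates, and neither the conditioning of $\mathbf J_\ell$ nor any pointwise curvature bound survives.

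The missing idea is a designed query distribution combined with a derivative-to-variance inequality.

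\begin{itemize}
\item The paper takes $\mathbf x_i=\widehat{\mathbf R}_1\mathbf Z_i^{\odot1/k}$, with $\mathbf Z_i\sim\operatorname{Unif}([1,2]^r)$ and $\widehat{\mathbf R}_1$ a right inverse of $\widehat{\mathbf W}_1^\top$. Because the higher layers are nonnegative and column-stochastic, all true hidden states then stay in a fixed positive interval, and the estimated ones stay uniformly close.
\item This gives the pointwise bound $\sigma_{r_{\ell-1}}(\mathbf J_\ell)\gtrsim\sqrt{r_{\ell-1}/r}$.
\item An induction using row balance and the diagonal term of the Hessian chain rule gives $\nabla^2h^{(\ell)}(\mathbf U_\ell)\succeq(c/r_\ell)\mathbf I$. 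Hence $\mathbf q_\ell(\mathbf T):=\nabla h^{(\ell)}(\mathbf U_\ell(\mathbf T))$ satisfies $\sigma_{r_\ell}(D\mathbf q_\ell)\gtrsim(rr_\ell)^{-1/2}$ everywhere on the box.
\item A univariate inverse inequality, $\int|p'|^2\rho\le C_{\mathrm{inv}}\operatorname{Var}_\rho(p)$ for polynomials of degree at most $k^{L-1}$, is applied coordinatewise to $p_{\mathbf v}=\mathbf v^\top\mathbf q_\ell$. This converts the pointwise derivative bound into $\operatorname{Var}(p_{\mathbf v})\gtrsim1/(r^2r_\ell)$, so $\lambda_{r_\ell}(\mathbf M_\ell)\gtrsim1/(r^2r_{\ell-1})$.
\item Combined with $\|\mathbf g\|\lesssim1$ and $\|\widehat{\mathbf g}-\mathbf g\|\lesssim(r/\sqrt{r_{\ell-1}})\varepsilon_W\|\mathbf g\|$, this is exactly what produces the $r^{-7/2}$ prefix condition and $m\approx r^6\varepsilon_B^{-2}$ samples, at $O(r)$ queries each.
\end{itemize}
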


We choose query points so that the estimated first-layer activations are
uniform on $[1,2]^r$, keeping coordinates comparable and the prefix
Jacobian well-conditioned. See Proposition~\ref{prop:stable-column-space}.

\paragraph{Decomposing the Moment Errors.}
Suppress the layer index. Let $\widehat\pi$ be the endpoint distribution
of noisy Hit-and-Run. Write $\mathbf g=\nabla G$, let $\widehat{\mathbf g}$ be its
fixed-node interpolation estimate from an oracle with
multiplicative error $\xi$, and let $\mathbb P_m$ denote
empirical averaging over $m$ independent samples from
$\widehat\pi$, supported on $K$.
The population moments are
$\boldsymbol\Sigma_x=\mathbb E_\pi[\mathbf z\mathbf z^\top]$
and $\boldsymbol\Sigma_g=\mathbb E_\pi[\mathbf g\mathbf g^\top]$.
Their finite-sample errors decompose exactly as
$$
\widehat{\boldsymbol\Sigma}_x-\boldsymbol\Sigma_x
=
\underbrace{
(\mathbb P_m-\mathbb E_{\widehat\pi})
[\mathbf z\mathbf z^\top]
}_{\text{empirical error}}
+
\underbrace{
(\mathbb E_{\widehat\pi}-\mathbb E_\pi)
[\mathbf z\mathbf z^\top]
}_{\text{distribution error}},
$$
$$
\widehat{\boldsymbol\Sigma}_g-\boldsymbol\Sigma_g=
\underbrace{
(\mathbb P_m-\mathbb E_{\widehat\pi})
[\widehat{\mathbf g}\widehat{\mathbf g}^\top]
}_{\text{empirical error}}
+
\underbrace{
\mathbb E_{\widehat\pi}
[\widehat{\mathbf g}\widehat{\mathbf g}^\top
-\mathbf g\mathbf g^\top]
}_{\text{gradient-estimation error}}
+
\underbrace{
(\mathbb E_{\widehat\pi}-\mathbb E_\pi)
[\mathbf g\mathbf g^\top]
}_{\text{distribution error}}.
$$

\begin{proposition}[Accurate estimation of the two moments]
\label{prop:sketch-two-moments}
For $\eta<1/2$, suppose
$d_{\mathrm{TV}}(\widehat\pi,\pi):=\nu \lesssim\eta r^{-2}$
and the reduced suffix oracle has uniform multiplicative error
$\xi\lesssim\eta r^{-5/2}$.
Then $m=\widetilde O(r^2\eta^{-2})$ independent samples as above
give
$\|\widehat{\boldsymbol\Sigma}_a-\boldsymbol\Sigma_a\|_2
\le\eta\|\boldsymbol\Sigma_a\|_2$
for both $a=x,g$.
\end{proposition}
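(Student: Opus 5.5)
We bound each of the error terms in the decomposition displayed just before the statement separately, each by a constant fraction of $\eta\|\boldsymbol\Sigma_a\|_2$, working in the latent coordinates where $\boldsymbol\Sigma_x$ and $\boldsymbol\Sigma_g$ are congruent to the diagonal matrices $\boldsymbol\Lambda$ and $\mathbf D$.

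\textbf{Step 1: scale and boundedness.} The first step is to record two facts about $K$, the sublevel set of the reduced suffix $G$.
\begin{itemize}
\item[(i)] $K$ is sandwiched between two Euclidean balls. Using Assumption~\ref{ass:conditioning} and the normalization, $K$ contains a ball of radius $\rho_-$ and is contained in a ball of radius $\rho_+$, with $\rho_+/\rho_-\le\operatorname{poly}(r)$ (polynomial in $\kappa$ as well).
\item[(ii)] The moment scales are comparable to the pointwise bounds. We have $\|\boldsymbol\Sigma_x\|_2\gtrsim\rho_+^2/\operatorname{poly}(r)$. Since $G$ is homogeneous of degree $q=k^{L-\ell}$, Euler's identity gives $\mathbf z^\top\nabla G(\mathbf z)=qG(\mathbf z)$, and together with convexity this gives $\sup_K\|\mathbf g\|_2\le \operatorname{poly}(r)\,\|\boldsymbol\Sigma_g\|_2^{1/2}$.
\end{itemize}
Concretely, we aim for $\sup_K\|\mathbf z\|_2^2\lesssim r\|\boldsymbol\Sigma_x\|_2$ and $\sup_K\|\mathbf g\|_2^2\lesssim r^2\|\boldsymbol\Sigma_g\|_2$. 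The route is through isotropic-position bounds for convex bodies (a body in isotropic position lies in a ball of radius $O(r)$ times the isotropic scale). For gradients, the route is the diagonal structure of $\mathbf D$ combined with the fact that $\partial_i h$ is maximized on the boundary at comparable scale. The exponents $r^{-2}$ and $r^{-5/2}$ in the hypotheses should be exactly what these envelopes require.

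\textbf{Step 2: empirical error.} The summands $\mathbf z\mathbf z^\top$ and $\widehat{\mathbf g}\widehat{\mathbf g}^\top$ are bounded PSD matrices with norm at most $R_a:=\sup\|\cdot\|^2$. By matrix Bernstein, the deviation is at most
$$\sqrt{R_a\|\boldsymbol\Sigma_a\|_2\log(r)/m}+R_a\log(r)/m.$$
Here $\widehat{\mathbf g}$ stays within a constant factor of $\mathbf g$, by Step 3. Also $\mathbb E_{\widehat\pi}$ is within a constant factor of $\mathbb E_\pi$, by Step 4. With $R_a\le r^2\|\boldsymbol\Sigma_a\|_2$, the deviation is at most $\eta\|\boldsymbol\Sigma_a\|_2/3$ once $m=\widetilde O(r^2\eta^{-2})$.

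\textbf{Step 3: gradient-estimation error.} Lemma~\ref{lem:stable-gradient-from-suffix} converts a uniform multiplicative oracle error $\xi$ into a gradient error on $K$. On the compact body $K$, with fixed interpolation nodes at scale comparable to $\rho_+$, this should give
$$\|\widehat{\mathbf g}-\mathbf g\|_2\lesssim \operatorname{poly}(r)\,\xi\,\sup_K G/\rho_-\lesssim r^{5/2}\xi\,\|\boldsymbol\Sigma_g\|_2^{1/2}.$$
Then $\|\widehat{\mathbf g}\widehat{\mathbf g}^\top-\mathbf g\mathbf g^\top\|\le\|\widehat{\mathbf g}-\mathbf g\|(2\|\mathbf g\|+\|\widehat{\mathbf g}-\mathbf g\|)$. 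Taking $\xi\lesssim\eta r^{-5/2}$ makes this at most $\eta\|\boldsymbol\Sigma_g\|_2/3$ pointwise, and hence also in expectation. If the interpolation constant comes out worse than $r^{5/2}$, I would instead place the nodes relative to the ball of radius $\rho_-$, so that the loss is only the ratio $\rho_+/\rho_-$.

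\textbf{Step 4: distribution error.} For any bounded test function,
$$\|(\mathbb E_{\widehat\pi}-\mathbb E_\pi)[\mathbf M]\|_2\le 2\nu\sup_K\|\mathbf M\|_2.$$
With $\nu\lesssim\eta r^{-2}$ and the envelopes from Step 1, this is at most $\eta\|\boldsymbol\Sigma_a\|_2/3$. Summing the three bounds from Steps 2–4 gives the claim.

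\textbf{Main obstacle.} The main obstacle is Step 1, specifically the gradient envelope. It must show that the worst-case gradient norm on $K$ exceeds the root-mean-square gradient by only a polynomial factor uniformly over the network class, and this needs the row-balance constant $\mu$ and homogeneity. I would first try the following. Bound $\sup_K\|\nabla G\|\le q\,\sup_K G/\operatorname{dist}(0,\partial K)=q/\rho_-$, using convexity and the Euler identity. Then lower-bound $\|\boldsymbol\Sigma_g\|_2\ge (\mathbb E_\pi[\mathbf z^\top\mathbf g])^2/\mathbb E_\pi\|\mathbf z\|^2$. Since $\mathbf z^\top\mathbf g=qG$, we have $\mathbb E_\pi[\mathbf z^\top\mathbf g]=q\,\mathbb E_\pi G=q\,\Theta(1)$, because $G$ is roughly uniform over $K$ in level. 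This yields $\|\boldsymbol\Sigma_g\|_2\gtrsim q^2/(r\rho_+^2)$, so $\sup_K\|\mathbf g\|^2/\|\boldsymbol\Sigma_g\|_2\lesssim r(\rho_+/\rho_-)^2$. This ratio is polynomial in $r$.
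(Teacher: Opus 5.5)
Your architecture matches the paper's proof (Lemma~\ref{lem:empirical-two-moments}). It uses the same three-way split, bounds the distribution error by $2\nu$ times a pointwise envelope, uses the interpolation lemma for gradients, and controls matrix Bernstein by the ratio of envelope to $\|\boldsymbol\Sigma_a\|_2$. Two of your ingredients are valid substitutes for the paper's dual-norm and radial-law computations:
\begin{itemize}
\item your convexity--Euler bound $\|\nabla G(\mathbf z)\|_2\le q/\rho_-$, obtained by applying convexity at $\rho_-\nabla G(\mathbf z)/\|\nabla G(\mathbf z)\|_2\in K$;
\item your Cauchy--Schwarz lower bound on the gradient moment.
\end{itemize}
The isotropic-position bound is a legitimate alternative for the position envelope. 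It gives $n(n+2)$ rather than the $r$ you announce, but $r^2$ is all you use. What the proposal does not do is derive the exponents $r^{-2}$, $r^{-5/2}$, $r^2$, which are the content of the statement. You reverse-engineer them from the hypotheses instead.

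\textbf{Missing radius bound.} Your gradient envelope is $\sup_K\|\mathbf g\|_2^2/\|\boldsymbol\Sigma_g\|_2\lesssim r\omega^2$ with $\omega=\rho_+/\rho_-$. This equals $r^2$ only if $\omega^2\lesssim r$, and ``$\omega\le\operatorname{poly}(r)$'' gives no usable exponent. The needed input is Lemma~\ref{lem:sampling-body-radii}:
\begin{itemize}
\item Column-stochasticity makes each layer a convex combination, so $h^{(\ell)}(\mathbf u)\le\|\mathbf u\|_\infty^{q}\le\|\mathbf u\|_2^{q}$.
\item Row balance, together with Jensen applied to the average layer mass, gives $h^{(\ell)}(\mathbf u)\ge\mu^{S_\ell+1}(\|\mathbf u\|_2/\sqrt{r_\ell})^{q}$.
\item Hence $B_2\subseteq H_\ell\subseteq\mu^{-(S_\ell+1)/q}\sqrt{r_\ell}\,B_2$. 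Pushing forward by $(\mathbf C_\ell^\sharp)^{-\top}$, whose condition number is at most $3\kappa_0$, yields $\omega\lesssim\sqrt r$.
\end{itemize}

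\textbf{Step 3 does not close as written.} You state $\|\widehat{\mathbf g}-\mathbf g\|_2\lesssim r^{5/2}\xi\|\boldsymbol\Sigma_g\|_2^{1/2}$ and $\|\mathbf g\|_2\lesssim r\|\boldsymbol\Sigma_g\|_2^{1/2}$. Together these give a product bound of $r^{7/2}\xi\|\boldsymbol\Sigma_g\|_2$, so $\xi\lesssim\eta r^{-5/2}$ is not enough.

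Interpolation nodes at scale $\rho_+$ would be fatal. The probe points then leave $K$, where $G$ can reach $(1+\omega)^q$ with $q=k^{L-\ell}$.

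With nodes at scale $\rho_-/2$ (your fallback, and the paper's choice), every probe point has $G\le(3/2)^q$. The correct chain is then:
\begin{itemize}
\item Lemma~\ref{lem:stable-gradient-from-suffix} gives $\|\widehat{\mathbf g}-\mathbf g\|_2\le C_q\sqrt n\,\xi/\rho_-$.
\item Combined with $\|\mathbf g\|_2\le q/\rho_-$, the pointwise error is $\lesssim\sqrt n\,\xi/\rho_-^2$.
\item Dividing by $\|\boldsymbol\Sigma_g\|_2\gtrsim q^2/(n\rho_+^2)$ gives relative error $\lesssim n^{3/2}\omega^2\xi\lesssim r^{5/2}\xi$. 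This again needs $\omega^2\lesssim r$.
\end{itemize}
The same additive bound supplies the envelope $\|\widehat{\mathbf g}\|_2\le2q/\rho_-$ that Step 2 needs. Your claim that $\widehat{\mathbf g}$ is within a constant factor of $\mathbf g$ is false near the origin, where $\mathbf g\to\mathbf 0$.

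\textbf{Minor slip.} The quantity $(\mathbb E[\mathbf z^\top\mathbf g])^2/\mathbb E\|\mathbf z\|_2^2$ lower-bounds $\operatorname{tr}\boldsymbol\Sigma_g$, not $\|\boldsymbol\Sigma_g\|_2$. Passing to the operator norm costs a factor $1/n$, which your final $q^2/(r\rho_+^2)$ does include.
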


The distribution and interpolation errors are
\(O(r^2\nu)\) and \(O(r^{5/2}\xi)\), while matrix Bernstein controls
the empirical error (Lemma~\ref{lem:empirical-two-moments}).
This gives \(O(r^3\eta)\) error per eigendirection
(Lemmas~\ref{lem:population-moment-scales}
and~\ref{lem:noisy-two-matrix}).

\paragraph{Sampling to the Required Accuracy.}
Classical Hit-and-Run mixes in $\widetilde O(r^4)$ steps for bodies
with radius ratio $O(\sqrt r)$ \citep{lovasz2006hitandrun}.
For our inexact suffix oracle, we use the following noisy variant.

\begin{proposition}[Noisy Hit-and-Run]
\label{prop:sketch-noisy-hit-and-run}
For $\nu<1/2$, suppose the reduced oracle has
multiplicative error
$\xi\lesssim\nu^{1+1/L}r^{-O(1)}$. Starting from the origin, noisy Hit-and-Run produces an endpoint
after $O(r^4\log(r/\nu))$ steps whose distribution has total
variation distance at most $\nu$ from $\operatorname{Unif}(K)$.
\end{proposition}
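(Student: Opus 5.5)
We compare the noisy chain with exact Hit-and-Run on $K=\{\mathbf z: G(\mathbf z)\le 1\}$, whose mixing is classical. There are three steps, in this order.

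\emph{Step 1 (geometry of $K$).} $G$ is a convex homogeneous polynomial of degree $q=k^{L-\ell}$ that is positive away from the origin. So $K$ is a convex body, symmetric about the origin. Assumptions~\ref{ass:normalization} and~\ref{ass:conditioning} give two-sided bounds $c\|\mathbf z\|^q\le G(\mathbf z)\le C\|\mathbf z\|^q$, with $C/c\le r^{O(1)}$. I would first bring $K$ into near-isotropic position, for example with a rounding preprocessing or with the known latent structure. Then $B(0,\rho)\subseteq K\subseteq B(0,R)$ with $R/\rho=O(\sqrt r)$. In this regime \citet{lovasz2006hitandrun} gives that exact Hit-and-Run started from a warm point near $\mathbf 0$ reaches TV distance $\nu/2$ from $\operatorname{Unif}(K)$ within $T=O(r^4\log(r/\nu))$ steps. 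Starting exactly at the origin is not warm, so I would use the standard trick: take one step from a small ball around $\mathbf 0$, which costs only a $\log$ factor.

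\emph{Step 2 (one-step coupling).} Each step picks a random direction $\mathbf v$ and restricts to the chord $\{\mathbf z+t\mathbf v\}$. The noisy chain locates the chord endpoints by solving $\widetilde F(\mathbf z+t\mathbf v)=1$, using bisection on the univariate function $t\mapsto G(\mathbf z+t\mathbf v)$. Because $\widetilde F=(1+O(\xi))G$, the computed chord lies between the true sublevel chords at levels $1\pm O(\xi)$. Homogeneity lets me compare these: $\{G\le 1+\xi\}=(1+\xi)^{1/q}K$. So each computed endpoint differs from the true one by a relative amount $O(\xi/q)$ times the chord length, at least on the part of the chord far from $\partial K$. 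I would couple the noisy and exact steps by sharing $\mathbf v$ and the uniform position along the chord. Their outputs then disagree only if the point lands in the endpoint slack, which has probability $O(\xi)$ relative to the chord length. The dangerous case is short chords near the boundary. There the relative slack can be large, because $G$ is flat along the chord, so the directional derivative of $G$ at the chord endpoint can be small.

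\emph{Step 3 (accumulation).} Over $T$ steps the coupling fails with probability at most $\sum_t \Pr[\text{step }t\text{ decouples}]$, so I need the per-step failure rate below $\nu/(2T)$. The main obstacle is bounding the probability of the bad chords in Step 2, where the direction is nearly tangent at a point near the boundary. I would control this with the radial derivative $\langle\nabla G(\mathbf z),\mathbf z\rangle=qG(\mathbf z)$ together with a polynomial anti-concentration (Carbery--Wright) bound for $G$ along random lines. Carbery--Wright gives $\Pr[\text{slack}>s]\lesssim (\xi/s)^{1/q'}$ for an effective degree $q'$. Balancing this slack probability against the decoupling cost is where I expect the exponent $\nu^{1+1/L}$ to come from: the tail term forces $\xi\le \nu^{1+1/L}r^{-O(1)}$, and these polynomial factors in $r$ absorb $T$. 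Under that condition the TV distance from the noise is at most $\nu/2$. Adding the $\nu/2$ mixing error from Step 1 by the triangle inequality gives the claim.
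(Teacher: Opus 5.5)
Your Steps 1 and 2 are broadly fine, up to details. No rounding is needed: Lemma~\ref{lem:sampling-body-radii} already gives $\omega\le C_{k,L}\kappa^4\sqrt r$ from normalization and row balance, and the latent matrix $\mathbf C_\ell^\sharp$ is exactly what ASPIRE is trying to learn, so it cannot be used for rounding. Corollary~1.2 of \citet{lovasz2006hitandrun} already covers a start at $\mathbf 0$, because $\operatorname{dist}(\mathbf 0,\partial K)\ge r_{\mathrm{in}}$. You should also bisect against the threshold $1-\xi$ rather than $1$. Then $\rho K\subseteq\{\widetilde F\le 1-\xi\}\subseteq K$ with $\rho=((1-\xi)/(1+\xi))^{1/q}$, every accepted point lies in $K$, and the bisection invariants need no convexity of the noisy sublevel set.

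The genuine gap is Step~3. Under your shared-direction coupling, the probability of decoupling at step $t$ is $\mathbb E[e(\mathbf X_t)]$, where $\mathbf X_t$ is the \emph{exact} chain. Homothetic chord comparison gives, for every direction, $e(\mathbf x)\le\min\{1,q\tau/D(\mathbf x)\}+\eta$, with $D=1-G$, $\tau=1-\rho$ and $q\tau\le3\xi$. So you must control how much mass the time-$t$ law of a chain started from a Dirac mass puts in the boundary layer, uniformly in $t\le T$. Carbery--Wright cannot supply this: it bounds small-ball probabilities under a fixed log-concave measure, whereas the law of $\mathbf X_t$ is neither stationary nor warm. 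Even applied to $\operatorname{Unif}(K)$, its small-ball exponent is $1/q$ with $q=k^{L-\ell}$. Your balancing would then only give $\xi\lesssim(\nu/T)^{\Theta(q)}$. Fed into the layer recursion, this would destroy the $r^{L^2+O(L)}\delta^{-2\mathrm e}$ bound, and nothing in it produces $1+1/L$.

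The missing idea is a time-uniform Lyapunov bound on a negative moment of the depth along the exact chain: $\sup_t\mathbb E[D(\mathbf X_t)^{-\alpha}]\le\frac2s d_0^{-\alpha}$, where $\alpha=1/(1+\delta)$, $s=1-\alpha$ and $d_0=c_0\vartheta_\delta^3/(n\omega^2q^2)$. It follows from two one-step facts.
\begin{itemize}
\item \emph{Escape estimate.} If $D(\mathbf x)\le d_0$, one exact step lands at depth at least $d_0$ with probability at least $1-\vartheta_\delta$. This combines four ingredients: Euler's identity, which gives $\|\nabla G(\mathbf x)\|_2\ge q/(2R_{\mathrm{out}})$; the Wilhelmsen--Markov bound $\sup_K\|\nabla^2G\|_{\mathrm{op}}\le4q^4/r_{\mathrm{in}}^2$ of Lemma~\ref{lem:second-derivative-markov}; the spherical small-ball bound for the direction; and convexity along the chord.
\item \emph{Shell estimate.} $P(\mathbf x,\{D\le u\})\le u/D(\mathbf x)$, again by homothetic chord comparison.
\end{itemize}
Together they give $M_{t+1}\le\frac12M_t+\frac1s d_0^{-\alpha}$ for $M_t=\mathbb E[D(\mathbf X_t)^{-\alpha}]$. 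Then $\min\{1,a\}\le a^\alpha$ and the telescoping identity bound the total TV error by $\frac{2T}{s}(q\tau/d_0)^\alpha+T\eta$. This yields the requirement $\xi\lesssim d_0(s\nu/T)^{1+\delta}$.

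The exponent $1+\delta$ appears because one needs $\alpha<1$: $\mathbb E[D^{-1}]$ is already infinite under $\operatorname{Unif}(K)$. The value $\delta=1/L$ is a free choice, made so that $(1+1/L)^{L-3}<\mathrm e$ in the layer recursion. Your observation that nearly tangent directions are the worst case is correct. In this argument, however, direction anti-concentration is used to push the chain away from the boundary, not to bound the per-step error.
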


The multivariate Markov inequality \citep{wilhelmsen1974markov}
controls the polynomial Hessian; together with random-direction
anti-concentration and chord comparisons, this yields negative-moment
bounds on boundary depth along the exact chain. See
Lemmas~\ref{lem:second-derivative-markov} and~\ref{lem:noisy-hit-and-run}.

\begin{proposition}[Stable recovery of the first layer]
\label{prop:sketch-first-layer-recovery}
For any $\varepsilon < 1/2$, the first hidden layer can be
recovered with
$\|\widehat{\mathbf W}_1-\mathbf W_1\|_2\le\varepsilon$, up to inherent symmetries, using
$\widetilde O(dr+r^{13}\varepsilon^{-2})$
queries.
\end{proposition}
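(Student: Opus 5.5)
We prove Proposition~\ref{prop:sketch-first-layer-recovery} by chaining three steps: first-layer column-space recovery, the two-moment estimates of Proposition~\ref{prop:sketch-two-moments} fed by the noisy Hit-and-Run of Proposition~\ref{prop:sketch-noisy-hit-and-run}, and a perturbation argument for the eigenvectors of $\widehat{\mathbf A}_1=\widehat{\boldsymbol\Sigma}_{g,1}\widehat{\boldsymbol\Sigma}_{x,1}$.

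\textbf{Step 1: column space.} For $\ell=1$ no prefix inversion is needed. Gradients $\nabla f^\star(\mathbf x)=\mathbf W_1\nabla h^{(1)}(\mathbf W_1^\top\mathbf x)$ lie in $\operatorname{col}(\mathbf W_1)$ and can be computed exactly by interpolation. We compute gradients at $O(r)$ random (e.g.\ Gaussian) points. Generically these span $\operatorname{col}(\mathbf W_1)$ exactly, which we verify with a rank check. A full gradient costs $O(dk^{L-1})=O_{k,L}(d)$ queries, so this step costs $O_{k,L}(dr)$ queries and returns $\widehat{\mathbf B}_1=\mathbf B_1$ exactly. This accounts for the $dr$ term. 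From here on we work in the reduced coordinates $\mathbf z\in\mathbb R^r$ with $G_1(\mathbf z)=f^\star(\mathbf B_1\mathbf z)$. A query to $G_1$ is a single exact query to $f^\star$, so $\xi=0$.

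\textbf{Step 2: moments.} Write $\mathbf W_1=\mathbf B_1\mathbf C_1$ with $\mathbf C_1$ invertible. The population argument of the simple setting, applied with $\mathbf C_1$ in place of $\mathbf W_1$, gives
\[
\boldsymbol\Sigma_g\boldsymbol\Sigma_x=\mathbf C_1\operatorname{diag}(\theta_{1,i})\mathbf C_1^{-1}.
\]
Take target accuracy $\eta\asymp\varepsilon r^{-3}$, chosen to absorb the $O(r^3\eta)$ per-eigendirection loss quoted after Proposition~\ref{prop:sketch-two-moments}. Run noisy Hit-and-Run with $\nu\asymp\eta r^{-2}$. Since $\xi=0$, Proposition~\ref{prop:sketch-noisy-hit-and-run} gives TV error $\nu$ after $\widetilde O(r^4)$ steps per sample. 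By Proposition~\ref{prop:sketch-two-moments}, $m=\widetilde O(r^2\eta^{-2})=\widetilde O(r^{8}\varepsilon^{-2})$ samples yield relative spectral accuracy $\eta$ for both moments. Each sample costs $\widetilde O(r^4)$ walk steps, each using $O_{k,L}(1)$ queries for chord endpoints via the polynomial restriction along the line. Each sample also costs $O(r)$ directional-derivative queries for the gradient. The total is $\widetilde O(r^{12}\varepsilon^{-2})$, which fits within $\widetilde O(r^{13}\varepsilon^{-2})$; the extra factor of $r$ leaves slack for the bisection/root-finding overhead and the logs.

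\textbf{Step 3: perturbation.} This is the main obstacle, because $\widehat{\mathbf A}_1$ is not symmetric. We symmetrize by writing
\[
\boldsymbol\Sigma_x^{1/2}\boldsymbol\Sigma_g\boldsymbol\Sigma_x^{1/2}=\mathbf Q\operatorname{diag}(\theta)\mathbf Q^\top,
\qquad \mathbf Q=\boldsymbol\Sigma_x^{1/2}\mathbf C_1\boldsymbol\Lambda^{-1/2}\ \text{orthogonal}.
\]
We apply Davis--Kahan to the symmetric matrix $\widehat{\boldsymbol\Sigma}_x^{1/2}\widehat{\boldsymbol\Sigma}_g\widehat{\boldsymbol\Sigma}_x^{1/2}$, whose eigenvectors map to those of $\widehat{\mathbf A}_1$ via $\widehat{\boldsymbol\Sigma}_x^{-1/2}$. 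The relative gap $\gamma^{-1}$ of Assumption~\ref{ass:spectral-gap} is then what controls the error. The conditioning of $\boldsymbol\Sigma_x$ and $\boldsymbol\Sigma_g$ is controlled by $\kappa$ together with polynomial-in-$r$ ratios among the $\lambda_{1,i}$ and $d_{1,i}$; this is the content of Lemma~\ref{lem:population-moment-scales}. The result is an eigenvector error of $O(\operatorname{poly}(\kappa,\gamma)\,r^3\eta)$ per direction, as in Lemma~\ref{lem:noisy-two-matrix}.

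\textbf{Conclusion.} Unit normalization is $1$-Lipschitz away from $0$ up to a factor $\|\mathbf u\|^{-1}$, and this factor is bounded by $\kappa$. Summing the column errors costs a further factor $\sqrt r$ in $\|\cdot\|_2$. Choosing the sign of each column and the permutation that match the eigenvalue order then gives $\|\widehat{\mathbf W}_1-\mathbf W_1\|_2\le\varepsilon$, with probability $0.99$ after a union bound over the matrix Bernstein events.
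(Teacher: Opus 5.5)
Your Steps~1 and~2 match the paper: exact recovery of $\operatorname{col}(\mathbf W_1)$ from $r$ generic gradients at cost $O_{k,L}(dr)$, an exact reduced oracle, Hit-and-Run with $\xi=0$, and Lemma~\ref{lem:empirical-two-moments}. Step~3, however, does not deliver the $r^3\eta$ bound you claim, because symmetrizing through $\widehat{\boldsymbol\Sigma}_x^{1/2}$ is lossy. A relative bound $\|\widehat{\boldsymbol\Sigma}_x-\boldsymbol\Sigma_x\|_2\le\eta\|\boldsymbol\Sigma_x\|_2$ only gives $\|\widehat{\boldsymbol\Sigma}_x^{1/2}-\boldsymbol\Sigma_x^{1/2}\|_2\lesssim\eta\|\boldsymbol\Sigma_x\|_2\,\lambda_{\min}(\boldsymbol\Sigma_x)^{-1/2}$. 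The columns of your $\mathbf Q$ satisfy $\boldsymbol\Sigma_x^{-1/2}\mathbf q_i=\lambda_{1,i}^{-1/2}\mathbf c_{1,i}$. Pulling the Davis--Kahan eigenvectors back through $\widehat{\boldsymbol\Sigma}_x^{-1/2}$ therefore amplifies their angular error by a factor that is only bounded by $\sqrt{\lambda_{1,i}/\lambda_{\min}(\boldsymbol\Sigma_x)}\le\kappa\sqrt{\max_j\lambda_{1,j}/\min_j\lambda_{1,j}}$. So your route yields $\gamma\rho_1\eta$ times powers of $\operatorname{cond}(\boldsymbol\Sigma_x)$, that is, of $\max_j\lambda_{1,j}/\min_j\lambda_{1,j}$.

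Lemma~\ref{lem:population-moment-scales} does not bound that ratio at all. It controls only $\|\boldsymbol\Sigma_x\|_2$, $\|\boldsymbol\Sigma_g\|_2$, $\rho_1$ and $\operatorname{cond}(\mathbf V)$. If the ratio is polynomial in $r$, as you say, each power of it raises the exponent of $r$ in $\eta^{-1}$, and the exponent $13$ is lost.

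The missing idea is that no symmetrization is needed, which is how Lemma~\ref{lem:noisy-two-matrix} proceeds. Write $\widehat{\mathbf A}_1=\mathbf A_1+\boldsymbol\Delta$ with $\|\boldsymbol\Delta\|_2\le3\eta\|\boldsymbol\Sigma_x\|_2\|\boldsymbol\Sigma_g\|_2=3\rho_1\eta\max_i\theta_{1,i}$. Conjugate by $\mathbf V=\mathbf C_1$, whose columns are already unit vectors, so $\operatorname{cond}(\mathbf C_1)\le\kappa$. Then apply Bauer--Fike together with the relative gap $\gamma^{-1}$. This gives per-direction error $C\gamma\operatorname{cond}(\mathbf C_1)^2\rho_1\eta\le C_{k,L}\gamma\kappa^{O(1)}r^3\eta$ and never involves $\lambda_{\min}(\boldsymbol\Sigma_x)$.

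Second, your accuracy schedule does not reach $\varepsilon$. With $\eta\asymp\varepsilon r^{-3}$ the per-column error is $O(\varepsilon)$. Your conclusion then pays $\sqrt r$ through $\|\widehat{\mathbf C}_1-\mathbf C_1\|_2\le\|\widehat{\mathbf C}_1-\mathbf C_1\|_F$, so you only obtain $\|\widehat{\mathbf W}_1-\mathbf W_1\|_2=O(\sqrt r\,\varepsilon)$. You need $\eta\asymp\varepsilon/(\gamma\kappa^{O(1)}r^{7/2})$ and $\nu\asymp\kappa^{-8}\eta r^{-2}$. Then $m=\widetilde O(r^2\eta^{-2})=\widetilde O(r^9\varepsilon^{-2})$ endpoints, at $\widetilde O(r^4)$ queries each, give exactly $\widetilde O(r^{13}\varepsilon^{-2})$. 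The factor of $r$ you described as slack is precisely what the $\sqrt r$ column-to-operator conversion consumes. The bisection overhead in Lemma~\ref{lem:noisy-hit-and-run} is only logarithmic and already sits inside $\widetilde O$.
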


By Proposition~\ref{lem:first-layer-column-space}, \(r\) generic
gradients recover the first-layer column space almost surely using
\(O(dr)\) queries, after which the reduced oracle and gradients are exact.
Converting columnwise direction error to operator norm costs \(\sqrt r\),
so take \(\eta\asymp\varepsilon r^{-7/2}\) and
\(\nu\asymp\eta r^{-2}\).
Propositions~\ref{prop:sketch-two-moments} and~\ref{prop:sketch-noisy-hit-and-run}
then give \(m=\widetilde O(r^9\varepsilon^{-2})\) endpoints,
each costing \(\widetilde O(r^4)\) queries, proving the claim
(Appendix~\ref{sec:appendix-first-layer}).
Only this initial column-space recovery depends on \(d\); deeper layers use
the same sampling argument with approximate prefix inversion.

\begin{proposition}[Stable recovery of one intermediate layer]
\label{prop:sketch-layerwise-recovery}
Fix $2\le\ell\le L-2$, $\varepsilon<1/2$, and
$\alpha:=1+1/L$.
Suppose preceding layers are recovered with operator-norm
error at most $\varepsilon_W$.
If $\varepsilon_W\lesssim
\varepsilon^\alpha r^{-\ell/2-O(1)}$,
then column-space recovery followed by ASPIRE returns
$\widehat{\mathbf W}_\ell$ satisfying
$\|\widehat{\mathbf W}_\ell-\mathbf W_\ell\|_2\le\varepsilon$,
up to column permutation, using
$\widetilde O(r^{14}\varepsilon^{-2})$
additional queries.
\end{proposition}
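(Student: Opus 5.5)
The argument chains the propositions in the order the algorithm runs them. Given preceding layers with error $\varepsilon_W$, the first task is to control the simulated suffix oracle $\widehat F_\ell$ via approximate prefix inversion. For a query $\mathbf y$ in the region explored by Hit-and-Run (the reduced body $K^\sharp_\ell$, which has radius ratio $O(\sqrt r)$ after rescaling), the recursion $\mathbf z_{j-1}=\widehat{\mathbf W}_j(\widehat{\mathbf W}_j^\top\widehat{\mathbf W}_j)^{-1}\boldsymbol\xi_j$ incurs a perturbation of order $\kappa\,\varepsilon_W$ at each of the $\ell-1$ layers. Taking $k$-th roots is Lipschitz away from zero, and the polynomial suffix is Lipschitz on bounded sets. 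Combining these, I would aim for a uniform multiplicative error
$$
\xi\lesssim \varepsilon_W\, r^{\ell/2+O(1)} .
$$
The factor $r^{1/2}$ per layer comes from converting operator-norm error into coordinatewise error of the propagated vector, and the relative row balance $\mu$ keeps coordinates comparable, so that the error is multiplicative rather than merely additive. Near-zero coordinates need care: principal roots are not Lipschitz there. I would argue the final composed map is still a polynomial in $\mathbf y$, so the error depends only on the error of the forward map and not on the choice of branch.

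Next comes Stage I. The hypothesis $\varepsilon_W\lesssim\varepsilon^\alpha r^{-\ell/2-O(1)}$ is at least as strong as the condition $\varepsilon_B r^{-7/2}$ in Proposition~\ref{prop:sketch-column-space}. So I would apply that proposition with $\varepsilon_B\asymp \varepsilon r^{-O(1)}$, obtaining $\widehat{\mathbf B}_\ell$ at a cost of $\widetilde O(r^7\varepsilon_B^{-2})$ queries. With an appropriate polynomial choice this cost is absorbed into $\widetilde O(r^{14}\varepsilon^{-2})$. Write $\widehat{\mathbf B}_\ell=\mathbf B_\ell\mathbf O+\mathbf E$. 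The reduced suffix $G_\ell(\mathbf z)=F_\ell(\widehat{\mathbf B}_\ell\mathbf z)$ then equals $h^{(\ell)}(\widetilde{\mathbf C}_\ell^\top\mathbf z)$ plus a perturbation of size $O(\varepsilon_B)$, where $\widetilde{\mathbf C}_\ell=\mathbf O^\top\mathbf C_\ell$. This perturbation can be treated as additional oracle error, or absorbed by applying the exact population analysis of the simple setting to a nearby body.

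Stage II follows the first-layer template. Set $\eta\asymp\varepsilon r^{-7/2}$ and $\nu\asymp\eta r^{-2}$. Proposition~\ref{prop:sketch-noisy-hit-and-run} then requires
$$
\xi\lesssim\nu^{1+1/L}r^{-O(1)}=\varepsilon^{\alpha}r^{-O(1)} .
$$
This is exactly what the prefix-inversion bound delivers under the hypothesis on $\varepsilon_W$, which explains the exponent $\alpha=1+1/L$. The condition $\xi\lesssim\eta r^{-5/2}$ of Proposition~\ref{prop:sketch-two-moments} is weaker, so it holds automatically. That proposition then gives both moments to relative accuracy $\eta$ with $m=\widetilde O(r^2\eta^{-2})=\widetilde O(r^9\varepsilon^{-2})$ samples, each costing $\widetilde O(r^4)$ Hit-and-Run steps. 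Every step and every gradient interpolation uses $O_{k,L}(1)$ simulated suffix queries, and each suffix query uses $O(k^{\ell})$ complex or real queries. The total is $\widetilde O(r^{13}\varepsilon^{-2})$, which fits within the stated $r^{14}$ bound, leaving room for column-space and interpolation overheads.

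The population analysis of the simple setting shows that $\boldsymbol\Sigma_g\boldsymbol\Sigma_x=\widetilde{\mathbf C}_\ell\operatorname{diag}(\theta_{\ell,i})\widetilde{\mathbf C}_\ell^{-1}$. Assumption~\ref{ass:spectral-gap} gives relative gap $\gamma^{-1}$, and Assumption~\ref{ass:conditioning} bounds the condition number of $\widetilde{\mathbf C}_\ell$. A perturbation bound for diagonalizable matrices (Bauer--Fike combined with eigenvector perturbation, as in the cited two-matrix lemma) then gives $O(r^3\eta)$ error per eigendirection. Mapping back through $\widehat{\mathbf B}_\ell$ adds $\varepsilon_B$. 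Taking coordinatewise absolute values is $1$-Lipschitz and fixes the sign ambiguity, because the true columns are nonnegative. The $\ell_1$ normalization is Lipschitz since $\mathbf 1^\top\mathbf w\ge\Omega(\cdot)$. Finally, summing columnwise errors into operator norm costs $\sqrt r$, so the overall error is at most $\varepsilon$.

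The main obstacle is the oracle-error propagation through approximate prefix inversion. It must produce a \emph{uniform multiplicative} error on the whole sampling body, including points where latent coordinates are near zero, because Hit-and-Run's membership tests near the boundary are exactly where additive errors would ruin the total-variation bound. My first approach would be to bound $|\widehat F_\ell(\mathbf y)-F_\ell(\mathbf y)|$ by comparing two polynomial networks with nearby weights evaluated at the same complex point. I would then use homogeneity and the lower bound $F_\ell(\mathbf y)\gtrsim \|\mathbf y\|^{k^{L-\ell}}$ given by conditioning to turn that additive bound into a relative one.
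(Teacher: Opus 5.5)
Your overall pipeline matches the paper's: Stage I, a multiplicative suffix oracle, noisy Hit-and-Run with \(\alpha=1+1/L\) coming from the sampling tolerance, two moments, two-matrix diagonalization, then absolute values and renormalization. However, three steps are wrong as written.

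\textbf{No subspace perturbation in Stage II.} You do not need to absorb any \(O(\varepsilon_B)\) perturbation. By definition \(G_\ell(\mathbf z)=F_\ell(\widehat{\mathbf B}_\ell\mathbf z)=h^{(\ell)}\bigl((\mathbf C_\ell^\sharp)^\top\mathbf z\bigr)\) \emph{exactly}, where \(\mathbf C_\ell^\sharp=\widehat{\mathbf B}_\ell^\top\mathbf W_\ell\) is invertible with \(\operatorname{cond}(\mathbf C_\ell^\sharp)\le 3\kappa_0\) once \(\varepsilon_B\le 1/2\). Hence \((\mathbf C_\ell^\sharp)^\top\mathbf Z\sim\operatorname{Unif}(H_\ell)\). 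The sign symmetry and the signatures \(\theta_{\ell,i}\) of Assumption~\ref{ass:spectral-gap} are exact, and \(\boldsymbol\Sigma_g\boldsymbol\Sigma_x=\mathbf C_\ell^\sharp\operatorname{diag}(\theta_{\ell,i})(\mathbf C_\ell^\sharp)^{-1}\), not the \(\widetilde{\mathbf C}_\ell\)-version you wrote. The exact eigenvectors map through \(\widehat{\mathbf B}_\ell\) to \(\widehat{\mathbf P}_\ell\mathbf w_i/\|\widehat{\mathbf P}_\ell\mathbf w_i\|_2\), so \(\varepsilon_B\) enters only at that last step. This is not cosmetic. Your first remedy, treating the discrepancy as oracle error, must meet the Hit-and-Run tolerance \(\xi\lesssim\varepsilon^{\alpha}r^{-O(1)}\). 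That forces \(\varepsilon_B\lesssim\varepsilon^{\alpha}r^{-O(1)}\) and a Stage I cost of \(r^{O(1)}\varepsilon^{-2-2/L}\), which breaks the \(\varepsilon^{-2}\) rate.

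\textbf{The \(\ell_1\) renormalization cost.} Since \(\mathbf 1^\top\mathbf u_i^\star=1/\|\mathbf w_i\|_2\) may be close to \(1\) (nearly one-hot columns are admissible), the map \(\mathbf u\mapsto|\mathbf u|/\mathbf 1^\top|\mathbf u|\) can amplify \(\ell_2\) errors by \(\sqrt{r_{\ell-1}}\). This comes on top of the \(\sqrt{r_\ell}\) for Frobenius-to-operator norm. With the first-layer choice \(\eta\asymp\varepsilon r^{-7/2}\) you only get \(\|\widehat{\mathbf W}_\ell-\mathbf W_\ell\|_2\lesssim r(\varepsilon_B+r^3\eta)\asymp\sqrt r\,\varepsilon\). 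The paper takes \(\varepsilon_B\asymp\varepsilon/r\), \(\eta\asymp\varepsilon r^{-4}\), \(\nu\asymp\varepsilon r^{-6}\), so \(m\asymp r^{10}\varepsilon^{-2}\) and the total is exactly \(\widetilde O(r^{14}\varepsilon^{-2})\). The spare factor of \(r\) you noticed is used up here.

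\textbf{The suffix oracle.} The map \(\mathbf y\mapsto f^\star_{\mathbb C}(\mathbf z_0)\) is not a polynomial in \(\mathbf y\) in general. Already for \(\ell=2\) it equals \(F_{2,\mathbb C}\bigl((\mathbf A\mathbf y^{\odot 1/k})^{\odot k}\bigr)\) with \(\mathbf A=\mathbf W_1^\top\widehat{\mathbf R}_1\neq\mathbf I\). Row balance is also not what makes the error multiplicative; homogeneity is. Your fallback is the right idea: compare the true and recovered forward passes from the same \(\mathbf z_0\), using \(\widehat{\mathbf W}_j^\top\mathbf z_{j-1}=\boldsymbol\xi_j\). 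The branch-free bound \(|(a+b)^k-a^k|\le k|b|(|a|+|b|)^{k-1}\) handles the roots, and \(\|\mathbf z_j\|_2=\|\boldsymbol\xi_j\|_{2k}^k\) controls magnitudes. Two further points are needed.
(i) When \(r_\ell<r_{\ell-1}\), \(F_\ell\) is not positive definite on \(\mathbb R^{r_{\ell-1}}\), so your lower bound \(F_\ell(\mathbf y)\gtrsim\|\mathbf y\|^{q_\ell}\) fails there, and \(\mathbf e_{\ell-1}\) leaves \(\operatorname{col}(\widehat{\mathbf B}_\ell)\). You must pull the error back to reduced coordinates: \(\boldsymbol\Delta_\ell=(\mathbf C_\ell^\sharp)^{-\top}\mathbf W_\ell^\top\mathbf e_{\ell-1}=\mathbf Q_\ell^{-\top}\mathbf B_\ell^\top\mathbf e_{\ell-1}\), where \(\mathbf Q_\ell=\widehat{\mathbf B}_\ell^\top\mathbf B_\ell\).
(ii) The additive-to-relative conversion must be done in the gauge norm \(p_\ell=G_\ell^{1/q_\ell}\) via polarization (Lemma~\ref{lem:relative-homogeneous-perturbation}). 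That costs a single factor \(\omega_\ell\), which is absorbed into \(r^{\ell/2}\). Your proposed route, a Euclidean Lipschitz bound combined with \(G_\ell(\mathbf z)\ge(\|\mathbf z\|_2/R_{\mathrm{out},\ell})^{q_\ell}\), can lose \(\omega_\ell^{q_\ell}\approx r^{k^{L-\ell}/2}\). That reintroduces exactly the doubly exponential exponent the layerwise scheme is meant to avoid.
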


Choose \(\varepsilon_B\asymp\varepsilon/r\) and
\(\eta\asymp\varepsilon r^{-4}\).
The prefix condition gives column-space error \(\varepsilon_B\), while
mapping back and normalizing the recovered directions yields
\(O(r(\varepsilon_B+r^3\eta))\le\varepsilon\)
(Proposition~\ref{prop:stable-convex-body-recovery}).
Approximate prefix inversion gives
\(\xi_\ell\lesssim r^{\ell/2}\varepsilon_W\)
(Lemma~\ref{lem:multiplicative-suffix-oracle}).
Taking \(\nu\asymp\eta r^{-2}\asymp\varepsilon r^{-6}\) in
Proposition~\ref{prop:sketch-noisy-hit-and-run} gives
\(\varepsilon_W\lesssim\varepsilon^\alpha r^{-\ell/2-O(1)}\),
which also ensures gradient accuracy.
Proposition~\ref{prop:sketch-two-moments} then needs
\(m=\widetilde O(r^{10}\varepsilon^{-2})\) endpoints, for total cost
\(\widetilde O(r^{14}\varepsilon^{-2})\).
Thus \(\alpha=1+1/L\) comes from the sampling guarantee, not gradient estimation.

\paragraph{Closing the Recovery and Accumulating Errors.}
By Proposition~\ref{prop:stable-final-layer}, the final two-Hessian stage succeeds with \(O(r^2)\) additional queries provided the preceding layers are accurate to \(\varepsilon_W\lesssim r^{-L/2-O(1)}\delta\). Set \(\tau_{L-2}=c\,r^{-L/2-O(1)}\delta\) and propagate tolerances backwards via Proposition~\ref{prop:sketch-layerwise-recovery}: \(\tau_{\ell-1}\asymp\tau_\ell^\alpha r^{-\ell/2-O(1)}\), where \(\alpha=1+1/L\). Unrolling gives \(p_\star=\alpha^{L-3}<e\) and \(\tau_1^{-1}=\widetilde O\!\left(r^{L^2/2+O(L)}\delta^{-p_\star}\right)\). Since \(\tau_1\) is smallest, it dominates the recovery costs, yielding \(Q_{\mathrm{tot}}\le\widetilde O\!\left(dr+r^{14}\tau_1^{-2}\right)=\widetilde O\!\left(dr+r^{L^2+O(L)}\delta^{-2p_\star}\right)\). Finally, \(p_\star<e\) gives Theorem~\ref{thm:main}; full bookkeeping is deferred to Appendix~\ref{sec:simplified-conditioning-bookkeeping}.

\section{Conclusion}
Our results establish efficient active-query learning for a class of convex functions with deep compositional structure beyond the NTK regime. More broadly, they suggest that active data access offers a fundamentally different route to provable deep learning, beyond the limitations of overparameterization paradigm. Numerical experiment details are provided in Appendix~\ref{app:activation_experiments}.

\subsection*{AI Use Statement}

Generative AI tools were used to assist with language polishing and the generation of \LaTeX{} and Python code. All AI-assisted content was carefully checked and verified by the authors, who take full responsibility for the final content and the correctness of the work.

\bibliographystyle{plainnat}
\bibliography{main}

\newpage 
\tableofcontents

\appendix
\paragraph{Roadmap of the Appendix.}
The appendix is organized as follows.
Appendix~\ref{sec:appendix-perturbation-tools} collects several
mostly standard perturbation bounds and numerical linear-algebra
tools.
Appendix~\ref{sec:appendix-sampling-moments} contains the main
sampling machinery of the paper. In particular, we develop a
quantitative analysis of Hit-and-Run when the defining homogeneous
polynomial is accessed only through a multiplicatively noisy
oracle, and use it to obtain the moment estimates required by the
spectral recovery procedure.
Appendix~\ref{sec:appendix-first-layer} applies these tools to the
first hidden layer, where the relevant column space can be
recovered exactly.
Appendix~\ref{sec:appendix-suffix-stability} then studies the error
introduced by approximately inverting the recovered prefix, which
allows us to simulate access to the suffix networks at subsequent
layers.
Appendix~\ref{sec:appendix-intermediate-layers} combines the
subspace, sampling, and suffix-oracle analyses to establish stable
recovery of the intermediate hidden layers.
Appendix~\ref{sec:appendix-final-layer} treats the final hidden
layer and the output weights separately using a two-Hessian
spectral argument.
Finally,
Appendix~\ref{sec:simplified-conditioning-bookkeeping}
combines the layerwise guarantees to prove 
Theorem~\ref{thm:main},
Corollaries~\ref{cor:gaussian_generalization_error} and~\ref{cor:relu}. Appendix~\ref{sec:weight-based-assumptions} gives an explicit family
satisfying our structural assumptions;
Appendix~\ref{app:Proof_of_Challenge} provides the proofs underlying
the challenges discussed in Appendix~\ref{sec:challengens};
and Appendix~\ref{app:activation_experiments} provides additional
details of the numerical experiments.

\section{Technical Tools for Stability Analysis}
\label{sec:appendix-perturbation-tools}
	Throughout the section, we work under the assumptions
	stated in
	subsection~\ref{subsec:assumptions}. We first record several
	quantitative
	consequences that will be used in the
	perturbation analysis.

\subsection{Consequences of Normalization and Conditioning}

The following lemma translates the normalization and conditioning assumptions into explicit singular-value and right-inverse bounds for the hidden weight matrices.
	
	\begin{lemma}[Singular-value bounds]
		\label{lem:weight-singular-values}
		We have
		\(
		\dfrac{1}{\kappa_0}
		\le
		\sigma_{\min}(\mathbf{W}_1)
		\le
		1
		\le
		\sigma_{\max}(\mathbf{W}_1)
		\le
		\kappa_0,
		\)
		and for every \(2\le\ell\le L-1\),
		\(
		\dfrac1{\kappa_0}
		\sqrt{\dfrac{r_\ell}{r_{\ell-1}}}
		\le
		\sigma_{\min}(\mathbf{W}_\ell)
		\le
		\sigma_{\max}(\mathbf{W}_\ell)
		\le
		\kappa_0.
		\)
		Consequently,
		\[
		\left\|
		\mathbf W_\ell
		(\mathbf W_\ell^\top\mathbf W_\ell)^{-1}
		\right\|_2
		\le
		\kappa_0\sqrt{\dfrac{r_{\ell-1}}{r_\ell}},
		\qquad
		2\le\ell\le L-1.
		\]
	\end{lemma}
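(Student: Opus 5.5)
The plan is to derive all bounds from two elementary facts: the relation $\sigma_{\max}=\operatorname{cond}\cdot\sigma_{\min}$, and simple test-vector or trace estimates that pin one singular value to the scale fixed by Assumption~\ref{ass:normalization}. Full column rank (Assumption~\ref{ass:conditioning}) guarantees that $\sigma_{\min}(\mathbf W_\ell)>0$ throughout, so the condition number is well defined. Neither the relative row-balance condition nor nonnegativity beyond the column-sum normalization should be needed.

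\emph{First layer.} Since every column has unit Euclidean norm, $\|\mathbf W_1\|_F^2=\sum_{i=1}^r\sigma_i(\mathbf W_1)^2=r$. The average of the $r$ squared singular values is therefore $1$, which gives $\sigma_{\min}(\mathbf W_1)\le 1\le\sigma_{\max}(\mathbf W_1)$. Combining this with $\operatorname{cond}(\mathbf W_1)\le\kappa_0$ yields the two outer bounds:
\[
\sigma_{\max}(\mathbf W_1)\le\kappa_0\,\sigma_{\min}(\mathbf W_1)\le\kappa_0,
\qquad
\sigma_{\min}(\mathbf W_1)\ge\sigma_{\max}(\mathbf W_1)/\kappa_0\ge 1/\kappa_0 .
\]

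\emph{Higher layers, $2\le\ell\le L-1$.} For the upper bound, I will test $\mathbf W_\ell$ on a standard basis vector. The column $\mathbf w_{\ell,j}$ is nonnegative with $\mathbf 1^\top\mathbf w_{\ell,j}=1$, so $\|\mathbf w_{\ell,j}\|_2\le\|\mathbf w_{\ell,j}\|_1=1$. Hence $\sigma_{\min}(\mathbf W_\ell)\le\|\mathbf W_\ell\mathbf e_j\|_2\le1$, and by conditioning $\sigma_{\max}(\mathbf W_\ell)\le\kappa_0$.

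For the lower bound, I will test on the normalized all-ones vector. The column-sum normalization gives $\mathbf 1_{r_{\ell-1}}^\top\mathbf W_\ell\mathbf 1_{r_\ell}=r_\ell$. By Cauchy--Schwarz, $\|\mathbf W_\ell\mathbf 1_{r_\ell}\|_2\ge r_\ell/\sqrt{r_{\ell-1}}$. Dividing by $\|\mathbf 1_{r_\ell}\|_2=\sqrt{r_\ell}$ gives
\[
\sigma_{\max}(\mathbf W_\ell)\ge\sqrt{r_\ell/r_{\ell-1}},
\qquad\text{hence}\qquad
\sigma_{\min}(\mathbf W_\ell)\ge\kappa_0^{-1}\sqrt{r_\ell/r_{\ell-1}} .
\]

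\emph{Right inverse.} Writing the thin SVD $\mathbf W_\ell=\mathbf P\boldsymbol\Sigma\mathbf Q^\top$, one gets $\mathbf W_\ell(\mathbf W_\ell^\top\mathbf W_\ell)^{-1}=\mathbf P\boldsymbol\Sigma^{-1}\mathbf Q^\top$. Its operator norm is therefore $1/\sigma_{\min}(\mathbf W_\ell)\le\kappa_0\sqrt{r_{\ell-1}/r_\ell}$.

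There is no genuine obstacle. The only point requiring care is choosing, for each side of each bound, the right vector on which to test $\mathbf W_\ell$ and then routing the estimate through the condition number. For the higher layers, a naive Frobenius-norm bound on $\sigma_{\max}$ would cost a factor $\sqrt{r_\ell}$, which is why I bound $\sigma_{\min}$ first and transfer the estimate.
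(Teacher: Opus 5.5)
Your proposal is correct and follows essentially the same route as the paper: the Frobenius-norm averaging argument for $\mathbf W_1$, an all-ones test vector to get $\sigma_{\max}(\mathbf W_\ell)\ge\sqrt{r_\ell/r_{\ell-1}}$, the estimate $\sigma_{\min}(\mathbf W_\ell)\le 1$ transferred through the condition number, and $\|\mathbf W_\ell(\mathbf W_\ell^\top\mathbf W_\ell)^{-1}\|_2=\sigma_{\min}(\mathbf W_\ell)^{-1}$. The differences are cosmetic. The paper tests $\mathbf W_\ell^\top$ on $\mathbf 1_{r_{\ell-1}}$, where you test $\mathbf W_\ell$ on $\mathbf 1_{r_\ell}$ with Cauchy--Schwarz, and it gets $\sigma_{\min}(\mathbf W_\ell)\le 1$ from $r_\ell\sigma_{\min}(\mathbf W_\ell)^2\le\|\mathbf W_\ell\|_F^2\le r_\ell$ rather than from a single basis vector.
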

	
	\begin{proof}
	For \(\mathbf{W}_1\), every column has been normalized, therefore
	\(
	\|\mathbf{W}_1\|_F^2
	=
	\sum_{j=1}^{r}\|\mathbf{w}_{1,j}\|_2^2
	=
	r.
	\)
	Since the squared Frobenius norm equals the sum of the squared singular values, \(\|\mathbf W_1\|_F^2=\sum_{j=1}^{r}\sigma_j(\mathbf W_1)^2\).
	Consequently, the average squared singular value is one,
	so
	\(
	\sigma_{\min}(\mathbf{W}_1)
	\le1\le
	\sigma_{\max}(\mathbf{W}_1).
	\)
	Since
	\(\operatorname{cond}(\mathbf{W}_1)=\dfrac{\sigma_{\max}(\mathbf{W}_1)}{\sigma_{\min}(\mathbf{W}_1)}
	\le
	\kappa_0,
	\)
	we obtain
	\[
	\dfrac1{\kappa_0}
	\le
	\sigma_{\min}(\mathbf{W}_1)
	\le
	1
	\le
	\sigma_{\max}(\mathbf{W}_1)
	\le
	\kappa_0.
	\]
	Now fix \(2\le\ell\le L-1\). Since \(\mathbf
	W_\ell\ge\mathbf0\)
	and every column has unit \(\ell_1\)-norm,
	\(
	\mathbf W_\ell^\top\mathbf1_{r_{\ell-1}}=\mathbf1_{r_\ell}.
	\)
	Testing \(\mathbf W_\ell^\top\) on the all-ones vector
	yields
	\[
	\sigma_{\max}(\mathbf W_\ell)
	\ge
	\dfrac{
	\|\mathbf W_\ell^\top\mathbf1_{r_{\ell-1}}\|_2
	}{
	\|\mathbf1_{r_{\ell-1}}\|_2
	}
	=
	\sqrt{\dfrac{r_\ell}{r_{\ell-1}}}.
	\]
	The condition-number bound therefore gives
	\(	\sigma_{\min}(\mathbf W_\ell)
	\ge \dfrac1{\kappa_0}\sqrt{\dfrac{r_\ell}{r_{\ell-1}}}.
	\)
	For each column,
	\(
	\|\mathbf w_{\ell,j}\|_2
	\le
	\|\mathbf w_{\ell,j}\|_1
	=1,
	\)
	and hence
	\(
	\|\mathbf W_\ell\|_F^2\le r_\ell.
	\)
	At the same time,
	\[
	\|\mathbf W_\ell\|_F^2
	=
	\sum_{j=1}^{r_\ell}\sigma_j(\mathbf W_\ell)^2
	\ge
	r_\ell\sigma_{\min}(\mathbf W_\ell)^2
	\ge
	\dfrac{r_\ell}{\kappa_0^2}
	\sigma_{\max}(\mathbf W_\ell)^2.
	\]
	Combining the two displays gives
	\(
	\sigma_{\max}(\mathbf W_\ell)\le\kappa_0.
	\)
	The right-inverse bound follows from
	\[
	\left\|
	\mathbf W_\ell(\mathbf W_\ell^\top\mathbf W_\ell)^{-1}
	\right\|_2
	=
	\sigma_{\min}(\mathbf W_\ell)^{-1}.
	\]
	Thus the relative conditioning is dimension independent,
	while the absolute
	singular scale tracks the aspect ratio
	\(r_\ell/r_{\ell-1}\).
	\end{proof}

The next lemma makes the scale of the output weights explicit from normalization and the relative row-balance condition.
	
	\begin{lemma}[Explicit output-weight scale]
		\label{lem:output-weight-scale}
		Under the unified normalization and relative row-balance
		condition, let
		\[
		s:=r_{L-1},
		\qquad
		a_{\min}:=\min_{1\le i\le s}a_i,
		\qquad
		a_{\max}:=\max_{1\le i\le s}a_i.
		\]
		Then
		\[
		\dfrac{\mu}{s}
		\le
		a_{\min}
		\le
		\dfrac1s
		\le
		a_{\max}
		\le
		1-\dfrac{(s-1)\mu}{s}
		\le1.
		\]
	\end{lemma}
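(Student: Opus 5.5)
The bounds follow from three facts. The first is the $\ell_1$ normalization of the output weights, $\sum_{i=1}^s a_i=\mathbf 1_s^\top\mathbf a=1$ (Assumption~\ref{ass:normalization} with $\ell=L$, since $r_L=1$ and $\mathbf W_L=\mathbf a$). The second is nonnegativity, $\mathbf a\ge\mathbf 0$ (Assumption~\ref{ass:input-convexity}). The third is the relative row-balance condition of Assumption~\ref{ass:conditioning} applied with $\ell=L$. In that case $\mathbf W_L\mathbf 1_{r_L}=\mathbf a$ and $r_L/r_{L-1}=1/s$, so the condition reads $a_i\ge\mu/s$ for every $i$. Taking the minimum over $i$ gives the leftmost inequality $a_{\min}\ge\mu/s$ immediately.

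Next I use the averaging argument. Since the $s$ entries sum to one, their mean is $1/s$. The minimum cannot exceed the mean and the maximum cannot fall below it, so $a_{\min}\le 1/s\le a_{\max}$.

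For the upper bound on $a_{\max}$, fix the index $i^\star$ that attains the maximum. Then
\[
a_{\max}=1-\sum_{i\ne i^\star}a_i\le 1-(s-1)\frac{\mu}{s},
\]
where the inequality uses the lower bound $a_i\ge\mu/s$ on each of the other $s-1$ entries. The final inequality $1-(s-1)\mu/s\le 1$ holds because $\mu>0$ and $s\ge 2$; indeed, the architecture has $r_{L-1}>r_L=1$.

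None of these steps is a real obstacle. The only care needed is to specialize the row-balance condition correctly to $\ell=L$, identifying $\mathbf W_L\mathbf 1_{r_L}$ with $\mathbf a$ and $r_L/r_{L-1}$ with $1/s$. The chain of inequalities is also consistent: together the bounds force $\mu\le 1$, which is automatically compatible with the hypotheses.
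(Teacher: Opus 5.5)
Your proposal is correct and follows essentially the same argument as the paper: the paper also specializes row balance at $\ell=L$ to get $a_i\ge\mu/s$, uses $\sum_i a_i=1$ for $a_{\min}\le 1/s\le a_{\max}$, and fixes a maximizing index to bound $a_{\max}$ by $1-(s-1)\mu/s$. The last inequality only needs $(s-1)\mu/s\ge0$, so your appeal to $s\ge2$ is more than enough.
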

	
	\begin{proof}
	For \(\ell=L\), Assumption~\ref{ass:conditioning} reads
	\(
	a_i\ge\mu/s
	\)
	for every \(i\), which gives the first inequality. The
	middle two
	inequalities follow from \(\sum_i a_i=1\). Finally, after
	fixing an index
	attaining \(a_{\max}\), the remaining \(s-1\) entries
	each contribute at
	least \(\mu/s\), and therefore
	\(
	a_{\max}
	\le
	1-\dfrac{(s-1)\mu}{s}
	\le1.\)\end{proof}

\subsection{Basic Perturbation Tools}

The following lemma controls how errors in a full-column-rank matrix propagate to its pseudoinverse.

\begin{lemma}[Perturbation of the pseudoinverse]
		\label{lem:pseudoinverse-perturbation}
		Let $\mathbf{A},\mathbf{B}$ be full column rank. If
		\(\|\mathbf{A}-\mathbf{B}\|_2\le\dfrac12\sigma_{\min}(\mathbf{A}),
		\)
		then
		\(\|\mathbf{A}^\dagger-\mathbf{B}^\dagger\|_2\le\dfrac{6\|\mathbf{A}-\mathbf{B}\|_2}{\sigma_{\min}(\mathbf{A})^2}.
		\)
	\end{lemma}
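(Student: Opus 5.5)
The plan is the standard resolvent-type identity for pseudoinverses of full-column-rank matrices, together with Weyl's inequality to keep $\mathbf B$ well conditioned. Set $\mathbf E:=\mathbf B-\mathbf A$ and $\epsilon:=\|\mathbf E\|_2\le\frac12\sigma_{\min}(\mathbf A)$.

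First, Weyl's inequality for singular values gives
\[
\sigma_{\min}(\mathbf B)\ge\sigma_{\min}(\mathbf A)-\epsilon\ge\tfrac12\sigma_{\min}(\mathbf A).
\]
Hence $\|\mathbf B^\dagger\|_2\le 2/\sigma_{\min}(\mathbf A)$, while $\|\mathbf A^\dagger\|_2=1/\sigma_{\min}(\mathbf A)$.

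Second, use the known three-term decomposition valid when both matrices have full column rank, so that $\mathbf A^\dagger\mathbf A=\mathbf B^\dagger\mathbf B=\mathbf I$:
\[
\mathbf B^\dagger-\mathbf A^\dagger
=-\mathbf B^\dagger\mathbf E\mathbf A^\dagger
+\mathbf B^\dagger(\mathbf B^\dagger)^{\top}\mathbf E^\top(\mathbf I-\mathbf A\mathbf A^\dagger).
\]
I would verify this directly. Using $\mathbf B^\dagger\mathbf B=\mathbf I$, we have $\mathbf B^\dagger\mathbf E\mathbf A^\dagger=\mathbf A^\dagger-\mathbf B^\dagger\mathbf A\mathbf A^\dagger$. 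Using $\mathbf B^\dagger(\mathbf B^\dagger)^\top\mathbf B^\top=\mathbf B^\dagger$ and $\mathbf A^\top(\mathbf I-\mathbf A\mathbf A^\dagger)=\mathbf 0$, the second term equals $\mathbf B^\dagger(\mathbf I-\mathbf A\mathbf A^\dagger)$. Adding the two terms yields $\mathbf B^\dagger-\mathbf A^\dagger$, as claimed. This identity check is the only step that needs care: one must use that $\mathbf A\mathbf A^\dagger$ is the orthogonal projector onto $\operatorname{col}(\mathbf A)$ and that $(\mathbf B^\top\mathbf B)^{-1}=\mathbf B^\dagger(\mathbf B^\dagger)^\top$.

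Third, bound each term. For the first,
\[
\|\mathbf B^\dagger\mathbf E\mathbf A^\dagger\|_2\le\frac{2}{\sigma_{\min}(\mathbf A)}\cdot\epsilon\cdot\frac{1}{\sigma_{\min}(\mathbf A)}=\frac{2\epsilon}{\sigma_{\min}(\mathbf A)^2}.
\]
For the second, the projector has norm at most one and $\|\mathbf B^\dagger(\mathbf B^\dagger)^\top\|_2=\sigma_{\min}(\mathbf B)^{-2}\le 4/\sigma_{\min}(\mathbf A)^2$, so the term is at most $4\epsilon/\sigma_{\min}(\mathbf A)^2$. Summing the two bounds gives $6\epsilon/\sigma_{\min}(\mathbf A)^2$, which is exactly the constant in Lemma~\ref{lem:pseudoinverse-perturbation}.
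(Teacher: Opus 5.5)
Your proposal is correct and follows the paper's proof essentially step for step. The paper likewise uses Weyl's inequality to get $\|\mathbf B^\dagger\|_2\le 2/\sigma_{\min}(\mathbf A)$, invokes the same three-term identity, and concludes with $(\|\mathbf A^\dagger\|_2\|\mathbf B^\dagger\|_2+\|\mathbf B^\dagger\|_2^2)\|\mathbf A-\mathbf B\|_2\le 6\|\mathbf A-\mathbf B\|_2/\sigma_{\min}(\mathbf A)^2$. Your explicit check of the identity, via $\mathbf B^\dagger\mathbf B=\mathbf I$, $\mathbf B^\dagger(\mathbf B^\dagger)^\top\mathbf B^\top=\mathbf B^\dagger$, and $\mathbf A^\top(\mathbf I-\mathbf A\mathbf A^\dagger)=\mathbf 0$, is a useful addition, since the paper only asserts it.
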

	
	\begin{proof}
		By Weyl's inequality,
		$
		\sigma_{\min}(\mathbf{B})\ge
		\sigma_{\min}(\mathbf{A})-\|\mathbf{A}-\mathbf{B}\|_2
		\ge \dfrac12\sigma_{\min}(\mathbf{A}),
		$
		and hence $\|\mathbf{B}^\dagger\|_2\le
		2\|\mathbf{A}^\dagger\|_2$. For full-column-rank matrices,
\[
\begin{aligned}
\mathbf B^\dagger-\mathbf A^\dagger
=
-\mathbf B^\dagger(\mathbf B-\mathbf A)\mathbf A^\dagger
+
\mathbf B^\dagger\mathbf B^{\dagger\top}
(\mathbf B-\mathbf A)^\top
(\mathbf I-\mathbf A\mathbf A^\dagger).
\end{aligned}
\]
Since \(\mathbf I-\mathbf A\mathbf A^\dagger\) is an orthogonal
projector,
\[
\|\mathbf A^\dagger-\mathbf B^\dagger\|_2
\le
\left(
\|\mathbf A^\dagger\|_2\|\mathbf B^\dagger\|_2
+
\|\mathbf B^\dagger\|_2^2
\right)
\|\mathbf A-\mathbf B\|_2.
\]
		Substituting the preceding bound on
		$\|\mathbf{B}^\dagger\|_2$ and
		$\|\mathbf{A}^\dagger\|_2=\sigma_{\min}(\mathbf{A})^{-1}$
		proves the claim.
	\end{proof}

The next lemma converts relative errors in a collection of spanning vectors into an error bound for the recovered orthonormal basis of their column space.
	
	\begin{lemma}[Noisy recovery of an orthonormal basis]
		\label{lem:noisy-subspace-recovery}
		Let \(\mathbf{g}_1,\ldots,\mathbf{g}_m\in\mathbb{R}^n\)
		satisfy
		\[\dim\left(\operatorname{span}\{\mathbf{g}_1,\ldots,\mathbf{g}_m\}\right)=p,
		\]
		and define
		$\mathbf{G}=[\mathbf{g}_1,\ldots,\mathbf{g}_m]\in\mathbb{R}^{n\times
		m}.
		$
		Let \(\mathbf{B}\in\mathbb{R}^{n\times p}\) be an orthonormal basis of
		\(\operatorname{col}(\mathbf{G})\), so that
		\(\mathbf{B}^\top\mathbf{B}=\mathbf{I}_p\).
		Suppose that we observe
		\(\widehat{\mathbf{g}}_1,\ldots,\widehat{\mathbf{g}}_m\)
		satisfying
		\[
		\|\widehat{\mathbf{g}}_i-\mathbf{g}_i\|_2
		\le
		\eta\|\mathbf{g}_i\|_2,
		\qquad i=1,\ldots,m.
		\]
		Let
		$
		\operatorname{cond}(\mathbf{G})
		:=\dfrac{\sigma_1(\mathbf{G})}{\sigma_p(\mathbf{G})},
		\Delta_{\mathrm{sub}}
		:=
		\eta\operatorname{cond}(\mathbf{G})\sqrt p.
		$
		If \(\Delta_{\mathrm{sub}}<1\), then one can recover an
		orthonormal matrix
		\(\widehat{\mathbf{B}}\in\mathbb{R}^{n\times p}\), with
		\(\widehat{\mathbf{B}}^\top\widehat{\mathbf{B}}=\mathbf{I}_p\),
		such that
		$
		\|\widehat{\mathbf{B}}-\mathbf{B}\|_2
		\le
		\dfrac{\sqrt{2}\,\eta\operatorname{cond}(\mathbf{G})\sqrt
		p}
		{1-\eta\operatorname{cond}(\mathbf{G})\sqrt p}.
		$
		Here the basis is understood up to an orthogonal
		transformation within the
		recovered \(p\)-dimensional subspace.
	\end{lemma}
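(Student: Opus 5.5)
The plan is to take $\widehat{\mathbf B}$ to be the orthonormal basis of the leading $p$-dimensional left singular subspace of the observed matrix $\widehat{\mathbf G}:=[\widehat{\mathbf g}_1,\ldots,\widehat{\mathbf g}_m]$, and to control the error with a Wedin-type $\sin\Theta$ argument followed by an orthogonal Procrustes alignment. This is equivalent to the leading eigenspace of $\widehat{\mathbf G}\widehat{\mathbf G}^\top$, matching the construction used in Stage~I.

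First, I bound the perturbation $\mathbf E:=\widehat{\mathbf G}-\mathbf G$. Columnwise, $\|\mathbf e_i\|_2\le\eta\|\mathbf g_i\|_2$. Hence
\[
\|\mathbf E\|_2\le\|\mathbf E\|_F\le\eta\|\mathbf G\|_F\le\eta\sqrt p\,\sigma_1(\mathbf G),
\]
where the last step uses $\operatorname{rank}(\mathbf G)=p$. Dividing by $\sigma_p(\mathbf G)$ gives
\[
\|\mathbf E\|_2\le\Delta_{\mathrm{sub}}\,\sigma_p(\mathbf G),
\]
which is precisely why $\Delta_{\mathrm{sub}}$ carries the factor $\sqrt p$.

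Second, I apply Wedin's theorem. Since $\mathbf G$ has rank exactly $p$, we have $\sigma_{p+1}(\mathbf G)=0$. By Weyl's inequality, $\sigma_p(\widehat{\mathbf G})\ge(1-\Delta_{\mathrm{sub}})\sigma_p(\mathbf G)>0$, so the gap condition holds whenever $\Delta_{\mathrm{sub}}<1$. The cleanest route avoids the general theorem and argues directly. Let $\mathbf P_\perp:=\mathbf I-\mathbf B\mathbf B^\top$. Then $\mathbf P_\perp\widehat{\mathbf G}=\mathbf P_\perp\mathbf E$. Also $\widehat{\mathbf B}^\top\widehat{\mathbf G}$ has smallest singular value $\sigma_p(\widehat{\mathbf G})$, because $\widehat{\mathbf B}$ spans the top-$p$ left singular subspace. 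This gives
\[
\|\mathbf P_\perp\widehat{\mathbf B}\|_2\le\frac{\|\mathbf P_\perp\mathbf E\|_2}{\sigma_p(\widehat{\mathbf G})}\le\frac{\Delta_{\mathrm{sub}}}{1-\Delta_{\mathrm{sub}}}.
\]
To justify the first inequality, write $\widehat{\mathbf G}=\widehat{\mathbf B}\widehat{\boldsymbol\Sigma}\widehat{\mathbf V}^\top+(\text{rest})$ and multiply by $\widehat{\mathbf V}\widehat{\boldsymbol\Sigma}^{-1}$. The columns of $\mathbf P_\perp\widehat{\mathbf G}$ then yield $\mathbf P_\perp\widehat{\mathbf B}=\mathbf P_\perp\mathbf E\widehat{\mathbf V}\widehat{\boldsymbol\Sigma}^{-1}$, because the remaining singular directions are orthogonal to $\widehat{\mathbf V}$.

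Third, I convert the $\sin\Theta$ bound $s:=\|\sin\Theta(\widehat{\mathbf B},\mathbf B)\|_2$ into an aligned Frobenius-free operator bound. Take the SVD $\mathbf B^\top\widehat{\mathbf B}=\mathbf U\cos\Theta\,\mathbf V^\top$ and choose $\mathbf O:=\mathbf U\mathbf V^\top$. The standard identity gives
\[
\|\widehat{\mathbf B}-\mathbf B\mathbf O\|_2\le\sqrt2\,\|\sin\Theta\|_2.
\]
To see this, write $\widehat{\mathbf B}-\mathbf B\mathbf O=\mathbf B(\mathbf B^\top\widehat{\mathbf B}-\mathbf O)+\mathbf P_\perp\widehat{\mathbf B}$. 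The first term has norm $\max_i(1-\cos\theta_i)\le\max_i\sin\theta_i$, and the two pieces are orthogonal, so the norms combine in quadrature to give the factor $\sqrt2$. Replacing $\mathbf B$ by $\mathbf B\mathbf O$, which is still an orthonormal basis of the same subspace and so corresponds to "up to an orthogonal transformation", then yields the stated bound.

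The main obstacle is keeping the constant exactly $\sqrt2\,\Delta_{\mathrm{sub}}/(1-\Delta_{\mathrm{sub}})$. This requires using the lower bound on $\sigma_p(\widehat{\mathbf G})$ rather than the gap $\sigma_p(\mathbf G)-\|\mathbf E\|$ inside the projection argument. I expect the two to coincide here after the Weyl step, so this should be bookkeeping rather than a genuine difficulty.
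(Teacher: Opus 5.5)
Your proposal is correct and follows essentially the paper's route: the same bound $\|\mathbf E\|_2\le\Delta_{\mathrm{sub}}\sigma_p(\mathbf G)$, then a $\sin\Theta$ bound of $\Delta_{\mathrm{sub}}/(1-\Delta_{\mathrm{sub}})$ for the top-$p$ left singular subspace of $\widehat{\mathbf G}$, then an orthogonal Procrustes alignment that costs a factor $\sqrt2$. The differences are cosmetic: you derive the one-sided Wedin bound directly from $\mathbf P_\perp\widehat{\mathbf B}=\mathbf P_\perp\mathbf E\widehat{\mathbf V}\widehat{\boldsymbol\Sigma}^{-1}$ and Weyl, where the paper cites the singular-subspace perturbation theorem, and you bound the aligned error by an orthogonal splitting, where the paper uses the exact value $2\sin(\theta_{\max}/2)$.
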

	
	\begin{proof}
		Define
		\[
		\widehat{\mathbf{G}}
		=[\widehat{\mathbf{g}}_1,\ldots,\widehat{\mathbf{g}}_m]
		=\mathbf{G}+\mathbf{E},
		\qquad
		\mathbf{E}
		=[\widehat{\mathbf{g}}_1-\mathbf{g}_1,\ldots,
		\widehat{\mathbf{g}}_m-\mathbf{g}_m].
		\]
		By the assumed relative error bound,
		\(
\|\mathbf{E}\|_2
		\le
\|\mathbf{E}\|_F
		\le
\eta\|\mathbf{G}\|_F.
		\)
		Since \(\operatorname{rank}(\mathbf{G})=p\),
		\(
\|\mathbf{G}\|_F
		\le
		\sqrt p\,\sigma_1(\mathbf{G}),
		\)
		and therefore
		\[
		\|\mathbf{E}\|_2
		\le
		\eta\sqrt p\,\sigma_1(\mathbf{G})
		=
		\Delta_{\mathrm{sub}}\,\sigma_p(\mathbf{G}).
		\]
		Compute the top \(p\) left singular vectors of
		\(\widehat{\mathbf{G}}\) and
		collect them as
		\[
		\widetilde{\mathbf{B}}\in\mathbb{R}^{n\times p},
		\qquad
		\widetilde{\mathbf{B}}^\top\widetilde{\mathbf{B}}=\mathbf{I}_p.
		\]
		Let
		\(
		\widehat{\mathcal
		S}=\operatorname{col}(\widetilde{\mathbf{B}})
		\)
		and
		\(
		\mathcal S=\operatorname{col}(\mathbf{B}).
		\)
		By the singular-subspace perturbation theorem,
		\[
		\|\mathbf{P}_{\widehat{\mathcal S}}-\mathbf{P}_{\mathcal
		S}\|_2
		\le
		\dfrac{\|\mathbf{E}\|_2}
		{\sigma_p(\mathbf{G})-\|\mathbf{E}\|_2}
		\le
		\dfrac{\Delta_{\mathrm{sub}}}{1-\Delta_{\mathrm{sub}}}.
		\]
		Finally, align the two orthonormal bases by an
		orthogonal Procrustes
		transformation. Namely, let
		\(
\mathbf{B}^\top\widetilde{\mathbf{B}}
=\mathbf{U}\boldsymbol{\Sigma}\mathbf{V}^\top\)
		be a singular value decomposition and set
		\(\mathbf{Q}=\mathbf{V}\mathbf{U}^\top,\widehat{\mathbf{B}}=\widetilde{\mathbf{B}}\mathbf{Q}.
		\)
		Then
		\[
		\|\widehat{\mathbf{B}}-\mathbf{B}\|_2
		=
		2\sin\!\left(\dfrac{\theta_{\max}}2\right)
		\le
		\sqrt2\sin\theta_{\max}
		=
		\sqrt2
		\|\mathbf{P}_{\widehat{\mathcal S}}-\mathbf{P}_{\mathcal
		S}\|_2,
		\]
		where \(\theta_{\max}\) is the largest principal angle.
		Substitution of
		the projector bound proves the claim.
	\end{proof}

The following corollary records the simpler linear error bound used when the perturbation is sufficiently small.
	
	\begin{corollary}
		\label{cor:noisy-subspace-recovery}
		Under the assumptions of
		Lemma~\ref{lem:noisy-subspace-recovery}, if
		\(\eta\operatorname{cond}(\mathbf{G})\sqrt p\le\dfrac12,
		\)
		then thereexists an orthonormal basis
		\(\widehat{\mathbf{B}}\) of the
		recovered subspace such that
		\[
		\|\widehat{\mathbf{B}}-\mathbf{B}\|_2
		\le
		2\sqrt2\,\eta\operatorname{cond}(\mathbf{G})\sqrt p
		=
		O\!\left(\eta\operatorname{cond}(\mathbf{G})\sqrt
		p\right).
		\]
	\end{corollary}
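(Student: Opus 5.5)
The corollary follows directly from Lemma~\ref{lem:noisy-subspace-recovery}: the extra hypothesis is used only to simplify the denominator in its bound.

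Write $\Delta:=\eta\operatorname{cond}(\mathbf G)\sqrt p$ and assume $\Delta\le\tfrac12$. In particular $\Delta<1$, so the lemma applies. It produces an orthonormal $\widehat{\mathbf B}$, after Procrustes alignment within the recovered subspace, satisfying
\[
\|\widehat{\mathbf B}-\mathbf B\|_2\le\frac{\sqrt2\,\Delta}{1-\Delta}.
\]

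Since $\Delta\le\tfrac12$, we have $1-\Delta\ge\tfrac12$, hence $(1-\Delta)^{-1}\le2$. Substituting gives
\[
\|\widehat{\mathbf B}-\mathbf B\|_2\le 2\sqrt2\,\Delta,
\]
which is the claimed linear bound; the $O(\cdot)$ form is immediate.

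There is no real obstacle. The only points to check are that the lemma's hypothesis $\Delta_{\mathrm{sub}}<1$ is implied by $\Delta\le\tfrac12$, and that the basis returned is the same aligned representative as in the lemma. Both are immediate.
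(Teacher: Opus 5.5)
Your argument is correct and matches the paper's intent: the paper states this corollary without a separate proof, as an immediate consequence of Lemma~\ref{lem:noisy-subspace-recovery}. The hypothesis $\Delta_{\mathrm{sub}}\le\tfrac12$ both guarantees the lemma's condition $\Delta_{\mathrm{sub}}<1$ and gives $(1-\Delta_{\mathrm{sub}})^{-1}\le 2$ in its bound, exactly as you wrote.
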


\subsection{Perturbation of Two-Matrix Diagonalization}

The final lemma provides the spectral perturbation guarantee used to recover hidden directions from noisy pairs of moment matrices.

\begin{lemma}[Noisy two-matrix diagonalization]
	\label{lem:noisy-two-matrix}
	Let \(n\ge2\) be an integer and let
	\[\mathbf{M}_1=\mathbf{C}^{-\top}\boldsymbol{\Lambda}\mathbf{C}^{-1},\mathbf{M}_2=\mathbf{C}\mathbf{D}\mathbf{C}^\top,\]
	where \(\mathbf{C}\in GL_n(\mathbb R)\) and
	\(\mathbf{D},\boldsymbol{\Lambda}\) are positive
	diagonal matrices. Write
	\(\mathbf{D}\boldsymbol{\Lambda}=\operatorname{diag}(\theta_1,\ldots,\theta_n),
	\)
	and suppose that \(\gamma\ge1\) satisfies the
	inverse-gap condition
	\[\dfrac{\min_{i\neq j}|\theta_i-\theta_j|}{\max_i|\theta_i|}\ge\gamma^{-1}.
	\]
	Let
	\(\mathbf{v}_i:=\dfrac{\mathbf{c}_i}{\|\mathbf{c}_i\|_2},\mathbf{V}=[\mathbf{v}_1,\ldots,\mathbf{v}_n],
	\)
	and define
	\(\rho:=\dfrac{\|\mathbf{M}_1\|_2\|\mathbf{M}_2\|_2}{\max_i|\theta_i|}.
	\)
	Suppose that
	\(\widehat{\mathbf{M}}_1,\widehat{\mathbf{M}}_2\in\mathbb R^{n\times n}\)
	satisfy
	\(\|\widehat{\mathbf{M}}_j-\mathbf{M}_j\|_2\le\eta\|\mathbf{M}_j\|_2,j=1,2.
	\)
	If
	\(
	\eta
	\le\dfrac{c}{\gamma\operatorname{cond}(\mathbf{V})^2\rho}
	\)
	for a sufficiently small absolute constant \(c>0\), then
	the eigenvectors
	of \(\widehat{\mathbf{M}}_2\widehat{\mathbf{M}}_1\) can
	be labeled so that,
	after choosing their signs,
	\[
	\max_{1\le i\le n}
	\|\widehat{\mathbf{v}}_i-\mathbf{v}_i\|_2
	\le
	C\gamma\operatorname{cond}(\mathbf{V})^2\rho\eta,
	\]
	where \(\widehat{\mathbf{v}}_i\) denotes the normalized
	recovered
	eigenvector and \(C>0\) is an absolute constant.
\end{lemma}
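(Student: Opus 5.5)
The plan is to reduce the statement to eigenvector perturbation for a diagonalizable, non-normal matrix whose eigenbasis is \(\mathbf V\). At the population level,
\[
\mathbf M_2\mathbf M_1=\mathbf C\mathbf D\mathbf C^\top\mathbf C^{-\top}\boldsymbol\Lambda\mathbf C^{-1}=\mathbf C\,\mathbf D\boldsymbol\Lambda\,\mathbf C^{-1}.
\]
Rescaling the columns of \(\mathbf C\) by the positive diagonal matrix \(\mathbf N=\operatorname{diag}(\|\mathbf c_i\|_2)\) commutes with the diagonal factor. Writing \(\boldsymbol\Theta:=\operatorname{diag}(\theta_1,\ldots,\theta_n)\), this gives \(\mathbf A:=\mathbf M_2\mathbf M_1=\mathbf V\boldsymbol\Theta\mathbf V^{-1}\), so the \(\mathbf v_i\) are exactly the eigenvectors of \(\mathbf A\), with distinct eigenvalues \(\theta_i\).

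For the perturbation, I will write \(\widehat{\mathbf M}_j=\mathbf M_j+\mathbf E_j\), so that \(\widehat{\mathbf A}-\mathbf A=\mathbf E_2\mathbf M_1+\mathbf M_2\mathbf E_1+\mathbf E_2\mathbf E_1\). This gives \(\|\widehat{\mathbf A}-\mathbf A\|_2\le(2\eta+\eta^2)\|\mathbf M_1\|_2\|\mathbf M_2\|_2\le3\eta\rho\max_i|\theta_i|\).

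Next I will pass to the eigenbasis. Set \(\mathbf F:=\mathbf V^{-1}(\widehat{\mathbf A}-\mathbf A)\mathbf V\), so \(\|\mathbf F\|_2\le\operatorname{cond}(\mathbf V)\cdot3\eta\rho\max_i|\theta_i|\). By the gap hypothesis this is at most \(3\operatorname{cond}(\mathbf V)\rho\eta\gamma\,g\), where \(g:=\min_{i\ne j}|\theta_i-\theta_j|\). The smallness assumption on \(\eta\), with \(c\) small and using \(\operatorname{cond}(\mathbf V)\ge1\), then forces \(\|\mathbf F\|_2\le g/8\).

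Bauer--Fike applied to \(\boldsymbol\Theta+\mathbf F\) places every eigenvalue of \(\widehat{\mathbf A}\) in the union of the discs of radius \(\|\mathbf F\|_2\) around the \(\theta_i\). These discs are pairwise disjoint, and a continuity (homotopy \(t\mathbf F\)) argument shows each disc contains exactly one eigenvalue \(\widehat\theta_i\). Since \(\widehat{\mathbf A}\) is real and each disc is centered on the real axis, a non-real eigenvalue would come with its conjugate in the same disc. Hence every \(\widehat\theta_i\) is real and simple, which gives the labeling.

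For the eigenvectors, I will look for an eigenvector of \(\boldsymbol\Theta+\mathbf F\) of the form \(\mathbf e_i+\mathbf z\) with \(z_i=0\). Its off-diagonal coordinates satisfy
\[
(\widehat\theta_i-\theta_j)z_j=\bigl(\mathbf F(\mathbf e_i+\mathbf z)\bigr)_j,\qquad j\neq i,
\]
where \(|\widehat\theta_i-\theta_j|\ge g-\|\mathbf F\|_2\ge 7g/8\). This yields \(\|\mathbf z\|_2\le\frac{8}{7g}\|\mathbf F\|_2(1+\|\mathbf z\|_2)\), hence \(\|\mathbf z\|_2\le 2\|\mathbf F\|_2/g\). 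This step needs the most care, because the matrix is non-normal and Davis--Kahan is unavailable. The cleanest rigorous route is to take the actual eigenvector for the simple eigenvalue \(\widehat\theta_i\). Its \(i\)-th coordinate is nonzero, since otherwise the \(j\neq i\) equations with \(\|\mathbf F\|_2<7g/8\) force it to vanish. Normalizing that coordinate to \(1\) and applying the inequality above gives the bound directly, with no fixed-point construction.

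Finally, I will map back to the original coordinates. The eigenvector of \(\widehat{\mathbf A}\) is proportional to \(\mathbf V(\mathbf e_i+\mathbf z)=\mathbf v_i+\mathbf V\mathbf z\). Because the columns of \(\mathbf V\) are unit vectors, \(\sigma_{\min}(\mathbf V)\le1\), hence \(\|\mathbf V\|_2\le\operatorname{cond}(\mathbf V)\). Normalizing and choosing the sign gives
\[
\|\widehat{\mathbf v}_i-\mathbf v_i\|_2\le2\|\mathbf V\mathbf z\|_2\le4\operatorname{cond}(\mathbf V)\|\mathbf F\|_2/g\le12\gamma\operatorname{cond}(\mathbf V)^2\rho\eta,
\]
which proves the claim with \(C=12\).
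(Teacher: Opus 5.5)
Your proposal is correct and follows essentially the same route as the paper. Both arguments conjugate the perturbation into the eigenbasis via $\mathbf V^{-1}(\cdot)\mathbf V$, localize the eigenvalues with Bauer--Fike plus a continuity argument and the conjugate-pair trick for reality, bound the off-$i$ coordinates of the eigenvector from the eigen-equation using the gap, and map back through $\mathbf V$; your normalization $y_i=1$ versus the paper's sine-of-angle bound is only cosmetic.

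The one step you leave implicit is $\eta\le1$, which you need for $2\eta+\eta^2\le3\eta$. It follows from $\rho\ge1$, since $\max_i|\theta_i|\le\|\mathbf A\|_2\le\|\mathbf M_1\|_2\|\mathbf M_2\|_2$.
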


\begin{proof}
	Set
	\(\mathbf{A}=\mathbf{M}_2\mathbf{M}_1=\mathbf{C}(\mathbf{D}\boldsymbol{\Lambda})\mathbf{C}^{-1}.
	\)
	Thus the column directions of \(\mathbf C\) are
	precisely the
	eigendirections of \(\mathbf A\). Write
	\(\widehat{\mathbf M}_j=\mathbf M_j+\mathbf E_j,\|\mathbf E_j\|_2\le\eta\|\mathbf M_j\|_2.
	\)
	Then
	\(
	\widehat{\mathbf M}_2\widehat{\mathbf M}_1
	=\mathbf A+\boldsymbol{\Delta},
	\)
	where
	\[
	\boldsymbol{\Delta}
	=
	\mathbf M_2\mathbf E_1+
	\mathbf E_2\mathbf M_1+
	\mathbf E_2\mathbf E_1.
	\]
	Since
	\(\max_i|\theta_i|\le\|\mathbf A\|_2
	\le\|\mathbf M_2\|_2\|\mathbf M_1\|_2\),
	we have \(\rho\ge1\). Together with
	\(\gamma\ge1\) and \(\operatorname{cond}(\mathbf V)\ge1\),
	the assumed upper bound on \(\eta\) therefore implies
	\(\eta\le1\) after taking \(c\le1\). Hence
	\[
	\|\boldsymbol{\Delta}\|_2
	\le
	3\eta\|\mathbf M_1\|_2\|\mathbf M_2\|_2
	=
	3\rho\eta\max_i|\theta_i|.
	\]

	The matrix \(\mathbf A\) is diagonalizable with
	normalized eigenvector
	matrix \(\mathbf V\):
	\(\mathbf A=\mathbf V\boldsymbol{\Theta}\mathbf V^{-1},\boldsymbol{\Theta}=\operatorname{diag}(\theta_1,\ldots,\theta_n).
	\)
	Let
	\(g:=\min_{i\neq j}|\theta_i-\theta_j|.
	\)
	The inverse-gap hypothesis gives
	\(g\ge\gamma^{-1}\max_i|\theta_i|.
	\)
	Set
	\(
	\mathbf E:=\mathbf V^{-1}\boldsymbol{\Delta}\mathbf V.
	\)
	Then
	\(\|\mathbf E\|_2
	\le
	\operatorname{cond}(\mathbf V)\|\boldsymbol{\Delta}\|_2.
	\)
	By the assumed upper bound on \(\eta\), after decreasing
	\(c\) if
	necessary, \(\|\mathbf E\|_2\le\dfrac g4.\)
	Since
	\(\mathbf A+\boldsymbol{\Delta}
	=
	\mathbf V(\boldsymbol{\Theta}+\mathbf E)\mathbf V^{-1},
	\)
	it suffices to study the eigenvectors of
	\(\boldsymbol{\Theta}+\mathbf E\). By the Bauer--Fike
	theorem, every
	eigenvalue of \(\boldsymbol{\Theta}+t\mathbf E\), \(0\le
	t\le1\), lies in
	\[
	\bigcup_{i=1}^n
	\{z\in\mathbb C:|z-\theta_i|\le\|\mathbf E\|_2\}.
	\]
	Since \(\|\mathbf E\|_2\le g/4\) and the centers are
	separated by at least \(g\), these disks are pairwise disjoint.
	For every \(i\),
	let
	\(
	\Gamma_i:=\{z\in\mathbb C:|z-\theta_i|=g/3\},
	\)
	oriented counterclockwise.
	The circle \(\Gamma_i\) remains disjoint from the
	spectrum of
	\(\boldsymbol{\Theta}+t\mathbf E\) for every
	\(t\in[0,1]\). Hence the
	Riesz projection
	\[
	\boldsymbol{\Pi}_i(t)
	:=
	\dfrac{1}{2\pi i}
	\int_{\Gamma_i}
	\bigl(z\mathbf I-(\boldsymbol{\Theta}+t\mathbf
	E)\bigr)^{-1}\,dz
	\]
	is well defined and continuous in \(t\). Its rank equals
	its trace, which is continuous and integer valued, and
	therefore constant. At \(t=0\) it has rank one, so each
	disk contains
	exactly one eigenvalue \(\widehat\theta_i\), counted
	with algebraic
	multiplicity, and
	\(|\widehat\theta_i-\theta_i|\le\|\mathbf E\|_2.\)
	Because \(\boldsymbol{\Theta}+\mathbf E\) is real,
	nonreal eigenvalues
	occur in conjugate pairs. Each disk contains exactly one
	eigenvalue and is
	centered on the real axis; hence every
	\(\widehat\theta_i\) is real and the
	associated eigenvector may be chosen real.
	Let \(\mathbf y_i\in\mathbb R^n\setminus\{\mathbf0\}\)
	be an eigenvector associated with
	\(\widehat\theta_i\), and let
	\(\mathbf Q_i:=\mathbf I-\mathbf e_i\mathbf e_i^\top.\)
	From the
	eigenvalue equation,
	\[
	\begin{aligned}
	(\theta_j-\widehat\theta_i)(\mathbf y_i)_j
	&=
	-(\mathbf E\mathbf y_i)_j,\qquad j\neq i,\\
	\min_{j\neq i}|\theta_j-\widehat\theta_i|
	&\ge
	g-\|\mathbf E\|_2
	\ge
	\dfrac{3g}{4}.
	\end{aligned}
	\]
	Consequently, summing the squared coordinate bounds gives
	\begin{equation}
	\|\mathbf Q_i\mathbf y_i\|_2
	\le
	\dfrac{4}{3g}\|\mathbf Q_i\mathbf E\mathbf y_i\|_2
	\le
	\dfrac{2\|\mathbf E\|_2}{g}\|\mathbf y_i\|_2.
	\end{equation}

	The corresponding eigenvector of \(\mathbf
	A+\boldsymbol{\Delta}\) is
	proportional to \(\mathbf V\mathbf y_i\). Since
	\(\mathbf v_i=\mathbf V\mathbf e_i\) is a unit vector,
	we have
	\[
	\begin{aligned}
	\sin\angle(\widehat{\mathbf v}_i,\mathbf v_i)
	\le
	\dfrac{\|\mathbf V\mathbf Q_i\mathbf y_i\|_2}
	{\|\mathbf V\mathbf y_i\|_2}
	\le
	2\dfrac{\operatorname{cond}(\mathbf V)^2
	\|\boldsymbol{\Delta}\|_2}{g}
	\le
	C\gamma\operatorname{cond}(\mathbf V)^2\rho\eta.
	\end{aligned}
	\]
	Finally, after choosing the sign of each normalized
	recovered eigenvector,
	\[
	\|\widehat{\mathbf v}_i-\mathbf v_i\|_2
	\le
	\sqrt2\sin\angle(\widehat{\mathbf v}_i,\mathbf v_i),
	\]
	which proves the claim.
\end{proof}

\section{Convex-body Sampling and Moment Estimation}

	\label{sec:appendix-sampling-moments}
 This section develops the convex-body sampling and moment-estimation mechanism underlying ASPIRE, including a noisy Hit-and-Run algorithm that remains provably accurate under multiplicative errors in the membership oracle.

	\paragraph{Notation.}
	Throughout the remainder of the appendix, whenever an
	orthonormal matrix
	\(
	\widehat{\mathbf{B}}_\ell\in\mathbb R^{r_{\ell-1}\times
	r_\ell}
	\)
	is under consideration, define
	\(\mathbf{C}_\ell^\sharp
	:=
\widehat{\mathbf{B}}_\ell^\top \mathbf{W}_\ell,
G_\ell(\mathbf{z})
:=F_\ell(\widehat{\mathbf{B}}_\ell \mathbf{z})
	=
h^{(\ell)}\!\left((\mathbf{C}_\ell^\sharp)^\top
\mathbf{z}\right),
	\)
	and
	\(K_\ell^\sharp
	:=
	\left\{\mathbf{z}\in\mathbb R^{r_\ell}:G_\ell(\mathbf{z})\le1\right\}.
	\)
	In the first-layer stage, where the column space is
	recovered exactly, we use
	the same notation with
	\(\widehat{\mathbf{B}}_1:=\mathbf{B}_1\).
	
\subsection{Geometry of the Reduced Sampling Body}

We first control the conditioning and Euclidean geometry of the reduced coordinates in which ASPIRE performs convex-body sampling.

The following lemma shows that an approximate column-space basis preserves the conditioning of the reduced weight matrix.

	\begin{lemma}[Conditioning of the reduced weight matrix]
		\label{lem:reduced-weight-conditioning}
		Let \(\mathbf{B}_\ell\) and
		\(\widehat{\mathbf{B}}_\ell\) have orthonormal columns,
		suppose
		\(\operatorname{col}(\mathbf{B}_\ell)=\operatorname{col}(\mathbf{W}_\ell)\),
		and assume,
		after orthogonal alignment, that
		\(\|\widehat{\mathbf{B}}_\ell-\mathbf{B}_\ell\|_2\le\varepsilon_{B,\ell}<1.
		\)
		For
		\(\mathbf{C}_\ell^\sharp:=\widehat{\mathbf{B}}_\ell^\top\mathbf{W}_\ell,
		\)
		one has
		\[
		(1-\varepsilon_{B,\ell})\sigma_{\min}(\mathbf{W}_\ell)
		\le
		\sigma_{\min}(\mathbf{C}_\ell^\sharp)
		\le
		\sigma_{\max}(\mathbf{C}_\ell^\sharp)
		\le
		(1+\varepsilon_{B,\ell})\sigma_{\max}(\mathbf{W}_\ell).
		\]
		Consequently,
		\(\operatorname{cond}(\mathbf{C}_\ell^\sharp)
		\le
\dfrac{1+\varepsilon_{B,\ell}}{1-\varepsilon_{B,\ell}}\kappa_0.
		\)
		In particular, if \(\varepsilon_{B,\ell}\le1/2\), then
		\(\operatorname{cond}(\mathbf{C}_\ell^\sharp)\le3\kappa_0\).
	\end{lemma}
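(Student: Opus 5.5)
The approach is to compare $\mathbf C_\ell^\sharp=\widehat{\mathbf B}_\ell^\top\mathbf W_\ell$ with the exact reduced matrix $\mathbf C_\ell:=\mathbf B_\ell^\top\mathbf W_\ell$. Since $\operatorname{col}(\mathbf B_\ell)=\operatorname{col}(\mathbf W_\ell)$ and $\mathbf B_\ell$ has orthonormal columns, we have $\mathbf W_\ell=\mathbf B_\ell\mathbf B_\ell^\top\mathbf W_\ell=\mathbf B_\ell\mathbf C_\ell$. Hence $\mathbf C_\ell$ has exactly the same singular values as $\mathbf W_\ell$, because $\mathbf C_\ell^\top\mathbf C_\ell=\mathbf W_\ell^\top\mathbf W_\ell$. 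Note that the alignment rotation only replaces $\widehat{\mathbf B}_\ell$ by $\widehat{\mathbf B}_\ell\mathbf Q$, which multiplies $\mathbf C_\ell^\sharp$ on the left by an orthogonal matrix and so leaves its singular values unchanged. We may therefore assume $\|\widehat{\mathbf B}_\ell-\mathbf B_\ell\|_2\le\varepsilon_{B,\ell}$ literally.

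Next, write
\[
\mathbf C_\ell^\sharp=\widehat{\mathbf B}_\ell^\top\mathbf B_\ell\,\mathbf C_\ell=\mathbf M\mathbf C_\ell,\qquad \mathbf M:=\widehat{\mathbf B}_\ell^\top\mathbf B_\ell\in\mathbb R^{r_\ell\times r_\ell}.
\]
The key step is to bound the singular values of $\mathbf M$ in $[1-\varepsilon_{B,\ell},\,1]$. The upper bound is immediate, since both factors have orthonormal columns. For the lower bound, observe that
\[
\mathbf M=\mathbf I-(\mathbf B_\ell-\widehat{\mathbf B}_\ell)^\top\mathbf B_\ell,
\]
so Weyl's inequality gives $\sigma_{\min}(\mathbf M)\ge1-\|\widehat{\mathbf B}_\ell-\mathbf B_\ell\|_2\ge1-\varepsilon_{B,\ell}>0$. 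This requires only $\varepsilon_{B,\ell}<1$, and it is the sole place where the hypothesis enters.

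We then combine the two bounds using $\sigma_{\min}(\mathbf M\mathbf C)\ge\sigma_{\min}(\mathbf M)\sigma_{\min}(\mathbf C)$ and $\sigma_{\max}(\mathbf M\mathbf C)\le\sigma_{\max}(\mathbf M)\sigma_{\max}(\mathbf C)$, both valid for square $\mathbf M$. This yields
\[
(1-\varepsilon_{B,\ell})\sigma_{\min}(\mathbf W_\ell)\le\sigma_{\min}(\mathbf C_\ell^\sharp)\le\sigma_{\max}(\mathbf C_\ell^\sharp)\le\sigma_{\max}(\mathbf W_\ell)\le(1+\varepsilon_{B,\ell})\sigma_{\max}(\mathbf W_\ell),
\]
which is slightly stronger than the stated upper bound. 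Taking the ratio of the extreme singular values and using $\operatorname{cond}(\mathbf W_\ell)\le\kappa_0$ from Assumption~\ref{ass:conditioning} gives the condition-number bound. When $\varepsilon_{B,\ell}\le1/2$, the factor $(1+\varepsilon_{B,\ell})/(1-\varepsilon_{B,\ell})$ is at most $3$, which gives the final claim.

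The argument has no genuine obstacle. The only point needing care is the alignment convention: we must confirm that the aligned basis spans the same subspace as the original, so that the alignment leaves $\sigma(\mathbf C_\ell^\sharp)$ unchanged, and that the Weyl-type lower bound on $\sigma_{\min}(\mathbf M)$ holds.
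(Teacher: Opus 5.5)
Your proposal is correct and follows the paper's proof essentially step for step. Both arguments factor $\mathbf C_\ell^\sharp=(\widehat{\mathbf B}_\ell^\top\mathbf B_\ell)\mathbf C_\ell$, where $\mathbf C_\ell=\mathbf B_\ell^\top\mathbf W_\ell$ has the singular values of $\mathbf W_\ell$, bound the square factor by Weyl's inequality using $\|\widehat{\mathbf B}_\ell^\top\mathbf B_\ell-\mathbf I\|_2\le\varepsilon_{B,\ell}$, and finish with the singular-value product inequalities; your only departures are minor refinements, namely the direct bound $\sigma_{\max}(\widehat{\mathbf B}_\ell^\top\mathbf B_\ell)\le1$ (the paper settles for $1+\varepsilon_{B,\ell}$) and the explicit check that the orthogonal alignment leaves the singular values of $\mathbf C_\ell^\sharp$ unchanged.
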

	
	\begin{proof}
	Write
	\(\mathbf{W}_\ell=\mathbf{B}_\ell\mathbf{C}_\ell,
\mathbf{C}_\ell:=\mathbf{B}_\ell^\top \mathbf{W}_\ell.
	\)
	Because \(\mathbf{B}_\ell\) has orthonormal columns,
	\(\mathbf{C}_\ell\) has exactly the
	nonzero singular values of \(\mathbf{W}_\ell\). Define
	\(
	\mathbf{Q}_\ell:=\widehat{\mathbf{B}}_\ell^\top
	\mathbf{B}_\ell.
	\)
	Then
	\(
	\mathbf{C}_\ell^\sharp
	=
	\mathbf{Q}_\ell \mathbf{C}_\ell.
	\)
	Moreover,
	\[
	\begin{aligned}
	\|\mathbf{Q}_\ell-\mathbf{I}\|_2
	=
	\|\widehat{\mathbf{B}}_\ell^\top
	\mathbf{B}_\ell-\mathbf{B}_\ell^\top \mathbf{B}_\ell\|_2
	\le
	\|\widehat{\mathbf{B}}_\ell-\mathbf{B}_\ell\|_2 
	\le
	\varepsilon_{B,\ell}.
	\end{aligned}
	\]
	Weyl's inequality therefore gives
	\[
	1-\varepsilon_{B,\ell}
	\le
	\sigma_{\min}(\mathbf{Q}_\ell)
	\le
	\sigma_{\max}(\mathbf{Q}_\ell)
	\le
	1+\varepsilon_{B,\ell}.
	\]
	The singular-value product inequalities now imply
	\[
	\begin{aligned}
	\sigma_{\min}(\mathbf{C}_\ell^\sharp)
	\ge
	\sigma_{\min}(\mathbf{Q}_\ell)\sigma_{\min}(\mathbf{C}_\ell)
	\ge
	(1-\varepsilon_{B,\ell})\sigma_{\min}(\mathbf{W}_\ell),
	\end{aligned}
	\]
	and
	\[
	\begin{aligned}
	\sigma_{\max}(\mathbf{C}_\ell^\sharp)
	\le
	\sigma_{\max}(\mathbf{Q}_\ell)\sigma_{\max}(\mathbf{C}_\ell)
	\le
	(1+\varepsilon_{B,\ell})\sigma_{\max}(\mathbf{W}_\ell).
	\end{aligned}
	\]
	Taking the ratio gives
	\[
	\operatorname{cond}(\mathbf{C}_\ell^\sharp)
	\le
	\dfrac{1+\varepsilon_{B,\ell}}
	{1-\varepsilon_{B,\ell}}
	\operatorname{cond}(\mathbf{W}_\ell)
	\le
	\dfrac{1+\varepsilon_{B,\ell}}
	{1-\varepsilon_{B,\ell}}
	\kappa_0.
	\]
	When \(\varepsilon_{B,\ell}\le1/2\), the ratio in front
	of \(\kappa_0\)
	is at most three.
\end{proof}
	
The next lemma converts this conditioning control into explicit inner and outer Euclidean radii for the reduced sampling body.

	\begin{lemma}[Euclidean radii of the sampling body]
		\label{lem:sampling-body-radii}
		Fix \(1\le \ell\le L-2\), and let
		\[
		q_\ell:=k^{L-\ell},
		\qquad
		S_\ell:=\sum_{j=1}^{L-\ell-1}k^j
		=
		\dfrac{q_\ell-k}{k-1}.
		\]
		Let \(\widehat{\mathbf B}_\ell\) satisfy the hypotheses
		of
		Lemma~\ref{lem:reduced-weight-conditioning} with
		\(\varepsilon_{B,\ell}\le1/2\). Then
		\(\dfrac{1}
		{(1+\varepsilon_{B,\ell})\sigma_{\max}(\mathbf W_\ell)}
		B_2^{r_\ell}
		\subseteq
		K_\ell^\sharp
		\)
		and
		\(K_\ell^\sharp
		\subseteq
		\dfrac{
		\mu^{-(S_\ell+1)/q_\ell}\sqrt{r_\ell}
		}{
		(1-\varepsilon_{B,\ell})\sigma_{\min}(\mathbf W_\ell)
		}
		B_2^{r_\ell}.
		\)
		Consequently, one may take radii satisfying
		\[
		r_{\mathrm{in},\ell}
		\ge
		\dfrac{1}
		{(1+\varepsilon_{B,\ell})\sigma_{\max}(\mathbf W_\ell)},
		R_{\mathrm{out},\ell}
		\le
		\dfrac{
		\mu^{-(S_\ell+1)/q_\ell}\sqrt{r_\ell}
		}{
		(1-\varepsilon_{B,\ell})\sigma_{\min}(\mathbf W_\ell)
		}.
		\]
		In particular,
		\begin{align*}
		\omega_\ell
		:=
		\dfrac{R_{\mathrm{out},\ell}}{r_{\mathrm{in},\ell}}
		\le
		3\kappa_0
		\mu^{-(S_\ell+1)/q_\ell}
		\sqrt{r_\ell}
		\le
		C_{k,L}\kappa^4\sqrt{r_\ell}
		\le
		C_{k,L}\kappa^4\sqrt r.
		\end{align*}
	\end{lemma}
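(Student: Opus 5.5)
The plan is to prove both inclusions by bounding $h^{(\ell)}$ from above and below along the reduced latent coordinates $\mathbf u=(\mathbf C_\ell^\sharp)^\top\mathbf z$. Then we transfer the resulting sets back to $\mathbf z$ using the singular-value bounds of Lemma~\ref{lem:reduced-weight-conditioning}. The suffix $h^{(\ell)}$ is homogeneous of degree $q_\ell=k^{L-\ell}$ in $\mathbf u$. It is a composition of even monomials and nonnegative matrices whose columns sum to one, and we exploit this directly.

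\textbf{Inner radius.} First show $h^{(\ell)}(\mathbf u)\le 1$ whenever $\|\mathbf u\|_\infty\le1$. By induction over the layers from $\ell$ up to $L-1$: if every coordinate satisfies $|u_i|\le 1$, then $u_i^k\in[0,1]$. Each next-layer coordinate $(\mathbf W_{j+1}^\top\mathbf u^{\odot k})_m$ is a convex combination of these values, by nonnegativity and unit column $\ell_1$-norm, so it also lies in $[0,1]$. At the output, $\mathbf a^\top\mathbf u^{\odot k}\le\mathbf1^\top\mathbf a=1$. Now if $\|\mathbf z\|_2\le 1/((1+\varepsilon_{B,\ell})\sigma_{\max}(\mathbf W_\ell))$, then $\|\mathbf u\|_\infty\le\|\mathbf u\|_2\le\sigma_{\max}(\mathbf C_\ell^\sharp)\|\mathbf z\|_2\le1$. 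This gives the inner ball.

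\textbf{Outer radius.} Next show a lower bound of the form $h^{(\ell)}(\mathbf u)\ge \mu^{S_\ell+1}\,\|\mathbf u\|_\infty^{q_\ell}$. Again argue layer by layer. Let $t=\max_i|u_i|$, attained at index $i^\ast$. For the next layer, $\mathbf v=\mathbf W_{\ell+1}^\top\mathbf u^{\odot k}$ has every coordinate at least $(\mathbf W_{\ell+1})_{i^\ast m}t^k$. The difficulty is that a single entry may be small; only the row sum is bounded below. So instead use $\|\mathbf v\|_\infty\ge \frac1{r_{\ell+1}}\sum_m v_m\ge \frac1{r_{\ell+1}}(\mathbf W_{\ell+1}\mathbf 1)_{i^\ast}t^k\ge\frac{\mu}{r_\ell}t^k$. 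This loses a dimension factor. Alternatively, track the $\ell_1$-type quantity $\sum_m v_m\ge\mu\frac{r_{\ell+1}}{r_\ell}t^k$, which fits better with the final layer via Lemma~\ref{lem:output-weight-scale}.

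I expect this dimension bookkeeping to be the main obstacle: the claimed bound has no $r$-dependence except $\sqrt{r_\ell}$. I would first try a power-mean (Jensen) step. Applying $x\mapsto x^k$ to the convex combination $(\mathbf W^\top\mathbf u^{\odot k})_m$ together with the row-balance sums, one should be able to propagate a lower bound on an averaged quantity like $\frac1{r_j}\sum_i u_i^{k\cdot(\cdot)}$ in which the $r$-ratios telescope. Each layer contributes one factor of $\mu$, and the exponent accumulation $\sum_j k^j$ explains $S_\ell$. Raising everything to the power $1/q_\ell$ gives $\mu^{-(S_\ell+1)/q_\ell}$. Once $h^{(\ell)}(\mathbf u)\ge\mu^{S_\ell+1}\|\mathbf u\|_\infty^{q_\ell}$ is in hand, the constraint $h\le1$ gives $\|\mathbf u\|_2\le\sqrt{r_\ell}\,\|\mathbf u\|_\infty\le\sqrt{r_\ell}\,\mu^{-(S_\ell+1)/q_\ell}$. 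Then $\|\mathbf z\|_2\le\|\mathbf u\|_2/\sigma_{\min}(\mathbf C_\ell^\sharp)$ gives the outer ball, using Lemma~\ref{lem:reduced-weight-conditioning}.

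\textbf{Ratio.} Finally, divide the two radii. With $\varepsilon_{B,\ell}\le1/2$ we have $(1+\varepsilon)/(1-\varepsilon)\le3$, and $\operatorname{cond}(\mathbf W_\ell)\le\kappa_0$. For $\ell=1$ this follows from Lemma~\ref{lem:weight-singular-values}; for $\ell\ge2$ the aspect-ratio factors cancel in the ratio. This gives $3\kappa_0\mu^{-(S_\ell+1)/q_\ell}\sqrt{r_\ell}$. Since $(S_\ell+1)/q_\ell\le 1/(k-1)+1\le 2$, and using $\kappa\ge\kappa_0,\mu^{-1}$, this is at most $C\kappa^3\le C_{k,L}\kappa^4$ times $\sqrt{r_\ell}\le\sqrt r$.
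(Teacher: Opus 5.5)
Your inner-radius induction and the final ratio computation are fine and match the paper. Your bound $(S_\ell+1)/q_\ell\le 2$ is loose, since in fact $(S_\ell+1)/q_\ell=(q_\ell-1)/((k-1)q_\ell)<1/(k-1)$, but this is harmless.

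The gap is in the outer radius. The inequality you plan to establish, $h^{(\ell)}(\mathbf u)\ge\mu^{S_\ell+1}\|\mathbf u\|_\infty^{q_\ell}$, is false. Take equal hidden widths $r$, $\mathbf W_j=\mathbf I_r$ for $2\le j\le L-1$, and $\mathbf a=\frac1r\mathbf 1_r$. Every row sum equals its average, so $\mu=1$ is admissible. Then $h^{(\ell)}(\mathbf u)=\frac1r\sum_i u_i^{q_\ell}$, which equals $1/r$ at $\mathbf u=\mathbf e_1$, not something $\ge 1$. The same failure occurs in the near-identity doubly stochastic family of Proposition~\ref{prop:weight-based-assumptions}, which satisfies all four assumptions with $\mu=1$. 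There the entries of $\mathbf s_j$ stay in $[0,1]$ and its $\ell_1$-mass never exceeds one, so $h^{(\ell)}(\mathbf e_1)\le\max_i a_i=1/r$.

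No corrected $\ell_\infty$ bound can save the route either. In the identity example $r^{1/q_\ell}\mathbf e_1\in H_\ell$, so any bound on $\|\mathbf u\|_\infty$ over $H_\ell$ is at least $r^{1/q_\ell}$. Passing through $\|\mathbf u\|_2\le\sqrt{r_\ell}\|\mathbf u\|_\infty$ then overshoots the claimed radius $\mu^{-(S_\ell+1)/q_\ell}\sqrt{r_\ell}$ by a factor $r^{1/q_\ell}$. In that example the $\ell_2$-radius $\sqrt r$ of $H_\ell$ is attained at $\mathbf 1_r$, not at a coordinate vector. So the $\sqrt{r_\ell}$ has to come from a power-mean comparison, not from comparing $\ell_\infty$ with $\ell_2$.

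The averaged-mass idea you mention is the right tool, but it must be seeded with $\|\mathbf u\|_2$ rather than $\|\mathbf u\|_\infty$. Set $\mathbf s_\ell=\mathbf u^{\odot k}$, $\mathbf s_j=(\mathbf W_j^\top\mathbf s_{j-1})^{\odot k}$ for $\ell<j\le L-1$, and $\overline s_j=\|\mathbf s_j\|_1/r_j$. Then:
\begin{itemize}
\item The power-mean inequality gives $\overline s_\ell=\|\mathbf u\|_k^k/r_\ell\ge(\|\mathbf u\|_2/\sqrt{r_\ell})^k$.
\item Row balance gives $\frac1{r_j}\|\mathbf W_j^\top\mathbf s_{j-1}\|_1=\frac1{r_j}(\mathbf W_j\mathbf 1)^\top\mathbf s_{j-1}\ge\mu\overline s_{j-1}$, and Jensen then yields $\overline s_j\ge\mu^k\overline s_{j-1}^{\,k}$.
\item The output bound $a_i\ge\mu/r_{L-1}$ gives $h^{(\ell)}(\mathbf u)=\mathbf a^\top\mathbf s_{L-1}\ge\mu\overline s_{L-1}$.
\end{itemize}
Unrolling gives $h^{(\ell)}(\mathbf u)\ge\mu^{S_\ell+1}(\|\mathbf u\|_2/\sqrt{r_\ell})^{q_\ell}$. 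Hence $h^{(\ell)}(\mathbf u)\le1$ forces $\|\mathbf u\|_2\le\mu^{-(S_\ell+1)/q_\ell}\sqrt{r_\ell}$ directly. Your transfer through $\sigma_{\min}(\mathbf C_\ell^\sharp)\ge(1-\varepsilon_{B,\ell})\sigma_{\min}(\mathbf W_\ell)$ then completes the outer inclusion. This is precisely the paper's argument.
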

	
	\begin{proof}
	For \(\mathbf u\in\mathbb R^{r_\ell}\), define the
	nonnegative forward
	quantities
	\[
	\mathbf s_\ell:=\mathbf u^{\odot k},
	\qquad
	\mathbf s_j
	:=
	\left(\mathbf W_j^\top\mathbf s_{j-1}\right)^{\odot k},
	\quad
	\ell+1\le j\le L-1.
	\]
	Then
	\(h^{(\ell)}(\mathbf u)
	=
	\mathbf a^\top\mathbf s_{L-1}.
	\)We first derive an upper bound that uses only the
	column-stochastic
	normalization. If \(\mathbf v\ge\mathbf0\), every
	coordinate of
	\(\mathbf W_j^\top\mathbf v\) is a convex combination of
	the coordinates
	of \(\mathbf v\), and therefore
	\(\|\mathbf W_j^\top\mathbf v\|_\infty
	\le
	\|\mathbf v\|_\infty.
	\)
	It follows inductively that
	\(\|\mathbf s_{L-1}\|_\infty
	\le
	\|\mathbf u\|_\infty^{q_\ell}.
	\)
	Since \(\mathbf a\ge\mathbf0\) and \(\mathbf1^\top\mathbf
	a=1\),
	\(h^{(\ell)}(\mathbf u)
	=
	\mathbf a^\top\mathbf s_{L-1}
	\le
	\|\mathbf s_{L-1}\|_\infty
	\le
	\|\mathbf u\|_\infty^{q_\ell}
	\le
	\|\mathbf u\|_2^{q_\ell}.
	\)
	For the reverse inequality, define the average mass at
	layer \(j\) by
	\[
	\overline s_j
	:=
	\dfrac{\|\mathbf s_j\|_1}{r_j},
	\qquad
	\ell\le j\le L-1.
	\]
	At the initial step, the comparison between \(\ell_k\)-
	and
	\(\ell_2\)-norms gives
	\(\overline s_\ell
	=
	\dfrac{\|\mathbf u\|_k^k}{r_\ell}
	\ge
	\left(\dfrac{\|\mathbf u\|_2}{\sqrt{r_\ell}}\right)^k.
	\)
	For every \(j>\ell\), the relative row-balance condition
	and
	nonnegativity yield
	\[
	\begin{aligned}
	\dfrac1{r_j}
	\|\mathbf W_j^\top\mathbf s_{j-1}\|_1
	=
	\dfrac1{r_j}
	(\mathbf W_j\mathbf1_{r_j})^\top\mathbf s_{j-1}
	\ge
	\dfrac{\mu}{r_{j-1}}\|\mathbf s_{j-1}\|_1
	=
	\mu\overline s_{j-1}.
	\end{aligned}
	\]
	Jensen's inequality then gives
	\[
	\begin{aligned}
	\overline s_j
	=
	\dfrac1{r_j}
	\|\mathbf W_j^\top\mathbf s_{j-1}\|_k^k\ge
	\left(
	\dfrac1{r_j}
	\|\mathbf W_j^\top\mathbf s_{j-1}\|_1
	\right)^k
	\ge
	\mu^k\overline s_{j-1}^{\,k}.
	\end{aligned}
	\]
	Iterating through the remaining hidden layers gives
	\(\overline s_{L-1}
	\ge
	\mu^{S_\ell}
	\left(\dfrac{\|\mathbf
	u\|_2}{\sqrt{r_\ell}}\right)^{q_\ell}.
	\)
	The row-balance condition for \(\mathbf W_L=\mathbf a\)
	gives
	\(a_i\ge\mu/r_{L-1}\), and hence
	\[
	\begin{aligned}
	h^{(\ell)}(\mathbf u)
	=
	\mathbf a^\top\mathbf s_{L-1}
	\ge
	\dfrac{\mu}{r_{L-1}}\|\mathbf s_{L-1}\|_1
	=
	\mu\overline s_{L-1}
	\ge
	\mu^{S_\ell+1}
	\left(\dfrac{\|\mathbf
	u\|_2}{\sqrt{r_\ell}}\right)^{q_\ell}.
	\end{aligned}
	\]

	Let
	\(
	H_\ell
	:=
	\{\mathbf u\in\mathbb R^{r_\ell}:h^{(\ell)}(\mathbf
	u)\le1\}.
	\)
	The two polynomial bounds imply
	\(B_2^{r_\ell}
	\subseteq
	H_\ell
	\subseteq
	\mu^{-(S_\ell+1)/q_\ell}\sqrt{r_\ell}\,B_2^{r_\ell}.
	\)
	By definition,
	\(
	K_\ell^\sharp=(\mathbf C_\ell^\sharp)^{-\top}H_\ell.
	\)
	For an invertible matrix \(\mathbf C\), the image
	\((\mathbf C^{-\top})(\rho B_2)\) contains
	\((\rho/\sigma_{\max}(\mathbf C))B_2\), whereas the image
	of
	\(R B_2\) is contained in
	\((R/\sigma_{\min}(\mathbf C))B_2\). Applying
	Lemma~\ref{lem:reduced-weight-conditioning} gives the two
	radius bounds in
	the statement.

	Taking their ratio yields
	\[
	\omega_\ell
	\le
	\dfrac{1+\varepsilon_{B,\ell}}{1-\varepsilon_{B,\ell}}
	\operatorname{cond}(\mathbf W_\ell)
	\mu^{-(S_\ell+1)/q_\ell}\sqrt{r_\ell}.
	\]
	Since \(\varepsilon_{B,\ell}\le1/2\),
	\(\operatorname{cond}(\mathbf W_\ell)\le\kappa_0\), and
	\(
	\dfrac{S_\ell+1}{q_\ell}
	=
	\dfrac{q_\ell-1}{(k-1)q_\ell}
	<
	\dfrac1{k-1},
	\)
	the dimension-free factor is at most \(C_{k,L}\kappa^4\).
	Finally,
	\(r_\ell\le r\), proving the stated simplification.
	\end{proof}
	
\subsection{Hit-and-Run with a Multiplicative Oracle}

We next develop the geometric and perturbative ingredients needed to sample from the true convex body when only a multiplicatively inaccurate oracle is available.

The following Markov-type bound controls the curvature of a polynomial on a convex body from its sup norm and inradius.

\begin{lemma}[Markov bound for second derivatives]
		\label{lem:second-derivative-markov}
		Let \(K\subset\mathbb R^n\) be a convex body satisfying
		\[
		\rho B_2^n\subseteq K
		\]
		for some \(\rho>0\), and let \(p:\mathbb R^n\to\mathbb
		R\) be a polynomial
		of degree at most \(q\ge2\). Then
		\[
		\sup_{\mathbf{x}\in K}
		\|\nabla^2p(\mathbf{x})\|_{\mathrm{op}}
		\le
		\dfrac{4q^2(q-1)^2}{\rho^2}
		\|p\|_{L^\infty(K)}
		\le
		\dfrac{4q^4}{\rho^2}
		\|p\|_{L^\infty(K)}.
		\]
	\end{lemma}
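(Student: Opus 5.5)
Our plan is to reduce to univariate polynomials along lines and apply the classical Markov inequality twice, using the inball to control the length of chords. Fix $\mathbf x\in K$ and a unit vector $\mathbf v$. The operator norm of the symmetric matrix $\nabla^2p(\mathbf x)$ is $\sup_{\|\mathbf v\|=1}|\mathbf v^\top\nabla^2p(\mathbf x)\mathbf v|$, so it suffices to bound the second directional derivative $D_{\mathbf v}^2p(\mathbf x)=\psi''(0)$, where $\psi(t):=p(\mathbf x+t\mathbf v)$ is a univariate polynomial of degree at most $q$.

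\textbf{Step 1: a long segment through $\mathbf x$ inside $K$.} Since $\pm\rho\mathbf v\in\rho B_2^n\subseteq K$ and $K$ is convex, the segment $I$ from $\mathbf x-\rho\mathbf v$ to $\mathbf x+\rho\mathbf v$ might seem natural, but it need not lie in $K$. Instead, we use $\mathbf y_\pm:=\tfrac12\mathbf x\pm\tfrac{\rho}{2}\mathbf v$. Each is the midpoint of $\mathbf x\in K$ and $\pm\rho\mathbf v\in K$, so $\mathbf y_\pm\in K$. The segment $[\mathbf y_-,\mathbf y_+]$ has length $\rho$ in direction $\mathbf v$ and lies in $K$. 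It does not pass through $\mathbf x$, however, so we enlarge it. Consider the convex hull $T$ of $\mathbf x$, $\rho\mathbf v$, and $-\rho\mathbf v$, which is contained in $K$. Along the direction $\mathbf v$, the segment of $T$ through $\mathbf x$ could degenerate when $\mathbf x$ is a vertex. The cleaner route is therefore to bound $|\psi''|$ on the whole segment $J:=\{\mathbf y+s\mathbf v\}\cap K$ via Markov, evaluated at an endpoint, where Markov's inequality is sharp. Concretely, the line $\mathbf x+t\mathbf v$ meets $K$ in an interval $[t_-,t_+]\ni0$. By the midpoint argument above, this chord has length at least $\rho$: the parallel chord through $\tfrac12\mathbf x$ has length $\ge\rho$, and we compare chords of a convex body in a fixed direction.

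\textbf{Step 2: apply Markov on the chord.} For a univariate polynomial $\psi$ of degree $\le q$ on an interval of length $\lambda$, iterating Markov's inequality $\|\psi'\|_{\infty}\le \tfrac{2q^2}{\lambda}\|\psi\|_\infty$ gives
\[
\|\psi''\|_\infty\le \frac{2(q-1)^2}{\lambda}\cdot\frac{2q^2}{\lambda}\|\psi\|_\infty .
\]
Here $\psi'$ has degree $q-1$. Since the chord lies in $K$, $\|\psi\|_\infty\le\|p\|_{L^\infty(K)}$, and $0$ belongs to the interval, so the bound applies at $t=0$. With $\lambda\ge\rho$ this yields $|D^2_{\mathbf v}p(\mathbf x)|\le 4q^2(q-1)^2\rho^{-2}\|p\|_{L^\infty(K)}$. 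Taking the supremum over $\mathbf v$ and $\mathbf x$ gives the claim, and $(q-1)^2\le q^2$ gives the second inequality.

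\textbf{Main obstacle.} The delicate point is the chord-length lower bound: showing that the chord of $K$ through an arbitrary $\mathbf x$ in an arbitrary direction has length $\ge\rho$. This is false near the boundary. For example, near a sharp vertex a chord transverse to the vertex can be arbitrarily short. So the naive argument fails, and we must refine it. We would first try to evaluate $\psi''(0)$ using a different interval: $\psi''(0)$ is a second derivative, and the polynomial $t\mapsto p(\mathbf x+t\mathbf v)$ can be compared with $p$ on the parallelogram-like set $\tfrac12(\mathbf x+K)\subseteq K$. Scaling toward $\mathbf x$, the set $\mathbf x+\tfrac12(K-\mathbf x)$ lies in $K$ and contains a ball of radius $\rho/2$ around $\tfrac12\mathbf x$, but still not around $\mathbf x$.

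The fallback is the multivariate Markov inequality of Wilhelmsen cited in the sketch. For a convex body of width $w$, it gives $\|\nabla p\|\le \tfrac{4q^2}{w}\|p\|_{L^\infty(K)}$ uniformly on $K$, with $w\ge 2\rho$ since $K$ contains $\rho B_2^n$. We then apply it twice: first to $p$, and then to each directional derivative $D_{\mathbf v}p$, a polynomial of degree $q-1$ whose sup norm on $K$ is already controlled. This yields exactly
\[
\frac{4q^2}{2\rho}\cdot\frac{4(q-1)^2}{2\rho}=\frac{4q^2(q-1)^2}{\rho^2}.
\]
We expect this to be the intended proof.
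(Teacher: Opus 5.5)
Your fallback argument is correct and is exactly the paper's proof: apply Wilhelmsen's inequality with $w(K)\ge 2\rho$ first to $p$ and then to $D_{\mathbf v}p$, using $\|D_{\mathbf v}p\|_{L^\infty(K)}\le\|\nabla p\|_{L^\infty(K)}$ and $\nabla(D_{\mathbf v}p)=\nabla^2p\,\mathbf v$. Discard the chord-based Steps 1--2, since, as you yourself observe, the chord through a point near the boundary can be arbitrarily short, so the claimed bound $\lambda\ge\rho$ does not hold.
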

	
	\begin{proof}
		Let \(w(K)\) denote the minimum width of \(K\). Since
		\(
		\rho B_2^n\subseteq K,
		\)
		we have
		\(
		w(K)\ge2\rho.
		\)
		By the multivariate Markov inequality of
		\citet{wilhelmsen1974markov}, every polynomial \(g\) of
		degree at most
		\(m\) satisfies
		\[
		\|\nabla g\|_{L^\infty(K)}
		\le
		\dfrac{4m^2}{w(K)}
		\|g\|_{L^\infty(K)}
		\le
		\dfrac{2m^2}{\rho}
		\|g\|_{L^\infty(K)}.
		\]
		Applying this first to \(p\) gives
		\[
		\|\nabla p\|_{L^\infty(K)}
		\le
		\dfrac{2q^2}{\rho}
		\|p\|_{L^\infty(K)}.
		\]
		
		Now fix \(\mathbf{u}\in S^{n-1}\). The directional
		derivative
		\(
		D_{\mathbf{u}}p:=\mathbf{u}^\top\nabla p
		\)
		has degree at most \(q-1\), and therefore another
		application of the
		same inequality yields
		\[
		\begin{aligned}
			\|\nabla(D_{\mathbf{u}}p)\|_{L^\infty(K)}
			\le
			\dfrac{2(q-1)^2}{\rho}
\|D_{\mathbf{u}}p\|_{L^\infty(K)}
			\le
			\dfrac{2(q-1)^2}{\rho}
			\|\nabla p\|_{L^\infty(K)}\le
			\dfrac{4q^2(q-1)^2}{\rho^2}
			\|p\|_{L^\infty(K)}.
		\end{aligned}
		\]
		Since
		\(
		\nabla(D_{\mathbf{u}}p)(\mathbf{x})=\nabla^2p(\mathbf{x})\mathbf{u},
		\)
		taking the supremum over \(\mathbf{u}\in S^{n-1}\) and
		\(\mathbf{x}\in K\) proves the
		claim.
	\end{proof}
	
Using this curvature control, the next lemma shows that noisy Hit-and-Run remains close to uniform sampling from the true body under sufficiently small multiplicative oracle error.

	\begin{lemma}[Hit-and-Run with multiplicative oracle noise]
		\label{lem:noisy-hit-and-run}
		Let
		\(
		F:\mathbb R^n\to\mathbb R_{\ge 0}
		\)
		be a positive definite convex homogeneous polynomial of
		degree \(q\ge2\), and define
		\(
		K:=\{\mathbf{x}\in\mathbb R^n:F(\mathbf{x})\le1\}.
		\)
		Assume
		\[
		r_{\mathrm{in}}B_2^n
		\subseteq K
		\subseteq
		R_{\mathrm{out}}B_2^n,
		\qquad
		\omega:=\dfrac{R_{\mathrm{out}}}{r_{\mathrm{in}}},
		\]
		and suppose that the available oracle
\(\widehat F:\mathbb R^n\to\mathbb R\) is Borel measurable and
satisfies
		\[
		(1-\xi)F(\mathbf{x})
		\le
		\widehat F(\mathbf{x})
		\le
		(1+\xi)F(\mathbf{x}),
		\qquad
		\mathbf{x}\in\mathbb R^n,
		\]
		for some \(0<\xi\le1/2\).
		
		Fix \(\delta\in(0,1]\). Then there exist an absolute
		constant
		\(C>0\) and a constant \(c_\delta>0\), depending only on
		\(\delta\),
		such that, for every \(\varepsilon\in(0,1/2)\), if
		\[
		\xi
		\le
		\dfrac{
			c_\delta\,\varepsilon^{1+\delta}
		}{
			q^2
			n^{4+3\delta}
			\omega^{4+2\delta}
			\left(
			\log\dfrac{2\omega}{\varepsilon}
			\right)^{1+\delta}
		},
		\]
		then the noisy Hit-and-Run chain constructed in the proof,
		initialized at
		\(
		\widehat{\mathbf{X}}_0=\mathbf{0},
		\)
		satisfies
		\[
		d_{\mathrm{TV}}
		\left(
		\mathcal L(\widehat{\mathbf{X}}_T),
		\operatorname{Unif}(K)
		\right)
		\le\varepsilon
		\]
		for
		\(
		T
		=
		\left\lceil
		Cn^3\omega^2
		\log\dfrac{2\omega}{\varepsilon}
		\right\rceil.
		\)
  With
\[
\tau
:=
1-
\left(
\dfrac{1-\xi}{1+\xi}
\right)^{1/q},\qquad 
\eta
:=
\dfrac{\varepsilon}{4T},
\]
the construction uses
\(
O\!\left(
T\log\dfrac{\omega}{\eta\tau}
\right)
\)
evaluations of \(\widehat F\).
	\end{lemma}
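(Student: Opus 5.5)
We will couple the noisy chain with exact Hit-and-Run on $K$, started at the same point and using the same random directions, so that each noisy step can be compared with an exact step. The exact chain is handled by classical theory. From the warm-free start at the center of a body with $r_{\mathrm{in}}B_2^n\subseteq K\subseteq R_{\mathrm{out}}B_2^n$, the results of \citet{lovasz2006hitandrun} show that $T=\lceil Cn^3\omega^2\log(2\omega/\varepsilon)\rceil$ steps give $d_{\mathrm{TV}}(\mathcal L(\mathbf X_T),\operatorname{Unif}(K))\le\varepsilon/2$, after rescaling the ball-walk bound by the radius ratio. It then suffices to bound $\Pr[\exists t\le T:\widehat{\mathbf X}_t\neq\mathbf X_t]\le\varepsilon/2$.

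\textbf{One noisy step.} Fix the current point $\mathbf x\in K$ and a uniformly random direction $\mathbf u$. The true chord is $\{\mathbf x+s\mathbf u: s\in[s_-,s_+]\}$, where $s_\pm$ are the roots of $F(\mathbf x+s\mathbf u)=1$, which are unique by convexity. We locate each endpoint by bisection on $\widehat F(\mathbf x+s\mathbf u)\le 1$ to precision $\eta\tau$. The bisection starts from the bracket $[0,2R_{\mathrm{out}}]$, which accounts for the $O(\log(\omega/(\eta\tau)))$ evaluations per step. Homogeneity and the multiplicative error bound give $\{\widehat F\le1\}$ sandwiched between $(1+\xi)^{-1/q}K$ and $(1-\xi)^{-1/q}K$. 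Because the map $\mathbf y\mapsto \mathbf x+\lambda(\mathbf y-\mathbf x)$ preserves the chord through $\mathbf x$, the noisy endpoints lie within relative distance $\tau$ of the true chord, measured through the gauge of $K$. We then must make the noisy step stay inside $K$ with the same law as the exact step. We do this by rejecting any proposed point with $\widehat F>(1-\xi)$, the inner body; alternatively we use a maximal coupling of uniform laws on the true and the noisy chord. Either way, the coupling fails with probability at most $O(\tau\,\mathrm{len}/\mathrm{len}+\text{boundary slack})$. This is governed by the ratio of the endpoint error to the chord length $\ell(\mathbf x,\mathbf u)$, which gives a per-step failure probability of order $\tau R_{\mathrm{out}}/\ell(\mathbf x,\mathbf u)+\eta$.

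\textbf{Controlling short chords (main obstacle).} The ratio $\tau R_{\mathrm{out}}/\ell$ blows up when $\mathbf x$ is near $\partial K$, so we need a negative moment bound $\mathbb E[\ell(\mathbf X_t,\mathbf u)^{-(1+\delta)/\ldots}]$ along the exact chain, uniformly over $t\le T$. We proceed in three steps.

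First, we relate the chord length to the boundary depth $\Delta(\mathbf x):=1-F(\mathbf x)^{1/q}$. We use Lemma~\ref{lem:second-derivative-markov} on $2K\supseteq 2r_{\mathrm{in}}B$, with $\|F\|_{L^\infty(2K)}=2^q$, to bound the Hessian. By Taylor expansion along $\mathbf u$, the chord has length $\gtrsim |\langle\nabla F,\mathbf u\rangle|^{-1}\Delta$ or $\gtrsim \sqrt{\Delta}\,r_{\mathrm{in}}/q^2$. We also have $\|\nabla F(\mathbf x)\|\lesssim q/r_{\mathrm{in}}$.

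Second, anti-concentration of $\langle\nabla F/\|\nabla F\|,\mathbf u\rangle$ for uniform $\mathbf u$ on the sphere gives $\Pr(|\langle\cdot,\mathbf u\rangle|>s/\sqrt n)\le e^{-cs^2}$. Combined with the first step, this yields $\mathbb E_{\mathbf u}[(r_{\mathrm{in}}/\ell)^{p}]\lesssim (\sqrt n\, q\,\omega/\Delta)^{p}$.

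Third, we bound $\Delta(\mathbf X_t)$ from below in the sense of negative moments, i.e. $\Pr(\Delta(\mathbf X_t)<s)$. For the uniform distribution on $K$ this probability is $\le qns$, since the shell $K\setminus(1-s)K$ has relative volume $\le ns$. We transfer this to $\mathcal L(\mathbf X_t)$ via a chord-comparison argument: one Hit-and-Run step from any point lands in the shell with probability $O(ns\cdot\mathrm{poly}(\omega))$, because the fraction of a chord within gauge-depth $s$ of the boundary is $O(s\,\omega)$. This holds uniformly in $t$ without needing mixing.

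Finally, we put the pieces together. Hölder's inequality with exponent $1+\delta$ turns the product of the $\Delta^{-1}$ factor and the tail probability into $\varepsilon^{1+\delta}$-type scaling. A union bound over the $T$ steps shows that the total coupling failure is at most $\varepsilon/2$ under the stated condition on $\xi$. Here $\tau\asymp\xi/q$, and the $n^{4+3\delta}\omega^{4+2\delta}$ factor collects $T\sim n^3\omega^2$ together with the $\sqrt n\,\omega$ and $n\omega$ factors raised to $1+\delta$. The delicate part is choosing the truncation level $s$ and the Hölder exponents so that the $\delta$-dependent powers come out exactly as stated; we expect to fix the truncation at $s\asymp(\varepsilon/(Tn\omega))$ and optimize.
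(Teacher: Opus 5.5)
\textbf{Main gap: the near-boundary control of the exact chain.} Your coupling reduction is sound. It is equivalent to the paper's telescoping identity: in both, what has to be bounded is $\sum_{t<T}\mathbb E[e(\mathbf X_t)]$ along the \emph{exact} chain. The step that is supposed to control the exact chain near $\partial K$ fails, though.

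You assert that one Hit-and-Run step from \emph{any} point lands in the depth-$s$ shell with probability $O(ns\,\mathrm{poly}(\omega))$, because every chord has only an $O(s\omega)$ fraction in the shell. Grazing chords through near-boundary points lie entirely in the shell. For the Euclidean ball ($F=\|\mathbf x\|_2^2$, $\omega=1$), take $\|\mathbf x\|_2=1$ (or, by continuity, depth $\ll s$). The chord in direction $\mathbf u$ has maximal depth $\langle\mathbf x,\mathbf u\rangle^2$. So the next point lies in the $s$-shell with probability at least $\mathbb P(|\langle\mathbf x,\mathbf u\rangle|\le\sqrt s)\asymp\sqrt{ns}$, which is not $O(s)$ as $s\to0$. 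Homothetic chord comparison only gives $P(\mathbf x,\{D\le u\})\le u/D(\mathbf x)$ with $D=1-F$. This degenerates exactly where you need it, so no uniform-in-$t$ tail bound follows from one-step considerations ``without mixing.''

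The missing idea is a drift argument showing the chain does not linger near $\partial K$. The paper proves an escape estimate: if $D(\mathbf x)\le d_0=c_0\vartheta_\delta^3/(n\omega^2q^2)$, then $D(\mathbf X_{t+1})\ge d_0$ with probability at least $1-\vartheta_\delta$. The ingredients are:
\begin{itemize}
\item Euler's identity, giving $\|\nabla F(\mathbf x)\|_2\ge q/(2R_{\mathrm{out}})$;
\item the Markov Hessian bound;
\item a spherical \emph{small-ball} estimate, i.e.\ the lower tail of $\langle\nabla F/\|\nabla F\|,\boldsymbol\theta\rangle$ (not the upper tail you invoke), which produces a chord point of depth $\gtrsim 1/(n\omega^2q^2)$;
\item convexity of $F$ along the chord.
\end{itemize}
Combined with the $u/D$ bound, this gives $\mathbb E[D(\mathbf X_{t+1})^{-\alpha}\mid\mathbf X_t]\le\frac12 D(\mathbf X_t)^{-\alpha}+\frac{1+\delta}{\delta}d_0^{-\alpha}$. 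Iterating from $D(\mathbf X_0)=1$ gives $\sup_t\mathbb E[D(\mathbf X_t)^{-\alpha}]\le\frac{2(1+\delta)}{\delta}d_0^{-\alpha}$ with $\alpha=1/(1+\delta)$. The exponent must be below one, since already under $\operatorname{Unif}(K)$ one has $\mathbb P(D\le u)\approx nu/q$, so $\mathbb E[D^{-1}]=\infty$. This fractional moment, fed through $\min\{1,a\}\le a^\alpha$, is where $\varepsilon^{1+\delta}$ and $c_\delta$ come from; it is not a Hölder or truncation optimization.

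\textbf{Secondary issues.} Your per-step failure bound $\tau R_{\mathrm{out}}/\ell(\mathbf x,\mathbf u)+\eta$ is not valid, because along a grazing chord the Euclidean endpoint shift need not be $O(\tau R_{\mathrm{out}})$. For the ball with $D(\mathbf x)\asymp\tau$ and $\mathbf u$ tangent, the chord length and the endpoint shift can both be of order $\sqrt\tau$. If $\mathbf x\notin\rho K$, the inner chord is empty altogether. The correct estimate is direction-free: for $D(\mathbf x)>q\tau$,
\[
1-\frac{|I_{\rho K}|}{|I_K|}\le\frac{\tau}{1-(1-D)^{1/q}}\le\frac{q\tau}{D(\mathbf x)}.
\]
This makes your chord-length steps unnecessary; only negative moments of $D$ are needed.

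Also, bisecting on $\widehat F\le1$ lets the chain leave $K$, which the downstream moment and gradient lemmas cannot tolerate. Your fallback ``reject if $\widehat F>1-\xi$'' can fail to terminate: for $\mathbf x\notin\rho K$ the set $\{\widehat F\le1-\xi\}$ need not contain any segment through $\mathbf x$. The paper instead bisects directly on $\widehat F\le1-\xi$ from $[0,3R_{\mathrm{out}}]$ to precision $\eta\tau r_{\mathrm{in}}$. Its invariants use only $\rho K\subseteq\{\widehat F\le1-\xi\}\subseteq K$, and give $\widehat I\subseteq I_K$ and $|\widehat I|\ge|I_{\rho K}|-2\eta\tau r_{\mathrm{in}}$.
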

	
	\begin{proof}
		Set
		\(
		\rho
		:=
		\left(
		\dfrac{1-\xi}{1+\xi}
		\right)^{1/q}\), then\(
		\tau=1-\rho.
		\)
		By homogeneity,
		\(
		\rho K
		\subseteq
		\{\mathbf{x}:\widehat F(\mathbf{x})\le1-\xi\}
		\subseteq
		K.
		\)
		Indeed, if \(\mathbf{x}\in \rho K\), then
		\(
		F(\mathbf{x})\le \rho^q=\dfrac{1-\xi}{1+\xi},
		\)
		and hence
		\(
		\widehat F(\mathbf{x})
		\le
		(1+\xi)F(\mathbf{x})
		\le1-\xi.
		\)
		Conversely,
		\[
		\widehat F(\mathbf{x})\le1-\xi
		\Longrightarrow
		(1-\xi)F(\mathbf{x})\le1-\xi
		\Longrightarrow
		F(\mathbf{x})\le1.
		\]
		Moreover,
		\begin{equation}
			q\tau
			\le
			\log\dfrac{1+\xi}{1-\xi}
			\le
			3\xi.
			\label{eq:tau-xi}
		\end{equation}
		
		Let \(P\) denote the exact Hit-and-Run transition kernel
		on \(K\), and
		let
		\(
		\mathbf{X}_0=\mathbf{0},
		\mathbf{X}_{t+1}\sim P(\mathbf{X}_t,\cdot).
		\)
		Define the boundary depth
		\(
		D(\mathbf{x}):=1-F(\mathbf{x}).
		\)
		Since \(\partial K\) has Lebesgue measure zero,
		\(D(\mathbf{X}_t)>0\) almost surely. We first establish
		a uniform
		negative-moment bound for \(D(\mathbf{X}_t)\).
		Set
		\(
		\alpha:=\dfrac1{1+\delta},
		s:=1-\alpha=\dfrac{\delta}{1+\delta},
		\)
		and define
		\(
		\vartheta_\delta
		:=
		\left(\dfrac{s}{2}\right)^{1/s}.
		\)
		Let \(C_0=\sqrt{\dfrac{2}{\pi}}\). Then for every
		unit vector \(\mathbf{v}\in\mathbb R^n\), every
		\(a\in(0,1)\), and
		\(\boldsymbol{\theta}\sim\operatorname{Unif}(S^{n-1})\),
		\begin{equation}
			\mathbb P
			\left(
			|\langle \mathbf{v},\boldsymbol{\theta}\rangle|
			\le
			\dfrac{a}{\sqrt n}
			\right)
			\le
			C_0a.
			\label{eq:spherical-small-ball}
		\end{equation}
		This is the standard spherical small-ball estimate.
		
		Set
		\(
		a_\delta
		:=
		\dfrac{\vartheta_\delta}{2C_0},
		\Delta_\delta
		:=
		\dfrac{a_\delta^2}
		{32\,n\omega^2q^2},
		d_0
		:=
		\dfrac{\vartheta_\delta}{2}\Delta_\delta.
		\)
		Thus
		\begin{equation}
			d_0
			=
			\dfrac{
				c_0\vartheta_\delta^3
			}{
				n\omega^2q^2
			},
			\qquad
			c_0:=\dfrac{1}{256C_0^2},
			\label{eq:d0-general}
		\end{equation}
		where \(c_0>0\) is an absolute constant.
		
		We claim that whenever
		\(
		\mathbf{x}\in\operatorname{int}(K) \)
		and
		\(D(\mathbf{x})\le d_0,
		\)
		\begin{equation}
			\mathbb P
			\left(
			D(\mathbf{X}_{t+1})\ge d_0
			\,\middle|\,
			\mathbf{X}_t=\mathbf{x}
			\right)
			\ge
			1-\vartheta_\delta.
			\label{eq:escape-general}
		\end{equation}
		Since \(d_0\le1/2\),
		\(
		F(\mathbf{x})\ge\dfrac12.
		\)
		Euler's identity and \(\|\mathbf{x}\|_2\le
		R_{\mathrm{out}}\) give
		\[
		\mathbf{x}^\top\nabla F(\mathbf{x})=qF(\mathbf{x}),
		\qquad
		\|\nabla F(\mathbf{x})\|_2
		\ge
		\dfrac{q}{2R_{\mathrm{out}}}.
		\]
		Since \(F\) is positive definite and
		\(
		K=\{\mathbf{z}:F(\mathbf{z})\le1\},
		\)
		we have
		\(
		\|F\|_{L^\infty(K)}=1.
		\)
		Hence Lemma~\ref{lem:second-derivative-markov} yields
		\[
		\mathcal H
		:=
		\sup_{\mathbf{z}\in K}
		\|\nabla^2F(\mathbf{z})\|_{\mathrm{op}}
		\le
		\dfrac{4q^4}{r_{\mathrm{in}}^2}.
		\]
		Let
		\(
		\mathbf{v}
		=
		\dfrac{\nabla F(\mathbf{x})}
		{\|\nabla F(\mathbf{x})\|_2}.
		\)
		By \eqref{eq:spherical-small-ball}, with probability at
		least
		\(1-\vartheta_\delta/2\),
		\(
		|\langle \mathbf{v},\boldsymbol{\theta}\rangle|
		\ge
		\dfrac{a_\delta}{\sqrt n}.
		\)
		Choose the orientation of the sampled line for which
		\(
		A:=-\nabla F(\mathbf{x})^\top\boldsymbol{\theta}>0.
		\)
		On the above event,
		\(
		A
		\ge
		\dfrac{qa_\delta}
		{2R_{\mathrm{out}}\sqrt n}.
		\)
		Set
		\(
		f(t):=F(\mathbf{x}+t\boldsymbol{\theta}),
		t_*:=\dfrac{A}{\mathcal H},
		\)
		and let
		\(
		b
		:=
		\sup\left\{
			t\ge0:
			\mathbf{x}+s\boldsymbol{\theta}\in K
			\text{ for every }s\in[0,t]
		\right\}
		\)
		be the distance from \(\mathbf{x}\) to the boundary of
		\(K\) along the chosen
		orientation of the line. Since \(K\) is compact and
		\(\mathbf{x}\in\operatorname{int}(K)\), we have
		\[
		0<b<\infty,
		\qquad
		\mathbf{x}+b\boldsymbol{\theta}\in\partial K,
		\qquad
		f(b)=1.
		\]
		
		For every
		\(
		0\le t\le \min\{b,t_*\},
		\)
		the segment
		\(
		\{\mathbf{x}+s\boldsymbol{\theta}:0\le s\le t\}
		\)
		is contained in \(K\). Hence
		\[
		\begin{aligned}
		f'(t)
		=
		f'(0)
		+
		\int_0^t
		\boldsymbol{\theta}^\top
		\nabla^2F(\mathbf{x}+s\boldsymbol{\theta})
		\boldsymbol{\theta}\,ds
		\le
		-A+\mathcal H t.
		\end{aligned}
		\]
		Suppose, for contradiction, that \(b<t_*\). Integrating
		the preceding
		inequality over \([0,b]\) gives
		\[
		\begin{aligned}
		f(b)
		\le
		f(0)-Ab+\dfrac{\mathcal H}{2}b^2
		<
		f(0)
		\le1,
		\end{aligned}
		\]
		because \(b<A/\mathcal H\). This contradicts \(f(b)=1\).
		Therefore
		\(
		b\ge t_*.
		\)
		In particular, \(\mathbf{x}+t_*\boldsymbol{\theta}\in
		K\), and another integration yields
		\(
		f(t_*)
		\le
		f(0)-At_*+\dfrac{\mathcal H}{2}t_*^2
		=
		f(0)-\dfrac{A^2}{2\mathcal H}.
		\)
		Consequently,
		\(
		\dfrac{A^2}{2\mathcal H}
		\ge
		\dfrac{a_\delta^2}
		{32\,n\omega^2q^2}
		=
		\Delta_\delta,
		\)
		so the chord contains a point \(\mathbf{z}\) satisfying
		\(
		F(\mathbf{z})\le1-\Delta_\delta.
		\)
		Let \(\mathbf{z}_-\) and \(\mathbf{z}_+\) be the two
		endpoints of this chord. Since
		\(
		F(\mathbf{z}_-)=F(\mathbf{z}_+)=1,
		\)
		convexity implies, for either sign and \(t\in[0,1]\),
		\[
		F((1-t)\mathbf{z}+t\mathbf{z}_\pm)
		\le
		(1-t)F(\mathbf{z})+tF(\mathbf{z}_\pm)
		\le
		1-(1-t)\Delta_\delta.
		\]
		Hence
		\(
		F((1-t)\mathbf{z}+t\mathbf{z}_\pm)\le1-d_0
		\)
		whenever
		\(
		t
		\le
		1-\dfrac{d_0}{\Delta_\delta}.
		\)
		Therefore the proportion of the entire chord on which
		\(
		D\ge d_0
		\)
		is at least
		\(
		1-\dfrac{d_0}{\Delta_\delta}
		=
		1-\dfrac{\vartheta_\delta}{2}.
		\)
		Combining this with the directional event yields
		\[
		\mathbb P
		\left(
		D(\mathbf{X}_{t+1})\ge d_0
		\,\middle|\,
		\mathbf{X}_t=\mathbf{x}
		\right)
		\ge
		\left(1-\dfrac{\vartheta_\delta}{2}\right)^2
		\ge
		1-\vartheta_\delta,
		\]
		proving \eqref{eq:escape-general}.
		
		For the exact Hit-and-Run chain, we first record a
		deterministic
		homothetic chord comparison. Let
		\(
		0\le \sigma_0<\lambda\le1,
		\mathbf{x}\in\sigma_0K,
		\)
		and let \(I_K\) and \(I_{\lambda K}\) be the
		intersections of a fixed
		line through \(\mathbf{x}\) with \(K\) and \(\lambda
		K\), respectively. Then
		\begin{equation}
			\dfrac{|I_{\lambda K}|}{|I_K|}
			\ge
			\dfrac{\lambda-\sigma_0}{1-\sigma_0}.
			\label{eq:homothetic-chord-comparison}
		\end{equation}
		Indeed, write \(\mathbf{x}=\sigma_0\mathbf{y}\) for some
		\(\mathbf{y}\in K\), and set
		\(
		\beta:=\dfrac{\lambda-\sigma_0}{1-\sigma_0}.
		\)
		For every point \(\mathbf{z}\in I_K\),
		\[
		\begin{aligned}
		\mathbf{x}+\beta(\mathbf{z}-\mathbf{x})
		=(1-\beta)\sigma_0\mathbf{y}+\beta \mathbf{z}
		=\lambda
		\left(
		\dfrac{(1-\beta)\sigma_0}{\lambda}\mathbf{y}
		+
		\dfrac{\beta}{\lambda}\mathbf{z}
		\right).
		\end{aligned}
		\]
		The two coefficients in parentheses are nonnegative and
		sum to one, so
		the point in parentheses belongs to \(K\). Therefore
		\(
		\mathbf{x}+\beta(I_K-\mathbf{x})\subseteq I_{\lambda K},
		\)
		which proves \eqref{eq:homothetic-chord-comparison}.

		Now fix \(\mathbf{x}\in\operatorname{int}(K)\) and write
		\(
		 d:=D(\mathbf{x})\in(0,1].
		\)
		For \(0<u< d\), set
		\(
		s_{ d}:=(1- d)^{1/q},
		\lambda_u:=(1-u)^{1/q}.
		\)
		Then \(\mathbf{x}\in s_{ d}K\) and
		\(s_{ d}<\lambda_u\). Conditional on any
		sampled
		direction, the event \(D(\mathbf{X}_{t+1})\le u\) is
		contained, up to chord
		endpoints of one-dimensional measure zero, in the part
		of \(I_K\) outside
		\(I_{\lambda_uK}\). Thus
		\eqref{eq:homothetic-chord-comparison}, followed by
		averaging over the
		direction, gives
		\[
		P\bigl(\mathbf{x},\{\mathbf{y}\in K:D(\mathbf{y})\le
		u\}\bigr)
		\le
		\dfrac{1-\lambda_u}{1-s_{ d}}.
		\]
		Since
		\(
		v\mapsto1-(1-v)^{1/q}
		\)
		is convex and vanishes at \(0\),
		\begin{equation}
			P\bigl(\mathbf{x},\{\mathbf{y}\in K:D(\mathbf{y})\le
			u\}\bigr)
			\le
			\dfrac{u}{ d},
			\qquad
			0<u< d.
			\label{eq:shell-general}
		\end{equation}

		Define the one-step negative moment at \(\mathbf{x}\) by
		\(
		\mathcal M(\mathbf{x})
		:=
		\int_K
		D(\mathbf{y})^{-\alpha}\,P(\mathbf{x},d\mathbf{y}).
		\)
		If \( d\le d_0\), then
		\eqref{eq:escape-general} and
		\eqref{eq:shell-general} imply
		\[
		P\bigl(\mathbf{x},\{\mathbf{y}:D(\mathbf{y})\le
		u\}\bigr)
		\le
		\min\left\{
		\vartheta_\delta,\dfrac{u}{ d}
		\right\},
		\qquad
		0<u\le d_0.
		\]
		Here, when \(u\ge  d\), the second bound
		follows from the trivial estimate
		\(P(\mathbf{x},\cdot)\le1\le u/ d\).
		For \(0<Y\le1\),
		\(
		\mathbb E[Y^{-\alpha}]
		=
		1+
		\alpha
		\int_0^1
		u^{-\alpha-1}
		\mathbb P(Y\le u)\,du.
		\)
		Applying this identity to the one-step law
		\(P(\mathbf{x},\cdot)\), and splitting
		the integral at \(u=\vartheta_\delta  d\) and
		\(u=d_0\), gives
		\begin{align*}
		\mathcal M(\mathbf{x})
		\le
		1+
		\dfrac{\alpha}{ d}
		\int_0^{\vartheta_\delta  d}
		u^{-\alpha}\,du
		+
		\alpha\vartheta_\delta
		\int_{\vartheta_\delta  d}^{d_0}
		u^{-\alpha-1}\,du
		+
		\alpha
		\int_{d_0}^1
		u^{-\alpha-1}\,du
		\le
		\dfrac{\vartheta_\delta^{1-\alpha}}{1-\alpha}
		 d^{-\alpha}
		+
		d_0^{-\alpha}.
		\end{align*}
		Thus
		\begin{equation}
			\mathcal M(\mathbf{x})
			\le
			\beta_\delta  d^{-\alpha}
			+
			d_0^{-\alpha},
			\label{eq:negative-moment-small}
		\end{equation}
		where
		\(
		\beta_\delta
		:=
		\dfrac{\vartheta_\delta^{1-\alpha}}{1-\alpha}.
		\)
		By the definition of \(\vartheta_\delta\),
		\(
		\beta_\delta
		=
		\dfrac{\vartheta_\delta^s}{s}
		=
		\dfrac12.
		\)
		If \( d>d_0\), then \eqref{eq:shell-general}
		gives, for \(u\le d_0\),
		\[
		P\bigl(\mathbf{x},\{\mathbf{y}:D(\mathbf{y})\le
		u\}\bigr)
		\le
		\dfrac{u}{ d}
		\le
		\dfrac{u}{d_0}.
		\]
		Therefore
		\begin{align*}
		\mathcal M(\mathbf{x})
		\le
		1+
		\dfrac{\alpha}{d_0}
		\int_0^{d_0}u^{-\alpha}\,du
		+
		\alpha
		\int_{d_0}^1u^{-\alpha-1}\,du
		=
		\dfrac1{1-\alpha}d_0^{-\alpha}
		=
		\dfrac1s d_0^{-\alpha}.
		\end{align*}
		Hence
		\begin{equation}
			\mathcal M(\mathbf{x})
			\le
			\dfrac1s d_0^{-\alpha}.
			\label{eq:negative-moment-large}
		\end{equation}

		The two cases imply the pointwise bound
		\(
		\mathcal M(\mathbf{x})
		\le
		\dfrac12D(\mathbf{x})^{-\alpha}
		+
		\dfrac1s d_0^{-\alpha}.
		\)
		Writing
		\(
		M_t:=\mathbb E[D(\mathbf{X}_t)^{-\alpha}],
		\)
		and using the Markov property, we obtain
		\(
		M_{t+1}
		=
		\mathbb E[\mathcal M(\mathbf{X}_t)]
		\le
		\dfrac12M_t
		+
		\dfrac1s d_0^{-\alpha}.
		\)
		Since \(D(\mathbf{X}_0)=1\),
		\begin{equation}
			\sup_{t\ge0}
			\mathbb E[D(\mathbf{X}_t)^{-\alpha}]
			\le
			\dfrac{2}{s}
			d_0^{-\alpha}.
			\label{eq:negative-moment-general}
		\end{equation}
		
		Use the values of \(\tau\) and \(\eta\) from the statement, and set
\(
\delta_{\mathrm{bis}}
:=
\eta\tau r_{\mathrm{in}}.
\)
		At a current point \(\mathbf{x}\in K\), draw
		\(
		\boldsymbol{\theta}\sim\operatorname{Unif}(S^{n-1}).
		\)
		For each \(\sigma\in\{+1,-1\}\), initialize
		\(
		L_0^\sigma:=0,
		\qquad
		U_0^\sigma:=3R_{\mathrm{out}}.
		\)
		Given \(L_j^\sigma,U_j^\sigma\), set
		\(
		m_j^\sigma
		:=
		\dfrac{L_j^\sigma+U_j^\sigma}{2}
		\)
		and update
		\[
		(L_{j+1}^\sigma,U_{j+1}^\sigma)
		=
		\begin{cases}
			(m_j^\sigma,U_j^\sigma),
			&
			\widehat F(\mathbf{x}+\sigma
			m_j^\sigma\boldsymbol{\theta})\le1-\xi,
			\\[1mm]
			(L_j^\sigma,m_j^\sigma),
			&
			\widehat F(\mathbf{x}+\sigma
			m_j^\sigma\boldsymbol{\theta})>1-\xi.
		\end{cases}
		\]
		The procedure stops once
		\(
		U_j^\sigma-L_j^\sigma
		\le
		\delta_{\mathrm{bis}},
		\)
		and returns
		\(
		\widehat b_\sigma:=L_j^\sigma.
		\)
		The next state is sampled uniformly on the resulting
		approximate
		chord:
		\[
		\widehat t
		\sim
		\operatorname{Unif}
		([-\widehat b_-,\widehat b_+]),
		\qquad
		\widehat{\mathbf{X}}_{t+1}
		=
		\mathbf{x}+\widehat t\boldsymbol{\theta}.
		\]
		Whenever a midpoint is accepted, it lies in \(K\), since
		\(
		\{\widehat F\le1-\xi\}\subseteq K.
		\)
		Moreover \(\mathbf{x}\in K\), so convexity of \(K\)
		implies that the entire
		approximate chord is contained in \(K\).
		
		To compare the noisy and exact chords, fix
		\(\mathbf{x}\in K\) and a
		direction \(\boldsymbol{\theta}\). Let
		\(
		I_K=[-b_-,b_+]
		\)
		be the exact chord of \(K\) through \(\mathbf{x}\) in
		direction \(\boldsymbol{\theta}\).
		Whenever \(\mathbf{x}\in\rho K\), let
		\(
		I_{\rho K}
		=
		[-b_-^\rho,b_+^\rho]
		\)
		be the corresponding chord of \(\rho K\).
		
		We first record the following deterministic property of
		the above
		bisection procedure. For each
		\(\sigma\in\{+1,-1\}\), if \(\mathbf{x}\in\rho K\), then
		throughout the
		bisection,
		\begin{equation}
			L_j^\sigma\le b_\sigma,
			\qquad
			U_j^\sigma\ge b_\sigma^\rho.
			\label{eq:bisection-invariants}
		\end{equation}
		Indeed, the inequalities hold initially. Since
		\(K\subseteq R_{\mathrm{out}}B_2^n\) and
		\(\|\mathbf{x}\|_2\le R_{\mathrm{out}}\),
		the point
		\(
		\mathbf{x}+\sigma(3R_{\mathrm{out}})\boldsymbol{\theta}
		\)
		lies outside \(K\), so
		\[
		3R_{\mathrm{out}}\ge b_\sigma\ge b_\sigma^\rho.
		\]
		
		Suppose \eqref{eq:bisection-invariants} holds at step
		\(j\).
		If \(m_j^\sigma\) is accepted, then
		\(
		\mathbf{x}+\sigma m_j^\sigma\boldsymbol{\theta}\in K,
		\)
		and therefore
		\(
		m_j^\sigma\le b_\sigma.
		\)
		Hence replacing \(L_j^\sigma\) by \(m_j^\sigma\)
		preserves the first
		invariant.
		
		If \(m_j^\sigma\) is rejected, then
		\(
		\mathbf{x}+\sigma
		m_j^\sigma\boldsymbol{\theta}\notin\rho K,
		\)
		because every point of \(\rho K\) is necessarily
		accepted by the
		sandwich
		\(
		\rho K
		\subseteq
		\{\widehat F\le1-\xi\}.
		\)
		Since \(\mathbf{x}\in\rho K\) and \(\rho K\) is convex,
		the intersection of
		the ray
		\(
		\{\mathbf{x}+\sigma t\boldsymbol{\theta}:t\ge0\}
		\)
		with \(\rho K\) is exactly the interval
		\([0,b_\sigma^\rho]\). Thus rejection implies
		\(
		m_j^\sigma\ge b_\sigma^\rho,
		\)
		so replacing \(U_j^\sigma\) by \(m_j^\sigma\) preserves
		the second
		invariant. This proves \eqref{eq:bisection-invariants}.
		
		Consequently, upon termination,
		\[
		b_\sigma^\rho-\delta_{\mathrm{bis}}
		\le
		\widehat b_\sigma
		\le
		b_\sigma,
		\qquad
		\sigma\in\{+1,-1\}.
		\]
		In particular, if
		\(
		\widehat I
		:=
		[-\widehat b_-,\widehat b_+],
		\)
		then
		\begin{equation}
			\widehat I\subseteq I_K,
			\qquad
			|\widehat I|
			\ge
			|I_{\rho K}|-2\delta_{\mathrm{bis}}.
			\label{eq:noisy-chord-length}
		\end{equation}
		Notice that this argument uses only
		\(
		\rho K
		\subseteq
		\{\widehat F\le1-\xi\}
		\subseteq K
		\)
		and the convexity of \(K\) and \(\rho K\); no convexity
		or
		linewise monotonicity of the noisy sublevel set
		\(\{\widehat F\le1-\xi\}\) is required.
		
		Let \(\widehat P\) denote the noisy transition kernel
		and define
		\[
		e(\mathbf{x})
		:=
		d_{\mathrm{TV}}
		\bigl(
		P(\mathbf{x},\cdot),\widehat P(\mathbf{x},\cdot)
		\bigr).
		\]
		For the one-step kernel bound, set
		\(
		 d:=D(\mathbf{x})=1-F(\mathbf{x}).
		\)
		If
		\(
		 d\le q\tau,
		\)
		then trivially
		\[
		e(\mathbf{x})\le1
		=
		\min\left\{
		1,\dfrac{q\tau}{ d}
		\right\}
		\le
		\min\left\{
		1,\dfrac{q\tau}{ d}
		\right\}
		+\eta.
		\]
		
		Suppose now that
		\(
		 d>q\tau.
		\)
		Since
		\(
		\rho^q=(1-\tau)^q\ge1-q\tau,
		\)
		we have
		\(
		F(\mathbf{x})=1- d<1-q\tau\le\rho^q,
		\)
		and hence
		\(
		\mathbf{x}\in\rho K.
		\)
		Thus the preceding bisection invariants apply.
		
		Write
		\(
		s_{ d}:=(1- d)^{1/q}.
		\)
		The same homothetic chord comparison used above gives
		\(
		\dfrac{|I_{\rho K}|}{|I_K|}
		\ge
		\dfrac{\rho-s_{ d}}{1-s_{ d}}.
		\)
		Therefore
		\begin{align}
			1-\dfrac{|I_{\rho K}|}{|I_K|}
			\le
			\dfrac{1-\rho}{1-s_{ d}}
			\nonumber=
			\dfrac{\tau}
			{1-(1- d)^{1/q}}.
			\label{eq:inner-chord-loss}
		\end{align}
		Since
		\(
		g( d):=1-(1- d)^{1/q}
		\)
		is convex, \(g(0)=0\), and \(g'(0)=1/q\),
		\(
		1-(1- d)^{1/q}
		\ge
		\dfrac{ d}{q}.
		\)
		Hence
		\begin{equation}
			1-\dfrac{|I_{\rho K}|}{|I_K|}
			\le
			\dfrac{q\tau}{ d}.
			\label{eq:inner-chord-relative-loss}
		\end{equation}
		
		We also need a lower bound on the exact chord length.
		Since
		\(\mathbf{x}\in\rho K\), write
		\(
		\mathbf{x}=\rho \mathbf{y}
		\qquad
		\text{for some }\mathbf{y}\in K.
		\)
		For every \(\mathbf{z}\in r_{\mathrm{in}}B_2^n\subseteq
		K\), convexity gives
		\(
		\rho \mathbf{y}+\tau \mathbf{z}\in K.
		\)
		Thus
		\(
		\mathbf{x}+\tau r_{\mathrm{in}}B_2^n
		\subseteq K,
		\)
		and consequently, in every direction,
		\(
		|I_K|
		\ge
		2\tau r_{\mathrm{in}}.
		\)
		Using
		\(
		\delta_{\mathrm{bis}}
		=\eta\tau r_{\mathrm{in}},
		\)
		\begin{equation}
			\dfrac{2\delta_{\mathrm{bis}}}{|I_K|}
			\le
			\eta.
			\label{eq:bisection-relative-loss}
		\end{equation}
		
		Conditional on the sampled direction
		\(\boldsymbol{\theta}\), the exact and noisy
		Hit-and-Run transitions are respectively uniform on
		\(I_K\) and
		\(\widehat I\). Since \(\widehat I\subseteq I_K\),
		\[
		d_{\mathrm{TV}}
		\left(
		\operatorname{Unif}(I_K),
		\operatorname{Unif}(\widehat I)
		\right)
		=
		1-\dfrac{|\widehat I|}{|I_K|}.
		\]
		By \eqref{eq:noisy-chord-length},
		\eqref{eq:inner-chord-relative-loss}, and
		\eqref{eq:bisection-relative-loss},
		\[
		1-\dfrac{|\widehat I|}{|I_K|}
		\le
		1-\dfrac{|I_{\rho K}|}{|I_K|}
		+
		\dfrac{2\delta_{\mathrm{bis}}}{|I_K|}
		\le
		\dfrac{q\tau}{ d}
		+\eta.
		\]
		Averaging over \(\boldsymbol{\theta}\) therefore yields
		\begin{equation}
			e(\mathbf{x})
			\le
			\min\left\{
			1,\dfrac{q\tau}{D(\mathbf{x})}
			\right\}
			+\eta.
			\label{eq:kernel-error-general}
		\end{equation}
		
		Since
		\(
		\min\{1,a\}\le a^\alpha,
		a\ge0,
		\)
		\eqref{eq:negative-moment-general} and
		\eqref{eq:kernel-error-general} imply
		\begin{equation}
			\sup_{t\ge0}
			\mathbb
			E_{\mathbf{X}_t\sim\delta_{\mathbf{0}}P^t}[e(\mathbf{X}_t)]
			\le
			\dfrac{2}{s}
			\left(
			\dfrac{q\tau}{d_0}
			\right)^\alpha
			+\eta.
			\label{eq:average-kernel-error-general}
		\end{equation}
		
		Using the telescoping identity
		\(
		\widehat P^T-P^T
		=
		\sum_{j=0}^{T-1}
		P^j(\widehat P-P)\widehat P^{T-1-j}
		\)
		and contraction of total variation under Markov kernels,
		\eqref{eq:average-kernel-error-general} gives
		\begin{equation}
			d_{\mathrm{TV}}
			\left(
			\delta_{\mathbf{0}}\widehat P^T,
			\delta_{\mathbf{0}}P^T
			\right)
			\le
			\dfrac{2T}{s}
			\left(
			\dfrac{q\tau}{d_0}
			\right)^\alpha
			+
			T\eta.
			\label{eq:multi-step-error-general}
		\end{equation}
		
		By Corollary~1.2 of \citet{lovasz2006hitandrun}, the
		exact
		Hit-and-Run chain on a convex body containing a
		Euclidean ball of
		radius \(r_{\mathrm{in}}\) and contained in a Euclidean
		ball of radius
		\(R_{\mathrm{out}}\), when initialized at a point
		\(\mathbf{x}_0\in K\), satisfies
		\[
		d_{\mathrm{TV}}
		\left(
		\delta_{\mathbf{x}_0}P^T,
		\operatorname{Unif}(K)
		\right)
		\le \dfrac{\varepsilon}{2}
		\]
		provided
		\[
		T
		\ge
		C n^3
		\dfrac{R_{\mathrm{out}}^2}{r_{\mathrm{in}}^2}
		\log
		\dfrac{
			2R_{\mathrm{out}}
		}{
			\varepsilon\,\operatorname{dist}(\mathbf{x}_0,\partial
			K)
		},
		\]
		for an absolute constant \(C>0\).
		In our setting \(\mathbf{x}_0=\mathbf{0}\), and
		\(
		r_{\mathrm{in}}B_2^n\subset K
		\)
		implies
		\(
		\operatorname{dist}(\mathbf{0},\partial K)\ge
		r_{\mathrm{in}}.
		\)
		Therefore, with
		\(
		\omega
		:=
		\dfrac{R_{\mathrm{out}}}{r_{\mathrm{in}}},
		\)
		it suffices to take
		\(
		T
		=
		\left\lceil
		C n^3\omega^2
		\log\dfrac{2\omega}{\varepsilon}
		\right\rceil,
		\)
		for which
		\[
		d_{\mathrm{TV}}
		\left(
		\delta_{\mathbf{0}}P^T,
		\operatorname{Unif}(K)
		\right)
		\le
		\dfrac{\varepsilon}{2}.
		\]
		Since
		\(
		T\eta=\dfrac{\varepsilon}{4},
		\)
		it remains to impose
		\(
		\dfrac{2T}{s}
		\left(
		\dfrac{q\tau}{d_0}
		\right)^\alpha
		\le
		\dfrac{\varepsilon}{4}.
		\)
		By \eqref{eq:tau-xi}, it is sufficient that
		\(
		\xi
		\le
		c\,d_0
		\left(
		\dfrac{s\varepsilon}{T}
		\right)^{1/\alpha}.
		\)
		Using
		\(
		\dfrac1\alpha=1+\delta,
		d_0
		=
		\dfrac{
			c_0\vartheta_\delta^3
		}{
			n\omega^2q^2
		},
		\)
		and
		\(
		T
		=
		O\!\left(
		n^3\omega^2
		\log\dfrac{2\omega}{\varepsilon}
		\right),
		\)
		we obtain
		$$
		\xi
		\le
		\dfrac{
			c_\delta\,\varepsilon^{1+\delta}
		}{
			q^2
			n^{4+3\delta}
			\omega^{4+2\delta}
			\left(
			\log\dfrac{2\omega}{\varepsilon}
			\right)^{1+\delta}
		},
		$$
		where
		\[
		c_\delta
		=
		c
		s^{1+\delta}
		\vartheta_\delta^3
		=
		c
		\left(\dfrac{\delta}{1+\delta}\right)^{1+\delta}
		\left(
		\dfrac{\delta}{2(1+\delta)}
		\right)^{3(1+\delta)/\delta}.
		\]
		The triangle inequality together with
		\eqref{eq:multi-step-error-general} therefore gives
		$
		d_{\mathrm{TV}}
		\left(
		\mathcal L(\widehat{\mathbf{X}}_T),
		\operatorname{Unif}(K)
		\right)
		\le
		\varepsilon.
		$
		
		Finally, each endpoint bisection requires at most
		\(
		\left\lceil
		\log_2
		\dfrac{3R_{\mathrm{out}}}{\delta_{\mathrm{bis}}}
		\right\rceil
		\)
		oracle evaluations. Hence the total number of oracle
		calls is
		\(
		O\left(
		T
		\log
		\dfrac{\omega}{\eta\tau}
		\right).
		\)
	\end{proof}
	
\subsection{Population Moments and Spectral Conditioning}

We now quantify the population position and gradient moments generated by uniform sampling and verify that the resulting spectral pencil is sufficiently well conditioned for diagonalization.

The following lemma provides the moment scales, envelope bounds, and spectral conditioning parameters used in the perturbation analysis of ASPIRE.

	\begin{lemma}[Population moment bounds and spectral conditioning]
		\label{lem:population-moment-scales}
		Fix \(1\le\ell\le L-2\), and write
		\(
		n:=r_\ell,
		q:=q_\ell,
		G:=G_\ell,
		K:=K_\ell^\sharp.
		\)
		Let
		\(
		\rho B_2^n\subseteq K\subseteq R B_2^n,
		\omega:=\dfrac R\rho,
		\)
		and let \(\mathbf{Z}\sim\operatorname{Unif}(K)\). Define
		\[
		\boldsymbol{\Sigma}_x:=\mathbb
		E[\mathbf{Z}\mathbf{Z}^\top],
		\qquad
		\boldsymbol{\Sigma}_g:=\mathbb E[\nabla
		G(\mathbf{Z})\nabla G(\mathbf{Z})^\top].
		\]
		Then
		\[
		\dfrac{\rho^2}{n+2}
		\le
		\|\boldsymbol{\Sigma}_x\|_2
		\le
		R^2,
		\qquad
		\dfrac{q^2}{R^2(n+2q-2)}
		\le
		\|\boldsymbol{\Sigma}_g\|_2
		\le
		\dfrac{q^2}{\rho^2}.
		\]
		In particular,
		\begin{equation}
		\dfrac{\sup_{\mathbf{z}\in
		K}\|\mathbf{z}\mathbf{z}^\top\|_2}{\|\boldsymbol{\Sigma}_x\|_2}
		\le
		(n+2)\omega^2,
		\qquad
		\dfrac{
		\sup_{\mathbf{z}\in K}\|\nabla G(\mathbf{z})\nabla
		G(\mathbf{z})^\top\|_2
		}{
		\|\boldsymbol{\Sigma}_g\|_2
		}
		\le
		(n+2q-2)\omega^2.
		\label{eq:moment-envelope-ratios}
		\end{equation}
		
		Let
		\(
		\mathbf{C}:=\mathbf{C}_\ell^\sharp,
		\mathbf{U}:=\mathbf{C}^\top \mathbf{Z},
		\)
		and write
		\(
		\boldsymbol{\Sigma}_x=\mathbf{C}^{-\top}\boldsymbol{\Lambda}
		\mathbf{C}^{-1},\boldsymbol{\Sigma}_g=\mathbf{C}\mathbf{D}\mathbf{C}^\top,
		\)
		where
		\[\boldsymbol{\Lambda}=\operatorname{diag}(\lambda_1,\ldots,\lambda_n),\qquad 
		\mathbf{D}=\operatorname{diag}(d_1,\ldots,d_n)\]. If
		\(
		\theta_i:=\lambda_i d_i,
		\mathbf{v}_i:=\dfrac{\mathbf{c}_i}{\|\mathbf{c}_i\|_2},
		\mathbf{V}:=[\mathbf{v}_1,\ldots,\mathbf{v}_n],
		\)
		then
		\begin{equation}
		\operatorname{cond}(\mathbf{V})
		\le
		\operatorname{cond}(\mathbf{C})^2
		\le
		9\kappa_0^2,
		\qquad
		\rho_\ell
		:=
		\dfrac{\|\boldsymbol{\Sigma}_x\|_2\|\boldsymbol{\Sigma}_g\|_2}{\max_i\theta_i}
		\le
		(n+q)^2\omega^2.
		\label{eq:explicit-pencil-conditioning}
		\end{equation}
		Under the standing assumptions, this simplifies to
		\(
		\operatorname{cond}(\mathbf{V})\le 9\kappa^2,
		\rho_\ell
		\le
		C_{k,L}\kappa^8r^3.
		\)
	\end{lemma}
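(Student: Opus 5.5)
The proof splits into four parts: the two moment scales, the envelope ratios, the diagonal structure with conditioning of $\mathbf V$, and the bound on $\rho_\ell$.

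\textbf{Position moment.} Since $K\subseteq RB_2^n$, we have $\|\boldsymbol\Sigma_x\|_2\le\mathbb E\|\mathbf Z\|_2^2\le R^2$. For the lower bound, compare with a ball. Since $\rho B_2^n\subseteq K$,
\[
\operatorname{tr}\boldsymbol\Sigma_x=\mathbb E\|\mathbf Z\|_2^2 .
\]
Because $K$ is star-shaped about $\mathbf 0$, the radial distribution of a uniform point in $K$ dominates that of the inscribed ball in the sense $\mathbb E\|\mathbf Z\|^2\ge \tfrac{n}{n+2}\rho^2$. Concretely, write $\mathbf Z=\|\mathbf Z\|\boldsymbol\theta$; conditional on direction, $\|\mathbf Z\|$ has density $\propto t^{n-1}$ on $[0,r_K(\boldsymbol\theta)]$ with $r_K\ge\rho$. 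This gives $\mathbb E\|\mathbf Z\|^2\ge \tfrac{n}{n+2}\rho^2$, so $\|\boldsymbol\Sigma_x\|_2\ge \operatorname{tr}/n\ge \rho^2/(n+2)$.

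\textbf{Gradient moment.} For the upper bound, homogeneity gives
\[
\nabla G(\mathbf z)=\nabla G(\mathbf z/\|\mathbf z\|\cdot r)\cdot(\cdot)^{q-1}.
\]
Combine this with convexity: for $\mathbf z\in K$ with $G(\mathbf z)\le1$, the supporting hyperplane at the boundary point in direction $\mathbf z$ must keep $\rho B_2^n$ on one side. This yields $\|\nabla G(\mathbf y)\|\le q/\rho$ at boundary points $\mathbf y$, using $\mathbf y^\top\nabla G=q$ and the support function being at least $\rho$. Interior points scale down by $t^{q-1}\le1$.

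For the lower bound, Euler's identity gives
\[
\mathbb E[\mathbf Z^\top\nabla G(\mathbf Z)]=q\,\mathbb E[G(\mathbf Z)],
\]
and $G(\mathbf Z)$ has law $\mathbb P(G\le s)=s^{n/q}$, so $\mathbb E[G^2]=n/(n+2q)$. By Cauchy–Schwarz,
\[
\operatorname{tr}\boldsymbol\Sigma_g\ge \frac{q^2\mathbb E[G]^2}{R^2}\ \text{ or }\ \frac{q^2\,\mathbb E[G^2]}{\mathbb E\|\mathbf Z\|^2}\cdots.
\]
A cleaner route is $\mathbb E\|\nabla G\|^2\ge q^2\mathbb E[G^2]/R^2$ pointwise, since $qG=\mathbf Z^\top\nabla G\le R\|\nabla G\|$. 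Then $\|\boldsymbol\Sigma_g\|_2\ge\operatorname{tr}/n\ge q^2/(R^2(n+2q))$. The stated constant $n+2q-2$ should come from $\mathbb E[G^{2(q-1)/q}]=n/(n+2q-2)$, using $\|\nabla G(\mathbf z)\|\ge (q/R)\,G(\mathbf z)^{(q-1)/q}$. That bound follows by scaling $\mathbf z$ to the boundary. I will use this sharper version.

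\textbf{Envelope ratios.} The ratios \eqref{eq:moment-envelope-ratios} follow immediately by dividing the sup bounds $R^2$ and $q^2/\rho^2$ by the lower bounds.

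\textbf{Diagonal structure.} The representation $\boldsymbol\Sigma_x=\mathbf C^{-\top}\boldsymbol\Lambda\mathbf C^{-1}$, $\boldsymbol\Sigma_g=\mathbf C\mathbf D\mathbf C^\top$ with diagonal $\boldsymbol\Lambda,\mathbf D$ is the sign-flip argument of the simple setting. Here $\mathbf U=\mathbf C^\top\mathbf Z$ is uniform on $H_\ell$ and $\nabla G(\mathbf Z)=\mathbf C\nabla h^{(\ell)}(\mathbf U)$.

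\textbf{Conditioning of $\mathbf V$.} We have $\mathbf V=\mathbf C\,\mathrm{diag}(\|\mathbf c_i\|)^{-1}$, so
\[
\operatorname{cond}(\mathbf V)\le\operatorname{cond}(\mathbf C)\cdot\frac{\max\|\mathbf c_i\|}{\min\|\mathbf c_i\|}\le\operatorname{cond}(\mathbf C)^2,
\]
since $\sigma_{\min}(\mathbf C)\le\|\mathbf c_i\|\le\sigma_{\max}(\mathbf C)$. Lemma~\ref{lem:reduced-weight-conditioning} then gives the factor $9\kappa_0^2$.

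\textbf{Bound on $\rho_\ell$.} This is the main obstacle, because it needs a lower bound on $\max_i\theta_i$. I will use $\sum_i\theta_i\ge$ something via the trace identity
\[
\sum_i\lambda_i d_i\ \ge\ \frac1n\Big(\sum_i\sqrt{\lambda_i d_i}\Big)^2 .
\]
I also use the latent identity $\mathbb E[U_i\partial_ih^{(\ell)}(\mathbf U)]\le\sqrt{\lambda_id_i}$. Summing over $i$ gives
\[
\sum_i\mathbb E[U_i\partial_i h]=q\,\mathbb E[h(\mathbf U)]=\frac{qn}{n+q}.
\]
Hence
\[
\max_i\theta_i\ge \frac1n\sum_i\theta_i\ge\frac{1}{n^2}\Big(\frac{qn}{n+q}\Big)^2=\frac{q^2}{(n+q)^2}.
\]
With the upper bounds $\|\boldsymbol\Sigma_x\|\|\boldsymbol\Sigma_g\|\le R^2q^2/\rho^2=q^2\omega^2$, this yields $\rho_\ell\le (n+q)^2\omega^2$.

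\textbf{Simplification.} Finally, substitute $\omega\le C_{k,L}\kappa^4\sqrt r$ from Lemma~\ref{lem:sampling-body-radii} and $n\le r$, with $q$ absorbed into $C_{k,L}$. This gives $\rho_\ell\le C_{k,L}\kappa^8r^3$ and $\operatorname{cond}(\mathbf V)\le9\kappa^2$.
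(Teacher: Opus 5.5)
Your proof is correct. For the moment scales, the envelope ratios, the diagonal structure and $\operatorname{cond}(\mathbf V)$, it follows the paper's route:
\begin{itemize}
\item The paper sets $p=G^{1/q}$ and combines $\mathbb P(p(\mathbf Z)\le t)=t^n$ with $\rho\,p(\mathbf z)\le\|\mathbf z\|_2\le R\,p(\mathbf z)$. This is your polar-coordinate argument.
\item Its dual-norm bound $\|\nabla p\|_2\le1/\rho$ is your supporting-hyperplane argument in different language.
\end{itemize}

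The one genuinely different step is the lower bound on $\max_i\theta_i$.
\begin{itemize}
\item The paper applies the divergence theorem to $\partial_i(u_i h(\mathbf u))$ on $H_\ell$, using $h=1$ on $\partial H_\ell$. This gives the per-coordinate identity $\mathbb E[U_i\partial_i h(\mathbf U)]=q/(n+q)$ for every $i$. Cauchy--Schwarz then gives $\theta_i\ge q^2/(n+q)^2$ for all $i$.
\item You use only the summed Euler identity $\sum_i U_i\partial_i h(\mathbf U)=q\,h(\mathbf U)$ together with $\mathbb E[h(\mathbf U)]=n/(n+q)$. Applying Cauchy--Schwarz twice bounds the average of $\sqrt{\theta_i}$, and hence $\max_i\theta_i$.
\end{itemize}

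The two routes trade off as follows.
\begin{itemize}
\item Your route is more elementary: it needs no boundary regularity and no surface integrals. It suffices here because $\rho_\ell$ involves only $\max_i\theta_i$.
\item The paper's route yields the stronger uniform bound $\min_i\theta_i\ge q^2/(n+q)^2$. As a by-product it certifies that $\boldsymbol\Lambda$ and $\mathbf D$ are positive, as Lemma~\ref{lem:noisy-two-matrix} requires.
\item In your approach that positivity must be checked separately, but it is easy. $\mathbb E[U_i^2]>0$ is trivial. For $d_i$, note that $\partial_i h$ is a nonzero polynomial, since $h$ is positive definite, so $\mathbb E[(\partial_i h(\mathbf U))^2]>0$.
\end{itemize}

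One presentational fix: your displayed formula for the gradient upper bound is garbled. State it as $\nabla G(t\mathbf y)=t^{q-1}\nabla G(\mathbf y)$ for $\mathbf y\in\partial K$ and $0\le t\le1$.
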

	
	\begin{proof}
	Set
	\(
	p(\mathbf{z}):=G(\mathbf{z})^{1/q}.
	\)
	Because \(G\) is positive definite, convex, even, and
	\(q\)-homogeneous,
	\(p\) is a norm whose unit ball is \(K\). Homogeneity of
	Lebesgue measure
	then gives
	\(
	\mathbb P(p(\mathbf{Z})\le t)
	=
	t^n,
	0\le t\le1.
	\)
	Consequently, for every \(a>-n\),
	\[
	\begin{aligned}
	\mathbb E[p(\mathbf{Z})^a]
	=
	\int_0^1 t^a\,d(t^n) 
	=
	n\int_0^1 t^{n+a-1}\,dt 
	=
	\dfrac{n}{n+a}.
	\end{aligned}
	\]

	The inclusions
	\(
	\rho B_2^n\subseteq K\subseteq R B_2^n
	\)
	are equivalent to
	\(
	\rho p(\mathbf{z})
	\le
	\|\mathbf{z}\|_2
	\le
	R p(\mathbf{z}),
	\mathbf{z}\in\mathbb R^n.
	\)
	Therefore
	\[
	\begin{aligned}
	\operatorname{tr}(\boldsymbol{\Sigma}_x)
	=
	\mathbb E\|\mathbf{Z}\|_2^2 
	\ge
	\rho^2\mathbb E[p(\mathbf{Z})^2] 
	=
	\rho^2\dfrac{n}{n+2}.
	\end{aligned}
	\]
	Since \(\boldsymbol{\Sigma}_x\succeq\mathbf{0}\),
	\[
	\|\boldsymbol{\Sigma}_x\|_2
	\ge
	\dfrac1n\operatorname{tr}(\boldsymbol{\Sigma}_x)
	\ge
	\dfrac{\rho^2}{n+2}.
	\]
	On the other hand, \(\mathbf{Z}\in K\subseteq RB_2^n\)
	almost surely, so
	\(
	\mathbf{0}\preceq \mathbf{Z}\mathbf{Z}^\top\preceq
	R^2\mathbf{I}
	\)
	and hence
	\(
	\|\boldsymbol{\Sigma}_x\|_2\le R^2.
	\)
	These estimates also give
	\[
	\dfrac{\sup_{\mathbf{z}\in
	K}\|\mathbf{z}\mathbf{z}^\top\|_2}{\|\boldsymbol{\Sigma}_x\|_2}
	\le
	(n+2)\dfrac{R^2}{\rho^2}
	=
	(n+2)\omega^2.
	\]

	For \(\mathbf{z}\ne\mathbf{0}\), differentiation of
	\(G=p^q\) gives
	\(
	\nabla G(\mathbf{z})
	=
	q p(\mathbf{z})^{q-1}\nabla p(\mathbf{z}).
	\)
	Let \(p_*\) denote the dual norm. At every
	differentiability point of the
	norm \(p\),
	\(
	p_*(\nabla p(\mathbf{z}))=1.
	\)
	The inclusion \(\rho B_2^n\subseteq K\) implies
	\[
	p_*(\mathbf{v})
	=
	\sup_{p(\mathbf{x})\le1}\mathbf{v}^\top \mathbf{x}
	\ge
	\rho\|\mathbf{v}\|_2.
	\]
	Thus
	\(
	\|\nabla p(\mathbf{z})\|_2
	\le
	\dfrac1\rho,
	\)
	and consequently
	\(
	\|\nabla G(\mathbf{z})\|_2
	\le
	\dfrac{q}{\rho}p(\mathbf{z})^{q-1}
	\le
	\dfrac q\rho,
	\mathbf{z}\in K.
	\)
	This proves
	\(
	\|\boldsymbol{\Sigma}_g\|_2
	\le
	\dfrac{q^2}{\rho^2}.
	\)
	For the reverse estimate, Euler's identity gives
	\(
	\mathbf{z}^\top\nabla G(\mathbf{z})
	=
	qG(\mathbf{z})
	=
	q p(\mathbf{z})^q.
	\)
	Since \(\|\mathbf{z}\|_2\le R p(\mathbf{z})\),
	Cauchy--Schwarz yields
	\(
	\|\nabla G(\mathbf{z})\|_2
	\ge
	\dfrac qR p(\mathbf{z})^{q-1}.
	\)
	Therefore
	\[
	\begin{aligned}
	\operatorname{tr}(\boldsymbol{\Sigma}_g)
    =
	\mathbb E\|\nabla G(\mathbf{Z})\|_2^2 
	\ge
	\dfrac{q^2}{R^2}
	\mathbb E[p(\mathbf{Z})^{2q-2}] 
	=
	\dfrac{q^2}{R^2}
	\dfrac{n}{n+2q-2}.
	\end{aligned}
	\]
	Hence
	\[
	\|\boldsymbol{\Sigma}_g\|_2
	\ge
	\dfrac1n\operatorname{tr}(\boldsymbol{\Sigma}_g)
	\ge
	\dfrac{q^2}{R^2(n+2q-2)}.
	\]
	Combining the upper envelope \(\|\nabla
	G(\mathbf{z})\|_2^2\le q^2/\rho^2\)
	with this lower bound gives
	\[
	\dfrac{
	\sup_{\mathbf{z}\in K}\|\nabla G(\mathbf{z})\nabla
	G(\mathbf{z})^\top\|_2
	}{
	\|\boldsymbol{\Sigma}_g\|_2
	}
	\le
	(n+2q-2)\omega^2.
	\]
	This proves \eqref{eq:moment-envelope-ratios}.

	We next control the spectral pencil. The random vector
	\(
	\mathbf{U}=\mathbf{C}^\top \mathbf{Z}
	\)
	is uniform on
	\(
	H_\ell
	:=
	\{\mathbf{u}\in\mathbb R^n:h^{(\ell)}(\mathbf{u})\le1\}.
	\)
	Coordinatewise sign symmetry implies that
	\(
	\boldsymbol{\Lambda}=\mathbb E[\mathbf{U}\mathbf{U}^\top]
	\)
	and
	\[
	\mathbf{D}
	=
	\mathbb E[
	\nabla h^{(\ell)}(\mathbf{U})\nabla
	h^{(\ell)}(\mathbf{U})^\top
	]
	\]
	are diagonal. Let \(h:=h^{(\ell)}\). On \(\partial
	H_\ell\),
	\(
	h(\mathbf{u})=1.
	\)
	Euler's identity gives
	\(
	\mathbf{u}^\top\nabla h(\mathbf{u})=q,
	\qquad
	\mathbf{u}\in\partial H_\ell,
	\)
	so \(\nabla h\) doesn't vanish on the boundary. The
	divergence theorem is
	applicable. For every coordinate \(i\),
	\[
	\begin{aligned}
	\int_{H_\ell}\partial_i(u_i h(\mathbf{u}))\,d\mathbf{u}
	=
	\int_{\partial H_\ell}u_i
	h(\mathbf{u})\nu_i(\mathbf{u})\,dS(\mathbf{u}) 
	=
	\int_{\partial
	H_\ell}u_i\nu_i(\mathbf{u})\,dS(\mathbf{u}) 
	=
	|H_\ell|.
	\end{aligned}
	\]
	Expanding the derivative on the left gives
	\[
	\int_{H_\ell}h(\mathbf{u})\,d\mathbf{u}
	+
	\int_{H_\ell}u_i\partial_i h(\mathbf{u})\,d\mathbf{u}
	=
	|H_\ell|.
	\]
	The same radial integration used above gives
	\(
	\dfrac1{|H_\ell|}
	\int_{H_\ell}h(\mathbf{u})\,d\mathbf{u}
	=
	\dfrac{n}{n+q}.
	\)
	It follows that
	\(
	\mathbb E[U_i\partial_i h(\mathbf{U})]
	=
	1-\dfrac{n}{n+q}
	=
	\dfrac{q}{n+q}.
	\)
	By Cauchy--Schwarz,
	\[
	\begin{aligned}
	\theta_i
	=
	\mathbb E[U_i^2]
	\mathbb E[(\partial_i h(\mathbf{U}))^2]
	\ge
	\bigl(\mathbb E[U_i\partial_i h(\mathbf{U})]\bigr)^2 
	=
	\dfrac{q^2}{(n+q)^2}.
	\end{aligned}
	\]
	Consequently,
	\[
	\begin{aligned}
	\rho_\ell
	=
	\dfrac{\|\boldsymbol{\Sigma}_x\|_2\|\boldsymbol{\Sigma}_g\|_2}
	{\max_i\theta_i} 
	\le
	\dfrac{R^2(q^2/\rho^2)}{q^2/(n+q)^2} 
	=
	(n+q)^2\omega^2.
	\end{aligned}
	\]

	Finally, define
	\(
	\mathbf{S}
	:=
	\operatorname{diag}(\|\mathbf{c}_1\|_2,\ldots,\|\mathbf{c}_n\|_2).
	\)
	Then
	\(
	\mathbf{V}=\mathbf{C}\mathbf{S}^{-1}.
	\)
	For every \(i\),
	\(
	\sigma_{\min}(\mathbf{C})
	\le
	\|\mathbf{c}_i\|_2
	\le
	\sigma_{\max}(\mathbf{C}),
	\)
	so
	\[
	\|\mathbf{S}\|_2\le\sigma_{\max}(\mathbf{C}),
	\qquad
	\|\mathbf{S}^{-1}\|_2\le\sigma_{\min}(\mathbf{C})^{-1}.
	\]
	Therefore
	\[
	\begin{aligned}
	\operatorname{cond}(\mathbf{V})
	\le
	\operatorname{cond}(\mathbf{C})\operatorname{cond}(\mathbf{S})
	\le
	\operatorname{cond}(\mathbf{C})^2.
	\end{aligned}
	\]
	Lemma~\ref{lem:reduced-weight-conditioning} gives
	\(
	\operatorname{cond}(\mathbf{C})
	\le
	3\kappa_0,
	\)
	and hence
	\(
	\operatorname{cond}(\mathbf{V})
	\le
	9\kappa_0^2.
	\)
	Together with
	\[
	\omega\le C_{k,L}\kappa^4\sqrt r,
	\qquad
	n\le r,
	\qquad
	q=O_{k,L}(1),
	\]
	the pencil bound becomes
	\(
	\rho_\ell
	\le
	C_{k,L}\kappa^8r^3.
	\)
	This proves \eqref{eq:explicit-pencil-conditioning} and
	its stated
	simplification.
\end{proof}

\subsection{Stable Gradient and Empirical Moment Estimation}

We finally convert multiplicative suffix-oracle accuracy and approximate sampling into accurate empirical estimates of the two moment matrices used by ASPIRE.

The next lemma shows that polynomial interpolation converts a multiplicative function-value error into a controlled additive error in the recovered gradient.

	\begin{lemma}[Stable gradient recovery from a multiplicative oracle]
		\label{lem:stable-gradient-from-suffix}
		Suppose an available oracle satisfies
		\[
		|\widehat G_\ell(\mathbf{z})-G_\ell(\mathbf{z})|
		\le
		\xi_\ell G_\ell(\mathbf{z}),
		\qquad
		\mathbf{z}\in\mathbb R^{r_\ell}.
		\]
		Then, for every \(\mathbf{z}\in K_\ell^\sharp\), one can
		construct
		\(\widehat{\mathbf{g}}_\ell(\mathbf{z})\) using
		\(O(q_\ell r_\ell)\) evaluations of \(\widehat G_\ell\)
		such that
		\[
		\|\widehat{\mathbf{g}}_\ell(\mathbf{z})-\nabla
		G_\ell(\mathbf{z})\|_2
		\le
		C_{q_\ell}
		\dfrac{\sqrt{r_\ell}}{r_{\mathrm{in},\ell}}
		\xi_\ell.
		\]
	\end{lemma}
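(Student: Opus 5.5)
Since \(G_\ell\) is a homogeneous polynomial of degree \(q:=q_\ell\), each partial derivative is a directional derivative along a coordinate axis, and I would recover it by fixed-node interpolation as in Subsection~\ref{subsec:trick}. Fix \(\mathbf z\in K_\ell^\sharp\) and a coordinate \(i\). Let \(h:=r_{\mathrm{in},\ell}/2\), fix \(q+1\) distinct nodes \(t_0,\dots,t_q\in[-1,1]\) (say Chebyshev or equispaced, depending only on \(q\)), and set \(\psi_i(t):=G_\ell(\mathbf z+h t\,\mathbf e_i)\). This is a univariate polynomial of degree at most \(q\), so there are fixed weights \(\beta_0,\dots,\beta_q\), depending only on the nodes, with \(\psi_i'(0)=\sum_j\beta_j\psi_i(t_j)\) exactly. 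I would then define
\[
(\widehat{\mathbf g}_\ell(\mathbf z))_i:=h^{-1}\sum_{j}\beta_j\widehat G_\ell(\mathbf z+ht_j\mathbf e_i).
\]
This uses \((q+1)r_\ell=O(q_\ell r_\ell)\) evaluations, and since \(\partial_iG_\ell(\mathbf z)=h^{-1}\psi_i'(0)\), the output is exact when \(\xi_\ell=0\).

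For the error bound, I would subtract the exact identity to get
\[
|(\widehat{\mathbf g}_\ell)_i-\partial_iG_\ell(\mathbf z)|\le h^{-1}\sum_j|\beta_j|\,\xi_\ell\,G_\ell(\mathbf z+ht_j\mathbf e_i).
\]
It then suffices to bound \(G_\ell\) uniformly at the query points. Each point satisfies \(\mathbf z+ht_j\mathbf e_i\in K_\ell^\sharp+h B_2\subseteq K_\ell^\sharp+\tfrac12 K_\ell^\sharp\), using \(r_{\mathrm{in},\ell}B_2\subseteq K_\ell^\sharp\) from Lemma~\ref{lem:sampling-body-radii}. By convexity of \(K_\ell^\sharp\), this set equals \(\tfrac32K_\ell^\sharp\), and homogeneity then gives \(G_\ell\le(3/2)^q\) there. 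Hence each coordinate error is at most \(C'_q\,\xi_\ell/r_{\mathrm{in},\ell}\) with \(C'_q:=2(3/2)^q\sum_j|\beta_j|\). Summing the squares over the \(r_\ell\) coordinates gives the stated bound \(C_{q_\ell}\sqrt{r_\ell}\,\xi_\ell/r_{\mathrm{in},\ell}\).

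The main point to get right is the choice of step size \(h\propto r_{\mathrm{in},\ell}\). It must be small enough that the interpolation points stay in a fixed dilate of \(K_\ell^\sharp\), so that the multiplicative error converts to an additive one of order \(\xi_\ell\). It must also be large enough that the division by \(h\) costs only \(1/r_{\mathrm{in},\ell}\). The inclusion \(K+\tfrac12K=\tfrac32K\) for convex \(K\) is what makes both requirements hold at once. Everything else, namely the node weights and their dependence only on \(q\), is routine.
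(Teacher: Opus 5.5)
Your proposal is correct and matches the paper's proof essentially step for step: the same step size $r_{\mathrm{in},\ell}/2$, the same fixed Lagrange differentiation weights on $q_\ell+1$ nodes in $[-1,1]$, and the same $(3/2)^{q_\ell}$ bound on $G_\ell$ at the query points, followed by summing over the $r_\ell$ coordinates. The only cosmetic difference is that you justify the bound through $K_\ell^\sharp+\tfrac12K_\ell^\sharp=\tfrac32K_\ell^\sharp$ for convex $K_\ell^\sharp$, whereas the paper applies the triangle inequality for the gauge $p_\ell=G_\ell^{1/q_\ell}$; these are the same fact.
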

	
	\begin{proof}
		Write
		\(
		n:=r_\ell,
		q:=q_\ell,
\rho:=r_{\mathrm{in},\ell}.
		\)
		Fix distinct nodes
		\(
		t_0,\ldots,t_q\in[-1,1],
		\)
		and let \(L_0,\ldots,L_q\) be the associated Lagrange
		basis
		polynomials. Set
		\(
		\beta_a:=L_a'(0).
		\)
		For \(1\le j\le n\), define
		\(
		P_j(t)
		:=
		G_\ell\!\left(\mathbf{z}+\dfrac{\rho}{2}t\mathbf{e}_j\right).
		\)
		Since \(G_\ell\) has degree \(q\),
		\(
		P_j'(0)
		=
		\sum_{a=0}^q
		\beta_a P_j(t_a).
		\)
		Define
		\[
		\widehat g_{\ell,j}(\mathbf{z})
		:=
		\dfrac2\rho
		\sum_{a=0}^q
		\beta_a
		\widehat G_\ell
		\!\left(\mathbf{z}+\dfrac{\rho}{2}t_a\mathbf{e}_j\right),\qquad p_\ell:=G_\ell^{1/q}
		\]
		 Since
		\(
		p_\ell(\mathbf{z})\le1,
p_\ell(\rho\mathbf{e}_j)\le1,
		\)
		we have
		\(
		p_\ell\!\left(
		\mathbf{z}+\dfrac{\rho}{2}t_a\mathbf{e}_j
		\right)
		\le
		\dfrac32.
		\)
		Hence
		\(
		G_\ell\left(
		\mathbf{z}+\dfrac{\rho}{2}t_a\mathbf{e}_j
		\right)
		\le
		\left(\dfrac32\right)^q.
		\)
		Therefore
		$$
		\left|
		\widehat g_{\ell,j}(\mathbf{z})
		-\partial_jG_\ell(\mathbf{z})
		\right|
		\le
		C_q\dfrac{\xi_\ell}{\rho}.
		$$
		Summing over the coordinates proves $\|\widehat{\mathbf{g}}_\ell(\mathbf{z})-\nabla
		G_\ell(\mathbf{z})\|_2
		\le
		C_q
		\dfrac{\sqrt n}{\rho}\xi_\ell.
		$
	\end{proof}

Combining gradient stability, total-variation control of the sampling distribution, and matrix concentration, the final lemma gives simultaneous relative-error guarantees for the two empirical moments.

	\begin{lemma}[Empirical recovery of the two moments]
		\label{lem:empirical-two-moments}
		Fix \(1\le\ell\le L-2\), write
		\(		n:=r_\ell,
		q:=q_\ell,
		\omega:=\omega_\ell,
		\)
		and let
		\(\pi_\ell:=\operatorname{Unif}(K_\ell^\sharp)\).
		Suppose the
		endpoint distribution \(\widehat\pi_\ell\) of each
		independent noisy
		Hit-and-Run chain satisfies
		\[
		d_{\mathrm{TV}}(\widehat\pi_\ell,\pi_\ell)\le\nu,
		\]
		and suppose that
		\[
		|\widehat G_\ell(\mathbf{z})-G_\ell(\mathbf{z})|
		\le
		\xi_\ell G_\ell(\mathbf{z}),
		\qquad \mathbf{z}\in\mathbb R^n.
		\]
		Let \(\eta,\zeta\in(0,1/2)\). There are constants
		\(c_q,C_q>0\), depending only on \(q\), such that if
		\begin{equation}
		\nu
		\le
		\dfrac{c_q\eta}{(n+2q)\omega^2},
		\qquad
		\xi_\ell
		\le
		\dfrac{c_q\eta}{\sqrt n\,(n+2q)\omega^2},
		\label{eq:empirical-moment-noise-conditions}
		\end{equation}
		and
		\begin{equation}
		m
		\ge
		C_q(n+2q)\omega^2\eta^{-2}
		\log\dfrac{2n}{\zeta},
		\label{eq:empirical-moment-sample-size}
		\end{equation}
		then \(m\) independent endpoints are sufficient to
		construct
		\[
		\widehat{\boldsymbol{\Sigma}}_{x,\ell}
		=
		\dfrac1m\sum_{i=1}^m \mathbf{Z}_i\mathbf{Z}_i^\top,
		\qquad
		\widehat{\boldsymbol{\Sigma}}_{g,\ell}
		=
		\dfrac1m\sum_{i=1}^m
		\widehat{\mathbf{g}}_\ell(\mathbf{Z}_i)\widehat{\mathbf{g}}_\ell(\mathbf{Z}_i)^\top
		\]
		such that, with probability at least \(1-\zeta\),
		\[
		\|\widehat{\boldsymbol{\Sigma}}_{x,\ell}-\boldsymbol{\Sigma}_{x,\ell}\|_2
		\le
		\eta\|\boldsymbol{\Sigma}_{x,\ell}\|_2,\qquad 
		\|\widehat{\boldsymbol{\Sigma}}_{g,\ell}-\boldsymbol{\Sigma}_{g,\ell}\|_2
		\le
		\eta\|\boldsymbol{\Sigma}_{g,\ell}\|_2.
		\]
		In particular, under the standing assumptions it is
		sufficient that
		\[
		\nu\le c_{k,L}\kappa^{-8}\eta r^{-2},
		\qquad
		\xi_\ell\le c_{k,L}\kappa^{-8}\eta r^{-5/2},\qquad 
		m\ge C_{k,L}\kappa^8r^2\eta^{-2}
		\log\dfrac{2r}{\zeta}.
		\]
	\end{lemma}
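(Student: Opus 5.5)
We split each error exactly as in the decomposition of Section~\ref{subsec:proof-sketch-finite-sample} into an empirical term, a distribution term, and (for the gradient moment) a gradient-estimation term. Each term is bounded by $\eta/3$ times the relevant population norm, using the envelope ratios \eqref{eq:moment-envelope-ratios} of Lemma~\ref{lem:population-moment-scales} throughout. Write $\mathbf g=\nabla G_\ell$ and $\widehat{\mathbf g}=\widehat{\mathbf g}_\ell$. The noisy Hit-and-Run endpoints lie in $K:=K_\ell^\sharp$ almost surely, as shown in the proof of Lemma~\ref{lem:noisy-hit-and-run}. Hence every pointwise bound below may be applied on $K$.

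\emph{Distribution terms.} Since $\widehat\pi_\ell$ and $\pi_\ell$ are both supported on $K$, for any bounded matrix function $\Phi$ we have $\|(\mathbb E_{\widehat\pi}-\mathbb E_\pi)\Phi\|_2\le 2\nu\sup_K\|\Phi\|_2$. With $\Phi=\mathbf z\mathbf z^\top$ this is at most $2\nu(n+2)\omega^2\|\boldsymbol\Sigma_x\|_2$. With $\Phi=\mathbf g\mathbf g^\top$ it is at most $2\nu(n+2q-2)\omega^2\|\boldsymbol\Sigma_g\|_2$. The first condition in \eqref{eq:empirical-moment-noise-conditions}, with $c_q$ small, makes both at most $\frac\eta3$ times the respective norm.

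\emph{Gradient-estimation term.} By Lemma~\ref{lem:stable-gradient-from-suffix}, $\|\widehat{\mathbf g}-\mathbf g\|_2\le e:=C_q\sqrt n\,\xi_\ell/\rho$ on $K$. We also have $\|\mathbf g\|_2\le q/\rho$ on $K$, from the proof of Lemma~\ref{lem:population-moment-scales}. Expanding $\widehat{\mathbf g}\widehat{\mathbf g}^\top-\mathbf g\mathbf g^\top$ gives the bound $\|\widehat{\mathbf g}\widehat{\mathbf g}^\top-\mathbf g\mathbf g^\top\|_2\le 2e\|\mathbf g\|_2+e^2\le C_q'\sqrt n\,\xi_\ell/\rho^2$, using $e\le q/\rho$. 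Dividing by the lower bound $\|\boldsymbol\Sigma_g\|_2\ge q^2/(R^2(n+2q-2))$ yields a relative error of order $\sqrt n\,(n+2q)\omega^2\xi_\ell$. This is at most $\eta/3$ under the second condition in \eqref{eq:empirical-moment-noise-conditions}.

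\emph{Empirical terms.} The endpoints are independent and identically distributed from $\widehat\pi_\ell$. The summands $\mathbf Z_i\mathbf Z_i^\top$ and $\widehat{\mathbf g}(\mathbf Z_i)\widehat{\mathbf g}(\mathbf Z_i)^\top$ are positive semidefinite and almost surely bounded in operator norm by $\beta_x:=(n+2)\omega^2\|\boldsymbol\Sigma_x\|_2$ and $\beta_g:=C(n+2q)\omega^2\|\boldsymbol\Sigma_g\|_2$, respectively. For $\beta_g$ we use $\|\widehat{\mathbf g}\|_2\le 2q/\rho$.

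The means under $\widehat\pi$ have norm within a factor $1\pm\frac23\eta$ of the population norms, by the two previous steps. Matrix Bernstein, in the form for bounded positive semidefinite summands with variance proxy $\le\beta\|\mathbb E\Phi\|_2$, then gives deviation at most $\frac\eta3\|\boldsymbol\Sigma_a\|_2$ with probability $1-\zeta/2$ each. This holds once $m\gtrsim(\beta_a/\|\boldsymbol\Sigma_a\|_2)\eta^{-2}\log(2n/\zeta)$, which is \eqref{eq:empirical-moment-sample-size}. A union bound and the triangle inequality finish the main claim.

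For the simplified form, we substitute $\omega\le C_{k,L}\kappa^4\sqrt r$ from Lemma~\ref{lem:sampling-body-radii}, together with $n\le r$ and $q=O_{k,L}(1)$. This makes $(n+2q)\omega^2\lesssim\kappa^8r^2$ and $\sqrt n(n+2q)\omega^2\lesssim\kappa^8r^{5/2}$.

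The main obstacle is the gradient-estimation step. We must make sure the gradient bound is used only at points of $K$, where Lemma~\ref{lem:stable-gradient-from-suffix} applies. We must also handle the fact that the empirical average is centered at the $\widehat\pi$-mean of $\widehat{\mathbf g}\widehat{\mathbf g}^\top$, not at $\boldsymbol\Sigma_g$. The bias terms are therefore bounded first, and Bernstein is applied around the biased mean with norms compared via the factor $1\pm\frac23\eta$.
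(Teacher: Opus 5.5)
Your proposal is correct and follows essentially the same route as the paper's proof. You bound the total-variation bias and the interpolation bias through the envelope ratios of Lemma~\ref{lem:population-moment-scales}, using $\|\widehat{\mathbf g}\|_2\le 2q/\rho$ on $K_\ell^\sharp$. You then apply matrix Bernstein for bounded positive semidefinite summands centered at the biased $\widehat\pi_\ell$-means, and finish with a union bound. The only cosmetic difference is the error budget: you allot $\eta/3$ to each term, whereas the paper allots $\eta/2$ to the combined bias and $\eta/2$ to the fluctuation.
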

	
	\begin{proof}
	Write
	\(
	\rho:=r_{\mathrm{in},\ell},
	R:=R_{\mathrm{out},\ell},
	\mathbf{g}(\mathbf{z}):=\nabla G_\ell(\mathbf{z}),
	\mathbf{e}(\mathbf{z}):=\widehat{\mathbf{g}}_\ell(\mathbf{z})-\mathbf{g}(\mathbf{z}).
	\)
	We first control the bias caused by approximate sampling
	and by the
	multiplicative suffix oracle. For the position moment, every point of \(K_\ell^\sharp\)
	has norm at most
	\(R\), so
	\(
\|\mathbf{Z}\mathbf{Z}^\top\|_2\le R^2.
	\)
	The total-variation bound therefore implies
	\(
	\left\|
	\mathbb E_{\widehat\pi_\ell}[\mathbf{Z}\mathbf{Z}^\top]
	-
	\mathbb E_{\pi_\ell}[\mathbf{Z}\mathbf{Z}^\top]
	\right\|_2
	\le
	2R^2\nu.
	\)
	Using
	\(
	\|\boldsymbol{\Sigma}_{x,\ell}\|_2
	\ge
	\dfrac{\rho^2}{n+2},
	\)
	we obtain
	\[
	\dfrac{
	\left\|
	\mathbb E_{\widehat\pi_\ell}[\mathbf{Z}\mathbf{Z}^\top]
	-
	\boldsymbol{\Sigma}_{x,\ell}
	\right\|_2
	}{
	\|\boldsymbol{\Sigma}_{x,\ell}\|_2
	}
	\le
	2(n+2)\omega^2\nu.
	\]

	For the gradient moment,
	Lemmas~\ref{lem:population-moment-scales} and
	\ref{lem:stable-gradient-from-suffix} give, for every
	\(\mathbf{z}\in K_\ell^\sharp\),
	\(
	\|\mathbf{g}(\mathbf{z})\|_2
	\le
	\dfrac q\rho
	\)
	and
	\(
	\|\mathbf{e}(\mathbf{z})\|_2
	\le
	C_q\dfrac{\sqrt n}{\rho}\xi_\ell.
	\)
	The second inequality in
	\eqref{eq:empirical-moment-noise-conditions} implies,
	after decreasing
	\(c_q\), that
	\(
	\|\mathbf{e}(\mathbf{z})\|_2
	\le
	\dfrac q\rho.
	\)
	Consequently,
	\[
	\|\widehat{\mathbf{g}}_\ell(\mathbf{z})\|_2
	\le
	\|\mathbf{g}(\mathbf{z})\|_2+\|\mathbf{e}(\mathbf{z})\|_2
	\le
	\dfrac{2q}{\rho}.
	\]
	Moreover,
	\[
	\begin{aligned}
\left\|\widehat{\mathbf{g}}_\ell(\mathbf{z})\widehat{\mathbf{g}}_\ell(\mathbf{z})^\top
	-
	\mathbf{g}(\mathbf{z})\mathbf{g}(\mathbf{z})^\top
	\right\|_2 
	\le
	\|\mathbf{g}(\mathbf{z})\mathbf{e}(\mathbf{z})^\top\|_2
	+
	\|\mathbf{e}(\mathbf{z})\mathbf{g}(\mathbf{z})^\top\|_2
	+
	\|\mathbf{e}(\mathbf{z})\mathbf{e}(\mathbf{z})^\top\|_2 
	\le
	C_q\dfrac{\sqrt n}{\rho^2}\xi_\ell.
	\end{aligned}
	\]
	Define the population moment under the noisy endpoint
	distribution by
	\(\overline{\boldsymbol{\Sigma}}_{g,\ell}
	:=
	\mathbb E_{\widehat\pi_\ell}
	\left[
	\widehat{\mathbf{g}}_\ell(\mathbf{Z})\widehat{\mathbf{g}}_\ell(\mathbf{Z})^\top
	\right].
	\)
	Insert and subtract
	\(\mathbb
	E_{\widehat\pi_\ell}[\mathbf{g}(\mathbf{Z})\mathbf{g}(\mathbf{Z})^\top]\).
	The gradient-oracle
	contribution is bounded by
	\[
	\left\|
	\mathbb E_{\widehat\pi_\ell}
	[
	\widehat{\mathbf{g}}_\ell\widehat{\mathbf{g}}_\ell^\top-\mathbf{g}\mathbf{g}^\top
	]
	\right\|_2
	\le
	C_q\dfrac{\sqrt n}{\rho^2}\xi_\ell.
	\]
	The total-variation contribution is bounded by
	\[
	\left\|
	\mathbb E_{\widehat\pi_\ell}[\mathbf{g}\mathbf{g}^\top]
	-
	\mathbb E_{\pi_\ell}[\mathbf{g}\mathbf{g}^\top]
	\right\|_2
	\le
	2\dfrac{q^2}{\rho^2}\nu.
	\]
	Since
	\(
	\|\boldsymbol{\Sigma}_{g,\ell}\|_2
	\ge
	\dfrac{q^2}{R^2(n+2q-2)},
	\)
	we conclude that
	\[
	\dfrac{
	\|\overline{\boldsymbol{\Sigma}}_{g,\ell}-\boldsymbol{\Sigma}_{g,\ell}\|_2
	}{
	\|\boldsymbol{\Sigma}_{g,\ell}\|_2
	}
	\le
	C_q(n+2q)\omega^2
	\left(
	\nu+\sqrt n\,\xi_\ell
	\right).
	\]
	Together with the position-moment bound, the two
	conditions in
	\eqref{eq:empirical-moment-noise-conditions} imply
	\[
	\left\|
	\mathbb E_{\widehat\pi_\ell}[\mathbf{Z}\mathbf{Z}^\top]
	-
	\boldsymbol{\Sigma}_{x,\ell}
	\right\|_2
	\le
	\dfrac\eta2\|\boldsymbol{\Sigma}_{x,\ell}\|_2,\qquad \|\overline{\boldsymbol{\Sigma}}_{g,\ell}-\boldsymbol{\Sigma}_{g,\ell}\|_2
	\le\dfrac\eta2\|\boldsymbol{\Sigma}_{g,\ell}\|_2.
	\]

	It remains to control finite-sample fluctuations. Set
	\(\overline{\boldsymbol{\Sigma}}_{x,\ell}
	:=
	\mathbb E_{\widehat\pi_\ell}[\mathbf{Z}\mathbf{Z}^\top].
	\)
	The population-bias estimates imply
	\[
	\|\overline{\boldsymbol{\Sigma}}_{x,\ell}\|_2
	\le
	\left(1+\dfrac\eta2\right)
	\|\boldsymbol{\Sigma}_{x,\ell}\|_2
	\le\dfrac32\|\boldsymbol{\Sigma}_{x,\ell}\|_2,\qquad \|\overline{\boldsymbol{\Sigma}}_{g,\ell}\|_2
	\le
	\dfrac32\|\boldsymbol{\Sigma}_{g,\ell}\|_2.
	\]

	We first consider the position moment. Let
	\(\mathbf{X}_i:=\mathbf{Z}_i\mathbf{Z}_i^\top,
	\mathbf{S}_i:=\mathbf{X}_i-\overline{\boldsymbol{\Sigma}}_{x,\ell}.
	\)
	Because each noisy Hit-and-Run endpoint lies in
	\(K_\ell^\sharp\subseteq RB_2^n\),
	\(
	\mathbf{0}\preceq \mathbf{X}_i\preceq R^2\mathbf{I}.
	\)
	Furthermore,
	\(
	\mathbf{X}_i^2
	=
	\|\mathbf{Z}_i\|_2^2\mathbf{X}_i
	\preceq
	R^2\mathbf{X}_i.
	\)
	We have
	\(
	\mathbb E \mathbf{S}_i=\mathbf{0}
	\)
	and
	\(
	\|\mathbf{S}_i\|_2\le R^2.
	\)
	The matrix variance satisfies
	\[
	\begin{aligned}
	\mathbb E[\mathbf{S}_i^2]
	=
	\mathbb E[\mathbf{X}_i^2]
	-\overline{\boldsymbol{\Sigma}}_{x,\ell}^{\,2} 
	\preceq
	\mathbb E[\mathbf{X}_i^2] 
	\preceq
	R^2\overline{\boldsymbol{\Sigma}}_{x,\ell}.
	\end{aligned}
	\]
	Hence
	\[
	\left\|
	\sum_{i=1}^m\mathbb E[\mathbf{S}_i^2]
	\right\|_2
	\le
	mR^2\|\overline{\boldsymbol{\Sigma}}_{x,\ell}\|_2.
	\]
	The self-adjoint matrix Bernstein inequality gives, for
	every \(t>0\),
	\[
	\begin{aligned}
	\mathbb P\left(
	\left\|
	\dfrac1m\sum_{i=1}^m\mathbf{X}_i
	-
	\overline{\boldsymbol{\Sigma}}_{x,\ell}
	\right\|_2
	\ge t
	\right) 
	\le
	2n\exp\left(
	-
	\dfrac{
	mt^2
	}{
	2R^2\|\overline{\boldsymbol{\Sigma}}_{x,\ell}\|_2
	+\dfrac23R^2t
	}
	\right).
	\end{aligned}
	\]
	Take
	\(
	t
	:=
	\dfrac\eta2\|\boldsymbol{\Sigma}_{x,\ell}\|_2.
	\)
	Using \(\eta<1/2\) and the bound on
	\(\|\overline{\boldsymbol{\Sigma}}_{x,\ell}\|_2\), we
	obtain
	\[
	\begin{aligned}
	\mathbb P\left(
	\|\widehat{\boldsymbol{\Sigma}}_{x,\ell}-\overline{\boldsymbol{\Sigma}}_{x,\ell}\|_2
	\ge
	\dfrac\eta2\|\boldsymbol{\Sigma}_{x,\ell}\|_2
	\right) 
	\le
	2n\exp\left(
	-cm\eta^2
	\dfrac{\|\boldsymbol{\Sigma}_{x,\ell}\|_2}{R^2}
	\right),
	\end{aligned}
	\]
	where \(c>0\) is an absolute constant. Since
	\(
	\dfrac{R^2}{\|\boldsymbol{\Sigma}_{x,\ell}\|_2}
	\le
	(n+2)\omega^2,
	\)
	it follows that
	\(
	m
	\ge
	C_q(n+2q)\omega^2\eta^{-2}
	\log\dfrac{2n}{\zeta}
	\)
	makes the position fluctuation at most
	\(\dfrac\eta2\|\boldsymbol{\Sigma}_{x,\ell}\|_2\) with
	failure probability at most
	\(\zeta/2\), after changing \(C_q\) by an absolute
	factor.

	We next consider the gradient moment. Let
	\(
	\mathbf{Y}_i:=\widehat{\mathbf{g}}_\ell(\mathbf{Z}_i)\widehat{\mathbf{g}}_\ell(\mathbf{Z}_i)^\top,
	\mathbf{T}_i
	:=
	\mathbf{Y}_i-\overline{\boldsymbol{\Sigma}}_{g,\ell}.
	\)
	The uniform gradient bound gives
	\(
	\mathbf{0}\preceq
	\mathbf{Y}_i\preceq\dfrac{4q^2}{\rho^2}\mathbf{I}
	\)
	and
	\(
	\mathbf{Y}_i^2
	=\|\widehat{\mathbf{g}}_\ell(\mathbf{Z}_i)\|_2^2\mathbf{Y}_i
	\preceq
	\dfrac{4q^2}{\rho^2}\mathbf{Y}_i.
	\)
	Thus
	\(
	\mathbb E \mathbf{T}_i=\mathbf{0},
\|\mathbf{T}_i\|_2\le\dfrac{4q^2}{\rho^2},
	\)
	and
	\(
	\mathbb E[\mathbf{T}_i^2]
	=
	\mathbb E[\mathbf{Y}_i^2]
	-
	\overline{\boldsymbol{\Sigma}}_{g,\ell}^{\,2} 
	\preceq
	\mathbb E[\mathbf{Y}_i^2] 
	\preceq
	\dfrac{4q^2}{\rho^2}\overline{\boldsymbol{\Sigma}}_{g,\ell}.
	\)
	Consequently,
	\[
	\left\|
	\sum_{i=1}^m\mathbb E[\mathbf{T}_i^2]
	\right\|_2
	\le
	\dfrac{4q^2m}{\rho^2}
	\|\overline{\boldsymbol{\Sigma}}_{g,\ell}\|_2.
	\]
	Matrix Bernstein now yields
	\[
	\begin{aligned}
	\mathbb P\left(
	\|\widehat{\boldsymbol{\Sigma}}_{g,\ell}-\overline{\boldsymbol{\Sigma}}_{g,\ell}\|_2
	\ge t
	\right) 
	\le
	2n\exp\left(
	-
	\dfrac{
	mt^2
	}{
	C_q\rho^{-2}
	(\|\overline{\boldsymbol{\Sigma}}_{g,\ell}\|_2+t)
	}
	\right).
	\end{aligned}
	\]
	Taking
	\(
	t
	:=
	\dfrac\eta2\|\boldsymbol{\Sigma}_{g,\ell}\|_2
	\)
	gives
	\[
	\begin{aligned}
	\mathbb P\left(
	\|\widehat{\boldsymbol{\Sigma}}_{g,\ell}-\overline{\boldsymbol{\Sigma}}_{g,\ell}\|_2
	\ge
	\dfrac\eta2\|\boldsymbol{\Sigma}_{g,\ell}\|_2
	\right) 
	\le
	2n\exp\left(
	-c_qm\eta^2\rho^2\|\boldsymbol{\Sigma}_{g,\ell}\|_2
	\right).
	\end{aligned}
	\]
	By Lemma~\ref{lem:population-moment-scales},
	\(
	\|\boldsymbol{\Sigma}_{g,\ell}\|_2
	\ge
	\dfrac{q^2}{R^2(n+2q-2)}.
	\)
	Therefore
	\[
	\dfrac1{\rho^2\|\boldsymbol{\Sigma}_{g,\ell}\|_2}
	\le
	\dfrac{R^2(n+2q-2)}{\rho^2q^2}
	\le
	C_q(n+2q)\omega^2.
	\]
	The same sample-size condition thus makes the gradient
	fluctuation at most
	\(\dfrac\eta2\|\boldsymbol{\Sigma}_{g,\ell}\|_2\) with
	failure probability at most
	\(\zeta/2\).

	Combining the two concentration events with the two
	population-bias bounds
	and applying a union bound gives, with probability at
	least \(1-\zeta\),
	\[
	\|\widehat{\boldsymbol{\Sigma}}_{x,\ell}-\boldsymbol{\Sigma}_{x,\ell}\|_2
	\le
	\eta\|\boldsymbol{\Sigma}_{x,\ell}\|_2,\qquad \|\widehat{\boldsymbol{\Sigma}}_{g,\ell}-\boldsymbol{\Sigma}_{g,\ell}\|_2
	\le
	\eta\|\boldsymbol{\Sigma}_{g,\ell}\|_2.
	\]

	Finally, Lemma~\ref{lem:sampling-body-radii} gives
	\(
	\omega^2
	\le
	C_{k,L}\kappa^8r.
	\)
	Since \(n\le r\) and \(q=O_{k,L}(1)\),
	\(
	(n+2q)\omega^2
	\le
	C_{k,L}\kappa^8r^2
	\)
	and
	\(
	\sqrt n\,(n+2q)\omega^2
	\le
	C_{k,L}\kappa^8r^{5/2}.
	\)
	Substituting these two inequalities into
	\eqref{eq:empirical-moment-noise-conditions} and
	\eqref{eq:empirical-moment-sample-size} gives the three
	sufficient
	conditions stated at the end of the lemma.
\end{proof}

\section{Recovery of the First Hidden Layer}
This section establishes recovery of the first hidden layer, from identifying its column space to recovering the individual weight directions.
	\label{sec:appendix-first-layer}
	\begin{proposition}[Exact recovery of the first-layer column space]
		\label{lem:first-layer-column-space}
		Suppose that \(\mathbf{W}_1\in\mathbb R^{d\times r}\)
		has full column rank and
		\[
		f^\star(\mathbf{x})=h^{(1)}(\mathbf{W}_1^\top
		\mathbf{x}),
		\]
		where \(h^{(1)}:\mathbb R^{r}\to\mathbb R\) is a
		positive definite
		homogeneous polynomial. Assume exact function-value
		queries to \(f^\star\),
		so that \(\nabla f^\star(\mathbf{x})\) can be recovered
		exactly by polynomial
		interpolation.
		
		Let \(\mathbf{x}_1,\ldots,\mathbf{x}_m\), \(m\ge r\), be
		independently sampled from an
		absolutely continuous distribution whose support
		contains a nonempty open set.
		Define
		\[
		\mathbf{g}_i:=\nabla f^\star(\mathbf{x}_i),
		\qquad
		\boldsymbol{
		G}_1:=[\mathbf{g}_1,\ldots,\mathbf{g}_m].
		\]
		Then, with probability one,
		\(
		\operatorname{col}(\boldsymbol{
		G}_1)=\operatorname{col}(\mathbf{W}_1).
		\)
		Hence \(\operatorname{col}(\mathbf{W}_1)\) is recovered
		exactly from the top \(r\)
		left singular vectors of \(\boldsymbol{ G}_1\).
	\end{proposition}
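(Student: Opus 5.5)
The plan has two parts: an easy inclusion and a generic-rank argument. By the chain rule, $\nabla f^\star(\mathbf x)=\mathbf W_1\nabla h^{(1)}(\mathbf W_1^\top\mathbf x)$, so every $\mathbf g_i$ lies in $\operatorname{col}(\mathbf W_1)$ and hence $\operatorname{col}(\boldsymbol G_1)\subseteq\operatorname{col}(\mathbf W_1)$. Equality therefore holds as soon as $\operatorname{rank}(\boldsymbol G_1)=r$. Because $\mathbf W_1$ has full column rank, this is equivalent to the $r\times m$ matrix $[\nabla h^{(1)}(\mathbf u_1),\ldots,\nabla h^{(1)}(\mathbf u_m)]$ having rank $r$, where $\mathbf u_i:=\mathbf W_1^\top\mathbf x_i$.

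To show the rank is $r$ almost surely, it suffices to look at the first $r$ samples. Define
\[
P(\mathbf x_1,\ldots,\mathbf x_r):=\det\bigl[\mathbf W_1^\top\nabla f^\star(\mathbf x_1),\ldots,\mathbf W_1^\top\nabla f^\star(\mathbf x_r)\bigr],
\]
which is a polynomial in $(\mathbf x_1,\ldots,\mathbf x_r)\in\mathbb R^{dr}$. Since $\mathbf W_1^\top\mathbf W_1$ is invertible, $P\neq0$ implies $\operatorname{rank}(\boldsymbol G_1)=r$. The zero set of a nonzero polynomial has Lebesgue measure zero, and the joint law of independent absolutely continuous samples is absolutely continuous on $\mathbb R^{dr}$. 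So it remains only to show that $P$ is not identically zero. The open-set support hypothesis is not needed for this step; only absolute continuity is used.

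The main step is nonvanishing of $P$, i.e.\ that the vectors $\nabla h^{(1)}(\mathbf u)$ for $\mathbf u\in\mathbb R^r$ span $\mathbb R^r$. Because $\mathbf W_1^\top$ is surjective onto $\mathbb R^r$, every $\mathbf u$ arises as some $\mathbf W_1^\top\mathbf x$, so this spanning is exactly what is needed. Suppose instead that some $\mathbf c\neq\mathbf0$ satisfies $\mathbf c^\top\nabla h^{(1)}(\mathbf u)=0$ for all $\mathbf u$. Then $h^{(1)}$ is constant along the direction $\mathbf c$, giving $h^{(1)}(t\mathbf c)=h^{(1)}(\mathbf 0)=0$ for all $t$. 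This contradicts positive definiteness. Hence no such $\mathbf c$ exists and the gradients span $\mathbb R^r$.

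From spanning, one can pick $\mathbf u_1^0,\ldots,\mathbf u_r^0$ whose gradients are linearly independent. Lifting them to preimages $\mathbf x_i^0$ under $\mathbf W_1^\top$ gives $P(\mathbf x^0)\neq0$, so $P\not\equiv0$. The final claim follows because the top $r$ left singular vectors of $\boldsymbol G_1$, which has rank exactly $r$, form an orthonormal basis of $\operatorname{col}(\boldsymbol G_1)=\operatorname{col}(\mathbf W_1)$. Gradients are exact by the interpolation in Subsection~\ref{subsec:trick}. I expect no real obstacle; the only care needed is to phrase the spanning argument through positive definiteness rather than assuming any specific structure of $h^{(1)}$.
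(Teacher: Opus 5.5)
Your proposal is correct and follows the paper's proof essentially step for step: the chain-rule inclusion, spanning of $\nabla h^{(1)}$ via positive definiteness, a nonzero determinant polynomial in $(\mathbf x_1,\ldots,\mathbf x_r)$ obtained by lifting through the surjective $\mathbf W_1^\top$, and the measure-zero zero set. Your determinant $P$ differs from the paper's $\Psi$ only by the nonzero constant factor $\det(\mathbf W_1^\top\mathbf W_1)$, and your remark that only absolute continuity (not the open-set support) is actually used is accurate.
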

	
	\begin{proof}
		By the chain rule,
		\(
		\mathbf{g}_i
		=
		\mathbf{W}_1\nabla h^{(1)}(\mathbf{W}_1^\top
		\mathbf{x}_i),
		\)
		and therefore
		\(
		\operatorname{col}(\boldsymbol{G}_1)\subseteq\operatorname{col}(\mathbf{W}_1).
		\)
		Since \(h^{(1)}\) is positive definite and homogeneous,
		\[
		\operatorname{span}
		\left\{
		\nabla h^{(1)}(\mathbf{u}):\mathbf{u}\in\mathbb R^{r}
		\right\}
		=
		\mathbb R^{r}.
		\]
		Indeed, otherwise there would exist
		\(\mathbf{c}\neq\mathbf{0}\) such that
		\(\mathbf{c}^\top\nabla h^{(1)}(\mathbf{u})\equiv0\).
		The function
		\(t\mapsto h^{(1)}(\mathbf{u}+t\mathbf{c})\) would then
		have zero derivative for every
		\(\mathbf{u}\) and \(t\), and hence would be constant.
		Taking \(\mathbf{u}=\mathbf{0}\) would give
		\(h^{(1)}(t\mathbf{c})=h^{(1)}(\mathbf{0})=0\) for every
		\(t\), contradicting positive
		definiteness.

		It follows that there exist
		\(\mathbf{u}_1,\ldots,\mathbf{u}_{r}\in\mathbb R^{r}\)
		for which
		\[
		\det\!\left[
		\nabla h^{(1)}(\mathbf{u}_1),\ldots,
		\nabla h^{(1)}(\mathbf{u}_{r})
		\right]
		\neq0.
		\]
		Define the polynomial on \((\mathbb R^d)^{r}\) by
		\[
		\Psi(\mathbf{x}_1,\ldots,\mathbf{x}_{r})
		:=
		\det\!\left[
		\nabla h^{(1)}(\mathbf{W}_1^\top \mathbf{x}_1),\ldots,
		\nabla h^{(1)}(\mathbf{W}_1^\top \mathbf{x}_{r})
		\right].
		\]
		Because \(\mathbf{W}_1^\top\) is surjective, for each
		\(\mathbf{u}_i\) one can choose
		\(\mathbf{x}_i\) satisfying \(\mathbf{W}_1^\top
		\mathbf{x}_i=\mathbf{u}_i\). Hence \(\Psi\) is not the
		zero
		polynomial. The zero set of a nonzero real polynomial
		has Lebesgue measure
		zero, while the joint law of
		\((\mathbf{x}_1,\ldots,\mathbf{x}_{r})\) is absolutely
		continuous. Therefore
		\(
		\mathbb
		P\bigl(\Psi(\mathbf{x}_1,\ldots,\mathbf{x}_{r})=0\bigr)=0.
		\)
		Thus the first \(r\) latent gradients are linearly
		independent almost
		surely. Since \(m\ge r\),
		\[
		\operatorname{rank}
		\left[
		\nabla h^{(1)}(\mathbf{W}_1^\top \mathbf{x}_1),\ldots,
		\nabla h^{(1)}(\mathbf{W}_1^\top \mathbf{x}_m)
		\right]
		=r
		\]
		almost surely, which yields
		\(
		\operatorname{col}(\boldsymbol{
		G}_1)=\operatorname{col}(\mathbf{W}_1).
		\)
	\end{proof}
	
	Thus, an orthonormal basis $\mathbf{B}_1$ of
	$\operatorname{col}(\mathbf{W}_1)$ can be recovered
	exactly.
	
	\begin{proposition}[Stable recovery of the first hidden layer]
	\label{prop:first-layer-recovery}
	Let \(d\ge2\) and \(\varepsilon,\zeta\in(0,1/2)\).
	Suppose that the
	standing assumptions hold. Then
	\begin{equation}
	\widetilde O_{k,L}\!\left(
	dr+
	\dfrac{\gamma^2r^{13}\kappa^{O(1)}}{\varepsilon^2}
	\right)
	\label{eq:first-layer-simplified-query}
	\end{equation}
	real function-value queries suffice to recover
	\(\widehat{\mathbf W}_1\)
	such that, with probability at least \(1-\zeta\), after
	suitably relabeling
	the recovered columns and choosing their signs,
	$
	\|\widehat{\mathbf W}_1-\mathbf W_1\|_2\le\varepsilon.
	$
\end{proposition}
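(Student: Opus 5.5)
The plan is to chain exact column-space recovery, noisy-free Hit-and-Run sampling, moment estimation, and two-matrix diagonalization, then convert columnwise direction error into operator-norm error.

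\textbf{Step 1: exact column space.} Apply Proposition~\ref{lem:first-layer-column-space} with $m=r$ random query points. Each gradient $\nabla f^\star(\mathbf x_i)$ costs $d$ directional derivatives, each obtained from $O(k^{L-1})$ function-value queries. This uses $O_{k,L}(dr)$ queries and yields $\widehat{\mathbf B}_1=\mathbf B_1$ exactly. Consequently $\varepsilon_{B,1}=0$, and the reduced oracle $G_1(\mathbf z)=f^\star(\mathbf B_1\mathbf z)$ is exact, so $\xi_1=0$.

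\textbf{Step 2: sampling and moments.} With an exact oracle, the noisy Hit-and-Run of Lemma~\ref{lem:noisy-hit-and-run} reduces to exact Hit-and-Run with bisection. The only error terms are the mixing error and the $T\eta$ bisection term, so any target $\nu$ is achieved with $T=O(r^3\omega^2\log(\omega/\nu))$ steps. Each step costs $O(\log(\omega T/(\nu r_{\mathrm{in}})))$ queries. By Lemma~\ref{lem:sampling-body-radii}, $\omega\le C_{k,L}\kappa^4\sqrt r$, so each endpoint costs $\widetilde O(r^4\kappa^{8})$ queries. The gradient at each endpoint is computed exactly from $O(q_1 r)$ queries by Lemma~\ref{lem:stable-gradient-from-suffix} with $\xi=0$. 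Lemma~\ref{lem:empirical-two-moments} then gives both moments to relative accuracy $\eta$ provided $\nu\le c\kappa^{-8}\eta r^{-2}$ and $m=\widetilde O(\kappa^8 r^2\eta^{-2})$.

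\textbf{Step 3: diagonalization and normalization.} In latent coordinates $\mathbf U=\mathbf C_1^\top\mathbf Z$, with $\mathbf C_1:=\mathbf B_1^\top\mathbf W_1$, the structure $\boldsymbol\Sigma_x=\mathbf C^{-\top}\boldsymbol\Lambda\mathbf C^{-1}$ and $\boldsymbol\Sigma_g=\mathbf C\mathbf D\mathbf C^\top$ holds. Assumption~\ref{ass:spectral-gap} supplies the gap $\gamma^{-1}$, and Lemma~\ref{lem:population-moment-scales} gives $\operatorname{cond}(\mathbf V)\le 9\kappa^2$ and $\rho_1\le C\kappa^8r^3$.

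Lemma~\ref{lem:noisy-two-matrix} then yields normalized eigenvectors with
\[
\max_j\|\widehat{\mathbf v}_j-\mathbf v_j\|_2\le C\gamma\kappa^{12}r^3\eta,
\]
valid once $\eta\le c/(\gamma\kappa^{12}r^3)$. Since $\mathbf B_1$ is an isometry onto $\operatorname{col}(\mathbf W_1)$, we have $\mathbf B_1\mathbf v_j=\mathbf w_{1,j}/\|\mathbf w_{1,j}\|_2=\mathbf w_{1,j}$, using $\|\mathbf w_{1,j}\|_2=1$. The unit-normalization in ASPIRE therefore preserves the column error up to a factor of $2$.

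Summing column errors gives $\|\widehat{\mathbf W}_1-\mathbf W_1\|_2\le\|\widehat{\mathbf W}_1-\mathbf W_1\|_F\le 2\sqrt r\max_j\|\widehat{\mathbf v}_j-\mathbf v_j\|$. It therefore suffices to choose
\[
\eta\asymp\frac{\varepsilon}{\gamma\kappa^{12}r^{7/2}}.
\]
This forces $m=\widetilde O(\gamma^2\kappa^{O(1)}r^{9}\varepsilon^{-2})$ endpoints. Multiplying by the $\widetilde O(r^4)$ cost per endpoint (the sampling dominates the $O(r)$ gradient cost) gives $\widetilde O(\gamma^2\kappa^{O(1)}r^{13}\varepsilon^{-2})$, plus $dr$ from Step 1.

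\textbf{Main obstacle.} The main work is the bookkeeping: keeping the polynomial powers of $\kappa$ and $r$ consistent through Lemma~\ref{lem:noisy-two-matrix}, and making sure the logarithmic bisection costs stay inside the $\widetilde O$. A second point needs care: the eigenvalues of $\widehat{\boldsymbol\Sigma}_g\widehat{\boldsymbol\Sigma}_x$ are real and simple, and the labeling is consistent. Both follow from the Bauer--Fike argument inside Lemma~\ref{lem:noisy-two-matrix}. A union bound over the single moment-estimation event with failure probability $\zeta$ completes the argument.
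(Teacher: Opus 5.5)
Your plan matches the paper's proof step for step. The ingredients are the same: exact column space via Proposition~\ref{lem:first-layer-column-space} at cost $O_{k,L}(dr)$, Hit-and-Run endpoints, Lemma~\ref{lem:empirical-two-moments} with exact gradients, and Lemma~\ref{lem:noisy-two-matrix} with $\rho_1\le C_{k,L}\kappa^8r^3$. The parameter choices also coincide: $\eta\asymp\varepsilon/(\gamma\kappa^{12}r^{7/2})$, $\nu\asymp\kappa^{-8}\eta r^{-2}$, and the $\sqrt r$ Frobenius conversion. Your final count $\widetilde O_{k,L}(dr+\gamma^2\kappa^{40}r^{13}\varepsilon^{-2})$ is the paper's.

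The one step that does not hold as written is in Step~2. You claim that with an exact oracle, the construction of Lemma~\ref{lem:noisy-hit-and-run} ``reduces to exact Hit-and-Run with bisection,'' with only mixing error and a $T\eta$ term. That lemma requires $0<\xi\le 1/2$. At $\xi=0$ one gets $\tau=0$, hence $\delta_{\mathrm{bis}}=\eta\tau r_{\mathrm{in}}=0$, and the bisection never terminates.

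Suppose instead you fix a bisection resolution independent of $\tau$. Your per-step cost $\log(\omega T/(\nu r_{\mathrm{in}}))$ suggests $\delta_{\mathrm{bis}}\approx\eta r_{\mathrm{in}}$. Then the per-step bound $2\delta_{\mathrm{bis}}/|I_K|\le\eta$ is lost. In the lemma that bound rests on $|I_K|\ge 2\tau r_{\mathrm{in}}$, which holds only for $\mathbf x\in\rho K$, and chords through points near $\partial K$ can be arbitrarily short. The resulting per-step error behaves like $\min\{1,\,q\delta_{\mathrm{bis}}/(r_{\mathrm{in}}D(\mathbf x))\}$. Averaging it along the chain needs the negative-moment bound on the boundary depth $D(\mathbf X_t)$ again. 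So it is not true that only the mixing and $T\eta$ terms appear.

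The repair is the one the paper uses. Because $G_1$ is exact, it satisfies $(1-\xi)G_1\le G_1\le(1+\xi)G_1$ for every $\xi\in(0,1/2]$. You may therefore run the lemma's construction with a small auxiliary $\xi_{\mathrm{aux}}>0$, chosen polynomially small in $\nu,r,\omega_1$ so that the lemma's hypothesis holds with $\delta=1$ and target error $\nu$. Then $\tau_{\mathrm{aux}}\ge c_{q_1}\xi_{\mathrm{aux}}$, and the bisection cost $\log(\omega_1/(\eta_{\mathrm{HR}}\tau_{\mathrm{aux}}))$ is only polylogarithmic. With this change your per-endpoint bound $\widetilde O_{k,L}(\kappa^8 r^4)$ and the rest of your argument go through unchanged.
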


\begin{proof}
	By Proposition~\ref{lem:first-layer-column-space},
	\(\operatorname{col}(\mathbf W_1)\) is recovered exactly
	with probability
	one. Let \(\mathbf B_1\in\mathbb R^{d\times r}\) be an
	orthonormal basis of
	this subspace and set
	\[
	\mathbf C_1:=\mathbf B_1^\top\mathbf W_1,
	\
	\mathbf C_1^\sharp:=\mathbf C_1,
	\
	G_1(\mathbf z):=f^\star(\mathbf B_1\mathbf z),
	\
	K_1^\sharp:=K_1:=\{\mathbf z:G_1(\mathbf z)\le1\}.
	\]
	Then
	\[
	\mathbf W_1=\mathbf B_1\mathbf C_1,
	\
	\operatorname{cond}(\mathbf C_1)
	=\operatorname{cond}(\mathbf W_1)
	\le\kappa_0\le\kappa,
	\
	\|\mathbf c_{1,i}\|_2=1.
	\]
	Moreover, Lemma~\ref{lem:sampling-body-radii} gives
	\[
	\omega_1
	:=
	\dfrac{R_{\mathrm{out},1}}{r_{\mathrm{in},1}}
	\le
	C_{k,L}\kappa^4\sqrt r.
	\]
	Let \(\mathbf Z\sim\operatorname{Unif}(K_1)\) and define
	\[
	\boldsymbol{\Sigma}_{x,1}:=\mathbb E[\mathbf Z\mathbf
	Z^\top],
	\qquad
	\boldsymbol{\Sigma}_{g,1}
	:=
	\mathbb E[\nabla G_1(\mathbf Z)\nabla G_1(\mathbf
	Z)^\top].
	\]
	Lemma~\ref{lem:population-moment-scales} yields
	\[
	\boldsymbol{\Sigma}_{x,1}
	=
	\mathbf C_1^{-\top}\boldsymbol{\Lambda}_1\mathbf
	C_1^{-1},
	\qquad
	\boldsymbol{\Sigma}_{g,1}
	=
	\mathbf C_1\mathbf D_1\mathbf C_1^\top,
	\]
	where \(\mathbf D_1,\boldsymbol{\Lambda}_1\) are
	positive diagonal
	matrices and
	\(
	\mathbf D_1\boldsymbol{\Lambda}_1
	=
	\operatorname{diag}(\theta_{1,1},\ldots,\theta_{1,r}).
	\)
	Furthermore,
	\(
	\rho_1
	:=\dfrac{\|\boldsymbol{\Sigma}_{x,1}\|_2
	\|\boldsymbol{\Sigma}_{g,1}\|_2}
	{\max_i|\theta_{1,i}|}
	\le
	C_{k,L}\kappa^8r^3.
	\)
	Set
	\(
	\eta
	:=
	c_{k,L}
	\dfrac{\varepsilon}{\gamma\kappa^{12}r^{7/2}},
	\nu
	:=
	c_{k,L}\kappa^{-8}\eta r^{-2}.
	\)
	Since \(G_1\) is available exactly, it satisfies
	\[
	(1-\xi)G_1(\mathbf z)
	\le
	G_1(\mathbf z)
	\le
	(1+\xi)G_1(\mathbf z)
	\]
	for every \(\xi\in(0,1/2]\). Let
	\(
	q_1:=k^{L-1},
	\)
	and apply Lemma~\ref{lem:noisy-hit-and-run} with
	\(\delta_{\rm HR}=1\) and target total-variation error \(\nu\).
	Let
	\(
	T_{\rm HR}
	:=
	\left\lceil
	C_{\rm HR}r^3\omega_1^2
	\log\dfrac{2\omega_1}{\nu}
	\right\rceil
	\)
	be the number of Hit-and-Run steps for each independent chain,
	where \(C_{\rm HR}>0\) is the absolute constant supplied by
	Lemma~\ref{lem:noisy-hit-and-run}, and set
	\(
	\eta_{\rm HR}:=\dfrac{\nu}{4T_{\rm HR}}.
	\)
	Define
	\[
	\xi_{\rm aux}
	:=
	\min\left\{
	\frac14,\,
	\frac{
	c_1\nu^2
	}{
	2q_1^2r^7\omega_1^6
	\left(
	\log\frac{2\omega_1}{\nu}
	\right)^2
	}
	\right\},
	\]
	where \(c_1>0\) is the constant supplied by
	Lemma~\ref{lem:noisy-hit-and-run} for \(\delta_{\rm HR}=1\).
	Then the multiplicative-oracle hypothesis of that lemma holds
	with \(\xi=\xi_{\rm aux}\). Moreover, if
	\[
	\tau_{\rm aux}
	:=
	1-
	\left(
	\frac{1-\xi_{\rm aux}}{1+\xi_{\rm aux}}
	\right)^{1/q_1},
	\]
	then, since \(0<\xi_{\rm aux}\le1/4\),
	\(
	\tau_{\rm aux}
	\ge
	c_{q_1}\xi_{\rm aux}
	\)
	for a constant \(c_{q_1}>0\) depending only on \(q_1\).
	Thus
	\(
	\log\dfrac{\omega_1}{\eta_{\rm HR}\tau_{\rm aux}}
	\)
	is logarithmic in
	\(r,\kappa,\gamma,\varepsilon^{-1},\zeta^{-1}\).
	Using
	\(
	\omega_1^2\le C_{k,L}\kappa^8r,
	\)
	Lemma~\ref{lem:noisy-hit-and-run} therefore generates each
	sample within total variation \(\nu\) of
	\(\operatorname{Unif}(K_1)\) using
	\(
	\widetilde O_{k,L}(\kappa^8r^4)
	\)
	function-value queries.

	Since the oracle is exact, the gradients \(\nabla G_1\)
	are recovered
	exactly by polynomial interpolation. Applying
	Lemma~\ref{lem:empirical-two-moments} with
	\(\widehat G_1=G_1\) and \(\xi_1=0\) shows that
	\[
	\begin{aligned}
	m
	=
	O_{k,L}\!\left(
	\kappa^8r^2\eta^{-2}
	\log\dfrac{2r}{\zeta}
	\right)=
	O_{k,L}\!\left(
	\dfrac{\gamma^2\kappa^{32}r^9}{\varepsilon^2}
	\log\dfrac{2r}{\zeta}
	\right)
	\end{aligned}
	\]
	samples suffice to obtain relative \(\eta\)-accurate
	estimates of both
	population moment matrices, with probability at least
	\(1-\zeta\).

	The normalized eigenvector matrix in
	Lemma~\ref{lem:noisy-two-matrix} is precisely \(\mathbf
	C_1\), because
	its columns have unit Euclidean norm, and hence its
	condition number is at
	most \(\kappa\). Assumption~\ref{ass:spectral-gap} and
	the choice of
	\(\eta\) therefore imply that, after relabeling and
	choosing signs,
	\[
	\max_i
	\|\widehat{\mathbf c}_{1,i}-\mathbf c_{1,i}\|_2
	\le
	C_{k,L}\gamma\kappa^{12}r^3\eta
	\le
	c_{k,L}\dfrac{\varepsilon}{\sqrt r}.
	\]
	Therefore
	\[
	\|\widehat{\mathbf C}_1-\mathbf C_1\|_2
	\le
	\|\widehat{\mathbf C}_1-\mathbf C_1\|_F
	\le
	\sqrt r
	\max_i\|\widehat{\mathbf c}_{1,i}-\mathbf c_{1,i}\|_2
	\le
	\varepsilon.
	\]
	Setting
	\(
	\widehat{\mathbf W}_1:=\mathbf B_1\widehat{\mathbf C}_1
	\)
	and using \(\mathbf B_1^\top\mathbf B_1=\mathbf I_r\)
	gives the claimed
	operator-norm error.

	Finally, exact recovery of the first-layer column space
	uses \(r\)
	gradients, each obtained with \(O_{k,L}(d)\)
	real
	function-value queries, for a total of \(O_{k,L}(dr)\).
	The sampling
	stage uses
	\[
	m\,\widetilde O_{k,L}(\kappa^8r^4)
	=
	\widetilde O_{k,L}\!\left(
	\dfrac{\gamma^2\kappa^{40}r^{13}}{\varepsilon^2}
	\log\dfrac1\zeta
	\right)
	\]
	queries. This proves the claim.
\end{proof}

\section{Stability of the Simulated Suffix Oracle}

\label{sec:appendix-suffix-stability}
This section quantifies how errors in the recovered prefix propagate through prefix inversion and the resulting simulated suffix oracle.
	\begin{lemma}[Stability of approximate prefix inversion]
		\label{lem:approximate-prefix-inversion}
		Fix \(2\le\ell\le L-1\). Suppose that, for every
		\(1\le j\le\ell-1\),
		\[
		\|\widehat{\mathbf W}_j-\mathbf W_j\|_2
		\le
		\varepsilon_{<\ell}.
		\]
		Define
		\[
		\widehat{\mathbf R}_j
		:=
		\widehat{\mathbf W}_j
		(\widehat{\mathbf W}_j^\top\widehat{\mathbf W}_j)^{-1}.
		\]
		For \(\mathbf y\in\mathbb R^{r_{\ell-1}}\), set
		\(\mathbf z_{\ell-1}=\mathbf y\), and for
		\(j=\ell-1,\ldots,1\), choose coordinatewise principal $k$-th root
		\(\boldsymbol\xi_j\in\mathbb C^{r_j}\) such
		that
		\(\boldsymbol\xi_j^{\odot k}=\mathbf z_j\), and set
		\(\mathbf z_{j-1}=\widehat{\mathbf
		R}_j\boldsymbol\xi_j\).
		Starting from \(\widetilde{\mathbf z}_0=\mathbf z_0\),
		define the true
		forward iterates
		\[
		\widetilde{\mathbf z}_j
		=
		(\mathbf W_j^\top\widetilde{\mathbf z}_{j-1})^{\odot k},
		\qquad
		1\le j\le\ell-1.
		\]
		Let
		\[
		\vartheta_k:=\dfrac{k-1}{2k},
		\qquad
		P_j
		:=
		\sqrt{\dfrac r{r_j}}
		\prod_{t=1}^j r_t^{\vartheta_k},
		\qquad
		1\le j\le\ell-1.
		\]
		There are constants \(c_{k,L},C_{k,L}>0\), independent
		of the
		conditioning parameters, such that if
		\[
		\kappa^{O(\ell)}P_{\ell-1}\varepsilon_{<\ell}
		\le c_{k,L},
		\]
		then
		\[
		\|\widetilde{\mathbf z}_{\ell-1}-\mathbf y\|_2
		\le
		C_{k,L}\kappa^{O(\ell)}P_{\ell-1}
		\varepsilon_{<\ell}\|\mathbf y\|_2.
		\]
		Since
		\(P_{\ell-1}\le r^{\ell/2}\), it is sufficient that
		\begin{equation}
		\kappa^{O(\ell)}r^{\ell/2}\varepsilon_{<\ell}
		\le c_{k,L},
		\label{eq:prefix-simplified-smallness}
		\end{equation}
		in which case
		\begin{equation}
		\|\widetilde{\mathbf z}_{\ell-1}-\mathbf y\|_2
		\le
		C_{k,L}\kappa^{O(\ell)}r^{\ell/2}
		\varepsilon_{<\ell}\|\mathbf y\|_2.
		\label{eq:prefix-simplified-error}
		\end{equation}
	\end{lemma}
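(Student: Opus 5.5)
The plan is a layer-by-layer induction that compares the approximate inverse iterates \(\mathbf z_j\) with the true forward iterates \(\widetilde{\mathbf z}_j\). The guiding principle is that the recursion is \emph{exact} up to replacing \(\mathbf W_j\) by \(\widehat{\mathbf W}_j\). By construction, \(\widehat{\mathbf W}_j^\top\mathbf z_{j-1}=\boldsymbol\xi_j\), so \((\widehat{\mathbf W}_j^\top\mathbf z_{j-1})^{\odot k}=\mathbf z_j\); that is, the \(\mathbf z_j\) are the forward iterates of \(\mathbf z_0\) through the \emph{recovered} prefix. Hence \(\mathbf z_{\ell-1}=\mathbf y\) is the output of the recovered network, and \(\widetilde{\mathbf z}_{\ell-1}\) is the output of the true network, both on the same (complex) input \(\mathbf z_0\). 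I will therefore bound \(e_j:=\|\widetilde{\mathbf z}_j-\mathbf z_j\|_2\) inductively from \(e_0=0\).

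For the one-step estimate, write \(\mathbf a=\widehat{\mathbf W}_j^\top\mathbf z_{j-1}=\boldsymbol\xi_j\) and \(\mathbf b=\mathbf W_j^\top\widetilde{\mathbf z}_{j-1}\). Then
\[
\mathbf b-\mathbf a=\mathbf W_j^\top(\widetilde{\mathbf z}_{j-1}-\mathbf z_{j-1})+(\mathbf W_j-\widehat{\mathbf W}_j)^\top\mathbf z_{j-1},
\]
which gives \(\|\mathbf b-\mathbf a\|_2\le\kappa_0e_{j-1}+\varepsilon_{<\ell}\|\mathbf z_{j-1}\|_2\) by Lemma~\ref{lem:weight-singular-values}. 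The coordinatewise identity \(b_i^k-a_i^k=(b_i-a_i)\sum_{s}b_i^sa_i^{k-1-s}\) then gives \(e_j\le k\max(\|\mathbf a\|_\infty,\|\mathbf b\|_\infty)^{k-1}\|\mathbf b-\mathbf a\|_2\). Two magnitude bounds are needed. First, \(\|\boldsymbol\xi_j\|_\infty=\|\mathbf z_j\|_\infty^{1/k}\) because principal roots preserve moduli. Second, \(\|\mathbf z_{j-1}\|_2\le\|\widehat{\mathbf R}_j\|_2\|\boldsymbol\xi_j\|_2\), where Lemmas~\ref{lem:weight-singular-values} and~\ref{lem:pseudoinverse-perturbation} bound \(\|\widehat{\mathbf R}_j\|_2\le2\kappa_0\sqrt{r_{j-1}/r_j}\) once \(\varepsilon_{<\ell}\) is small relative to \(\sigma_{\min}(\mathbf W_j)\). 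This last requirement is where the smallness hypothesis \(\kappa^{O(\ell)}P_{\ell-1}\varepsilon_{<\ell}\le c_{k,L}\) is used.

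The relative-error bookkeeping, to be done in scale-invariant form, is the heart of the argument. Homogeneity lets me normalize \(\|\mathbf y\|_2=1\), since the map \(\mathbf y\mapsto\mathbf z_0\) is homogeneous of degree \(k^{-(\ell-1)}\) coordinatewise. A backward pass then bounds \(\|\mathbf z_j\|_\infty\) and \(\|\mathbf z_j\|_2\) in terms of \(\|\mathbf y\|\). The step \(\ell_2\to\ell_\infty\to\ell_2\) conversions are where the factors \(r_t^{\vartheta_k}\) and \(\sqrt{r/r_j}\) arise: passing through a \(k\)-th root converts \(\|\mathbf z_j\|_2\) into at most \(r_j^{1/2-1/(2k)}\|\mathbf z_j\|_\infty^{1/k}\)-type bounds, which yields \(r_j^{(k-1)/(2k)}\). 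A forward pass then propagates \(e_j/\|\mathbf z_j\|\) using the Lipschitz factor \(k\|\mathbf a\|_\infty^{k-1}\). The relative error grows by a factor \(k\kappa^{O(1)}r_j^{\vartheta_k}\) per layer, and the smallness hypothesis keeps the \(e_{j-1}\) term from dominating \(\|\mathbf b\|_\infty\le2\|\mathbf a\|_\infty\), so the Lipschitz constant stays comparable to that of \(\mathbf a\). Multiplying over \(j\le\ell-1\) gives \(C_{k,L}\kappa^{O(\ell)}P_{\ell-1}\varepsilon_{<\ell}\). Since \(\vartheta_k<1/2\), we get \(P_{\ell-1}\le r^{1/2}r^{(\ell-1)/2}=r^{\ell/2}\), which yields \eqref{eq:prefix-simplified-smallness} and \eqref{eq:prefix-simplified-error}.

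I expect the main obstacle to be the norm bookkeeping. The \(\infty\)-norm of \(\boldsymbol\xi_j\) is controlled only through \(\|\mathbf z_j\|_\infty^{1/k}\), while the error step multiplies by \(\|\boldsymbol\xi_j\|_\infty^{k-1}\) and the inversion by \(\|\widehat{\mathbf R}_j\|\). One must check that the product of these gives a per-layer relative factor \(r_j^{\vartheta_k}\), rather than something like \(r_j^{1/2}\) raised to the power \(k\). My first attempt will be to track the ratio \(\|\mathbf z_{j-1}\|_2/\|\mathbf z_j\|_\infty^{1/k}\) and to use \(\|\boldsymbol\xi_j\|_2\le\sqrt{r_j}\|\boldsymbol\xi_j\|_\infty\) only once per layer.
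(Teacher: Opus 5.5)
Your skeleton matches the paper's: the relative-error induction from \(e_0=0\), the splitting of \(\boldsymbol\Delta_j:=\mathbf W_j^\top\widetilde{\mathbf z}_{j-1}-\boldsymbol\xi_j\) into a propagated term and a weight-error term, and the right-inverse bound. The gap is in the norm bookkeeping you flag as the main obstacle, and the fix you suggest does not produce \(P_{\ell-1}\).

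\emph{First}, the bound \(\|\boldsymbol\xi_j\|_2\lesssim r_j^{1/2-1/(2k)}\|\mathbf z_j\|_\infty^{1/k}\) is false. If all coordinates of \(\mathbf z_j\) have equal modulus, the ratio is \(\sqrt{r_j}\). Your fallback compares with \(\|\mathbf z_j\|_\infty^{1/k}=\|\boldsymbol\xi_j\|_\infty\) and uses \(\|\boldsymbol\xi_j\|_2\le\sqrt{r_j}\|\boldsymbol\xi_j\|_\infty\). With \(E_j:=e_j/\|\mathbf z_j\|_2\), it gives only \(E_j\le C_k\kappa\sqrt{r_{j-1}}(\varepsilon_{<\ell}+\kappa E_{j-1})\), with \(\sqrt{r_1}\) at \(j=1\). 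The product of these factors exceeds \(P_{\ell-1}\) by exactly \(\prod_{t=1}^{\ell-1}r_t^{1/(2k)}\). That still yields \eqref{eq:prefix-simplified-error}, but not the profile-dependent bound.

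\emph{Second}, \(\|\mathbf b\|_\infty\le2\|\mathbf a\|_\infty\) does not follow from the hypothesis. It would need \(\|\boldsymbol\Delta_j\|_\infty\lesssim\|\boldsymbol\xi_j\|_\infty\). But \(\kappa^{O(\ell)}P_{\ell-1}\varepsilon_{<\ell}\le c_{k,L}\) only makes \(\|\boldsymbol\Delta_j\|_2\) small compared with \(\|\boldsymbol\xi_j\|_{2k}\), which can be as large as \(r_j^{1/(2k)}\|\boldsymbol\xi_j\|_\infty\). Moreover, \((\mathbf W_j-\widehat{\mathbf W}_j)^\top\mathbf z_{j-1}\) can concentrate on a single coordinate. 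Also, at \(j=1\) you must use \(\|\widehat{\mathbf R}_1\|_2\le2\kappa_0\), not the generic \(2\kappa_0\sqrt{r_0/r_1}=2\kappa_0\sqrt{d/r}\), or a factor of \(d\) enters.

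The missing idea is the reference scale: measure \(\boldsymbol\xi_j\) in \(\ell_{2k}\), since \(\|\mathbf z_j\|_2=\|\boldsymbol\xi_j\|_{2k}^k\) exactly. The paper bounds \(\|\mathbf z_{j-1}\|_2\le\|\widehat{\mathbf R}_j\|_2\|\boldsymbol\xi_j\|_2\le C\kappa b_j\|\boldsymbol\xi_j\|_{2k}\). This pays \(r_j^{\vartheta_k}\) exactly once, through \(\|\boldsymbol\xi_j\|_2\le r_j^{\vartheta_k}\|\boldsymbol\xi_j\|_{2k}\), where \(b_1=r_1^{\vartheta_k}\) and \(b_j=\sqrt{r_{j-1}/r_j}\,r_j^{\vartheta_k}\) for \(j\ge2\). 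Hence \(\|\boldsymbol\Delta_j\|_{2k}\le\|\boldsymbol\Delta_j\|_2\le a_j\|\boldsymbol\xi_j\|_{2k}\) with \(a_j:=C\kappa b_j(\varepsilon_{<\ell}+\kappa E_{j-1})\). The coordinatewise bound \(|(u+v)^k-u^k|\le k|v|(|u|+|v|)^{k-1}\), combined with H\"older's inequality with exponents \(2k\) and \(2k/(k-1)\), gives \(e_j=\|(\boldsymbol\xi_j+\boldsymbol\Delta_j)^{\odot k}-\boldsymbol\xi_j^{\odot k}\|_2\le k\,a_j(1+a_j)^{k-1}\|\boldsymbol\xi_j\|_{2k}^k\). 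So \(E_j\le C_k\kappa b_j(\varepsilon_{<\ell}+\kappa E_{j-1})\) whenever \(a_j\le1\), which the hypothesis secures inductively, and \(\prod_{t\le j}b_t=P_j\) telescopes.

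Your \(\ell_\infty\)-Lipschitz estimate can be repaired the same way. Bound \(\max(\|\mathbf a\|_\infty,\|\mathbf b\|_\infty)\le\|\boldsymbol\xi_j\|_{2k}+\|\boldsymbol\Delta_j\|_2\) and divide by \(\|\mathbf z_j\|_2\) rather than \(\|\mathbf z_j\|_\infty\). Once all errors are relative, the normalization \(\|\mathbf y\|_2=1\) and the absolute backward-pass bounds are unnecessary.
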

	
	\begin{proof}
	If \(\mathbf y=\mathbf0\), choose every coordinatewise
	\(k\)-th root in
	the backward recursion to be zero. Then all backward and
	forward iterates are
	zero, and the conclusion is immediate. Hence assume
	\(\mathbf y\neq\mathbf0\).

	Lemma~\ref{lem:weight-singular-values} gives
	\[
	\sigma_{\min}(\mathbf W_1)\ge\kappa_0^{-1},
	\qquad
	\sigma_{\min}(\mathbf W_j)
	\ge
	\kappa_0^{-1}\sqrt{\dfrac{r_j}{r_{j-1}}},
	\quad 2\le j\le\ell-1,
	\]
	and \(\|\mathbf W_j\|_2\le\kappa_0\). Since the widths
	are
	nonincreasing,
	\[
	\sqrt{\dfrac{r_{j-1}}{r_j}}
	\le
	\sqrt{\dfrac r{r_{\ell-1}}}
	\le
	P_{\ell-1}.
	\]
	Thus the assumed smallness condition, after decreasing
	\(c_{k,L}\), makes
	\(\varepsilon_{<\ell}\le\sigma_{\min}(\mathbf W_j)/2\)
	for every
	\(j<\ell\). Weyl's inequality gives full column rank of
	each
	\(\widehat{\mathbf W}_j\), together with
	\[
	\|\widehat{\mathbf R}_1\|_2\le2\kappa_0,
	\qquad
	\|\widehat{\mathbf R}_j\|_2
	\le
	2\kappa_0\sqrt{\dfrac{r_{j-1}}{r_j}},
	\quad 2\le j\le\ell-1.
	\]
	In particular, the nonzero terminal value implies that
	every backward
	iterate is nonzero.

	Set
	\[
	\varepsilon:=\varepsilon_{<\ell},
	\qquad
	\mathbf e_j:=\widetilde{\mathbf z}_j-\mathbf z_j,
	\qquad
	E_j:=\dfrac{\|\mathbf e_j\|_2}{\|\mathbf z_j\|_2}.
	\]
	Then \(E_0=0\). Define the layer factors
	\[
	b_1:=r^{\vartheta_k},
	\qquad
	b_j:=\sqrt{\dfrac{r_{j-1}}{r_j}}\,r_j^{\vartheta_k},
	\quad j\ge2.
	\]
	Their products telescope:
	\[
	\prod_{t=1}^j b_t
	=
	\sqrt{\dfrac r{r_j}}
	\prod_{t=1}^j r_t^{\vartheta_k}
	=
	P_j.
	\]

	Because
	\(
	\widehat{\mathbf W}_j^\top\mathbf
	z_{j-1}=\boldsymbol\xi_j,
	\)
	we may write
	\[
	\mathbf W_j^\top\widetilde{\mathbf z}_{j-1}
	=
	\boldsymbol\xi_j+\boldsymbol\Delta_j,
	\]
	where
	\[
	\boldsymbol\Delta_j
	:=
	(\mathbf W_j-\widehat{\mathbf W}_j)^\top\mathbf z_{j-1}
	+
	\mathbf W_j^\top\mathbf e_{j-1}.
	\]
	Consequently,
	\[
	\|\boldsymbol\Delta_j\|_2
	\le
	(\varepsilon+\kappa E_{j-1})\|\mathbf z_{j-1}\|_2.
	\]
	The right-inverse bounds and the comparison between
	\(\ell_2\) and
	\(\ell_{2k}\) give
	\[
	\|\mathbf z_{j-1}\|_2
	\le
	C\kappa b_j\|\boldsymbol\xi_j\|_{2k}.
	\]
	Thus
	\[
	\|\boldsymbol\Delta_j\|_{2k}
	\le
	C\kappa b_j
	(\varepsilon+\kappa E_{j-1})
	\|\boldsymbol\xi_j\|_{2k}.
	\]
	For \(a,b\in\mathbb C\),
\[
|(a+b)^k-a^k|
\le
k|b|(|a|+|b|)^{k-1}.
\]
Applying this coordinatewise and using Hölder's inequality gives
\[
\left\|
(\boldsymbol\xi_j+\boldsymbol\Delta_j)^{\odot k}
-
\boldsymbol\xi_j^{\odot k}
\right\|_2
\le
k\|\boldsymbol\Delta_j\|_{2k}
\left(
\|\boldsymbol\xi_j\|_{2k}
+
\|\boldsymbol\Delta_j\|_{2k}
\right)^{k-1}.
\]
Set
\[
a_j
:=
C\kappa b_j(\varepsilon+\kappa E_{j-1}).
\]
The preceding estimate gives
\[
\|\boldsymbol\Delta_j\|_{2k}
\le
a_j\|\boldsymbol\xi_j\|_{2k}.
\]
Whenever \(a_j\le1\),
\[
\begin{aligned}
\|\mathbf e_j\|_2
=
\left\|
(\boldsymbol\xi_j+\boldsymbol\Delta_j)^{\odot k}
-
\boldsymbol\xi_j^{\odot k}
\right\|_2
\le
k a_j(1+a_j)^{k-1}
\|\boldsymbol\xi_j\|_{2k}^k
\le
C_k\kappa b_j
(\varepsilon+\kappa E_{j-1})
\|\boldsymbol\xi_j\|_{2k}^k.
\end{aligned}
\]
	Since
	\(
	\|\mathbf z_j\|_2=\|\boldsymbol\xi_j\|_{2k}^k,
	\)
	we obtain
	\begin{equation}
	E_j
	\le
	C_k\kappa b_j(\varepsilon+\kappa E_{j-1}).
	\label{eq:prefix-error-recursion}
	\end{equation}

	We verify the temporary smallness condition and solve the
	recursion
	simultaneously. We claim that constants
	\(A_0,\ldots,A_{\ell-1}\), depending
	only on \(k,L\), can be chosen so that
	\begin{equation}
	E_j
	\le
	A_j\kappa^{3j}P_j\varepsilon,
	\qquad
	0\le j\le\ell-1.
	\label{eq:prefix-inductive-bound}
	\end{equation}
	where \(P_0:=1\). The assertion is immediate for \(j=0\).
	If it holds
	through \(j-1\), then \(b_jP_{j-1}=P_j\),
	\(P_{j-1}\ge1\), and
	\[
	C\kappa b_j(\varepsilon+\kappa E_{j-1})
	\le
	C_{k,L}\kappa^{3j}P_j\varepsilon.
	\]
	The assumed smallness condition makes the last quantity
	at most one,
	validating \eqref{eq:prefix-error-recursion}. Applying
	that recursion once
	more yields the claimed bound at \(j\).

	Taking \(j=\ell-1\) proves the profile-dependent
	estimate. Finally,
	because \(r_j\le r\), \(r_{\ell-1}\ge1\), and
	\(\vartheta_k=(k-1)/(2k)<1/2\),
	\[
	P_{\ell-1}
	=
	\sqrt{\dfrac r{r_{\ell-1}}}
	\prod_{j=1}^{\ell-1}r_j^{\vartheta_k}
	\le
	r^{1/2+\vartheta_k(\ell-1)}
	\le
	r^{\ell/2}.
	\]
	This proves the simplified statements.
	\end{proof}
	
\begin{lemma}[Relative perturbation of a homogeneous polynomial]
		\label{lem:relative-homogeneous-perturbation}
		Let \(G:\mathbb R^n\to\mathbb R\) be a
		positive-definite, even,
		homogeneous polynomial of degree \(q\), and suppose that
		\(p(\mathbf{x}):=G(\mathbf{x})^{1/q}
		\)
		is a norm on \(\mathbb R^n\).
		Let \(G_{\mathbb C}\) denote the complexification of
		\(G\), and define
		\[
		p_{\mathbb
		C}(\mathbf{u}+i\mathbf{v}):=p(\mathbf{u})+p(\mathbf{v}),
		\qquad
		\mathbf{u},\mathbf{v}\in\mathbb R^n.
		\]
		Then there exists a constant \(C_q>0\), depending only
		on \(q\),
		such that for every \(\mathbf{z}\in\mathbb
		R^n\setminus\{\mathbf{0}\}\) and every
		\(\boldsymbol{\Delta}\in\mathbb C^n\) satisfying
		\(
		p_{\mathbb C}(\boldsymbol{\Delta})\le p(\mathbf{z}),
		\)
		one has
		\[
		\dfrac{
			\left|G_{\mathbb
			C}(\mathbf{z}+\boldsymbol{\Delta})-G(\mathbf{z})\right|
		}{
			G(\mathbf{z})
		}
		\le
		C_q
		\dfrac{p_{\mathbb
		C}(\boldsymbol{\Delta})}{p(\mathbf{z})}.
		\]
	\end{lemma}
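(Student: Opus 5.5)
Since both sides of the claimed inequality are invariant under the rescaling $(\mathbf z,\boldsymbol\Delta)\mapsto(t\mathbf z,t\boldsymbol\Delta)$, $t>0$, the plan is to normalize first. Homogeneity gives $G_{\mathbb C}(t\mathbf w)=t^qG_{\mathbb C}(\mathbf w)$ for real $t$, and $p$, $p_{\mathbb C}$ are $1$-homogeneous under positive real scalings. We may therefore assume $p(\mathbf z)=1$, so $G(\mathbf z)=1$ and $s:=p_{\mathbb C}(\boldsymbol\Delta)\le1$. The claim then reduces to
\[
|G_{\mathbb C}(\mathbf z+\boldsymbol\Delta)-1|\le C_q\,s .
\]

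We expand along the complex segment. Write $\boldsymbol\Delta=\mathbf u+i\mathbf v$, and note that $t\mapsto G_{\mathbb C}(\mathbf z+t\boldsymbol\Delta)$ is a polynomial of degree $q$. Taylor's formula, which is exact here, gives
\[
G_{\mathbb C}(\mathbf z+\boldsymbol\Delta)-G(\mathbf z)=\sum_{m=1}^{q}\frac1{m!}\,D^mG(\mathbf z)[\boldsymbol\Delta,\ldots,\boldsymbol\Delta].
\]
Each term is the complexified symmetric $m$-linear form $\mathcal T_m(\cdot)=D^mG(\mathbf z)[\cdot]$ evaluated at $\boldsymbol\Delta$. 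Expanding by multilinearity in $\mathbf u+i\mathbf v$ yields $2^m$ real terms of the form $D^mG(\mathbf z)[\mathbf w_1,\ldots,\mathbf w_m]$, where each $\mathbf w_j\in\{\mathbf u,\mathbf v\}$. It therefore suffices to bound real multilinear derivatives. The key estimate to establish is
\[
|D^mG(\mathbf z)[\mathbf w_1,\ldots,\mathbf w_m]|\le C_{q,m}\prod_{j=1}^m p(\mathbf w_j)\quad\text{whenever }p(\mathbf z)\le1 .
\]
Granting this, every term is at most $C\,s^m\le C\,s$ because $s\le1$, and summing over $m\le q$ gives the claim.

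The main obstacle is proving this multilinear bound uniformly over all norms $p$ arising as $G^{1/q}$, since a dimension-free constant is needed. I would proceed in two steps.

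\emph{Diagonal bound.} Consider the univariate polynomial $\psi(t)=G(\mathbf z+t\mathbf w)$ on $t\in[-1,1]$ with $p(\mathbf w)=1$. By the triangle inequality for $p$, $0\le\psi(t)\le(p(\mathbf z)+|t|)^q\le2^q$. The classical Markov brothers' inequality for degree-$q$ polynomials on $[-1,1]$ then bounds $|\psi^{(m)}(0)|=|D^mG(\mathbf z)[\mathbf w,\ldots,\mathbf w]|$ by $C_q2^q$. Homogeneity in $\mathbf w$ extends this to $|D^mG(\mathbf z)[\mathbf w^{\otimes m}]|\le C_q\,p(\mathbf w)^m$ for arbitrary $\mathbf w$.

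\emph{Off-diagonal terms.} The mixed terms follow from the polarization identity for symmetric multilinear forms, which expresses $\mathcal T[\mathbf w_1,\ldots,\mathbf w_m]$ as an average of $\pm\mathcal T[(\sum_j\epsilon_j\mathbf w_j)^{\otimes m}]$ over signs $\epsilon\in\{\pm1\}^m$, with constant $1/(2^m m!)$. Applying the diagonal bound gives $C_{q,m}\,p(\sum_j\epsilon_j\mathbf w_j)^m\le C_{q,m}\,m^m$ once each $\mathbf w_j$ is normalized to $p(\mathbf w_j)=1$. The homogeneous product bound for general $\mathbf w_j$ then follows by multilinearity, treating vanishing $\mathbf w_j$ trivially.

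Combining these, $|\mathcal T_m(\boldsymbol\Delta)|\le\sum_{k'}\binom{m}{k'}C\,p(\mathbf u)^{k'}p(\mathbf v)^{m-k'}=C(p(\mathbf u)+p(\mathbf v))^m=C\,s^m$. Returning to the unnormalized variables through homogeneity yields the stated inequality with $C_q$ depending only on $q$.
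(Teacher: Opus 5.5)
Your proof is correct. It shares the paper's skeleton (expand in $\boldsymbol\Delta$, split real and imaginary parts, polarize) but obtains the basic multilinear bound differently.

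\textbf{The paper's route.} The paper polarizes the full symmetric $q$-linear form $A$ of $G$, so that $G(\mathbf x)=A(\mathbf x,\ldots,\mathbf x)$, with $\mathbf z$ included among the arguments. Each polarization term satisfies $G\bigl(\sum_j\varepsilon_j\mathbf x_j\bigr)\le q^q$ by the triangle inequality for $p$ alone. This gives
\[
|A(\mathbf x_1,\ldots,\mathbf x_q)|\le \frac{q^q}{q!}\prod_j p(\mathbf x_j).
\]
The binomial expansion $G_{\mathbb C}(\mathbf z+\boldsymbol\Delta)-G(\mathbf z)=\sum_{j\ge1}\binom qj A_{\mathbb C}(\mathbf z^{\otimes(q-j)},\boldsymbol\Delta^{\otimes j})$ then finishes the proof.

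\textbf{Your route.} Your Taylor terms are exactly these terms, since $\frac1{m!}D^mG(\mathbf z)[\boldsymbol\Delta^{\otimes m}]=\binom qm A_{\mathbb C}(\mathbf z^{\otimes(q-m)},\boldsymbol\Delta^{\otimes m})$. However, you treat $D^mG(\mathbf z)$ as a standalone $m$-linear form whose diagonal is not directly a value of $G$. You therefore need the Markov brothers' inequality on $t\mapsto G(\mathbf z+t\mathbf w)$ for the diagonal bound before polarizing. That step is valid: $0\le\psi\le 2^q$ on $[-1,1]$, and the Markov constants $T_q^{(m)}(1)$ depend only on $q$. So nothing is missing.

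\textbf{Comparison.} Putting $\mathbf z$ inside the polarized form makes the diagonal bound trivial and avoids any external polynomial inequality. It also yields the explicit constant $C_q=(2^q-1)\,q^q/q!$. Your version instead isolates a slightly more general fact: derivatives of $G$ at points of the $p$-unit ball are controlled by $p$. This is the same flavor of Markov-type control the paper invokes separately in Lemma~\ref{lem:second-derivative-markov}, though it is more than this lemma needs.
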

	
	\begin{proof}
		Let
		\(
		A:(\mathbb R^n)^q\to\mathbb R
		\)
		be the symmetric \(q\)-linear form associated with
		\(G\), so that
		\(
		G(\mathbf{x})=A(\mathbf{x},\ldots,\mathbf{x}).
		\)
		For \(\mathbf{x}_1,\ldots,\mathbf{x}_q\in\mathbb R^n\),
		the real polarization identity is
		\begin{equation}
		A(\mathbf{x}_1,\ldots,\mathbf{x}_q)
		=
		\dfrac{1}{2^q q!}
		\sum_{\varepsilon\in\{-1,1\}^q}
		\left(\prod_{j=1}^q\varepsilon_j\right)
		G\!\left(\sum_{j=1}^q\varepsilon_j\mathbf{x}_j\right).
		\label{eq:real-polarization-identity}
		\end{equation}
		Suppose first that \(p(\mathbf{x}_j)\le1\) for every
		\(j\). The triangle
		inequality for \(p\) gives
		\(
		p\!\left(\sum_{j=1}^q\varepsilon_j\mathbf{x}_j\right)
		\le q,
		\)
		and hence
		\(
		\left|G\!\left(\sum_{j=1}^q\varepsilon_j\mathbf{x}_j\right)\right|
		\le q^q.
		\)
		Applying this bound to
		\eqref{eq:real-polarization-identity} yields
		\(
		|A(\mathbf{x}_1,\ldots,\mathbf{x}_q)|
		\le
		\dfrac{q^q}{q!}.
		\)
		If some \(\mathbf{x}_j=\mathbf{0}\), the desired bound
		is immediate. Otherwise, apply the
		preceding estimate to \(\mathbf{x}_j/p(\mathbf{x}_j)\)
		for each \(j\) and use
		multilinearity. Thus, for arbitrary
		\(\mathbf{x}_1,\ldots,\mathbf{x}_q\),
		\begin{equation}
		|A(\mathbf{x}_1,\ldots,\mathbf{x}_q)|
		\le
		C_q^{(0)}
		\prod_{j=1}^q p(\mathbf{x}_j),
		\qquad
		C_q^{(0)}:=\dfrac{q^q}{q!}.
		\label{eq:polarization-multilinear-bound}
		\end{equation}

		Let \(A_{\mathbb C}\) be the complex multilinear
		extension of \(A\).
		Write \(\mathbf{w}_j=\mathbf{u}_j+i\mathbf{v}_j\), and
		set
		\(
		\mathbf{x}_{j,0}:=\mathbf{u}_j,
		\qquad
		\mathbf{x}_{j,1}:=\mathbf{v}_j.
		\)
		Multilinearity and
		\eqref{eq:polarization-multilinear-bound} give
		\begin{align*}
		|A_{\mathbb C}(\mathbf{w}_1,\ldots,\mathbf{w}_q)|
		\le
		C_q^{(0)}
		\sum_{\alpha\in\{0,1\}^q}
		\prod_{j=1}^q p(\mathbf{x}_{j,\alpha_j})=
		C_q^{(0)}\prod_{j=1}^q\bigl(p(\mathbf{u}_j)+p(\mathbf{v}_j)\bigr)
		=
		C_q^{(0)}
		\prod_{j=1}^q p_{\mathbb C}(\mathbf{w}_j).
		\end{align*}
		Therefore,
		\[
		\begin{aligned}
			\left|G_{\mathbb
			C}(\mathbf{z}+\boldsymbol{\Delta})-G(\mathbf{z})\right|
			&=
			\left|
			\sum_{j=1}^q
			\binom qj
			A_{\mathbb C}
			\bigl(
			\mathbf{z}^{\otimes(q-j)},
			\boldsymbol{\Delta}^{\otimes j}
			\bigr)
			\right| \le
			C_q^{(0)}
			\sum_{j=1}^q
			\binom qj
			p(\mathbf{z})^{q-j}
			p_{\mathbb C}(\boldsymbol{\Delta})^j.
		\end{aligned}
		\]
		Set
		\(
		t:=
		\dfrac{p_{\mathbb
		C}(\boldsymbol{\Delta})}{p(\mathbf{z})}.
		\)
		Since \(t\le1\),
		\(
		(1+t)^q-1
		\le
		(2^q-1)t.
		\)
		Hence
		\[
		\left|G_{\mathbb
		C}(\mathbf{z}+\boldsymbol{\Delta})-G(\mathbf{z})\right|
		\le
		C_q
		p(\mathbf{z})^q
		\dfrac{p_{\mathbb
		C}(\boldsymbol{\Delta})}{p(\mathbf{z})}.
		\]
		Finally, \(p(\mathbf{z})^q=G(\mathbf{z})\), which proves
		the claim.
	\end{proof}
	
	\begin{lemma}[Multiplicative error of the simulated suffix oracle]
		\label{lem:multiplicative-suffix-oracle}
		Fix \(2\le\ell\le L-2\), and suppose
		\(\widehat{\mathbf B}_\ell\) satisfies the hypotheses of
		Lemma~\ref{lem:sampling-body-radii}. Let
		\[
		\varepsilon_{<\ell}
		:=
		\max_{1\le j\le\ell-1}
		\|\widehat{\mathbf W}_j-\mathbf W_j\|_2.
		\]
		For \(\mathbf z\in\mathbb R^{r_\ell}\), set
		\(\mathbf y=\widehat{\mathbf B}_\ell\mathbf z\),
		construct
		\(\mathbf z_0\in\mathbb C^d\) by approximate prefix
		inversion, and
		define
		\[
		\widehat G_\ell(\mathbf z)
		:=
		\operatorname{Re}f^\star_{\mathbb C}(\mathbf z_0).
		\]
		There are constants \(c_{k,L},C_{k,L}>0\) such that if
		\begin{equation}
		\kappa^{O(\ell)}r^{\ell/2}\varepsilon_{<\ell}
		\le c_{k,L},
		\label{eq:suffix-simplified-smallness}
		\end{equation}
		then
		\begin{equation}
		|\widehat G_\ell(\mathbf z)-G_\ell(\mathbf z)|
		\le
		C_{k,L}\kappa^{O(\ell)}r^{\ell/2}
		\varepsilon_{<\ell}G_\ell(\mathbf z),
		\qquad
		\mathbf z\in\mathbb R^{r_\ell}.
		\label{eq:suffix-simplified-error}
		\end{equation}
	\end{lemma}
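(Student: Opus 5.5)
The plan is to combine Lemma~\ref{lem:approximate-prefix-inversion} with Lemma~\ref{lem:relative-homogeneous-perturbation}, applied to the homogeneous polynomial $F_\ell$ on $\mathbb R^{r_{\ell-1}}$.

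Fix $\mathbf z\in\mathbb R^{r_\ell}$ and set $\mathbf y=\widehat{\mathbf B}_\ell\mathbf z$. If $\mathbf y=\mathbf0$ all iterates vanish and there is nothing to prove, so assume $\mathbf y\ne\mathbf0$. Let $\widetilde{\mathbf z}_{\ell-1}$ be the true forward image of $\mathbf z_0$ through the first $\ell-1$ layers. By Definition~\ref{def:prefix-suffix}, applied to the complexification, $f^\star_{\mathbb C}(\mathbf z_0)=(F_\ell)_{\mathbb C}(\widetilde{\mathbf z}_{\ell-1})$. Both sides are polynomial identities in $\mathbf z_0$ that hold on $\mathbb R^d$, so they also hold on $\mathbb C^d$.

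Write $\boldsymbol\Delta:=\widetilde{\mathbf z}_{\ell-1}-\mathbf y\in\mathbb C^{r_{\ell-1}}$. Under \eqref{eq:suffix-simplified-smallness}, which is \eqref{eq:prefix-simplified-smallness}, Lemma~\ref{lem:approximate-prefix-inversion} gives
\[
\|\boldsymbol\Delta\|_2\le C\kappa^{O(\ell)}r^{\ell/2}\varepsilon_{<\ell}\|\mathbf y\|_2 .
\]
Since $G_\ell(\mathbf z)=F_\ell(\mathbf y)$ is real and $\widehat G_\ell=\operatorname{Re}f^\star_{\mathbb C}(\mathbf z_0)$, we have $|\widehat G_\ell(\mathbf z)-G_\ell(\mathbf z)|\le|(F_\ell)_{\mathbb C}(\mathbf y+\boldsymbol\Delta)-F_\ell(\mathbf y)|$.

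Next we apply Lemma~\ref{lem:relative-homogeneous-perturbation} with $G=F_\ell$, $q=k^{L-\ell+1}$ and $p=F_\ell^{1/q}$. Here $F_\ell$ is positive definite because $\mathbf W_\ell$ has full column rank and $h^{(\ell)}$ is positive definite, using the lower bound in the proof of Lemma~\ref{lem:sampling-body-radii}. It is even and convex by Assumption~\ref{ass:input-convexity}. Hence $p$ is a norm.

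The key step is to convert Euclidean errors into $p$-errors in a way that keeps the constants at $\kappa^{O(\ell)}$. Following the proof of Lemma~\ref{lem:sampling-body-radii} (with $\ell-1$ in place of $\ell$, and using $\sigma_{\max}(\mathbf W_\ell)\le\kappa_0$),
\[
p(\mathbf v)\le\|\mathbf W_\ell^\top\mathbf v\|_2\le\kappa\|\mathbf v\|_2
\quad\text{and}\quad
p(\mathbf v)\ge\mu^{O(1)}r_\ell^{-1/2}\|\mathbf W_\ell^\top\mathbf v\|_2 .
\]
The lower bound is only useful for $\mathbf y$ in $\operatorname{col}(\mathbf W_\ell)$. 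Our $\mathbf y=\widehat{\mathbf B}_\ell\mathbf z$ lies within angle $\varepsilon_B\le1/2$ of that space, so $\|\mathbf W_\ell^\top\mathbf y\|_2\ge\sigma_{\min}(\mathbf C_\ell^\sharp)\|\mathbf z\|_2\ge\kappa^{-O(1)}\sqrt{r_\ell/r_{\ell-1}}\,\|\mathbf y\|_2$ by Lemma~\ref{lem:reduced-weight-conditioning}. Combining these, and using $p_{\mathbb C}(\mathbf u+i\mathbf v)\le\kappa(\|\mathbf u\|+\|\mathbf v\|)\le\sqrt2\kappa\|\boldsymbol\Delta\|_2$, gives
\[
\frac{p_{\mathbb C}(\boldsymbol\Delta)}{p(\mathbf y)}\le\kappa^{O(1)}\sqrt{r_{\ell-1}}\,\frac{\|\boldsymbol\Delta\|_2}{\|\mathbf y\|_2}.
\]

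This step is the main obstacle. The extra factor $\sqrt{r_{\ell-1}}$ must either be absorbed into the $r^{\ell/2}$ exponent or removed. I would first try to remove it by redoing the prefix-inversion recursion so that it tracks the error in the norm $p$ (equivalently in $\|\mathbf W_\ell^\top\cdot\|_{2k}$-type quantities) rather than in $\ell_2$. If that fails, I would instead absorb it by noting that $P_{\ell-1}\sqrt{r_{\ell-1}}\le r^{1/2+\vartheta_k(\ell-1)+1/2}$, and that $\vartheta_k<1/2$ leaves slack of $r^{(\ell-1)/(2k)}$. With $k\ge4$ and $\ell\ge2$ this slack does not obviously cover $r^{1/2}$ in every case, so it should be checked; if needed, the statement's $\kappa^{O(\ell)}r^{\ell/2}$ is tight only up to this bookkeeping.

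Once the ratio is at most $1$, which follows from the smallness condition after shrinking $c_{k,L}$, Lemma~\ref{lem:relative-homogeneous-perturbation} yields
\[
|(F_\ell)_{\mathbb C}(\mathbf y+\boldsymbol\Delta)-F_\ell(\mathbf y)|\le C_q\,\frac{p_{\mathbb C}(\boldsymbol\Delta)}{p(\mathbf y)}\,F_\ell(\mathbf y),
\]
which gives \eqref{eq:suffix-simplified-error}.
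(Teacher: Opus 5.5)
\textbf{There is a genuine gap, exactly where you flagged it.} As written, your argument does not deliver the factor $r^{\ell/2}$.

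\textbf{The exponent.} You quote the simplified form of Lemma~\ref{lem:approximate-prefix-inversion}, so you are left with
\[
\frac{p_{\mathbb C}(\boldsymbol\Delta)}{p(\mathbf y)}\lesssim\kappa^{O(\ell)}\sqrt{r_{\ell-1}}\,r^{\ell/2}\varepsilon_{<\ell}.
\]
Hypothesis \eqref{eq:suffix-simplified-smallness} does not make this at most $1$. So Lemma~\ref{lem:relative-homogeneous-perturbation} is not yet applicable, and you end by conceding the stated exponent may fail.

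The missing step is to use the profile-dependent bound that the same lemma proves:
\[
\|\boldsymbol\Delta\|_2\le C_{k,L}\kappa^{O(\ell)}P_{\ell-1}\varepsilon_{<\ell}\|\mathbf y\|_2,
\qquad
P_{\ell-1}=\sqrt{r/r_{\ell-1}}\prod_{j=1}^{\ell-1}r_j^{\vartheta_k}.
\]
Its smallness hypothesis follows from yours because $P_{\ell-1}\le r^{\ell/2}$. The factor $r_{\ell-1}^{-1/2}$ built into $P_{\ell-1}$ cancels your $\sqrt{r_{\ell-1}}$ exactly:
\[
\sqrt{r_{\ell-1}}\,P_{\ell-1}=\sqrt r\prod_{j=1}^{\ell-1}r_j^{\vartheta_k}\le r^{1/2+\vartheta_k(\ell-1)}\le r^{\ell/2}.
\]
Your estimate $r^{1/2+\vartheta_k(\ell-1)+1/2}$ bounded the two factors separately and lost this cancellation. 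No slack argument or re-done recursion is needed.

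This is precisely the paper's mechanism. There it appears as
\[
\omega_\ell P_{\ell-1}\le C_{k,L}\kappa^4\sqrt{r_\ell}\sqrt{r/r_{\ell-1}}\prod_j r_j^{\vartheta_k}\le C_{k,L}\kappa^4 r^{\ell/2},
\]
using $r_\ell\le r_{\ell-1}$. Using $\operatorname{cond}(\mathbf W_\ell)\le\kappa_0$ instead of the separate bounds $\sigma_{\max}(\mathbf W_\ell)\le\kappa_0$ and $\sigma_{\min}(\mathbf W_\ell)\ge\kappa_0^{-1}\sqrt{r_\ell/r_{\ell-1}}$ would improve your $\sqrt{r_{\ell-1}}$ to $\sqrt{r_\ell}$, matching $\omega_\ell$. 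The profile form of $P_{\ell-1}$ is needed either way.

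\textbf{Positive definiteness.} $F_\ell$ is not positive definite on $\mathbb R^{r_{\ell-1}}$ when $r_\ell<r_{\ell-1}$. Full column rank of $\mathbf W_\ell\in\mathbb R^{r_{\ell-1}\times r_\ell}$ makes $\mathbf W_\ell^\top$ surjective, not injective. Hence $F_\ell=h^{(\ell)}\circ\mathbf W_\ell^\top$ vanishes on $\ker\mathbf W_\ell^\top\neq\{\mathbf 0\}$, and $F_\ell^{1/q}$ is only a seminorm. Lemma~\ref{lem:relative-homogeneous-perturbation} therefore does not apply as stated.

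The repair is to move the perturbation off $\mathbb R^{r_{\ell-1}}$. Either option works:
\begin{itemize}
\item Apply the lemma to $h^{(\ell)}$ at the nonzero real point $\mathbf W_\ell^\top\mathbf y=(\mathbf C_\ell^\sharp)^\top\mathbf z$, with perturbation $\mathbf W_\ell^\top\boldsymbol\Delta$. Your two-sided bounds in terms of $\|\mathbf W_\ell^\top\cdot\|_2$ are exactly what this needs.
\item As the paper does, pull back to reduced coordinates via $\boldsymbol\Delta_\ell:=(\mathbf C_\ell^\sharp)^{-\top}\mathbf W_\ell^\top\mathbf e_{\ell-1}$. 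Then $f^\star_{\mathbb C}(\mathbf z_0)=G_{\ell,\mathbb C}(\mathbf z+\boldsymbol\Delta_\ell)$ with $G_\ell$ positive definite on $\mathbb R^{r_\ell}$. The identity $(\mathbf C_\ell^\sharp)^{-\top}\mathbf W_\ell^\top=\mathbf Q_\ell^{-\top}\mathbf B_\ell^\top$ gives $\|\boldsymbol\Delta_\ell\|_2\le 2\|\mathbf e_{\ell-1}\|_2$. The radii of $K_\ell^\sharp$ from Lemma~\ref{lem:sampling-body-radii} then supply the factor $\omega_\ell$.
\end{itemize}

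\textbf{Minor point.} The degree of $F_\ell$ is $q_\ell=k^{L-\ell}$, not $k^{L-\ell+1}$. This affects only constants.
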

	
	\begin{proof}
	Let
	\[
	\vartheta_k:=\dfrac{k-1}{2k},
	\qquad
	P_{\ell-1}
	:=
	\sqrt{\dfrac r{r_{\ell-1}}}
	\prod_{j=1}^{\ell-1}r_j^{\vartheta_k},
	\qquad
	\mathbf e_{\ell-1}
	:=
	\widetilde{\mathbf z}_{\ell-1}-\mathbf y.
	\]
	The profile-dependent form of
	Lemma~\ref{lem:approximate-prefix-inversion} gives
	\[
	\|\mathbf e_{\ell-1}\|_2
	\le
	C_{k,L}\kappa^{O(\ell)}P_{\ell-1}
	\varepsilon_{<\ell}\|\mathbf y\|_2.
	\]
	Define the induced perturbation in reduced coordinates by
	\[
	\boldsymbol\Delta_\ell
	:=
	(\mathbf C_\ell^\sharp)^{-\top}
	\mathbf W_\ell^\top\mathbf e_{\ell-1}.
	\]
	Then
	\(
	(\mathbf C_\ell^\sharp)^\top\boldsymbol\Delta_\ell
	=
	\mathbf W_\ell^\top\mathbf e_{\ell-1}.
	\)
	Since \(\mathbf y=\widehat{\mathbf B}_\ell\mathbf z\) and
	\(\mathbf C_\ell^\sharp=\widehat{\mathbf
	B}_\ell^\top\mathbf W_\ell\),
	\(
	(\mathbf C_\ell^\sharp)^\top\mathbf z
	=
	\mathbf W_\ell^\top\mathbf y.
	\)
	Consequently,
	\[
	(\mathbf C_\ell^\sharp)^\top
	(\mathbf z+\boldsymbol\Delta_\ell)
	=
	\mathbf W_\ell^\top(\mathbf y+\mathbf e_{\ell-1}).
	\]
	The true forward pass from the approximately inverted
	input reaches
	\(\mathbf y+\mathbf e_{\ell-1}\) immediately before layer
	\(\ell\), and
	hence
	\[
	f^\star_{\mathbb C}(\mathbf z_0)
	=
	G_{\ell,\mathbb C}(\mathbf z+\boldsymbol\Delta_\ell).
	\]

	To bound the reduced perturbation without introducing an
	artificial inverse
	aspect ratio, write
	\[
	\mathbf W_\ell=\mathbf B_\ell\mathbf C_\ell,
	\qquad
	\mathbf Q_\ell:=\widehat{\mathbf B}_\ell^\top\mathbf
	B_\ell.
	\]
	Then
	\(\mathbf C_\ell^\sharp=\mathbf Q_\ell\mathbf C_\ell\),
	and the exact
	identity
	\[
	(\mathbf C_\ell^\sharp)^{-\top}\mathbf W_\ell^\top
	=
	\mathbf Q_\ell^{-\top}\mathbf B_\ell^\top
	\]
	holds. Lemma~\ref{lem:reduced-weight-conditioning} gives
	\(\sigma_{\min}(\mathbf
	Q_\ell)\ge1-\varepsilon_{B,\ell}\ge1/2\),
	so
	\[
	\|\boldsymbol\Delta_\ell\|_2
	\le
	2\|\mathbf e_{\ell-1}\|_2
	\le
	C_{k,L}\kappa^{O(\ell)}P_{\ell-1}
	\varepsilon_{<\ell}\|\mathbf z\|_2.
	\]
	Here we used the orthonormality of \(\widehat{\mathbf
	B}_\ell\), which gives
	\(\|\mathbf y\|_2=\|\mathbf z\|_2\).

	If \(\mathbf z=\mathbf0\), choose the zero \(k\)-th root
	at every step of
	the approximate prefix inversion. Then both sides vanish
	exactly. Henceforth
	assume \(\mathbf z\neq\mathbf0\). Because \(G_\ell\) is
	positive definite,
	convex, even, and \(q_\ell\)-homogeneous,
	\[
	p_\ell(\mathbf z)
	:=
	G_\ell(\mathbf z)^{1/q_\ell}
	\]
	is a norm with unit ball \(K_\ell^\sharp\). The radius
	inclusions imply
	\[
	\dfrac{\|\mathbf z\|_2}{R_{\mathrm{out},\ell}}
	\le
	p_\ell(\mathbf z)
	\le
	\dfrac{\|\mathbf z\|_2}{r_{\mathrm{in},\ell}}.
	\]
	For \(\mathbf w=\mathbf u+i\mathbf v\), define
	\(
	p_{\ell,\mathbb C}(\mathbf w):=p_\ell(\mathbf
	u)+p_\ell(\mathbf v).
	\)
	Then
	\(
	\dfrac{p_{\ell,\mathbb C}(\boldsymbol\Delta_\ell)}
	{p_\ell(\mathbf z)}
	\le
	C_{k,L}\omega_\ell\kappa^{O(\ell)}P_{\ell-1}
	\varepsilon_{<\ell}.
	\)
	Lemma~\ref{lem:sampling-body-radii} and the monotonicity
	\(r_\ell\le r_{\ell-1}\) give
	\[
	\begin{aligned}
	\omega_\ell P_{\ell-1}
	\le
	C_{k,L}\kappa^4
	\sqrt{r_\ell}
	\sqrt{\dfrac r{r_{\ell-1}}}
	\prod_{j=1}^{\ell-1}r_j^{\vartheta_k}
	\le
	C_{k,L}\kappa^4
	\sqrt r\prod_{j=1}^{\ell-1}r_j^{\vartheta_k}
	\le C_{k,L}\kappa^4r^{1/2+\vartheta_k(\ell-1)}
	\le
	C_{k,L}\kappa^4r^{\ell/2}.
	\end{aligned}
	\]
	Thus
	\[
	\dfrac{p_{\ell,\mathbb C}(\boldsymbol\Delta_\ell)}
	{p_\ell(\mathbf z)}
	\le
	C_{k,L}\kappa^{O(\ell)}r^{\ell/2}
	\varepsilon_{<\ell}.
	\]
	Condition~\eqref{eq:suffix-simplified-smallness} makes
	this quantity at most
	one. Lemma~\ref{lem:relative-homogeneous-perturbation}
	yields
	\[
	\dfrac{
	|G_{\ell,\mathbb C}(\mathbf
	z+\boldsymbol\Delta_\ell)-G_\ell(\mathbf z)|
	}{
	G_\ell(\mathbf z)
	}
	\le
	C_{k,L}\kappa^{O(\ell)}r^{\ell/2}
	\varepsilon_{<\ell}.
	\]
	Finally,
	\(\widehat G_\ell(\mathbf
	z)=\operatorname{Re}f^\star_{\mathbb C}(\mathbf z_0)\)
	and \(G_\ell(\mathbf z)\) is real, so taking the real
	part cannot increase
	the absolute error. This proves
	\eqref{eq:suffix-simplified-error}.
	\end{proof}

\section{Recovery of Intermediate Hidden Layers}
\label{sec:appendix-intermediate-layers}
This section establishes the inductive recovery step for each intermediate hidden layer, combining column-space estimation with convex-body sampling and spectral diagonalization.
\subsection{Stable Recovery of the Intermediate-Layer Column Space}

We first show that suitably chosen query points and directional derivatives recover the column space of the next hidden weight matrix despite errors in the previously recovered prefix.

	\begin{proposition}[Stable recovery of the intermediate-layer column space]
		\label{prop:stable-column-space}
		Fix \(2\le\ell\le L-2\) and \(d\ge2\). Suppose that the
		standing
		assumptions hold and that, after the consistent
		relabeling inherited from
		the previous stages,
		\[
		\max_{1\le j<\ell}
		\|\widehat{\mathbf W}_j-\mathbf W_j\|_2
		\le\varepsilon_W.
		\]
		For any \(\varepsilon_B,\zeta\in(0,1/2)\), it is
		sufficient that
		\begin{equation}
		\varepsilon_W
		\le
		\dfrac{c_{k,L}\varepsilon_B}
		{\kappa^{O(L)}r^{7/2}}
		\label{eq:span-simplified-prefix-condition}
		\end{equation}
		and
		\begin{equation}
		m
		\ge
		C_{k,L}\kappa^{O(L)}
		\dfrac{r^6}{\varepsilon_B^2}
		\log\dfrac{2r}{\zeta}.
		\label{eq:span-simplified-sample-size}
		\end{equation}
		Under these conditions, one can construct an orthonormal
		matrix
		\(\widehat{\mathbf B}_\ell\) such that, with probability
		at least
		\(1-\zeta\), after orthogonal alignment,
		\[
		\|\widehat{\mathbf B}_\ell-\mathbf
		B_\ell\|_2\le\varepsilon_B.
		\]
		The total number of additional real function-value
		queries is at most
		\begin{equation}
		\widetilde O_{k,L}\!\left(
r^7\kappa^{O(L)}\varepsilon_B^{-2}
\right).
		\label{eq:span-simplified-query}
		\end{equation}
	\end{proposition}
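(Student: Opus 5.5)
We will follow Stage~I of the algorithm and reduce the problem to Corollary~\ref{cor:noisy-subspace-recovery}, applied in its empirical second-moment form. The query points \(\mathbf x_i\) are chosen so that the recovered first-layer activations \(\widehat{\mathbf W}_1^\top\mathbf x_i\) are i.i.d.\ uniform on \([1,2]^r\). Concretely, we draw \(\mathbf s_i\sim\operatorname{Unif}([1,2]^r)\) and set \(\mathbf x_i:=\widehat{\mathbf W}_1(\widehat{\mathbf W}_1^\top\widehat{\mathbf W}_1)^{-1}\mathbf s_i\).

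\emph{Step 1: deterministic control at a single point.} On this box all recovered and true prefix activations stay in a compact region bounded away from zero. Each coordinate of \(\boldsymbol\phi_j\) is a convex combination of values in \([1,2^{k^{j}}]\), because the columns are stochastic and nonnegative. Consequently the diagonal factors \(k\,\mathrm{diag}(\mathbf h_{j}^{\odot(k-1)})\) in the prefix Jacobian \(\mathbf J_{\ell,i}=D\boldsymbol\phi_\ell(\mathbf x_i)\) have entries in \([k,C_{k,L}]\). Combining this with Lemma~\ref{lem:weight-singular-values}, \(\mathbf J_{\ell,i}\) has full row rank and \(\sigma_{\min}(\mathbf J_{\ell,i})\ge\kappa^{-O(L)}r^{-O(1)}\). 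A telescoping product-rule bound, together with Lemma~\ref{lem:pseudoinverse-perturbation}, gives \(\|\widehat{\mathbf J}_{\ell,i}-\mathbf J_{\ell,i}\|_2\le\kappa^{O(L)}r^{O(1)}\varepsilon_W\). The chain rule yields \(\nabla f^\star(\mathbf x_i)=\mathbf J_{\ell,i}^\top\mathbf g_i\) with \(\mathbf g_i=\nabla F_\ell(\boldsymbol\phi_\ell(\mathbf x_i))\in\operatorname{col}(\mathbf W_\ell)\). Since \(\mathbf b_i=\widehat{\mathbf J}_{\ell,i}\mathbf J_{\ell,i}^\top\mathbf g_i\) is computed exactly by interpolation, we get
\[
\widehat{\mathbf g}_i-\mathbf g_i=(\widehat{\mathbf J}_{\ell,i}\widehat{\mathbf J}_{\ell,i}^\top)^{-1}\widehat{\mathbf J}_{\ell,i}(\mathbf J_{\ell,i}-\widehat{\mathbf J}_{\ell,i})^\top\mathbf g_i .
\]
This gives the relative error \(\|\widehat{\mathbf g}_i-\mathbf g_i\|_2\le\kappa^{O(L)}r^{a}\varepsilon_W\|\mathbf g_i\|_2\) for an explicit power \(a\). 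We aim for \(a\le 3\), so that condition~\eqref{eq:span-simplified-prefix-condition} makes this relative error at most \(c\,\varepsilon_B/\sqrt r\) times the relevant conditioning factor.

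\emph{Step 2: population conditioning of the gradient moment.} Set \(\mathbf M:=\mathbb E[\mathbf g\mathbf g^\top]=\mathbf W_\ell\mathbb E[\nabla h^{(\ell)}(\mathbf u)\nabla h^{(\ell)}(\mathbf u)^\top]\mathbf W_\ell^\top\), where \(\mathbf u=\mathbf W_\ell^\top\boldsymbol\phi_\ell(\mathbf x)\) is the input to the suffix. The key quantity is the ratio \(\lambda_1(\mathbf M)/\lambda_{r_\ell}(\mathbf M)\), which we want to be \(\kappa^{O(L)}r^{O(1)}\). The upper bound is direct, since all activations are bounded. For the lower bound, every coordinate of \(\nabla h^{(\ell)}(\mathbf u)\) is a positive polynomial whose coordinates are comparable within factors \(C_{k,L}\), by the row-balance condition. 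The randomness of \(\mathbf s\) moves \(\mathbf u\) through a set that contains a box of side \(\gtrsim\kappa^{-O(L)}r^{-O(1)}\) inside the image. For any unit \(\mathbf c\), the variance of \(\mathbf c^\top\nabla h^{(\ell)}(\mathbf u)\) along each coordinate direction is then bounded below by anticoncentration of a nonconstant polynomial on an interval (a Remez-type inequality). This yields \(\lambda_{\min}\gtrsim\kappa^{-O(L)}r^{-O(1)}\lambda_{\max}\). \textbf{This step is the main obstacle}, because it requires a quantitative spanning argument replacing the almost-sure statement of Proposition~\ref{lem:first-layer-column-space}. If the Remez route fails, we would instead fall back on lower bounding \(\mathbf c^\top\mathbf M\mathbf c\) via the Euler identity \(\mathbf u^\top\nabla h^{(\ell)}(\mathbf u)=q_\ell h^{(\ell)}(\mathbf u)\), combined with a small random perturbation of \(\mathbf u\).

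\emph{Step 3: concentration and conclusion.} Since \(\|\mathbf g\mathbf g^\top\|_2/\|\mathbf M\|_2\le C_{k,L}\kappa^{O(L)}r\), matrix Bernstein, exactly as in Lemma~\ref{lem:empirical-two-moments}, shows that \(m\) samples as in~\eqref{eq:span-simplified-sample-size} yield \(\|\frac1m\sum\mathbf g_i\mathbf g_i^\top-\mathbf M\|_2\le\frac{\varepsilon_B}{4}\lambda_{r_\ell}(\mathbf M)\). Step 1 controls the additional perturbation caused by replacing \(\mathbf g_i\) with \(\widehat{\mathbf g}_i\) by the same amount. Because \(\operatorname{rank}\mathbf M=r_\ell\), Davis--Kahan applied to the top-\(r_\ell\) eigenspace, followed by the Procrustes step of Lemma~\ref{lem:noisy-subspace-recovery}, gives \(\|\widehat{\mathbf B}_\ell-\mathbf B_\ell\|_2\le\varepsilon_B\). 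For the query count, each \(\widehat{\mathbf g}_i\) needs \(r_{\ell-1}\le r\) directional derivatives, each costing \(O(k^{L})\) queries. This totals \(O_{k,L}(mr)\), which is \(\widetilde O_{k,L}(r^7\kappa^{O(L)}\varepsilon_B^{-2})\) as claimed in~\eqref{eq:span-simplified-query}.
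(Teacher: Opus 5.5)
Your Steps~1 and~3 match the paper's proof closely: query points $\widehat{\mathbf R}_1\mathbf s_i$, the identity $\widehat{\mathbf g}_i-\mathbf g_i=(\widehat{\mathbf J}_{\ell,i}^\top)^\dagger(\mathbf J_{\ell,i}-\widehat{\mathbf J}_{\ell,i})^\top\mathbf g_i$, matrix Bernstein, Davis--Kahan with Procrustes alignment, and $O_{k,L}(r)$ queries per point. The gap is Step~2, which you flag as the obstacle, leave unproved, and which carries the whole content of the proposition.

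A lower bound on $\lambda_{r_\ell}(\mathbf M)$ needs a \emph{quantitative} nondegeneracy of $\mathbf s\mapsto\mathbf q(\mathbf s):=\nabla h^{(\ell)}(\mathbf u(\mathbf s))$, and none of your ingredients supplies it:
\begin{itemize}
\item Entrywise positivity of $\nabla h^{(\ell)}$ does not stop $\mathbf c^\top\nabla h^{(\ell)}(\mathbf u)$ from being small when $\mathbf c$ changes sign.
\item Comparability of its coordinates within $C_{k,L}$ factors does not follow from row balance, which is only a lower bound. For example, $\mathbf a$ may have one entry near $1-\mu$ and all others equal to $\mu/s$. Even if comparability held, it would not give spanning.
\item A Remez or small-ball inequality for a ``nonconstant'' polynomial yields nothing until you lower-bound the polynomial's oscillation, and that is exactly the quantity at stake.
\item A box inside the image of $\mathbf s\mapsto\mathbf u$ does not control the pushforward law of $\mathbf u$, which is what $\mathbf M$ integrates against.
\item Euler's identity controls only the single direction $\mathbf u$.
\end{itemize}

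The missing idea is a uniform positive-definiteness bound for the suffix Hessian. Write $c_L,C_L$ for factors $c_{k,L}\kappa^{-O(L)}$ and $C_{k,L}\kappa^{O(L)}$. The paper proves by backward induction over $j=L-1,\ldots,\ell$ that, at hidden states reached from the query box, $\nabla h^{(j)}\ge c_Lr_j^{-1}\mathbf 1$ coordinatewise and $\nabla^2h^{(j)}\succeq c_Lr_j^{-1}\mathbf I$. The induction uses
\[
\nabla^2h^{(j)}(\mathbf u)=k^2\mathbf D_{\mathbf u}\mathbf W_{j+1}\nabla^2h^{(j+1)}(\mathbf v)\mathbf W_{j+1}^\top\mathbf D_{\mathbf u}+k(k-1)\operatorname{diag}\bigl(\mathbf u^{\odot(k-2)}\odot\mathbf W_{j+1}\nabla h^{(j+1)}(\mathbf v)\bigr).
\]
The first term is positive semidefinite. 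The second is bounded below using the gradient bound at level $j+1$, row balance, and the fact that all hidden coordinates stay in $[c_L,C_L]$ on the box.

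Composing with the Jacobian bound $\sigma_{r_\ell}(D_{\mathbf s}\mathbf u)\ge c_L\sqrt{r_\ell/r}$ (your Step~1 with explicit powers) gives $\|\nabla_{\mathbf s}(\mathbf v^\top\mathbf q(\mathbf s))\|_2\ge c_L/\sqrt{rr_\ell}$ pointwise, for every unit $\mathbf v$. The paper then works in the first-layer coordinates $\mathbf s$, where the law is a product measure, not in $\mathbf u$-space. It applies a one-dimensional $L^2$ inverse (Markov-type) inequality for polynomials of degree at most $k^{L-1}$ to each coordinate, conditionally on the others. This inequality bounds derivatives above by the variance, so the pointwise gradient bound becomes a variance bound. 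Summing over the $r$ coordinates gives $\operatorname{Var}(\mathbf v^\top\mathbf q)\ge c_L/(r^2r_\ell)$, hence $\lambda_{r_\ell}(\mathbf M)\ge c_L/(r^2r_{\ell-1})$.

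These explicit powers, together with the envelope $\|\mathbf g\|_2\le C_L$ from $\ell_1$-contraction of column-stochastic maps, produce the exponents $r^{7/2}$, $r^6$ and $r^7$ in the statement. With only $r^{-O(1)}$ in Step~2, your argument could not certify those exponents even once completed.
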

	
\begin{proof}
		Throughout this proof, \( C_L\) and \(
		c_L\) denote
		generic positive quantities that may change from line to
		line and satisfy
		\[
		1\le C_L\le C_{k,L}\kappa^{O(L)},
		\qquad
		c_{k,L}\kappa^{-O(L)}\le c_L\le1.
		\]
		This convention records every use of \(\kappa_0\) and
		\(\mu^{-1}\)
		without suppressing their polynomial dependence.
        If \(r_\ell=r_{\ell-1}\), then \(\mathbf{W}_\ell\)
        is square and invertible by
        Assumption~\ref{ass:conditioning}, so
        \(
        \operatorname{col}(\mathbf{W}_\ell)=\mathbb
        R^{r_{\ell-1}}.
        \)
        Hence no subspace estimation is needed: we may
        simply take
        \(\widehat{\mathbf{B}}_\ell=\mathbf{I}_{r_{\ell-1}}\),
        and the asserted basis error is zero
        after orthogonal alignment. Thus, in the remainder
        of the proof, we may
        assume \(r_\ell<r_{\ell-1}\).
		Condition on the previously recovered matrices
		\(\widehat{\mathbf{W}}_1,\ldots,\widehat{\mathbf{W}}_{\ell-1}\)
		satisfying the assumed error
		bound, with the first-layer signs chosen as in the
		statement. All
		probabilities below are with respect to the fresh
		samples used in this
		stage.
		
		Let
		\[
		\mathcal I=[1,2],
		\qquad
		\mathbf{Z}_1,\ldots,\mathbf{Z}_m
		\overset{\mathrm{i.i.d.}}{\sim}
		\operatorname{Unif}(\mathcal I^{r}),
		\]
		and write
		\(
		\mathbf{T}_i:=\mathbf{Z}_i^{\odot 1/k}.
		\)
		Define
		\[
		\widehat{\mathbf{R}}_1
		:=
		\widehat{\mathbf{W}}_1
		(\widehat{\mathbf{W}}_1^\top\widehat{\mathbf{W}}_1)^{-1},
		\qquad
		\mathbf{x}_i:=\widehat{\mathbf{R}}_1\mathbf{T}_i.
		\]
		By Lemma~\ref{lem:weight-singular-values} and Weyl's
		inequality,
		\(\widehat{\mathbf{W}}_1\) has full column rank and
		\(
		\|\widehat{\mathbf{R}}_1\|_2\le  C_L
		\)
		whenever \(\varepsilon_W\) is sufficiently small. Hence
		\(
		\|\mathbf{x}_i\|_2
		\le
		 C_L\sqrt r.
		\)
		Moreover,
		\[
		\widehat{\mathbf{W}}_1^\top \mathbf{x}_i=\mathbf{T}_i,
		\qquad
		\sigma(\widehat{\mathbf{W}}_1^\top
		\mathbf{x}_i)=\mathbf{Z}_i.
		\]
		Thus the query points are obtained by inverting only the
		first layer.
		
		We first establish uniform regularity of the true prefix
		on these query
		points. Set
		\(
		\mathbf{A}:=\mathbf{W}_1^\top\widehat{\mathbf{R}}_1.
		\)
		Since
		\(
		\widehat{\mathbf{W}}_1^\top\widehat{\mathbf{R}}_1=\mathbf{I}_{r},
		\)
		we have
		\[
		\mathbf{A}-\mathbf{I}_{r}
		=
		(\mathbf{W}_1-\widehat{\mathbf{W}}_1)^\top\widehat{\mathbf{R}}_1,
		\]
		and therefore
		\(
		\|\mathbf{A}-\mathbf{I}_{r}\|_2
		\le
		 C_L\varepsilon_W.
		\)
		For \(\mathbf{T}\in[1,2^{1/k}]^{r}\),
		\(
		\|(\mathbf{A}-\mathbf{I}_{r})\mathbf{T}\|_\infty
		\le
		 C_L\sqrt r\,\varepsilon_W.
		\)
		Since the assumed upper bound on \(\varepsilon_W\)
		implies
		\(\sqrt r\,\varepsilon_W\le  c_L\), after
		decreasing
		\( c_L\) if necessary,
		\(
		\dfrac12
		\le
		(\mathbf{A}\mathbf{T})_s
		\le
		3,
		1\le s\le r.
		\)
		For \(\mathbf{T}\in[1,2^{1/k}]^{r}\), define the true
		forward maps
		\[
		\mathbf{H}_1(\mathbf{T}):=\mathbf{A}\mathbf{T},
		\qquad
		\boldsymbol{\Phi}_2(\mathbf{T}):=\mathbf{H}_1(\mathbf{T})^{\odot
		k},
		\]
		and recursively, for \(2\le j\le \ell-1\),
		\(\mathbf{H}_j(\mathbf{T}):=\mathbf{W}_j^\top\boldsymbol{\Phi}_j(\mathbf{T}),
		\boldsymbol{\Phi}_{j+1}(\mathbf{T}):=\mathbf{H}_j(\mathbf{T})^{\odot
		k}.
		\)
		Thus
		\(\boldsymbol{\Phi}_\ell(\mathbf{T})=\boldsymbol{\phi}_\ell(\widehat{\mathbf{R}}_1\mathbf{T}).
		\)
		Since every \(\mathbf{W}_j\), \(j\ge2\), is nonnegative
		and satisfies
		\(
		\mathbf{W}_j^\top\mathbf1=\mathbf1,
		\)
		each coordinate of \(\mathbf{W}_j^\top \mathbf{v}\) is a
		convex combination of the
		coordinates of \(\mathbf{v}\). Therefore, by induction,
		there exist constants
		\(0< c_L< C_L<\infty\) such that
		\(
		 c_L
		\le
		(\mathbf{H}_j(\mathbf{T}))_s
		\le
		 C_L
		\)
		for every \(1\le j\le \ell-1\), every admissible
		coordinate \(s\), and
		every \(\mathbf{T}\in[1,2^{1/k}]^{r}\).		Consequently, if
		\[
		\mathbf{D}_j(\mathbf{T})
		:=
		\operatorname{diag}
		\left(
		\mathbf{H}_j(\mathbf{T})^{\odot(k-1)}
		\right),
		\qquad
		1\le j\le\ell-1,
		\]
		then
		\(
		 c_L \mathbf{I}
		\preceq
		\mathbf{D}_j(\mathbf{T})
		\preceq
		 C_L \mathbf{I}.
		\)
		Furthermore,
		\[
		D\boldsymbol{\Phi}_\ell(\mathbf{T})
		=
		k^{\ell-1}
		\mathbf{D}_{\ell-1}(\mathbf{T})\mathbf{W}_{\ell-1}^\top
		\cdots
		\mathbf{D}_2(\mathbf{T})\mathbf{W}_2^\top
		\mathbf{D}_1(\mathbf{T})\mathbf{A}.
		\]
		By Lemma~\ref{lem:weight-singular-values},
		\(
		\sigma_{\min}(\mathbf W_j)
		\ge
		\dfrac1{\kappa_0}
		\sqrt{\dfrac{r_j}{r_{j-1}}},
		\|\mathbf W_j\|_2\le\kappa_0,
		2\le j\le\ell-1,
		\)
		and \(\mathbf A\) has all its singular values in a fixed
		neighborhood of
		one. For compatible full-row-rank matrices
		\(\mathbf P\in\mathbb R^{m\times n}\) and
		\(\mathbf Q\in\mathbb R^{n\times p}\),
		\(
		\sigma_m(\mathbf P\mathbf Q)
		\ge
		\sigma_m(\mathbf P)\sigma_n(\mathbf Q),
		\)
		since
		\[
		\mathbf P\mathbf Q\mathbf Q^\top\mathbf P^\top
		\succeq
		\sigma_n(\mathbf Q)^2\mathbf P\mathbf P^\top.
		\]
		Applying this inequality successively to the preceding
		product, and using
		the telescoping product of aspect ratios, gives
		\[
		\sigma_{r_{\ell-1}}
		\left(D\boldsymbol\Phi_\ell(\mathbf T)\right)
		\ge
		 c_L
		\prod_{j=2}^{\ell-1}
		\sqrt{\dfrac{r_j}{r_{j-1}}}
		=
		 c_L\sqrt{\dfrac{r_{\ell-1}}r}.
		\]
		
		Define
		\(
		\mathbf U_\ell(\mathbf T)
		:=
		\mathbf W_\ell^\top\boldsymbol\Phi_\ell(\mathbf T).
		\)
		Then
		\(
		D\mathbf U_\ell(\mathbf T)
		=
		\mathbf W_\ell^\top D\boldsymbol\Phi_\ell(\mathbf T),
		\)
		and another application of the same product inequality
		yields
		\[
		\sigma_{r_\ell}
		\left(D\mathbf U_\ell(\mathbf T)\right)
		\ge
		 c_L
		\sqrt{\dfrac{r_\ell}{r_{\ell-1}}}
		\sqrt{\dfrac{r_{\ell-1}}r}
		=
		 c_L\sqrt{\dfrac{r_\ell}r}.
		\]
		
We also need uniform positivity of the hidden states
appearing in the
		suffix network. For \(\ell\le j\le L-2\), define
		recursively
		\[
		\mathbf{U}_{j+1}(\mathbf{T})
		:=
		\mathbf{W}_{j+1}^\top \mathbf{U}_j(\mathbf{T})^{\odot
		k}.
		\]
		The bound on \(\boldsymbol{\Phi}_\ell(\mathbf{T})\),
		together with the fact that
		\(\mathbf{W}_\ell^\top\mathbf1=\mathbf1\), first gives
		\(
		 c_L
		\le
		(\mathbf{U}_\ell(\mathbf{T}))_s
		\le
		 C_L.
		\)
		If the same bounds hold for
		\(\mathbf{U}_j(\mathbf{T})\), then each coordinate of
		\(\mathbf{U}_{j+1}(\mathbf{T})\) is a convex combination
		of the coordinates of
		\(\mathbf{U}_j(\mathbf{T})^{\odot k}\). Hence, after
		adjusting the constants depending only
		on \(k,L\),
		\(
		 c_L
		\le
		(\mathbf{U}_j(\mathbf{T}))_s
		\le
		 C_L,
		\qquad
		\ell\le j\le L-1,
		\)
		uniformly in \(\mathbf{T}\in[1,2^{1/k}]^{r}\).
		
		We require a uniform lower bound for the Hessian of the
		suffix network.
		Recall that
		\(
		F_\ell(\mathbf y)
		=
		h^{(\ell)}(\mathbf W_\ell^\top\mathbf y).
		\)
		We claim that, for every \(\ell\le j\le L-1\),
		\(
		\nabla h^{(j)}(\mathbf U_j(\mathbf T))
		\ge
		\dfrac{ c_L}{r_j}\mathbf1
		\)
		coordinatewise and
		\(
		\nabla^2h^{(j)}(\mathbf U_j(\mathbf T))
		\succeq
		\dfrac{ c_L}{r_j}\mathbf I
		\)
		uniformly in \(\mathbf T\). For \(j=L-1\),
		\(
		h^{(L-1)}(\mathbf u)
		=
		\mathbf a^\top\mathbf u^{\odot k},
		\)
		so
		\(
		\nabla h^{(L-1)}(\mathbf u)
		=
		k\,\mathbf a\odot\mathbf u^{\odot(k-1)},
		\)
		and
		\(
		\nabla^2h^{(L-1)}(\mathbf u)
		=
		k(k-1)\operatorname{diag}
		\left(\mathbf a\odot\mathbf u^{\odot(k-2)}\right).
		\)
		The coordinates of \(\mathbf U_{L-1}(\mathbf T)\) are
		uniformly bounded
		below, and Assumption~\ref{ass:conditioning} at
		\(\ell=L\) gives
		\(a_i\ge\mu/r_{L-1}\), proving the claim at \(j=L-1\).

		Suppose it holds at \(j+1\). Write
		\[
		\mathbf u:=\mathbf U_j(\mathbf T),
		\qquad
		\mathbf v:=\mathbf U_{j+1}(\mathbf T)
		=
		\mathbf W_{j+1}^\top\mathbf u^{\odot k},
		\qquad
		\mathbf D_{\mathbf u}
		:=
		\operatorname{diag}(\mathbf u^{\odot(k-1)}).
		\]
		By the chain rule,
		\[
		\nabla h^{(j)}(\mathbf u)
		=
		k\mathbf D_{\mathbf u}\mathbf W_{j+1}
		\nabla h^{(j+1)}(\mathbf v).
		\]
		Since
		\(
		\mathbf W_{j+1}\mathbf1_{r_{j+1}}
		\ge
		\mu\dfrac{r_{j+1}}{r_j}\mathbf1_{r_j},
		\)
		the induction hypothesis gives
		\(
		\mathbf W_{j+1}\nabla h^{(j+1)}(\mathbf v)
		\ge
		\dfrac{ c_L}{r_j}\mathbf1.
		\)
		Together with the uniform positive lower bound on
		\(\mathbf u\), this
		proves the gradient claim. A second differentiation
		gives
		\[
		\begin{aligned}
		\nabla^2h^{(j)}(\mathbf u)
		=
		k^2\mathbf D_{\mathbf u}\mathbf W_{j+1}
		\nabla^2h^{(j+1)}(\mathbf v)
		\mathbf W_{j+1}^\top\mathbf D_{\mathbf u}+
		k(k-1)\operatorname{diag}\!\left(
		\mathbf u^{\odot(k-2)}\odot
		\mathbf W_{j+1}\nabla h^{(j+1)}(\mathbf v)
		\right).
		\end{aligned}
		\]
		The first term is positive semidefinite, while the
		second is bounded below
		by \( c_L r_j^{-1}\mathbf I\), proving the
		claim.

		Define
		\(
		\mathbf q_\ell(\mathbf T)
		:=
		\nabla h^{(\ell)}(\mathbf U_\ell(\mathbf T))
		\in\mathbb R^{r_\ell}.
		\)
		By the chain rule,
		\(
		D\mathbf q_\ell(\mathbf T)
		=
		\nabla^2h^{(\ell)}(\mathbf U_\ell(\mathbf T))
		D\mathbf U_\ell(\mathbf T).
		\)
		The preceding Hessian and Jacobian lower bounds imply
		\[
		\sigma_{r_\ell}\!\left(D\mathbf q_\ell(\mathbf T)\right)
		\ge
		\dfrac{ c_L}{r_\ell}
		\sqrt{\dfrac{r_\ell}r}
		=
		\dfrac{ c_L}{\sqrt{r r_\ell}},
		\]
		uniformly over \(\mathbf T\in[1,2^{1/k}]^r\). This pointwise differential lower bound yields a
population
		second-moment lower bound. Let
		\(\mathbf{T}=\mathbf{Z}^{\odot1/k}\), where
		\(
		\mathbf{Z}\sim\operatorname{Unif}([1,2]^{r}).
		\)
		The coordinates of \(\mathbf{T}\) are independent and
		have density
		\(
		\rho(t)=kt^{k-1},
		1\le t\le2^{1/k}.
		\)
		In particular, \(\rho\) is bounded above and below by
		positive constants
		depending only on \(k\). Let \(D_{k,L}\) be an upper bound, depending only on
		\(k,L\), for the
		degree of each coordinate of
		\(\mathbf{q}_\ell(\mathbf{T})\) as a polynomial in
		\(\mathbf{T}\).
		For example, one may take
		\(
		D_{k,L}=k^{L-1}.
		\)
		Set
		\(
		I_k:=[1,2^{1/k}],
		h_k:=2^{1/k}-1,
		\)
		and define the explicit constant
		\[
		C_{\mathrm{inv}}(k,L)
		:=
		\dfrac{
		2^{(k-1)/k}
		D_{k,L}(D_{k,L}+1)^2(D_{k,L}+2)
		}{
		(2^{1/k}-1)^2
		}.
		\]
		With the concrete choice \(D_{k,L}=k^{L-1}\), this
		becomes
		\[
		C_{\mathrm{inv}}(k,L)
		=
		\dfrac{
		2^{(k-1)/k}
		k^{L-1}(k^{L-1}+1)^2(k^{L-1}+2)
		}{
		(2^{1/k}-1)^2
		}.
		\]
		We claim that every univariate polynomial \(p\) of
		degree at most
		\(D_{k,L}\) satisfies
		\[
		\int_{I_k}
		|p'(t)|^2\rho(t)\,dt
		\le
		C_{\mathrm{inv}}(k,L)
		\int_{I_k}
		\left|
		p(t)-\int_{I_k}p(s)\rho(s)\,ds
		\right|^2
		\rho(t)\,dt.
		\]
		To prove this, let
		\[
		q(t)
		:=
		p(t)-\int_{I_k}p(s)\rho(s)\,ds.
		\]
		Then \(q'=p'\) and \(\deg q\le D_{k,L}\). Make the
		affine change of
		variables
		\[
		t
		=
		1+\dfrac{h_k}{2}(x+1),
		\qquad
		Q(x):=q(t),
		\qquad
		-1\le x\le1.
		\]
		Write \(D:=D_{k,L}\), let \(P_n\) denote the \(n\)-th
		Legendre
		polynomial, and set
		\[
		\varphi_n(x)
		:=
		\sqrt{\dfrac{2n+1}{2}}\,P_n(x),
		\qquad
		0\le n\le D.
		\]
		Then \(\{\varphi_0,\ldots,\varphi_D\}\) is an
		orthonormal basis for the
		polynomials of degree at most \(D\) in \(L^2([-1,1])\).
		Write
		\(
		Q=\sum_{n=0}^D a_n\varphi_n.
		\)
		The Legendre identity
		\[
		P_n'(x)-P_{n-2}'(x)
		=
		(2n-1)P_{n-1}(x),
		\qquad n\ge2,
		\]
		together with \(P_1'=P_0\), gives by iteration
		\[
		P_n'
		=
		\sum_{\substack{0\le j\le n-1\\ n-j\ {\rm odd}}}
		(2j+1)P_j.
		\]
		Using the orthogonality relation
		\(
		\int_{-1}^1P_j(x)^2\,dx
		=
		\dfrac{2}{2j+1},
		\)
		we obtain
		\[
		\|P_n'\|_{L^2(-1,1)}^2
		=
		n(n+1),
		\qquad
		\|\varphi_n'\|_{L^2(-1,1)}^2
		=
		\dfrac{(2n+1)n(n+1)}{2}.
		\]
		Hence, by the triangle inequality followed by
		Cauchy--Schwarz,
		\[
		\begin{aligned}
		\|Q'\|_{L^2(-1,1)}^2
		&\le
		\left(
		\sum_{n=0}^D
		|a_n|
		\|\varphi_n'\|_{L^2(-1,1)}
		\right)^2
		\\
		&\le
		\left(\sum_{n=0}^D a_n^2\right)
		\left(
		\sum_{n=0}^D
		\|\varphi_n'\|_{L^2(-1,1)}^2
		\right)
		\\
		&=
		\dfrac{D(D+1)^2(D+2)}{4}
		\|Q\|_{L^2(-1,1)}^2.
		\end{aligned}
		\]
		Rescaling from \([-1,1]\) back to \(I_k\) therefore
		gives
		\[
		\int_{I_k}|q'(t)|^2\,dt
		\le
		\dfrac{
		D(D+1)^2(D+2)
		}{
		h_k^2
		}
		\int_{I_k}|q(t)|^2\,dt.
		\]
		Finally, on \(I_k\),
		\(
		k
		\le
		\rho(t)=kt^{k-1}
		\le
		k\,2^{(k-1)/k}.
		\)
		Consequently,
		\[
		\begin{aligned}
		\int_{I_k}|q'(t)|^2\rho(t)\,dt
		&\le
		k\,2^{(k-1)/k}
		\int_{I_k}|q'(t)|^2\,dt
		\\
		&\le
		C_{\mathrm{inv}}(k,L)
		k\int_{I_k}|q(t)|^2\,dt
		\\
		&\le
		C_{\mathrm{inv}}(k,L)
		\int_{I_k}|q(t)|^2\rho(t)\,dt,
		\end{aligned}
		\]
		which proves the claimed one-dimensional inverse
		inequality.
		
		Fix \(\mathbf v\in\mathbb R^{r_\ell}\) with
		\(\|\mathbf v\|_2=1\), and set
		\(
		p_{\mathbf v}(\mathbf T):=\mathbf v^\top\mathbf
		q_\ell(\mathbf T).
		\)
		Then
		\[
		\nabla p_{\mathbf v}(\mathbf T)
		=
		D\mathbf q_\ell(\mathbf T)^\top\mathbf v,
		\qquad
		\|\nabla p_{\mathbf v}(\mathbf T)\|_2
		\ge
		\dfrac{ c_L}{\sqrt{r r_\ell}}.
		\]
		For each \(1\le s\le r\), condition on all coordinates
		of \(\mathbf T\)
		except \(T_s\) and apply the preceding one-dimensional
		inequality to the
		polynomial \(T_s\mapsto p_{\mathbf v}(\mathbf T)\).
		Since the conditional
		law of \(T_s\) is still \(\rho(t)\,dt\),
		\[
		\mathbb E[(\partial_s p_{\mathbf v}(\mathbf T))^2]
		\le
		C_{\mathrm{inv}}(k,L)
		\operatorname{Var}(p_{\mathbf v}(\mathbf T)).
		\]
		Summing over \(s\) gives
		\[
		\mathbb E\|\nabla p_{\mathbf v}(\mathbf T)\|_2^2
		\le
		C_{\mathrm{inv}}(k,L)r
		\operatorname{Var}(p_{\mathbf v}(\mathbf T)).
		\]
		On the other hand,
		\[
		\mathbb E\|\nabla p_{\mathbf v}(\mathbf T)\|_2^2
		\ge
		\dfrac{ c_L}{r r_\ell}.
		\]
		Therefore, after absorbing the constant
		\(C_{\mathrm{inv}}(k,L)\) into \( c_L\),
		\(
		\operatorname{Var}(p_{\mathbf v}(\mathbf T))
		\ge
		\dfrac{ c_L}{r^2r_\ell}.
		\)
		Since \(\mathbf v\) was arbitrary,
		\(
		\operatorname{Cov}(\mathbf q_\ell(\mathbf T))
		\succeq
		\dfrac{ c_L}{r^2r_\ell}
		\mathbf I_{r_\ell}.
		\)
		Consequently,
		\(
		\mathbb E[
		\mathbf q_\ell(\mathbf T)\mathbf q_\ell(\mathbf T)^\top]
		\succeq
		\dfrac{ c_L}{r^2r_\ell}
		\mathbf I_{r_\ell}.
		\)
		For
		\[
		\mathbf g_\ell(\mathbf T)
		:=
		\nabla F_\ell(\boldsymbol\Phi_\ell(\mathbf T))
		=
		\mathbf W_\ell\mathbf q_\ell(\mathbf T),
		\]
		define the population moment
		\(
		\mathbf M_\ell
		:=
		\mathbb E[
		\mathbf g_\ell(\mathbf T)\mathbf g_\ell(\mathbf
		T)^\top].
		\)
		Then
		\(
		\mathbf M_\ell
		=
		\mathbf W_\ell
		\mathbb E[
		\mathbf q_\ell(\mathbf T)\mathbf q_\ell(\mathbf T)^\top]
		\mathbf W_\ell^\top,
		\)
		and therefore
		\(
		\operatorname{col}(\mathbf M_\ell)
		=
		\operatorname{col}(\mathbf W_\ell),
		\)
		while Lemma~\ref{lem:weight-singular-values} gives
		\[
		\lambda_{r_\ell}(\mathbf M_\ell)
		\ge
		\dfrac{ c_L}{r^2r_\ell}
		\dfrac{r_\ell}{r_{\ell-1}}
		=
		\dfrac{ c_L}{r^2r_{\ell-1}},
		\qquad
		\lambda_{r_\ell+1}(\mathbf M_\ell)=0.
		\]
		
For the recovered-gradient error, fix a query point
		\(\mathbf x=\widehat{\mathbf R}_1\mathbf T\), let
		\(
		\mathbf J_\ell(\mathbf x):=D\boldsymbol\phi_\ell(\mathbf
		x),
		\)
		and let \(\widehat{\mathbf J}_\ell(\mathbf x)\) denote
		the Jacobian obtained
		by replacing the prefix matrices by their recovered
		versions. We prove
		simultaneously by induction that
		\begin{equation}
		\max_{1\le j<\ell}
		\|\widehat{\mathbf h}_j(\mathbf x)-\mathbf h_j(\mathbf
		x)\|_2
		\le
		 C_L\sqrt r\,\varepsilon_W
		\label{eq:intermediate-hidden-state-error}
		\end{equation}
		and that every coordinate of each true and estimated
		hidden state up to
		layer \(\ell-1\) lies in a fixed interval
		\([- C_L, C_L]\). At the first layer,
		\(
		\widehat{\mathbf h}_1(\mathbf x)=\mathbf T,
		\mathbf h_1(\mathbf x)=\mathbf A\mathbf T,
		\)
		so
		\[
		\|\widehat{\mathbf h}_1(\mathbf x)-\mathbf h_1(\mathbf
		x)\|_2
		\le
		\varepsilon_W\|\mathbf x\|_2
		\le
		 C_L\sqrt r\,\varepsilon_W.
		\]
		Suppose the claims hold through layer \(j-1\). The map
		\(t\mapsto t^k\) is Lipschitz on the fixed interval
		containing the true
		and estimated coordinates, and
		\(\|\sigma(\widehat{\mathbf h}_{j-1})\|_2\le
		C_L\sqrt r\).
		Using \(\|\mathbf W_j\|_2\le\kappa_0\) gives
		\[
		\|\widehat{\mathbf h}_j(\mathbf x)-\mathbf h_j(\mathbf
		x)\|_2
		\le
		 C_L\sqrt r\,\varepsilon_W.
		\]
		The prefix condition makes \(\sqrt r\,\varepsilon_W\)
		sufficiently
		small, which also preserves the fixed coordinate
		interval and closes the
		induction.

		For \(1\le j<\ell\), write
\[
\mathbf D_j(\mathbf x)
:=
\operatorname{diag}
\left(
\mathbf h_j(\mathbf x)^{\odot(k-1)}
\right),
\qquad
\widehat{\mathbf D}_j(\mathbf x)
:=
\operatorname{diag}
\left(
\widehat{\mathbf h}_j(\mathbf x)^{\odot(k-1)}
\right).
\]
The true and recovered prefix Jacobians are
\[
\mathbf J_\ell(\mathbf x)
=
k^{\ell-1}
\mathbf D_{\ell-1}\mathbf W_{\ell-1}^\top
\cdots
\mathbf D_1\mathbf W_1^\top
\]
and
\[
\widehat{\mathbf J}_\ell(\mathbf x)
=
k^{\ell-1}
\widehat{\mathbf D}_{\ell-1}\widehat{\mathbf W}_{\ell-1}^\top
\cdots
\widehat{\mathbf D}_1\widehat{\mathbf W}_1^\top.
\]
On the fixed coordinate interval established above, the derivative of
\(t\mapsto t^{k-1}\) is uniformly bounded. Hence
\[
\|\widehat{\mathbf D}_j-\mathbf D_j\|_2
\le
 C_L
\|\widehat{\mathbf h}_j-\mathbf h_j\|_\infty
\le
 C_L\sqrt r\,\varepsilon_W.
\]
For compatible matrix products,
\[
\mathbf A_m\cdots\mathbf A_1
-
\mathbf B_m\cdots\mathbf B_1
=
\sum_{s=1}^m
\mathbf A_m\cdots\mathbf A_{s+1}
(\mathbf A_s-\mathbf B_s)
\mathbf B_{s-1}\cdots\mathbf B_1.
\]
All remaining factors in the two Jacobian products have operator norm
at most \( C_L\). Applying the preceding identity gives
\[
\|\widehat{\mathbf J}_\ell(\mathbf x)
-
\mathbf J_\ell(\mathbf x)\|_2
\le
 C_L\sqrt r\,\varepsilon_W.
\]

Since
\(
\boldsymbol\Phi_\ell(\mathbf T)
=
\boldsymbol\phi_\ell(\widehat{\mathbf R}_1\mathbf T),
\mathbf x=\widehat{\mathbf R}_1\mathbf T,
\)
the chain rule gives
\(
D\boldsymbol\Phi_\ell(\mathbf T)
=
\mathbf J_\ell(\mathbf x)\widehat{\mathbf R}_1.
\)
Using \(\|\widehat{\mathbf R}_1\|_2\le C_L\) and the lower
bound already obtained for
\(D\boldsymbol\Phi_\ell(\mathbf T)\),
\[
\begin{aligned}
\sigma_{r_{\ell-1}}(\mathbf J_\ell(\mathbf x))
\ge
\dfrac{
\sigma_{r_{\ell-1}}(D\boldsymbol\Phi_\ell(\mathbf T))
}{
\|\widehat{\mathbf R}_1\|_2
}
\ge
 c_L
\sqrt{\dfrac{r_{\ell-1}}r}.
\end{aligned}
\]
The prefix condition makes
\(
\|\widehat{\mathbf J}_\ell-\mathbf J_\ell\|_2
\le
\dfrac12
\sigma_{r_{\ell-1}}(\mathbf J_\ell).
\)
Weyl's inequality therefore yields
\(
\sigma_{r_{\ell-1}}(\widehat{\mathbf J}_\ell(\mathbf x))
\ge
 c_L
\sqrt{\dfrac{r_{\ell-1}}r},
\)
after adjusting the constants \( c_L, C_L\). Recovering the full gradient
\(\nabla f^\star(\mathbf x)\in\mathbb R^d\)
is unnecessary. For
\(1\le j\le r_{\ell-1}\), define
\(
\mathbf v_j(\mathbf x)
:=
\widehat{\mathbf J}_\ell(\mathbf x)^\top
\mathbf e_j
\in\mathbb R^d.
\)
By Subsection~\ref{subsec:trick},
the directional derivative
\(
b_j(\mathbf x)
:=
D_{\mathbf v_j(\mathbf x)}
f^\star(\mathbf x)
\)
can be recovered exactly using \(O_{k,L}(1)\)
real function-value queries. Moreover,
\[
b_j(\mathbf x)
=
\mathbf e_j^\top
\widehat{\mathbf J}_\ell(\mathbf x)
\nabla f^\star(\mathbf x).
\]
Thus, with
\[
\mathbf b(\mathbf x)
:=
(b_1(\mathbf x),\ldots,
b_{r_{\ell-1}}(\mathbf x))^\top,
\]
we recover exactly
\[
\mathbf b(\mathbf x)
=
\widehat{\mathbf J}_\ell(\mathbf x)
\nabla f^\star(\mathbf x)
\]
using \(O_{k,L}(r_{\ell-1})\) real
function-value queries. Define
\[
\widehat{\mathbf g}_\ell(\mathbf x)
:=
(\widehat{\mathbf J}_\ell
\widehat{\mathbf J}_\ell^\top)^{-1}
\mathbf b(\mathbf x).
\]
Equivalently,
\[
\widehat{\mathbf g}_\ell(\mathbf x)
=
(\widehat{\mathbf J}_\ell
\widehat{\mathbf J}_\ell^\top)^{-1}
\widehat{\mathbf J}_\ell
\nabla f^\star(\mathbf x).
\]
		As in the original chain-rule calculation,
		\[
		\widehat{\mathbf g}_\ell-\mathbf g_\ell
		=
		(\widehat{\mathbf J}_\ell^\top)^\dagger
		(\mathbf J_\ell-\widehat{\mathbf J}_\ell)^\top
		\mathbf g_\ell.
		\]
		Hence
		\[
		\|\widehat{\mathbf g}_\ell(\mathbf x)-\mathbf
		g_\ell(\mathbf x)\|_2
		\le
		 C_L
		\dfrac r{\sqrt{r_{\ell-1}}}
		\varepsilon_W
		\|\mathbf g_\ell(\mathbf x)\|_2.
		\]

		It remains to bound the gradient envelope without an
		output-weight ratio.
		At the final suffix layer, the fixed coordinate bounds
		and
		\(\|\mathbf a\|_1=1\) give
		\[
		\|\nabla h^{(L-1)}(\mathbf U_{L-1})\|_1\le C_L.
		\]
		For every nonnegative vector \(\mathbf v\), column
		normalization gives
		\[
		\|\mathbf W_j\mathbf v\|_1
		=
		\mathbf1^\top\mathbf W_j\mathbf v
		=
		(\mathbf W_j^\top\mathbf1)^\top\mathbf v
		=
		\|\mathbf v\|_1.
		\]
		The diagonal derivative factors are uniformly bounded,
		so backward
		induction yields
		\[
		\|\mathbf q_\ell(\mathbf T)\|_2
		\le
		\|\mathbf q_\ell(\mathbf T)\|_1
		\le
		 C_L,
		\qquad
		\|\mathbf g_\ell(\mathbf T)\|_2\le C_L.
		\]
		It follows that, uniformly in \(i\),
		\[
		\left\|
		\widehat{\mathbf g}_\ell(\mathbf T_i)
		\widehat{\mathbf g}_\ell(\mathbf T_i)^\top
		-
		\mathbf g_\ell(\mathbf T_i)\mathbf g_\ell(\mathbf
		T_i)^\top
		\right\|_2
		\le
		 C_L
		\dfrac r{\sqrt{r_{\ell-1}}}\varepsilon_W.
		\]
		
		Define
		\(
		\mathbf{M}_{\ell,m}
		:=
		\dfrac1m
		\sum_{i=1}^m
		\mathbf{g}_\ell(\mathbf{T}_i)\mathbf{g}_\ell(\mathbf{T}_i)^\top
		\)
		and
		\(\widehat{\mathbf{M}}_{\ell,m}
		:=
		\dfrac1m
		\sum_{i=1}^m
		\widehat{\mathbf{g}}_\ell(\mathbf{T}_i)
		\widehat{\mathbf{g}}_\ell(\mathbf{T}_i)^\top.
		\)
		Then deterministically,
		\[
		\|\widehat{\mathbf M}_{\ell,m}-\mathbf M_{\ell,m}\|_2
		\le
		 C_L
		\dfrac r{\sqrt{r_{\ell-1}}}\varepsilon_W.
		\]
		Furthermore,
		\[
		\|\mathbf g_\ell(\mathbf T_i)\mathbf g_\ell(\mathbf
		T_i)^\top\|_2
		\le
		 C_L.
		\]
		Matrix Bernstein's inequality therefore implies that,
		with probability at
		least \(1-\zeta\),
		\[
		\|\mathbf M_{\ell,m}-\mathbf M_\ell\|_2
		\le
		 C_L
		\left(
		\sqrt{\dfrac{\log(r/\zeta)}m}
		+
		\dfrac{\log(r/\zeta)}m
		\right).
		\]
		The assumed sample size makes this at most
		\(
		\dfrac{ c_L\varepsilon_B}{r^2r_{\ell-1}}.
		\)
		Similarly, because \(r_{\ell-1}\le r\), the prefix
		condition gives
		\(
		 C_L
		\dfrac r{\sqrt{r_{\ell-1}}}\varepsilon_W
		\le
		\dfrac{ c_L\varepsilon_B}{r^2r_{\ell-1}}.
		\)
		Consequently,
		\(
		\|\widehat{\mathbf M}_{\ell,m}-\mathbf M_\ell\|_2
		\le
		\dfrac{ c_L\varepsilon_B}{r^2r_{\ell-1}}.
		\)
		Since
		\(
		\lambda_{r_\ell}(\mathbf M_\ell)
		\ge
		\dfrac{ c_L}{r^2r_{\ell-1}},
		\lambda_{r_\ell+1}(\mathbf M_\ell)=0,
		\)
		Davis--Kahan's theorem yields
		\(
		\left\|\widehat{\mathbf{B}}_\ell\widehat{\mathbf{B}}_\ell^\top
		-
		\mathbf{B}_\ell \mathbf{B}_\ell^\top
		\right\|_2
		\le
		 C_L\varepsilon_B,
		\)
		where \(\widehat{\mathbf{B}}_\ell\) consists of the
		eigenvectors corresponding to
		the \(r_\ell\) largest eigenvalues of
		\(\widehat{\mathbf{M}}_{\ell,m}\).
		For two orthonormal bases of subspaces with projector
		distance at most
		$\eta<1$, an orthogonal alignment $\mathbf{R}$ exists
		with
		\[
		\|\widehat{\mathbf{B}}_\ell
		\mathbf{R}-\mathbf{B}_\ell\|_2
		\le
		\sqrt2\,
		\|\widehat{\mathbf{B}}_\ell\widehat{\mathbf{B}}_\ell^\top-\mathbf{B}_\ell
		\mathbf{B}_\ell^\top\|_2.
		\]
		After decreasing the constant in the assumptions and
		absorbing $\mathbf{R}$ into the
		choice of recovered basis, this gives
		\(
		\|\widehat{\mathbf{B}}_\ell-\mathbf{B}_\ell\|_2
		\le
		\varepsilon_B.
		\)
		Finally, for each sample \(\mathbf x_i\), the vector
\(
\widehat{\mathbf J}_\ell(\mathbf x_i)
\nabla f^\star(\mathbf x_i)
\in\mathbb R^{r_{\ell-1}}
\)
is recovered coordinatewise from directional derivatives.
By Subsection~\ref{subsec:trick}, this uses
\(O_{k,L}(r_{\ell-1})\le O_{k,L}(r)\)
real function-value queries per sample.
Together with
\eqref{eq:span-simplified-sample-size}, the total number
of additional queries is therefore
\[
\widetilde O_{k,L}\!\left(
r^7\kappa^{O(L)}\varepsilon_B^{-2}
\right).
\]
	\end{proof}
	
\subsection{Stable Recovery Within the Recovered Subspace}

Given an accurate basis for the intermediate-layer column space, we next use the simulated suffix oracle, convex-body sampling, and two-matrix diagonalization to recover the individual hidden directions and hence the full weight matrix.

	\begin{proposition}[Stable recovery by convex-body sampling]
		\label{prop:stable-convex-body-recovery}
		Fix \(2\le\ell\le L-2\), \(d\ge2\), and
		\(\varepsilon,\zeta\in(0,1/2)\). Suppose that, after
		consistent
		relabeling,
		\(
		\max_{1\le j<\ell}
		\|\widehat{\mathbf W}_j-\mathbf W_j\|_2
		\le\varepsilon_W,
		\|\widehat{\mathbf B}_\ell-\mathbf B_\ell\|_2
		\le\varepsilon_B,
		\)
		where the second inequality is after orthogonal
		alignment. Set
		\(
		\alpha:=1+\dfrac1L,
		\mathcal L(\varepsilon,\gamma,\zeta)
		:=
		1+
		\log\left(e+
		\dfrac{\gamma r^{13/2}}
		{\varepsilon\zeta}
		\right).
		\)
		It is sufficient that
		\begin{equation}
		\varepsilon_B
		\le
		c_{k,L}\dfrac{\varepsilon}{\kappa^2r}
		\label{eq:cb-simplified-basis-condition}
		\end{equation}
		and
		\begin{equation}
		\varepsilon_W
		\le
		\dfrac{
		c_{k,L}(\varepsilon/\gamma)^\alpha
		}{
		\kappa^{O(L)}
		r^{\,\ell/2+12+10/L}
		\mathcal L(\varepsilon,\gamma,\zeta)^\alpha
		}.
		\label{eq:cb-simplified-prefix-condition}
		\end{equation}
		Then
		\begin{equation}
		\widetilde O_{k,L}\!\left(
		\dfrac{\gamma^2r^{14}\kappa^{O(L)}}{\varepsilon^2}
		\right)
		\label{eq:cb-simplified-query}
		\end{equation}
		additional real function-value queries suffice to
		recover
		\(\widehat{\mathbf W}_\ell\) satisfying, with
		probability at least
		\(1-\zeta\),
		\(
		\|\widehat{\mathbf W}_\ell-\mathbf
		W_\ell\|_2\le\varepsilon
		\)
		after a suitable relabeling of the recovered columns.
	\end{proposition}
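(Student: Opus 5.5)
The plan is to run ASPIRE inside the recovered subspace and chain the already-established lemmas in the order the algorithm uses them. First, \eqref{eq:cb-simplified-basis-condition} gives $\varepsilon_B\le1/2$, so Lemma~\ref{lem:reduced-weight-conditioning} yields $\operatorname{cond}(\mathbf C_\ell^\sharp)\le3\kappa_0$. Lemma~\ref{lem:sampling-body-radii} then gives $\omega_\ell\le C_{k,L}\kappa^4\sqrt r$. Next, Lemma~\ref{lem:multiplicative-suffix-oracle} turns the prefix error into a multiplicative reduced-oracle error $\xi_\ell\le C_{k,L}\kappa^{O(\ell)}r^{\ell/2}\varepsilon_W$; the smallness \eqref{eq:suffix-simplified-smallness} is implied by \eqref{eq:cb-simplified-prefix-condition}.

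With the oracle error in hand, fix the target moment accuracy $\eta:=c_{k,L}\varepsilon/(\gamma\kappa^{O(1)}r^{4})$ and the total-variation target $\nu:=c_{k,L}\kappa^{-8}\eta r^{-2}$.
\begin{itemize}
\item[(i)] Apply Lemma~\ref{lem:noisy-hit-and-run} with $\delta=1/L$, so that $1+\delta=\alpha$, $n=r_\ell\le r$, and $q=q_\ell$. It requires $\xi_\ell\lesssim\nu^{\alpha}r^{-(4+3/L)}\omega^{-(4+2/L)}\log(2\omega/\nu)^{-\alpha}$. Substituting $\omega^2\lesssim\kappa^8r$ and the value of $\nu\asymp\varepsilon\gamma^{-1}r^{-6}$ gives a requirement of the form $\varepsilon_W\lesssim(\varepsilon/\gamma)^\alpha\kappa^{-O(L)}r^{-\ell/2-6\alpha-4-3/L-2-1/L}\mathcal L^{-\alpha}$. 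The exponent should be at most $\ell/2+12+10/L$, which is \eqref{eq:cb-simplified-prefix-condition}, with the log factor bounded by $\mathcal L$.
\item[(ii)] The same prefix condition gives $\xi_\ell\le c\kappa^{-8}\eta r^{-5/2}$, since this is weaker than (i). Lemma~\ref{lem:empirical-two-moments} then yields relative $\eta$-accurate moments from $m=\widetilde O(\kappa^8r^2\eta^{-2})=\widetilde O(\gamma^2\kappa^{O(1)}r^{10}\varepsilon^{-2})$ endpoints. Each endpoint costs $\widetilde O(\kappa^8r^4)$ oracle calls, plus $O(q_\ell r_\ell)$ calls for its gradient. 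Each reduced-oracle call costs $O_{k,L}(1)$ real queries via the complex-evaluation trick. The total is \eqref{eq:cb-simplified-query}.
\end{itemize}

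For the spectral step, Lemma~\ref{lem:population-moment-scales} gives the pencil structure $\boldsymbol\Sigma_x=\mathbf C^{-\top}\boldsymbol\Lambda\mathbf C^{-1}$ and $\boldsymbol\Sigma_g=\mathbf C\mathbf D\mathbf C^\top$ with $\mathbf C=\mathbf C_\ell^\sharp$. It also gives $\operatorname{cond}(\mathbf V)\le9\kappa^2$ and $\rho_\ell\le C\kappa^8r^3$, and Assumption~\ref{ass:spectral-gap} supplies the gap $\gamma^{-1}$. Here it matters that the $\theta_i$ are defined on $H_\ell$, which is exactly the law of $\mathbf U=\mathbf C^\top\mathbf Z$, so they are independent of $\widehat{\mathbf B}_\ell$. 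Lemma~\ref{lem:noisy-two-matrix} then gives normalized eigenvectors with $\|\widehat{\mathbf v}_i-\mathbf v_i\|\le C\gamma\kappa^{12}r^3\eta\le c\,\varepsilon/r$, where $\mathbf v_i=\mathbf c^\sharp_i/\|\mathbf c^\sharp_i\|$.

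Finally, map back to the original coordinates. We have $\widehat{\mathbf B}_\ell\mathbf c_i^\sharp=\widehat{\mathbf B}_\ell\widehat{\mathbf B}_\ell^\top\mathbf w_{\ell,i}$, which differs from $\mathbf w_{\ell,i}$ by at most $2\varepsilon_B\|\mathbf w_{\ell,i}\|$. So $\widehat{\mathbf u}_i$ is within $O(\varepsilon_B+\varepsilon/r)$ in direction of $\pm\mathbf w_{\ell,i}/\|\mathbf w_{\ell,i}\|_2$. The coordinatewise absolute value is $1$-Lipschitz and removes the sign, because $\mathbf w_{\ell,i}\ge0$. The $\ell_1$ normalization map $\mathbf u\mapsto\mathbf u/\mathbf 1^\top\mathbf u$ has Lipschitz constant $O(\sqrt{r}\,\|\mathbf u\|_2/\|\mathbf u\|_1)$ near a nonnegative unit vector. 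Since $\|\mathbf w\|_2\ge\|\mathbf w\|_1/\sqrt{r}$, the $\ell_1$ norm of the unit direction is at least $1$, which gives a factor of at most $O(\sqrt r)$. Converting columnwise errors to operator norm costs another $\sqrt r$. The result is $\|\widehat{\mathbf W}_\ell-\mathbf W_\ell\|_2\le C\,r(\varepsilon_B+r^3\gamma\kappa^{12}\eta)\cdot\kappa^{O(1)}\le\varepsilon$, using \eqref{eq:cb-simplified-basis-condition}; the $\kappa^2$ in that condition absorbs the constants. A union bound over the concentration event and Lemma~\ref{lem:empirical-two-moments} gives probability $1-\zeta$. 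The TV errors of the independent chains are already inside $\nu$.

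The main obstacle is step (i): tracking exponents so that the noisy Hit-and-Run requirement, with its $\nu^{1+\delta}$ dependence and the $n^{4+3\delta}\omega^{4+2\delta}$ factors, lands within $r^{\ell/2+12+10/L}$ with the stated $\alpha$ power on $\varepsilon/\gamma$. I would first compute the exponent with $\delta=1/L$ exactly, and then check that $\eta$ and $\nu$ can be fixed independently of $\xi$ to avoid circularity.
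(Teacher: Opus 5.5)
Your proposal is correct and follows the paper's proof essentially step for step. It uses the same choices $\eta\asymp\varepsilon/(\gamma\kappa^{O(1)}r^4)$ and $\nu\asymp\kappa^{-8}\eta r^{-2}$, the noisy Hit-and-Run lemma with $\delta=1/L$ (your exponent comes out to exactly $\ell/2+12+10/L$), the empirical-moment and two-matrix lemmas, and the same $O(\sqrt r)\cdot O(\sqrt r)$ loss when mapping back through $\widehat{\mathbf B}_\ell$ and $\ell_1$-normalizing. Two small points: the mapping-back step incurs no $\kappa$ factor, so the trailing $\kappa^{O(1)}$ in your final bound is unnecessary; and $\mathbf 1^\top\mathbf u_i^\star\ge1$ follows from $\|\mathbf w_{\ell,i}\|_2\le\|\mathbf w_{\ell,i}\|_1=1$, not from the reverse inequality $\|\mathbf w\|_2\ge\|\mathbf w\|_1/\sqrt r$ that you cite.
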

	
	\begin{proof}
		By the preceding approximate-prefix and simulated-suffix
		lemmas, the
		available suffix oracle satisfies
		\[
		|\widehat G_\ell(\mathbf z)-G_\ell(\mathbf z)|
		\le
		\xi_\ell G_\ell(\mathbf z),
		\qquad
		\xi_\ell
		\le
		C_{k,L}\kappa^{O(\ell)}r^{\ell/2}\varepsilon_W.
		\]
		Fix an absolute integer \(A_0\ge20\), large enough to
		dominate the fixed
		conditioning exponents in
		Lemmas~\ref{lem:population-moment-scales} and
		\ref{lem:noisy-two-matrix}, and set
		\(
		\eta
		:=
		c_{k,L}
		\dfrac{\varepsilon}{\gamma\kappa^{A_0L}r^4},
		\nu
		:=
		c_{k,L}
		\dfrac{\varepsilon}{\gamma\kappa^{A_0L+8}r^6}.
		\)
		Combining the assumed bound on \(\varepsilon_W\) with
		the preceding
		bound on \(\xi_\ell\) gives
		\[
		\xi_\ell
		\le
		\dfrac{
		c_{k,L}\nu^{1+1/L}
		}{
		\kappa^{O(L)}
		r^{6+4/L}
		\left(
		\log\dfrac{C_{k,L}\sqrt r}{\nu}
		\right)^{1+1/L}
		}.
		\]
		Since
		\(
		r_\ell\le r,
		q_\ell=O_{k,L}(1),
		\omega_\ell\le C_{k,L}\kappa^4\sqrt r,
		\)
		this is sufficient for Lemma~\ref{lem:noisy-hit-and-run}
		with
		\(\delta=1/L\) and target total-variation error \(\nu\).
		It also implies
		\[
		\xi_\ell
		\le
		c_{k,L}\kappa^{-8}\eta r^{-5/2}.
		\]
		Thus Lemma~\ref{lem:empirical-two-moments} yields, from
		\[
		\begin{aligned}
		m
		=
		O_{k,L}\!\left(
		\kappa^8r^2\eta^{-2}
		\log\dfrac{2r}{\zeta}
		\right)=
		O_{k,L}\!\left(
		\dfrac{\gamma^2r^{10}\kappa^{2A_0L+8}}
		{\varepsilon^2}
		\log\dfrac{2r}{\zeta}
		\right)
		\end{aligned}
		\]
		independent endpoints, moment matrices satisfying
		\[
		\|\widehat{\boldsymbol\Sigma}_{x,\ell}
		-\boldsymbol\Sigma_{x,\ell}\|_2
		\le
		\eta\|\boldsymbol\Sigma_{x,\ell}\|_2,
		\qquad \|\widehat{\boldsymbol\Sigma}_{g,\ell}
		-\boldsymbol\Sigma_{g,\ell}\|_2
		\le
		\eta\|\boldsymbol\Sigma_{g,\ell}\|_2
		\]
		with probability at least \(1-\zeta\). By Lemma~\ref{lem:population-moment-scales},
		\(
		\operatorname{cond}(\mathbf V_\ell)\le9\kappa^2,
		\rho_\ell\le C_{k,L}\kappa^8r^3.
		\)
		Assumption~\ref{ass:spectral-gap} supplies the inverse
		relative gap
		\(\gamma\). Lemma~\ref{lem:noisy-two-matrix} therefore
		implies that the
		normalized eigenvectors of
		\(
		\widehat{\boldsymbol\Sigma}_{g,\ell}
		\widehat{\boldsymbol\Sigma}_{x,\ell}
		\)
		can be labeled so that
		\[
		\min_{s_i\in\{\pm1\}}
		\|\widehat{\mathbf v}_i-s_i\mathbf v_i\|_2
		\le
		C_{k,L}\gamma\kappa^{12}r^3\eta
		\le
		c_{k,L}\dfrac{\varepsilon}{\kappa^4r}.
		\]

		For each recovered direction define
		\(
		\widehat{\mathbf u}_i
		:=
		\widehat{\mathbf B}_\ell\widehat{\mathbf v}_i,
		\widehat{\mathbf w}_{\ell,i}
		:=\dfrac{|\widehat{\mathbf u}_i|}
		{\mathbf1^\top|\widehat{\mathbf u}_i|},
	   \)
		where the absolute value is coordinatewise. Let
		\(
		\mathbf w_i:=\mathbf w_{\ell,i},
		\mathbf u_i^\star:=\dfrac{\mathbf w_i}{\|\mathbf
		w_i\|_2}.
		\)
		Since \(\mathbf W_\ell \ge \mathbf0\), the vector
		\(\mathbf u_i^\star\) is coordinatewise nonnegative. Write
		\(\mathbf P_\ell\) and \(\widehat{\mathbf P}_\ell\) for
		the projectors
		onto the true and recovered column spaces. Because
		\(
		\widehat{\mathbf B}_\ell\mathbf v_i
		=
		\dfrac{\widehat{\mathbf P}_\ell\mathbf w_i}
		{\|\widehat{\mathbf P}_\ell\mathbf w_i\|_2},
		\)
		and
		\(\|\widehat{\mathbf P}_\ell-\mathbf P_\ell\|_2
		\le C\varepsilon_B\), normalization of the projected
		vector gives
		\[
		\min_{s_i\in\{\pm1\}}
		\|s_i\widehat{\mathbf u}_i-\mathbf u_i^\star\|_2
		\le
		C_{k,L}\varepsilon_B
		+
		c_{k,L}\dfrac{\varepsilon}{\kappa^4r}.
		\]
		The coordinatewise inequality
		\(
		\bigl||a|-b\bigr|\le|a-b|
		\)
		for \(b\ge0\) yields the same bound after taking
		absolute values.

		The normalization \(\mathbf1^\top\mathbf w_i=1\) gives
		\(
		\mathbf1^\top\mathbf u_i^\star
		=
		\dfrac1{\|\mathbf w_i\|_2}
		\ge1.
		\)
		Hence, for the constants in the hypotheses sufficiently
		small,
		\[
		\|\widehat{\mathbf w}_{\ell,i}-\mathbf w_i\|_2
		\le
		C_{k,L}\sqrt{r_{\ell-1}}
		\left(
		\varepsilon_B+
		\dfrac{\varepsilon}{\kappa^4r}
		\right).
		\]
		Consequently,
		\[
		\begin{aligned}
		\|\widehat{\mathbf W}_\ell-\mathbf W_\ell\|_2
		\le
		\|\widehat{\mathbf W}_\ell-\mathbf W_\ell\|_F
		\le
		C_{k,L}\sqrt{r_\ell r_{\ell-1}}
		\left(
		\varepsilon_B+
		\dfrac{\varepsilon}{\kappa^4r}
		\right)
		\le
		C_{k,L}r
		\left(
		\varepsilon_B+
		\dfrac{\varepsilon}{\kappa^4r}
		\right)
		\le
		\varepsilon,
		\end{aligned}
		\]
		after decreasing \(c_{k,L}\) if necessary.

		For the query count, each independent endpoint requires,
		by
		Lemma~\ref{lem:noisy-hit-and-run},
		\(
		\widetilde O_{k,L}(\kappa^8r^4)
		\)
		evaluations of the simulated suffix oracle. The gradient
		estimator in
		Lemma~\ref{lem:stable-gradient-from-suffix} requires
		only
		\(O_{k,L}(r)\) further evaluations per endpoint. Each
		simulated suffix
		evaluation is implemented using \(O_{k,L}(1)\) real
		evaluations of
		\(f^\star\). Multiplying by the above value of \(m\)
		gives
		\[
		\widetilde O_{k,L}\!\left(
		\dfrac{\gamma^2r^{14}\kappa^{2A_0L+16}}
		{\varepsilon^2}
		\log\dfrac1\zeta
		\right)
		\]
		additional real function-value queries.
	\end{proof}

\section{Recovery of the Final Hidden Layer and Output Weights}
\label{sec:appendix-final-layer}
This section analyzes the final recovery stage, where a two-Hessian decomposition identifies the last hidden layer and the output weights.
	\begin{proposition}[Stable recovery of the final hidden layer and output weights]
		\label{prop:stable-final-layer}
		Suppose that after the consistent relabeling
		inherited from the
		previous stages,
		\[
		\max_{1\le j\le L-2}
		\|\widehat{\mathbf W}_j-\mathbf W_j\|_2
		\le\varepsilon_W.
		\]
		There exists \(c_{k,L}>0\), independent of the
		conditioning parameters,
		such that, for every \(\varepsilon,\zeta\in(0,1/2)\), it
		is sufficient that
		\begin{equation}
		\varepsilon_W
		\le
		c_{k,L}
		\dfrac{\zeta\varepsilon}
		{r^{L/2+6}\kappa^{O(L)}}.
		\label{eq:final-layer-simplified-prefix-condition}
		\end{equation}
		Then, using \(O_{k,L}(r^2)\) additional real
		function-value queries, one
		can construct \(\widehat{\mathbf W}_{L-1}\) and
		\(\widehat{\mathbf a}\) such that, with probability at
		least
		\(1-\zeta\), after a suitable relabeling,
		\[
		\|\widehat{\mathbf W}_{L-1}-\mathbf W_{L-1}\|_2
		\le\varepsilon,
		\qquad
		\|\widehat{\mathbf a}-\mathbf a\|_1
		\le\varepsilon.
		\]
	\end{proposition}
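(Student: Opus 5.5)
We first analyze the population two-Hessian procedure and then make each step stable. Write $F:=F_{L-1}$, $\mathbf u_i:=\mathbf w_{L-1,i}$, and $n:=r_{L-1}$. Since $F(\mathbf y)=\sum_i a_i(\mathbf u_i^\top\mathbf y)^k$,
\[
\nabla^2F(\mathbf y)=k(k-1)\,\mathbf W_{L-1}\operatorname{diag}\bigl(a_i(\mathbf u_i^\top\mathbf y)^{k-2}\bigr)\mathbf W_{L-1}^\top .
\]
At $\mathbf y^{(2)}=\mathbf 1$ the column normalization gives $\mathbf u_i^\top\mathbf 1=1$, so $\mathbf H_2=k(k-1)\mathbf W_{L-1}\operatorname{diag}(\mathbf a)\mathbf W_{L-1}^\top$. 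This matrix has rank $n$ with column space $\operatorname{col}(\mathbf W_{L-1})$. Its smallest nonzero eigenvalue is at least $k(k-1)\,(\mu/n)\,\sigma_{\min}(\mathbf W_{L-1})^2\gtrsim \kappa^{-O(1)}r^{-2}$, using Lemmas~\ref{lem:weight-singular-values} and~\ref{lem:output-weight-scale}.

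Let $\mathbf B$ be an orthonormal basis of this column space and $\mathbf C:=\mathbf B^\top\mathbf W_{L-1}$. Then
\[
\mathbf M_1\mathbf M_2^{-1}=\mathbf C\operatorname{diag}(\theta_i)\mathbf C^{-1},\qquad \theta_i=(\mathbf u_i^\top\mathbf y^{(1)})^{k-2}=(1+\tau\mathbf u_i^\top\mathbf u)^{k-2}.
\]
The eigenvectors of $\mathbf M_1\mathbf M_2^{-1}$ are therefore the columns of $\mathbf C$. Taking $|\mathbf B\,\cdot|$ and $\ell_1$-normalizing recovers $\mathbf W_{L-1}$, exactly as in Proposition~\ref{prop:stable-convex-body-recovery}. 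Finally,
\[
\mathbf W_{L-1}^\dagger\mathbf H_2\mathbf W_{L-1}^{\dagger\top}=k(k-1)\operatorname{diag}(\mathbf a),
\]
which justifies the formula for $\widehat{\mathbf a}$.

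The randomness in $\mathbf u$ is what creates the eigengap, and this is the step where $\zeta$ enters. The gap is $|\theta_i-\theta_j|\approx(k-2)\tau\,|(\mathbf u_i-\mathbf u_j)^\top\mathbf u|$. The columns are distinct because $\mathbf W_{L-1}$ has full column rank, and $\|\mathbf u_i-\mathbf u_j\|_2\ge\sqrt2\,\sigma_{\min}(\mathbf W_{L-1})\gtrsim\kappa^{-1}r^{-1/2}$. The spherical small-ball estimate \eqref{eq:spherical-small-ball}, combined with a union bound over the $O(r^2)$ pairs, shows that with probability at least $1-\zeta$ every pair satisfies
\[
|(\mathbf u_i-\mathbf u_j)^\top\mathbf u|\gtrsim \zeta\,\kappa^{-1}r^{-3}.
\]
With $\tau$ a fixed small constant, all $\theta_i\in[1/2,2]$, and the relative inverse gap is $\gamma_{\rm fin}\lesssim\kappa r^{3}/\zeta$.

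For stability we proceed in four steps. First, Lemma~\ref{lem:multiplicative-suffix-oracle} (applied with $\ell=L-1$, which the same proof covers) yields suffix queries with multiplicative error $\xi\lesssim\kappa^{O(L)}r^{(L-1)/2}\varepsilon_W$ at points near $\mathbf 1$. Second, fixed-node interpolation of second directional derivatives along $\mathbf e_i$ and $\mathbf e_i+\mathbf e_j$, at $O(1)$ nodes per direction, gives the $O_{k,L}(r^2)$ query count. Each Hessian entry then has error $O(\xi)$, since $F=O(1)$ on a unit-scale neighborhood, so $\|\widehat{\mathbf H}_j-\mathbf H_j\|_2\lesssim r\xi$. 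Third, Davis--Kahan with the $r^{-2}$ eigengap controls $\widehat{\mathbf B}$, giving an error of order $r^3\xi$ after alignment. The error in $\widehat{\mathbf M}_2^{-1}$ follows from Lemma~\ref{lem:pseudoinverse-perturbation}. Fourth, a Bauer--Fike/Riesz-projection argument as in Lemma~\ref{lem:noisy-two-matrix}, with $\operatorname{cond}(\mathbf C)\le3\kappa_0$ and gap $\gamma_{\rm fin}$, gives a per-column error of order $\kappa^{O(1)}r^{O(1)}\xi/\zeta$. Normalization contributes a further $\sqrt r$, and Frobenius-to-operator conversion another $r$. For $\widehat{\mathbf a}$, Lemma~\ref{lem:pseudoinverse-perturbation} bounds $\widehat{\mathbf W}_{L-1}^\dagger$ using $\sigma_{\min}(\mathbf W_{L-1})^{-1}\le\kappa\sqrt{r_{L-2}/r_{L-1}}$, and passing from the diagonal to $\ell_1$ costs a factor $r$.

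We expect the main obstacle to be the exponent bookkeeping: the $\tau$-induced gap, Davis--Kahan, the pencil perturbation, and the $\ell_1$ conversion must together cost only $r^{6}/\zeta$ on top of $r^{L/2}$. If the naive count overshoots, the first remedy is to avoid passing through $\widehat{\mathbf B}$ separately. One works instead with $\widehat{\mathbf H}_1\widehat{\mathbf H}_2^{\dagger}$ restricted to the top eigenspace, and uses the $\ell_1$ structure of $\mathbf W_{L-1}$ (columns with unit $\ell_1$-norm) rather than operator norms wherever possible. Combining these bounds yields the sufficient condition \eqref{eq:final-layer-simplified-prefix-condition} for both conclusions.
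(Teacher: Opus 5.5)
Your construction is the paper's: Hessians at $\mathbf 1$ and $\mathbf 1+\tau\mathbf u$, the top eigenspace of $\widehat{\mathbf H}_2$, the compressed pencil, and an eigengap from the random direction via spherical small ball plus a union bound over pairs. It ends the same way too, with coordinatewise absolute values, $\ell_1$-normalization, and the pseudoinverse formula for $\widehat{\mathbf a}$. The population identities and the $O_{k,L}(r^2)$ query count are fine.

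The gap is quantitative, but it is the substantive content of the statement. You never establish the threshold $\varepsilon_W\lesssim\zeta\varepsilon\, r^{-L/2-6}\kappa^{-O(L)}$: the decisive eigenvector bound is left as $\kappa^{O(1)}r^{O(1)}\xi/\zeta$. Moreover, the estimates you do commit to cannot reach exponent $6$. Take your bound $\lambda_{r_{L-1}}(\mathbf H_2)\gtrsim r^{-2}$, Lemma~\ref{lem:pseudoinverse-perturbation} for $\widehat{\mathbf M}_2^{-1}$ (which amplifies by $\sigma_{\min}(\mathbf M_2)^{-2}$, i.e.\ $r^{4}$ at your scale), and the pair gap $\zeta r^{-3}$. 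Even if no basis error is charged to the pencil, these give an eigenvector error of order $r^{(L+15)/2}\varepsilon_W/\zeta$, whereas the threshold only affords $r^{(L+8)/2}\varepsilon_W/\zeta$. Your suggested remedies are not where the savings come from; the paper does pass through $\widehat{\mathbf B}$. Completed as written, your argument gives a larger power of $r$. That would be harmless for Theorem~\ref{thm:main}, but it is not the stated result.

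Four sharpenings close the count.

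(i) The widths cancel in the eigenvalue bound. $a_{\min}\ge\mu/r_{L-1}$ and $\sigma_{\min}(\mathbf W_{L-1})^2\ge\kappa_0^{-2}r_{L-1}/r_{L-2}$ give $\lambda_{r_{L-1}}(\mathbf H_2)\gtrsim 1/r_{L-2}$, so Davis--Kahan costs only a factor $r$.

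(ii) Set $\mathbf C:=\widehat{\mathbf B}^\top\mathbf W_{L-1}$ using the \emph{estimated} basis. Then $\mathbf M_2:=\widehat{\mathbf B}^\top\mathbf H_2\widehat{\mathbf B}=k(k-1)\mathbf C\operatorname{diag}(\mathbf a)\mathbf C^\top$ and $\mathbf M_1:=\widehat{\mathbf B}^\top\mathbf H_1\widehat{\mathbf B}=k(k-1)\mathbf C\operatorname{diag}(a_i\theta_i)\mathbf C^\top$ hold exactly. The pencil is therefore perturbed only by $\mathbf E_j:=\widehat{\mathbf B}^\top(\widehat{\mathbf H}_j-\mathbf H_j)\widehat{\mathbf B}$, with $\|\mathbf E_j\|_2\le\delta_H\lesssim r^{(L+1)/2}\varepsilon_W$. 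The subspace error enters only through $\operatorname{cond}(\mathbf C)$ (Lemma~\ref{lem:reduced-weight-conditioning}) and when mapping back by $\widehat{\mathbf B}$.

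(iii) Do not perturb the inverse. Write $\widehat{\mathbf M}_1\widehat{\mathbf M}_2^{-1}-\mathbf A=(\mathbf E_1-\mathbf A\mathbf E_2)\widehat{\mathbf M}_2^{-1}$, where $\mathbf A=\mathbf M_1\mathbf M_2^{-1}=\mathbf C\operatorname{diag}(\theta_i)\mathbf C^{-1}$. This $\mathbf A$ satisfies $\|\mathbf A\|_2\le\operatorname{cond}(\mathbf C)\max_i\theta_i=O_k(\kappa)$, even though $\|\mathbf M_2^{-1}\|_2$ is only $O(r_{L-2})$. The cost is one factor $r_{L-2}$ rather than its square.

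(iv) Keep $\|\mathbf u_i-\mathbf u_j\|_2\gtrsim\sqrt{r_{L-1}/r_{L-2}}$ in the small-ball step. This gives $\min_{i\ne j}|(\mathbf u_i-\mathbf u_j)^\top\mathbf u|\gtrsim\zeta/(r_{L-1}^{3/2}r_{L-2})\ge\zeta r^{-5/2}$. Also, Frobenius-to-operator costs $\sqrt r$, not $r$.

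The chain then reads as follows. The eigenvector error is $\lesssim r\delta_H\cdot r^{5/2}/\zeta=r^{L/2+4}\varepsilon_W/\zeta$. Normalization and Frobenius give $E_W:=\|\widehat{\mathbf W}_{L-1}-\mathbf W_{L-1}\|_2\lesssim r^{L/2+5}\varepsilon_W/\zeta$. The output step $\|\widehat{\mathbf a}-\mathbf a\|_1\lesssim r(E_W+\delta_H)$ then forces $E_W\lesssim\varepsilon/r$, which is exactly the factor $r^{L/2+6}$.

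A smaller point: Lemma~\ref{lem:multiplicative-suffix-oracle} does not literally cover $\ell=L-1$. Its proof works in a reduced positive-definite body, which needs a basis you do not have when building the Hessians, and $F_{L-1}$ is degenerate on $\mathbb R^{r_{L-2}}$ when $r_{L-1}<r_{L-2}$. The local bound you need follows directly from Lemma~\ref{lem:approximate-prefix-inversion}, together with $\mathbf u_i^\top\mathbf y\in[c,C]$ on a fixed positive box around $\mathbf 1$.
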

	
	\begin{proof}
		Throughout this proof, \( C_L\) and \(
		c_L\) are
		generic quantities satisfying
		\[
		1\le C_L\le C_{k,L}\kappa^{O(L)},
		\qquad
		c_{k,L}\kappa^{-O(L)}\le c_L\le1.
		\]
		Write
		\[
		n:=r_{L-2},
		\qquad
		s:=r_{L-1},
		\qquad
		\mathbf W=(\mathbf w_1,\ldots,\mathbf w_s),
		\]
		and consider the final suffix
		\(
		F(\mathbf y)
		:=
		F_{L-1}(\mathbf y)
		=
		\sum_{i=1}^s a_i(\mathbf w_i^\top\mathbf y)^k,
		\mathbf y\in\mathbb R^n.
		\)
		Lemma~\ref{lem:output-weight-scale} gives
		\[
		\dfrac\mu s\le a_{\min}\le\dfrac1s\le a_{\max}\le1.
		\]

		We first construct accurate Hessians of \(F\) at two
		points using the
		approximately recovered prefix. In the notation of
		Lemma~\ref{lem:approximate-prefix-inversion}, let
		\(
		\vartheta_k:=\dfrac{k-1}{2k},
		P_{L-2}
		=
		\sqrt{\dfrac r n}
		\prod_{j=1}^{L-2}r_j^{\vartheta_k}.
		\)
		For \(\mathbf y\) in a fixed bounded positive box, the
		simulated suffix
		oracle has the form
		\(
		\widehat F(\mathbf y)
		=
		\operatorname{Re}F_{\mathbb C}(\mathbf y+\mathbf
		e(\mathbf y)),
		\)
		where
		\[
		\|\mathbf e(\mathbf y)\|_2
		\le
		 C_L P_{L-2}\varepsilon_W\|\mathbf y\|_2.
		\]
		On the box considered below, \(\|\mathbf y\|_2\le C\sqrt
		n\), and
		therefore
		\[
		\begin{aligned}
		P_{L-2}\|\mathbf y\|_2
		\le
		C\sqrt r\prod_{j=1}^{L-2}r_j^{\vartheta_k}
		\le
		Cr^{1/2+\vartheta_k(L-2)}
		\le
		Cr^{(L-1)/2}.
		\end{aligned}
		\]
		Thus
		\[
		\|\mathbf e(\mathbf y)\|_2
		\le
		 C_L r^{(L-1)/2}\varepsilon_W.
		\]
		Since \(\mathbf W\ge \mathbf0\), each column has unit
		\(\ell_1\)-norm, and
		all points lie in a fixed positive box, every
		\(\mathbf w_i^\top\mathbf y\) remains in a fixed
		positive interval. The
		identity
		\[
		(z+\Delta)^k-z^k
		=
		\Delta\sum_{j=0}^{k-1}(z+\Delta)^{k-1-j}z^j
		\]
		and \(\sum_i a_i=1\) give
		\(
		|\widehat F(\mathbf y)-F(\mathbf y)|
		\le
		 C_L\|\mathbf e(\mathbf y)\|_2
		\le
		 C_L r^{(L-1)/2}\varepsilon_W.
		\)
		All complex evaluations involved in the simulated suffix
		oracle are
		implemented exactly by real function-value queries.

		Set \(\mathbf y^{(2)}:=\mathbf1_n\). Let
		\(\mathbf u\sim\operatorname{Unif}(S^{n-1})\), choose a
		sufficiently
		small constant \(\tau=\tau_{k,L}>0\), and define
		\[
		\mathbf y^{(1)}:=\mathbf1_n+\tau\mathbf u.
		\]
		Choose fixed interpolation nodes sufficiently close to
		zero so that all
		points used to evaluate directional second derivatives
		around
		\(\mathbf y^{(1)}\) and \(\mathbf y^{(2)}\) remain in
		the same positive
		box. For \(\mathbf y\in\{\mathbf y^{(1)},\mathbf
		y^{(2)}\}\) and
		\(\mathbf v\in\mathbb R^n\), the function
		\(t\mapsto F(\mathbf y+t\mathbf v)\) has degree at most
		\(k\), so its
		second derivative at zero is a fixed linear combination
		of \(k+1\)
		function values. Replacing these values by simulated
		suffix values gives an
		estimator of \(\mathbf v^\top\nabla^2F(\mathbf y)\mathbf
		v\) with error at
		most \( C_L r^{(L-1)/2}\varepsilon_W\).

		Applying this to
		\[
		\mathbf v=\mathbf e_p,
		\qquad
		\mathbf v=\mathbf e_p+\mathbf e_q,
		\qquad
		1\le p<q\le n,
		\]
		and using polarization, we construct symmetric matrices
		\(
		\widehat{\mathbf H}_j,
		\mathbf H_j:=\nabla^2F(\mathbf y^{(j)}),
		j=1,2,
		\)
		satisfying
		\[
		\|\widehat{\mathbf H}_j-\mathbf H_j\|_2
		\le
		\delta_H,
		\qquad
		\delta_H
		:=
		 C_L r^{(L+1)/2}\varepsilon_W.
		\]
		The construction uses \(O_k(n^2)\le O_k(r^2)\) simulated
		suffix
		evaluations, and hence \(O_{k,L}(r^2)\) real
		function-value queries.

		At \(\mathbf y^{(2)}=\mathbf1_n\), column normalization
		gives
		\(\mathbf w_i^\top\mathbf y^{(2)}=1\), and therefore
		\[
		\mathbf H_2
		=
		k(k-1)\mathbf W\operatorname{diag}(\mathbf a)\mathbf
		W^\top.
		\]
		If \(s=n\), then \(\mathbf W\) is square and invertible,
		so no subspace
		estimation is needed and we set \(\widehat{\mathbf
		B}:=\mathbf I_n\). If
		\(s<n\), Lemmas~\ref{lem:weight-singular-values} and
		\ref{lem:output-weight-scale} give
		\[
		\begin{aligned}
		\lambda_s(\mathbf H_2)
		\ge
		k(k-1)a_{\min}\sigma_{\min}(\mathbf W)^2\ge
		 c_L
		\dfrac1s\dfrac s n
		=
		\dfrac{ c_L}{n},
		\qquad
		\lambda_{s+1}(\mathbf H_2)=0.
		\end{aligned}
		\]
		Let \(\widehat{\mathbf B}\in\mathbb R^{n\times s}\)
		contain the
		eigenvectors corresponding to the \(s\) largest
		eigenvalues of
		\(\widehat{\mathbf H}_2\). Davis--Kahan's theorem yields
		\[
		\|\mathbf P_{\widehat{\mathbf B}}-\mathbf P_{\mathbf
		W}\|_2
		\le
		 C_L n\delta_H
		\le
		 C_L r\delta_H.
		\]
		Thus the same projector bound holds in either case. Choose an
orthonormal basis
\[
\mathbf B
\in
\underset{
\substack{
\widetilde{\mathbf B}^{\top}\widetilde{\mathbf B}=\mathbf I_s\\
\operatorname{col}(\widetilde{\mathbf B})
=\operatorname{col}(\mathbf W)
}
}{\arg\min}
\,
\|\widehat{\mathbf B}-\widetilde{\mathbf B}\|_2.
\]
By the standard relation between principal angles and optimally
chosen orthonormal bases,
\[
\|\widehat{\mathbf B}-\mathbf B\|_2
\le
\sqrt{2}\,
\|\mathbf P_{\widehat{\mathbf B}}
-
\mathbf P_{\mathbf W}\|_2
\le
C_L n\delta_H.
\]
In the square case \(s=n\), this convention simply corresponds to
taking
\(
\mathbf B=\widehat{\mathbf B}=\mathbf I_n.
\)
Set
\(
\varepsilon_{B,\mathrm{fin}}
:=
\|\widehat{\mathbf B}-\mathbf B\|_2.
\)
Since
\(
\delta_H=C_Lr^{(L+1)/2}\varepsilon_W,
n\le r,
\)
we have
\(
\varepsilon_{B,\mathrm{fin}}
\le
C_Lr^{(L+3)/2}\varepsilon_W.
\)
Under
\eqref{eq:final-layer-simplified-prefix-condition},
after decreasing \(c_{k,L}\), and enlarging if necessary the
exponent hidden in \(\kappa^{O(L)}\), it follows that
\(
\varepsilon_{B,\mathrm{fin}}\le\frac12.
\)
Define
\(
\mathbf C:=\widehat{\mathbf B}^{\top}\mathbf W
\in\mathbb R^{s\times s}.
\)
Lemma~\ref{lem:reduced-weight-conditioning}, applied to the above
choice of \(\mathbf B\) and \(\widehat{\mathbf B}\), gives
\[
\sigma_{\min}(\mathbf C)
\ge
(1-\varepsilon_{B,\mathrm{fin}})
\sigma_{\min}(\mathbf W)
\ge
\frac{1}{2\kappa_0}\sqrt{\frac{s}{n}},
\]
and
\[
\sigma_{\max}(\mathbf C)
\le
(1+\varepsilon_{B,\mathrm{fin}})
\sigma_{\max}(\mathbf W)
\le
\frac32\kappa_0.
\]
Moreover,
\[
\operatorname{cond}(\mathbf C)
\le
\frac{1+\varepsilon_{B,\mathrm{fin}}}
{1-\varepsilon_{B,\mathrm{fin}}}
\operatorname{cond}(\mathbf W)
\le
3\kappa_0.
\]
Thus, in the generic \(c_L,C_L\) notation used in this proof,
\[
c_L\sqrt{\frac{s}{n}}
\le
\sigma_{\min}(\mathbf C)
\le
\sigma_{\max}(\mathbf C)
\le
C_L,
\qquad
\operatorname{cond}(\mathbf C)\le C_L.
\]

		Define
		\(
		\mathbf M_j:=\widehat{\mathbf B}^\top\mathbf
		H_j\widehat{\mathbf B},
		\widehat{\mathbf M}_j
		:=
		\widehat{\mathbf B}^\top\widehat{\mathbf
		H}_j\widehat{\mathbf B}.
		\)
		Then
		\(\|\widehat{\mathbf M}_j-\mathbf M_j\|_2\le\delta_H\),
		and
		\[
		\mathbf M_2
		=
		k(k-1)\mathbf C\operatorname{diag}(\mathbf a)\mathbf
		C^\top,
		\]
		while
		\[
		\mathbf M_1
		=
		k(k-1)\mathbf C
		\operatorname{diag}\!\left(
		a_i(1+\tau\mathbf w_i^\top\mathbf u)^{k-2}
		\right)
		\mathbf C^\top.
		\]
		Consequently,
		\[
		\mathbf A:=\mathbf M_1\mathbf M_2^{-1}
		=
		\mathbf
		C\operatorname{diag}(\lambda_1,\ldots,\lambda_s)\mathbf
		C^{-1},
		\]
		where
		\(
		\lambda_i
		:=
		(1+\tau\mathbf w_i^\top\mathbf u)^{k-2}.
		\)
		For the spectral separation,
		Lemma~\ref{lem:weight-singular-values} gives
		\(
		\|\mathbf w_i-\mathbf w_j\|_2
		\ge
		 c_L\sqrt{\dfrac s n},
		i\neq j.
		\)
		The spherical small-ball bound implies, for every
		\(i\neq j\),
		\[
		\mathbb P\!\left(
		|(\mathbf w_i-\mathbf w_j)^\top\mathbf u|
		\le
		t\dfrac{\|\mathbf w_i-\mathbf w_j\|_2}{\sqrt n}
		\right)
		\le Ct.
		\]
		Choosing \(t=c\zeta/s^2\) and applying a union bound
		over all pairs gives,
		with probability at least \(1-\zeta\),
		\[
		\min_{i\neq j}
		|(\mathbf w_i-\mathbf w_j)^\top\mathbf u|
		\ge
		 c_L
		\dfrac{\zeta}{s^{3/2}n}
		\ge
		 c_L\dfrac{\zeta}{r^{5/2}}.
		\]
		By the choice of \(\tau\),
		\(1/2\le1+\tau\mathbf w_i^\top\mathbf u\le3/2\). The
		mean-value theorem
		therefore gives, on the same event,
		\[
		\Delta
		:=
		\min_{i\neq j}|\lambda_i-\lambda_j|
		\ge
		 c_L\dfrac{\zeta}{r^{5/2}}.
		\]

		Moreover,
		\[
		\lambda_{\min}(\mathbf M_2)\ge\dfrac{ c_L}{n},
		\qquad
		\|\mathbf A\|_2\le C_L.
		\]
		The prefix condition makes \(\delta_H\le
		c_L/n\), so
		\(\widehat{\mathbf M}_2\) is invertible and
		\(\|\widehat{\mathbf M}_2^{-1}\|_2\le C_L n\).
		Let
		\(\mathbf E_j:=\widehat{\mathbf M}_j-\mathbf M_j\).
		Using
		\(\mathbf A\mathbf M_2=\mathbf M_1\), the perturbation
		admits the
		relative factorization
		\[
		\begin{aligned}
		\mathbf E_{\mathbf A}:=
		\widehat{\mathbf M}_1\widehat{\mathbf M}_2^{-1}
		-\mathbf M_1\mathbf M_2^{-1}
		=
		(\mathbf E_1-\mathbf A\mathbf E_2)
		\widehat{\mathbf M}_2^{-1}.
		\end{aligned}
		\]
		Hence
		\[
		\|\mathbf E_{\mathbf A}\|_2
		\le
		 C_L n\delta_H
		\le
		 C_L r\delta_H.
		\]
		Let \(\mathbf V_{\mathbf C}\) denote the matrix whose
		columns are the
		normalized columns of \(\mathbf C\). The conditioning
		bounds give
		\(\operatorname{cond}(\mathbf V_{\mathbf C})\le
		C_L\). Applying
		the simple-spectrum eigenvector argument used in
		Lemma~\ref{lem:noisy-two-matrix} shows that the
		eigenvectors of
		\(\widehat{\mathbf M}_1\widehat{\mathbf M}_2^{-1}\) may
		be labeled so
		that, after choosing signs,
		\[
		\left\|
		\widehat{\mathbf v}_i-
		\dfrac{\mathbf c_i}{\|\mathbf c_i\|_2}
		\right\|_2
		\le
		 C_L
		\dfrac{r\delta_H}{\Delta}
		\le
		 C_L
		\dfrac{r^{L/2+4}}\zeta\varepsilon_W.
		\]

		Set
		\(
		\widehat{\mathbf u}_i:=\widehat{\mathbf
		B}\widehat{\mathbf v}_i,
		\mathbf u_i^\star:=\dfrac{\mathbf w_i}{\|\mathbf
		w_i\|_2}.
		\)
		Combining the eigenvector and subspace estimates yields
		\[
		\min_{\epsilon_i\in\{\pm1\}}
		\|\epsilon_i\widehat{\mathbf u}_i-\mathbf u_i^\star\|_2
		\le
		 C_L
		\dfrac{r^{L/2+4}}\zeta\varepsilon_W.
		\]
		Since \(\mathbf w_i>\mathbf0\), taking coordinatewise
		absolute values
		preserves this bound. Define
		\(
		\widehat{\mathbf w}_i
		:=
		\dfrac{|\widehat{\mathbf u}_i|}
		{\mathbf1^\top|\widehat{\mathbf u}_i|}.
		\)
		Because
		\(\mathbf1^\top\mathbf u_i^\star=1/\|\mathbf
		w_i\|_2\ge1\),
		\(
		\|\widehat{\mathbf w}_i-\mathbf w_i\|_2
		\le
		 C_L
		\dfrac{r^{L/2+9/2}}\zeta\varepsilon_W.
		\)
		Consequently,
		\(
		E_W
		:=
		\|\widehat{\mathbf W}-\mathbf W\|_2
		\le
		\|\widehat{\mathbf W}-\mathbf W\|_F
		\le
		 C_L
		\dfrac{r^{L/2+5}}\zeta\varepsilon_W
		\le
		c_{k,L}\dfrac\varepsilon r
		\le\varepsilon.
		\)
		The stronger \(\varepsilon/r\) bound is used below.
		Enlarging the exponent
		hidden in \(\kappa^{O(L)}\), it also ensures
		\(
		E_W
		\le
		\dfrac12\sigma_{\min}(\mathbf W),
		\)
		because
		\(\sigma_{\min}(\mathbf W)\ge\kappa_0^{-1}\sqrt{s/n}\ge
		(\kappa_0\sqrt r)^{-1}\). Hence \(\widehat{\mathbf W}\)
		has full column
		rank. To recover \(\mathbf a\), use
		\[
		\mathbf H_2
		=
		k(k-1)\mathbf W\operatorname{diag}(\mathbf a)\mathbf
		W^\top.
		\]
		Define
		\(
		\widehat{\mathbf a}
		:=
		\dfrac1{k(k-1)}
		\operatorname{diag}\!\left(
		\widehat{\mathbf W}^\dagger
		\widehat{\mathbf H}_2
		\widehat{\mathbf W}^{\dagger\top}
		\right).
		\)
		Let
		\(
		\mathbf T_W:=\widehat{\mathbf W}^\dagger\mathbf W.
		\)
		Since \(\widehat{\mathbf W}^\dagger\widehat{\mathbf
		W}=\mathbf I_s\),
		\[
		\mathbf T_W-\mathbf I_s
		=
		-\widehat{\mathbf W}^\dagger
		(\widehat{\mathbf W}-\mathbf W),
		\]
		and therefore
		\[
		\|\mathbf T_W-\mathbf I_s\|_2
		\le
		 C_L\sqrt{\dfrac n s}\,E_W.
		\]
		Using \(a_{\max}\le1\),
		\[
		\|\mathbf T_W\operatorname{diag}(\mathbf a)\mathbf
		T_W^\top
		-\operatorname{diag}(\mathbf a)\|_2
		\le
		 C_L\sqrt{\dfrac n s}\,E_W.
		\]
		The Hessian perturbation contributes
		\[
		\left\|
		\widehat{\mathbf W}^\dagger
		(\widehat{\mathbf H}_2-\mathbf H_2)
		\widehat{\mathbf W}^{\dagger\top}
		\right\|_2
		\le
		 C_L\dfrac n s\delta_H.
		\]
		Taking the \(\ell_1\)-norm of the diagonal and using
		\(\sqrt{ns}\le r\) and \(n\le r\), we obtain
		\[
		\|\widehat{\mathbf a}-\mathbf a\|_1
		\le
		 C_L r(E_W+\delta_H)
		\le
		\varepsilon,
		\]
		where the final inequality follows from the prefix
		condition and the bounds
		for \(E_W\) and \(\delta_H\). This completes the proof.
	\end{proof}

\section{Proof of the Main Theorem}
\label{sec:simplified-conditioning-bookkeeping}

The preceding propositions isolate four sources of
complexity. The input dimension \(d\) in the query
complexity appears only in
the initial recovery of the first-layer feature subspace,
which costs \(O_{k,L}(dr)\) queries. Once this subspace has
been identified, all subsequent gradient information is
accessed through hidden-dimensional directional derivatives,
and all sampling, moment estimation, and prefix-inversion
costs are governed by the maximal hidden width \(r\).
The uniform inverse relative spectral gap contributes powers
of \(\gamma\), while composing \(O(L)\) conditioned stages
produces the factor \(\kappa^{O(L^2)}\).

\begin{proof}[Proof of Theorem~\ref{thm:main}]
	Set
	\(
	\alpha:=1+\dfrac1L,
	p_\star:=\alpha^{L-3}.
	\)
	Since
	\(
	p_\star
	=
	\left(1+1/L\right)^{L-3}
	<
	\left(1+1/L\right)^L
	<
	e,
	\)
	it is enough to prove the sharper query bound with
	accuracy exponent
	\(2p_\star\). Let
	\(
	N:=2L-4,
	\zeta_0:=\dfrac{0.01}{N}.
	\)
	The number \(N\) upper-bounds the randomized recovery
	stages: there is one
	first-layer stage, two stages for every \(2\le\ell\le
	L-2\), and one final
	stage. We will show
	\begin{equation}
	Q_{\mathrm{tot}}
	\le
	\widetilde O_{k,L}\!\left(
	dr+
	r^{L^2+O(L)}
	\gamma^{O(L)}
	\kappa^{O(L^2)}
	\delta^{-2p_\star}
	\right).
	\label{eq:main-pstar-complexity}
	\end{equation}

	\paragraph{Accuracy Schedule.}
	For \(2\le\ell\le L-2\), define
	\(
	\beta_\ell
	:=
	\dfrac\ell2+12+\dfrac{10}{L}.
	\)
	Choose an absolute constant \(A\ge1\) sufficiently large
	that
	\(\kappa^{AL}\) dominates every factor denoted by
	\(\kappa^{O(L)}\) in
	Propositions~\ref{prop:stable-column-space},
	\ref{prop:stable-convex-body-recovery}, and
	\ref{prop:stable-final-layer}. Define the terminal intermediate-layer accuracy by
	\[
	\tau_{L-2}
	:=
	c_{k,L}
	\dfrac{\zeta_0\delta}
	{r^{L/2+6}\kappa^{AL}}.
	\]
	For \(\ell=L-2,L-3,\ldots,2\), define recursively
	\begin{equation}
	\tau_{\ell-1}
	:=
	\dfrac{
	c_{k,L}\tau_\ell^\alpha
	}{
	\gamma^\alpha
	\kappa^{AL}
	r^{\beta_\ell}
	\left[
	1+
	\log\!\left(
	 e+
	\dfrac{\gamma r^{13/2}}
	{\zeta_0\tau_\ell}
	\right)
	\right]^\alpha
	}.
	\label{eq:simplified-layer-accuracy-recurrence}
	\end{equation}
	After decreasing \(c_{k,L}\), if necessary,
	\(
	0<\tau_1\le\tau_2\le\cdots\le\tau_{L-2}<\dfrac12.
	\)
	Indeed, \(\tau_\ell^\alpha\le\tau_\ell\), \(\gamma\ge1\),
	\(\beta_\ell>9/2\), and the logarithmic factor is at
	least one. The
	terminal value is also at most \(1/2\) after the same
	choice of constant.

	\paragraph{First-Layer Recovery.}
	Apply Proposition~\ref{prop:first-layer-recovery} with
	target accuracy
	\(\tau_1\) and failure probability \(\zeta_0\). Its query
	count is
	\[
	\widetilde O_{k,L}\!\left(
	dr+
	\gamma^2r^{13}\kappa^{O(1)}\tau_1^{-2}
	\right).
	\]
	With probability at least \(1-\zeta_0\), the result
	satisfies, after a
	permutation of the columns and a choice of their inherent
	signs,
	\(
	\|\widehat{\mathbf W}_1-\mathbf W_1\|_2\le\tau_1.
	\)
	\paragraph{Intermediate-Layer Recovery.}
	Fix \(2\le\ell\le L-2\) and suppose inductively that,
	after propagating the
	relabelings selected in the preceding stages,
	\(
	\|\widehat{\mathbf W}_j-\mathbf W_j\|_2\le\tau_j,
	1\le j<\ell.
	\)
	Since the sequence \((\tau_j)\) is nondecreasing,
	\[
	\max_{1\le j<\ell}
	\|\widehat{\mathbf W}_j-\mathbf W_j\|_2
	\le
	\tau_{\ell-1}.
	\]
	Set
	\(
	\varepsilon_{B,\ell}
	:=
	c_{k,L}\dfrac{\tau_\ell}{\kappa^2r}.
	\)
	The recurrence, together with
	\(\tau_\ell^\alpha\le\tau_\ell\) and
	\(\beta_\ell>9/2\), gives
	\[
	\tau_{\ell-1}
	\le
	c_{k,L}
	\dfrac{\tau_\ell}{\kappa^{O(L)}r^{9/2}}
	=
	c_{k,L}
	\dfrac{\varepsilon_{B,\ell}}
	{\kappa^{O(L)}r^{7/2}}.
	\]
	Thus Proposition~\ref{prop:stable-column-space} applies
	with target basis
	accuracy \(\varepsilon_{B,\ell}\). With probability at
	least
	\(1-\zeta_0\), it returns an orthonormal matrix
	\(\widehat{\mathbf B}_\ell\) such that, after orthogonal
	alignment,
	\(
	\|\widehat{\mathbf B}_\ell-\mathbf B_\ell\|_2
	\le
	\varepsilon_{B,\ell}.
	\)
	The corresponding query count is
	\[
	\begin{aligned}
	\widetilde O_{k,L}\!\left(
	r^7\kappa^{O(L)}
	\varepsilon_{B,\ell}^{-2}
	\right)
	&=
	\widetilde O_{k,L}\!\left(
	r^9\kappa^{O(L)}
	\tau_\ell^{-2}
	\right).
	\end{aligned}
	\]

	The basis condition in
	Proposition~\ref{prop:stable-convex-body-recovery} is
	satisfied by
	construction. Moreover, recurrence
	\eqref{eq:simplified-layer-accuracy-recurrence} is
	exactly the prefix
	condition in that proposition with target accuracy
	\(\tau_\ell\), failure
	probability \(\zeta_0\), and the uniform inverse-gap
	parameter \(\gamma\).
	Hence the convex-body stage applies, with query count
	\[
	\widetilde O_{k,L}\!\left(
	\gamma^2r^{14}\kappa^{O(L)}\tau_\ell^{-2}
	\right).
	\]
	With probability at least \(1-\zeta_0\), it returns,
	after a suitable
	column relabeling,
	\(
	\|\widehat{\mathbf W}_\ell-\mathbf
	W_\ell\|_2\le\tau_\ell.
	\)
	This closes the induction for every \(2\le\ell\le L-2\).

	\paragraph{Final Hidden Layer and Output Weights Recovery.}
	After recovery through layer \(L-2\),
	\[
	\max_{1\le j\le L-2}
	\|\widehat{\mathbf W}_j-\mathbf W_j\|_2
	\le
	\tau_{L-2}
	=
	c_{k,L}
	\dfrac{\zeta_0\delta}
	{r^{L/2+6}\kappa^{AL}}.
	\]
	By the choice of \(A\), this is the sufficient condition
	in
	Proposition~\ref{prop:stable-final-layer}. That
	proposition uses
	\(O_{k,L}(r^2)\) additional real function-value queries
	and returns, with
	probability at least \(1-\zeta_0\),
	\[
	\|\widehat{\mathbf W}_{L-1}-\mathbf W_{L-1}\|_2\le\delta,
	\qquad
	\|\widehat{\mathbf a}-\mathbf a\|_1\le\delta.
	\]
	Because \(\tau_\ell\le\tau_{L-2}\le\delta\) for
	\(1\le\ell\le L-2\), the intersection of the successful
	events satisfies
	all parameter-recovery guarantees in the theorem. The
	permutations selected
	at successive stages are propagated through adjacent
	weight matrices,
	producing one consistent hidden-unit relabeling. The
	first-layer column signs
	may be selected arbitrarily because \(k\) is even.

	\paragraph{Solving the Accuracy Recurrence.}
	Up to factors suppressed by \(\widetilde O_{k,L}\), write
	\begin{equation}
	\tau_\ell^{-1}
	\le
	\widetilde O_{k,L}\!\left(
	r^{s_\ell}
	\gamma^{g_\ell}
	\kappa^{t_\ell}
	\delta^{-p_\ell}
	\right).
	\label{eq:accuracy-exponent-ansatz}
	\end{equation}
	The terminal values are
	\(
	p_{L-2}=1,
	s_{L-2}=\dfrac L2+6,
	g_{L-2}=0,
	t_{L-2}=AL.
	\)
	The recurrence gives
	\(
	p_{\ell-1}=\alpha p_\ell,
	s_{\ell-1}=\alpha s_\ell+\beta_\ell,
	g_{\ell-1}=\alpha g_\ell+\alpha,
	t_{\ell-1}=\alpha t_\ell+AL.
	\)
	Therefore
	\(
	p_1=\alpha^{L-3}=p_\star.
	\)
	Unrolling the width recurrence yields
	\begin{equation}
	s_1
	=
	p_\star\left(\dfrac L2+6\right)
	+
	\sum_{\ell=2}^{L-2}\alpha^{\ell-2}\beta_\ell.
	\label{eq:s1-expanded}
	\end{equation}
	Set
	\(
	c_L:=12+10/L.
	\)
	Then \(\beta_\ell=\ell/2+c_L\). The geometric identities
	\(
	\sum_{\ell=2}^{L-2}\alpha^{\ell-2}
	=
	L(p_\star-1),
	\)
	and
	\(
	\sum_{\ell=2}^{L-2}\ell\alpha^{\ell-2}
	=
	L^2-L-2Lp_\star
	\)
	give
	\[
	\begin{aligned}
	s_1
	=
	p_\star\left(\dfrac L2+6\right)
	+
	\dfrac12\left(L^2-L-2Lp_\star\right)
	+
	c_L L(p_\star-1)
	=
	\dfrac{L^2}{2}+O(L).
	\end{aligned}
	\]
	Similarly,
	\[
	g_1
	=
	\sum_{j=1}^{L-3}\alpha^j
	=
	(L+1)(p_\star-1)
	=
	O(L),
	\]
	and
	\(
	t_1
	=
	ALp_\star+AL^2(p_\star-1)
	=
	O(L^2).
	\)
	The logarithmic factors generated by the recurrence are
	polynomial in
	\(\log r\), \(\log\gamma\), \(\log(1/\delta)\), and
	\(\log\kappa\). The first three are suppressed by
	\(\widetilde O_{k,L}\), while \(\log\kappa\) is absorbed
	into the displayed
	factor \(\kappa^{O(L^2)}\), since \(\kappa\ge2\).
	Substituting these bounds
	into \eqref{eq:accuracy-exponent-ansatz} gives
	\begin{equation}
	\tau_1^{-1}
	\le
	\widetilde O_{k,L}\!\left(
	r^{L^2/2+O(L)}
	\gamma^{g_1}
	\kappa^{O(L^2)}
	\delta^{-p_\star}
	\right),
	\label{eq:simplified-tau1-bound}
	\end{equation}
	and consequently
	\[
	\tau_1^{-2}
	\le
	\widetilde O_{k,L}\!\left(
	r^{L^2+O(L)}
	\gamma^{2g_1}
	\kappa^{O(L^2)}
	\delta^{-2p_\star}
	\right).
	\]

	\paragraph{Total Complexity.}
	For every \(1\le\ell\le L-2\), \(\tau_1\le\tau_\ell\).
	The preceding stage bounds are
	\[
	\widetilde O_{k,L}\!\left(
	dr+
	\gamma^2r^{13}\kappa^{O(1)}\tau_1^{-2}
	\right)
	\]
	for the first layer,
	\[
	\widetilde O_{k,L}\!\left(
	r^9\kappa^{O(L)}\tau_1^{-2}
	+
	\gamma^2r^{14}\kappa^{O(L)}\tau_1^{-2}
	\right)
	\]
	for each intermediate layer, and \(O_{k,L}(r^2)\) for the
	final stage. Since \(r\ge1\), \(\gamma\ge1\),
	\(\tau_1<1\), and there are only \(O(L)\) stages, all
	hidden-dimensional contributions are dominated, up to
	\(\widetilde O_{k,L}\)-factors, by the convex-body term.
	Hence
	\[
	Q_{\mathrm{tot}}
	\le
	\widetilde O_{k,L}\!\left(
	dr+
	\gamma^2r^{14}\kappa^{O(L)}\tau_1^{-2}
	\right).
	\]
	Using \eqref{eq:simplified-tau1-bound}, absorbing the
	fixed width overhead into \(r^{O(L)}\), and observing that
	\(2g_1+2=O(L)\), we obtain
	\[
	Q_{\mathrm{tot}}
	=
	\widetilde O_{k,L}\!\left(
	dr+
	r^{L^2+O(L)}
	\gamma^{O(L)}
	\kappa^{O(L^2)}
	\delta^{-2p_\star}
	\right).
	\]
	This is \eqref{eq:main-pstar-complexity}. Since
	\(0<\delta<1\) and \(p_\star<\mathrm e\),
	\(\delta^{-2p_\star}\le\delta^{-2\mathrm e}\), proving
	the query bound in Theorem~\ref{thm:main}.

	\paragraph{Success Probability.}
	Each stage uses fresh randomness. Conditioning
	successively on all preceding
	successful stages, the relevant recovery proposition
	fails with probability at
	most \(\zeta_0\). Hence the union bound gives
	\[
	\mathbb P(\text{some recovery stage fails})
	\le
	N\zeta_0
	=
	0.01.
	\]
	Thus all parameter-recovery guarantees hold
	simultaneously with probability
	at least \(0.99\).
\end{proof}

\begin{lemma}[Gaussian stability under parameter perturbations]
	\label{lem:gaussian-parameter-stability}
	Let \(f^\star\) satisfy the normalization and positivity
	assumptions of
	Theorem~\ref{thm:main}, and let \(\widehat f\) have the
	same architecture.
	Suppose that, after a consistent relabeling and a
	suitable choice of the
	first-layer column signs,
	\[
	\max_{1\le\ell\le L-1}
	\|\widehat{\mathbf{W}}_\ell-\mathbf{W}_\ell\|_2
	\le\delta,
	\qquad
	\|\widehat{\mathbf{a}}-\mathbf{a}\|_1
	\le\delta,
	\]
	where \(0<\delta<1/2\). Assume that the recovered hidden
	weights are
	normalized as in the recovery algorithm: the columns of
	\(\widehat{\mathbf{W}}_1\) have unit Euclidean norm,
	while, for
	\(2\le\ell\le L-1\), the columns of
	\(\widehat{\mathbf{W}}_\ell\) are
	nonnegative and have unit \(\ell_1\)-norm. Then
	\[
	\mathbb E_{\mathbf{X}\sim\mathcal
	N(\mathbf{0},\mathbf{I}_d)}
	\left[
	\bigl(\widehat f(\mathbf{X})-f^\star(\mathbf{X})\bigr)^2
	\right]
	\le
	C_{k,L}r\delta^2.
	\]
\end{lemma}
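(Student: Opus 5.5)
The plan is to telescope over layers and control each replacement in Gaussian $L^2$ using hypercontractivity of polynomials. Let $\widehat f_j$ denote the network whose first $j$ weight matrices (with $\mathbf W_L=\mathbf a$) are the recovered ones and the rest are the true ones. Then $\widehat f_0=f^\star$, $\widehat f_L=\widehat f$, and $\widehat f-f^\star=\sum_{j=1}^L(\widehat f_j-\widehat f_{j-1})$. Since there are $L$ terms, it suffices to show $\mathbb E[(\widehat f_j-\widehat f_{j-1})^2]\le C_{k,L}r\delta^2$ for each $j$.

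\emph{Hidden-state moment bounds.} First I would record $p$-th moment bounds on the hidden states under $\mathbf X\sim\mathcal N(\mathbf0,\mathbf I_d)$, valid for both the true and the hybrid networks. At layer one, each coordinate of $\mathbf h_1$ is $\mathcal N(0,\|\mathbf w_{1,i}\|_2^2)=\mathcal N(0,1)$, by the unit-norm normalization of both $\mathbf W_1$ and $\widehat{\mathbf W}_1$. For $\ell\ge2$, each coordinate of $\mathbf h_\ell$ is a convex combination of coordinates of $\sigma(\mathbf h_{\ell-1})$, because the columns are nonnegative with unit $\ell_1$-norm. Jensen's inequality then gives $\mathbb E|(\mathbf h_\ell)_i|^p\le\max_s\mathbb E|(\mathbf h_{\ell-1})_s|^{kp}\le C_{k,L,p}$, uniformly in the dimension and for every hybrid. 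The output is handled the same way via $\mathbf 1^\top\mathbf a=1$. For $\widehat{\mathbf a}$, I would use only $\|\widehat{\mathbf a}\|_1\le1+\delta$, which still yields bounded moments.

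\emph{Single replacement.} Fix $j$. The two hybrids agree on layers before $j$. At layer $j$ they differ by $\boldsymbol\Delta:=(\widehat{\mathbf W}_j-\mathbf W_j)^\top\boldsymbol\phi_j(\mathbf X)$, and afterwards both propagate through the same true suffix $h^{(j)}$ (or, when $j=L$, through $\widehat{\mathbf a}$ versus $\mathbf a$). The case $j=L$ is simply $|(\widehat{\mathbf a}-\mathbf a)^\top\sigma(\mathbf h_{L-1})|\le\|\widehat{\mathbf a}-\mathbf a\|_1\max_i|\sigma(\mathbf h_{L-1})_i|$. Here the maximum over $r_{L-1}$ coordinates must not be bounded by a sum; I would instead apply Cauchy--Schwarz with $\ell_1$ weights to get $\mathbb E[(\cdot)^2]\le\delta\sum_i|\widehat a_i-a_i|\,\mathbb E[\sigma_i^2]\le C\delta^2$. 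For $j<L$, I would write $h^{(j)}(\mathbf u+\boldsymbol\Delta)-h^{(j)}(\mathbf u)=\int_0^1\nabla h^{(j)}(\mathbf u+t\boldsymbol\Delta)^\top\boldsymbol\Delta\,dt$ and bound $|\nabla h^{(j)}(\mathbf v)^\top\boldsymbol\Delta|\le\|\nabla h^{(j)}(\mathbf v)\|_1\|\boldsymbol\Delta\|_\infty$. As in the proof of Proposition~\ref{prop:stable-column-space}, $\|\mathbf W\mathbf v\|_1\le\|\mathbf v\|_1$ for column-$\ell_1$-normalized nonnegative $\mathbf W$. A backward induction through the suffix therefore bounds $\|\nabla h^{(j)}(\mathbf v)\|_1$ by a polynomial in $\max_s|v_s|$ and the suffix hidden-state magnitudes, with the $\ell_1$-mass controlled by $\mathbf 1^\top\mathbf a=1$. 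Since $\boldsymbol\Delta$ need not be small in sup norm, I would restrict its use to the mixed moments appearing here.

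\emph{Main obstacle: the factor $r$.} The crux is bounding $\mathbb E[\|\boldsymbol\Delta\|^2\cdot\text{poly}]$ so that the result scales like $r\delta^2$. Coordinatewise, $\Delta_i=(\widehat{\mathbf w}_{j,i}-\mathbf w_{j,i})^\top\boldsymbol\phi_j$, so $|\Delta_i|\le\|\widehat{\mathbf w}_{j,i}-\mathbf w_{j,i}\|_2\|\boldsymbol\phi_j\|_2$ and $\mathbb E\|\boldsymbol\phi_j\|_2^{p}\lesssim r_{j-1}^{p/2}$. Combined with the column error $\le\delta$, this gives $\mathbb E\Delta_i^2\lesssim r\delta^2$. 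I would use $\|\boldsymbol\Delta\|_\infty\le\|\widehat{\mathbf W}_j-\mathbf W_j\|_2\|\boldsymbol\phi_j\|_2$, and then Hölder to split off $\mathbb E\|\boldsymbol\phi_j\|_2^4\lesssim r^2$ against the gradient factor's moments. This produces $C r\delta^2$ once the $\ell_1$-gradient moments are shown dimension-free. That dimension-free claim is the step needing care, because $\max_s|v_s|$ over up to $r$ coordinates grows like $\mathrm{polylog}\, r$. To avoid the logarithm, I would keep the $\ell_1$ weights: writing $\nabla h^{(j)}(\mathbf v)^\top\boldsymbol\Delta=\sum_i g_i\Delta_i$ with $g_i\ge0$ having bounded-moment total mass, apply Cauchy--Schwarz weighted by the $g_i$ and then take expectations coordinatewise. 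The pointwise $\mathbb E\Delta_i^2$-type bounds are uniform in $i$, so this gives the claimed $C_{k,L}r\delta^2$ without a maximum over coordinates.
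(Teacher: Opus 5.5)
Your telescoping over hybrid networks is a legitimate alternative to the paper's argument, but two steps fail as written.

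First, the case $j=1$. There $\boldsymbol\phi_1=\mathbf X\in\mathbb R^{d}$ and $r_0=d$. So $|\Delta_i|\le\delta\|\boldsymbol\phi_1\|_2$ together with $\mathbb E\|\mathbf X\|_2^4=d^2+2d$ gives a $d\delta^2$ contribution, not $r\delta^2$. Since $d$ may be far larger than $r$, this does not yield the lemma, nor Corollary~\ref{cor:gaussian_generalization_error}, which takes $\delta\asymp\sqrt{\varepsilon/r}$. You need instead that $\Delta_i=(\widehat{\mathbf w}_{1,i}-\mathbf w_{1,i})^\top\mathbf X$ is exactly centered Gaussian with standard deviation at most $\delta$. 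You must then also avoid $\|\boldsymbol\Delta\|_\infty$, which is a maximum of $r$ Gaussians.

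Second, your final step posits that the $g_i$ have ``bounded-moment total mass.'' That is exactly the dimension-free claim you had just identified as the obstacle. The pointwise backward induction you sketched bounds $\|\nabla h^{(j)}(\mathbf v)\|_1$ only through products of coordinate maxima, hence with polylog-$r$ moments. Moreover, ``taking expectations coordinatewise'' after the weighted Cauchy--Schwarz requires decoupling $|g_i|$ from $\Delta_i$. That needs a bound on $\sum_i\|g_i\|_{L_p}$, not merely on the moments of $\sum_i|g_i|$. Also, $g_i\ge0$ fails in general, because $v_i^{k-1}$ carries the sign of $v_i$.

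Both gaps close if you run the backward induction in $L_p$ norms instead of pointwise. Write $\partial_ih^{(j)}(\mathbf v)=kv_i^{k-1}\sum_s(\mathbf W_{j+1})_{is}\,\partial_sh^{(j+1)}(\mathbf v')$ with $\mathbf v'=\mathbf W_{j+1}^\top\mathbf v^{\odot k}$. H\"older's inequality and the column sums $\sum_i(\mathbf W_{j+1})_{is}=1$ give $\sum_i\|\partial_ih^{(j)}(\mathbf v)\|_{L_p}\le C_{k,L,p}\sum_s\|\partial_sh^{(j+1)}(\mathbf v')\|_{L_{2p}}$, and the recursion terminates at $\sum_sa_s=1$. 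Minkowski then gives $\|\nabla h^{(j)}(\mathbf v)^\top\boldsymbol\Delta\|_{L_2}\le\sum_i\|\partial_ih^{(j)}(\mathbf v)\|_{L_4}\|\Delta_i\|_{L_4}\le C\max_i\|\Delta_i\|_{L_4}$. This is $C\delta$ for $j=1$ and $C\sqrt r\,\delta$ for $j\ge2$, so no weighted Cauchy--Schwarz is needed.

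You should also state the moment bounds at the interpolated states $\mathbf v=((1-t)\mathbf W_j+t\widehat{\mathbf W}_j)^\top\boldsymbol\phi_j$ used in the mean value theorem. The interpolated matrix is again nonnegative with unit-$\ell_1$ columns when $j\ge2$, and has columns of Euclidean norm at most one when $j=1$.

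With these repairs your proof is correct, but the paper's route is shorter. It never differentiates the suffix. Instead it propagates $D_{\ell,p}=\max_j\|(\widehat{\mathbf H}_\ell-\mathbf H_\ell)_j\|_{L_p}$ forward, with $D_{1,p}\le C_p\delta$ by Gaussianity and $D_{\ell,p}\le C_{k,L,p}(\sqrt r\,\delta+D_{\ell-1,2p})$. It applies Minkowski to the convex combination $\mathbf w_{\ell,j}^\top(\cdot)$, so the maximum over coordinates always sits outside the $L_p$ norm. This is the same mechanism your gradient bound needs, used in the forward direction, where only hidden-state moments are required.
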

	
\begin{proof}
	Write
	\(
	\mathbf{H}_1=\mathbf{W}_1^\top \mathbf{X},\widehat{\mathbf{H}}_1=\widehat{\mathbf{W}}_1^\top
	\mathbf{X},
	\)
	and, for \(2\le\ell\le L-1\),
	\(
	\mathbf{H}_\ell
	=
	\mathbf{W}_\ell^\top \mathbf{H}_{\ell-1}^{\odot k},
	\widehat{\mathbf{H}}_\ell
	=
	\widehat{\mathbf{W}}_\ell^\top
	\widehat{\mathbf{H}}_{\ell-1}^{\odot k}.
	\)
	We first record uniform moment bounds. For \(p\ge1\),
	define
	\[
	M_{\ell,p}
	:=
	\max_j
	\max
	\left\{
	\|(\mathbf{H}_\ell)_j\|_{L_p},
	\|(\widehat{\mathbf{H}}_\ell)_j\|_{L_p}
	\right\}.
	\]
	Because every column of \(\mathbf{W}_1\) and
	\(\widehat{\mathbf{W}}_1\) has unit Euclidean
	norm,
	\(
	(\mathbf{H}_1)_j,\,
	(\widehat{\mathbf{H}}_1)_j
	\sim
	\mathcal N(0,1),
	\)
	and hence
	\(
	M_{1,p}\le C_p.
	\)
	For \(\ell\ge2\), every column of both
	\(\mathbf{W}_\ell\) and
	\(\widehat{\mathbf{W}}_\ell\) is a probability vector.
	Therefore Minkowski's
	inequality gives
	\(
	M_{\ell,p}
	\le
	M_{\ell-1,kp}^k.
	\)
	Since \(k\) and \(L\) are fixed,
	\(
	M_{\ell,p}
	\le
	C_{k,L,p},
	1\le\ell\le L-1,
	\)
	for every fixed \(p<\infty\).
		
	Next define
	\(
	D_{\ell,p}
	:=
	\max_j
	\|(\widehat{\mathbf{H}}_\ell)_j-(\mathbf{H}_\ell)_j\|_{L_p}.
	\)
	For the first layer,
	\(
	(\widehat{\mathbf{H}}_1-\mathbf{H}_1)_j
	=
	(\widehat{\mathbf{w}}_{1,j}-\mathbf{w}_{1,j})^\top
	\mathbf{X}.
	\)
	Hence
	\[
	D_{1,p}
	\le
	C_p
	\max_j
	\|\widehat{\mathbf{w}}_{1,j}-\mathbf{w}_{1,j}\|_2
	\le
	C_p\delta.
	\]
		
	Fix \(2\le\ell\le L-1\). For every \(j\),
	\[
	\begin{aligned}
	(\widehat{\mathbf{H}}_\ell-\mathbf{H}_\ell)_j
	=
	(\widehat{\mathbf{w}}_{\ell,j}-\mathbf{w}_{\ell,j})^\top
	\widehat{\mathbf{H}}_{\ell-1}^{\odot k}
	+
	\mathbf{w}_{\ell,j}^\top
	\left(
	\widehat{\mathbf{H}}_{\ell-1}^{\odot k}
	-
	\mathbf{H}_{\ell-1}^{\odot k}
	\right).
	\end{aligned}
	\]
	The first term satisfies
	\[
	\begin{aligned}
	\left\|
	(\widehat{\mathbf{w}}_{\ell,j}-\mathbf{w}_{\ell,j})^\top
	\widehat{\mathbf{H}}_{\ell-1}^{\odot k}
	\right\|_{L_p}
	\le
	\|\widehat{\mathbf{w}}_{\ell,j}-\mathbf{w}_{\ell,j}\|_2
	\left\|
	\|\widehat{\mathbf{H}}_{\ell-1}^{\odot k}\|_2
	\right\|_{L_p}
	\le
	C_{k,L,p}\sqrt r\,\delta.
	\end{aligned}
	\]
	For the second term, using
	\(
	\mathbf{w}_{\ell,j}\ge\mathbf{0},
	\mathbf1^\top \mathbf{w}_{\ell,j}=1,
	\)
	we obtain
	\[
	\left\|
	\mathbf{w}_{\ell,j}^\top
	\left(
	\widehat{\mathbf{H}}_{\ell-1}^{\odot k}
	-
	\mathbf{H}_{\ell-1}^{\odot k}
	\right)
	\right\|_{L_p}
	\le
	\max_i
	\left\|
	(\widehat H_{\ell-1,i})^k
	-
	(H_{\ell-1,i})^k
	\right\|_{L_p}.
	\]
	Using
	\(
	|u^k-v^k|
	\le
	k|u-v|
	\left(
	|u|^{k-1}+|v|^{k-1}
	\right)
	\)
	and Hölder's inequality, together with the preceding
	uniform moment
	bounds, gives
	\[
	\left\|
	(\widehat H_{\ell-1,i})^k
	-
	(H_{\ell-1,i})^k
	\right\|_{L_p}
	\le
	C_{k,L,p}
	D_{\ell-1,2p}.
	\]
	Consequently,
	\(
	D_{\ell,p}
	\le
	C_{k,L,p}
	\left(
	\sqrt r\,\delta+D_{\ell-1,2p}
	\right).
	\)
	Iterating this inequality through the finitely many
	layers yields
	\[
	D_{\ell,p}
	\le
	C_{k,L,p}\sqrt r\,\delta,
	\qquad
	1\le\ell\le L-1.
	\]
		
	Finally,
	\[
	f^\star(\mathbf{X})
	=
	\mathbf{a}^\top \mathbf{H}_{L-1}^{\odot k},
	\qquad
	\widehat f(\mathbf{X})
	=
	\widehat{\mathbf{a}}^\top
	\widehat{\mathbf{H}}_{L-1}^{\odot k},
	\]
	and therefore
	\[
	\begin{aligned}
	\widehat f(\mathbf{X})-f^\star(\mathbf{X})
	=
	(\widehat{\mathbf{a}}-\mathbf{a})^\top
	\widehat{\mathbf{H}}_{L-1}^{\odot k}
	+
	\mathbf{a}^\top
	\left(
	\widehat{\mathbf{H}}_{L-1}^{\odot k}
	-
	\mathbf{H}_{L-1}^{\odot k}
	\right).
	\end{aligned}
	\]
	The first term satisfies
	\[
	\left\|
	(\widehat{\mathbf{a}}-\mathbf{a})^\top
	\widehat{\mathbf{H}}_{L-1}^{\odot k}
	\right\|_{L_2}
	\le
	\|\widehat{\mathbf{a}}-\mathbf{a}\|_1
	\max_i
	\|(\widehat H_{L-1,i})^k\|_{L_2}
	\le
	C_{k,L}\delta.
	\]
	Since \(\mathbf{a}\ge\mathbf{0}\) and \(\mathbf1^\top
	\mathbf{a}=1\),
	\[
	\begin{aligned}
	\left\|
	\mathbf{a}^\top
	\left(
	\widehat{\mathbf{H}}_{L-1}^{\odot k}
	-
	\mathbf{H}_{L-1}^{\odot k}
	\right)
	\right\|_{L_2}
	\le
	\max_i
	\left\|
	(\widehat H_{L-1,i})^k
	-
	(H_{L-1,i})^k
	\right\|_{L_2}
	\le
	C_{k,L}D_{L-1,4}
	\le
	C_{k,L}\sqrt r\,\delta.
	\end{aligned}
	\]
	Hence
	\[
	\|\widehat f-f^\star\|_{L_2(\mathcal
	N(\mathbf{0},\mathbf{I}_d))}
	\le
	C_{k,L}\sqrt r\,\delta,
	\]
	which proves
	\[
	\mathbb E_{\mathbf{X}\sim\mathcal
	N(\mathbf{0},\mathbf{I}_d)}
	\left[
	\bigl(\widehat f(\mathbf{X})-f^\star(\mathbf{X})\bigr)^2
	\right]
	\le
	C_{k,L}r\delta^2.
	\]
\end{proof}

\begin{proof}[Proof of Corollary~\ref{cor:gaussian_generalization_error}]
	Fix \(0<\varepsilon<1/2\), and choose
	\[
	\delta
	:=
	c_{k,L}\sqrt{\dfrac\varepsilon r},
	\]
	where \(c_{k,L}>0\) is sufficiently small that
	\(C_{k,L}r\delta^2\le\varepsilon\). After decreasing the
	constant once more
	if necessary, \(0<\delta<1/2\). Apply
	Theorem~\ref{thm:main} with this
	value of \(\delta\). With probability at least \(0.99\),
	the returned
	parameters satisfy the hypotheses of
	Lemma~\ref{lem:gaussian-parameter-stability}, and
	therefore
	\[
	\mathbb E_{\mathbf X\sim\mathcal N(\mathbf0,\mathbf I_d)}
	\left[(\widehat f(\mathbf X)-f^\star(\mathbf X))^2\right]
	\le
	C_{k,L}r\delta^2
	\le
	\varepsilon.
	\]

	By Theorem~\ref{thm:main}, the number of real
	function-value queries is
	\[
	\begin{aligned}
	\widetilde O_{k,L}\!\left(
	dr+
	r^{L^2+O(L)}
	\gamma^{O(L)}
	\kappa^{O(L^2)}
	\delta^{-2\mathrm e}
	\right)
	&=
	\widetilde O_{k,L}\!\left(
	dr+
	r^{L^2+O(L)+\mathrm e}
	\gamma^{O(L)}
	\kappa^{O(L^2)}
	\varepsilon^{-\mathrm e}
	\right)\\
	&=
	\widetilde O_{k,L}\!\left(
	dr+
	r^{L^2+O(L)}
	\gamma^{O(L)}
	\kappa^{O(L^2)}
	\varepsilon^{-\mathrm e}
	\right),
	\end{aligned}
	\]
	because the fixed factor \(r^{\mathrm e}\) is absorbed
	into \(r^{O(L)}\) for \(L\ge3\). This proves the
	corollary.
\end{proof}

\begin{proof}[Proof of Corollary~\ref{cor:relu}]
Fix \(0<\varepsilon<\dfrac12\) and run our recovery algorithm with
\(\delta=c_{k,L}\sqrt{\dfrac{\varepsilon}{r}}\), where
\(c_{k,L}>0\) is sufficiently small. On the successful recovery
event, which has probability at least \(0.99\),
Lemma~\ref{lem:gaussian-parameter-stability} gives
\(\mathbb E[(\widehat f(\mathbf X)-f^\star(\mathbf X))^2]
\le\dfrac{\varepsilon}{4}\).
Fix any returned parameters on this event. All expectations below
are over an independent
\(\mathbf X\sim\mathcal N(\mathbf0,\mathbf I_d)\).
The recovered first-layer columns have unit Euclidean norm, and
the higher hidden-layer columns are nonnegative and sum to one.
Moreover,
\(\|\widehat{\mathbf a}\|_1
\le\|\mathbf a\|_1+\delta\le2\);
no positivity assumption on \(\widehat{\mathbf a}\) is needed.

\paragraph{A Bounded Piecewise-Linear Activation.}
Write \(q:=k^{L-1}\), and let \(R\ge1\) and an even integer
\(M\ge2\) be chosen below. Let \(P_M\) be the linear interpolant
of \(t^k\) at the knots \(t_j=-1+\dfrac{2j}{M}\),
\(0\le j\le M\), extended by the constant \(1\) outside
\([-1,1]\). Since \(k\) is even, this extension is continuous
and \(0\le P_M(t)\le1\) on \(\mathbb R\).
The linear interpolation remainder, using
\(\sup_{|t|\le1}|(t^k)''|\le k(k-1)\), gives
\(\sup_{|t|\le1}|P_M(t)-t^k|\le C_kM^{-2}\).
If \(b_j\) is the jump in the slope of \(P_M\) at \(t_j\), then
\begin{equation}
    P_M(t)=1+\sum_{j=0}^{M}b_j\operatorname{ReLU}(t-t_j).
    \label{eq:relu-spline-representation}
\end{equation}
Indeed, both sides equal \(1\) to the left of all knots and have
the same slope on every interval.

Define \(f_{\mathrm{ReLU}}\) by replacing each power activation
in \(\widehat f\!\left(\dfrac{\mathbf x}{R}\right)\) with \(P_M\)
and multiplying the output by \(R^q\).
Without these replacements, homogeneity would give
\(R^q\widehat f\!\left(\dfrac{\mathbf x}{R}\right)
=\widehat f(\mathbf x)\).
All parameters of this construction are determined by the recovered
network and the interpolation knots.

\paragraph{Approximation On Bounded Features.}
Set \(Z_j:=\widehat{\mathbf w}_{1,j}^{\top}\mathbf X\) and
\(V:=\max_{1\le j\le r}|Z_j|\).
On \(\{V\le R\}\), the first preactivations of the rescaled
polynomial network lie in \([-1,1]\). Every higher hidden linear
map is a convex combination and is therefore nonexpansive in the
maximum norm. Thus both the original and modified activation
outputs remain in \([0,1]\) at every layer on this event.
Since \(t\mapsto t^k\) is \(k\)-Lipschitz on \([-1,1]\),
the maximum coordinatewise activation errors satisfy
\(e_1\le C_kM^{-2}\) and
\(e_\ell\le C_kM^{-2}+ke_{\ell-1}\) for
\(2\le\ell\le L-1\). Iteration and
\(\|\widehat{\mathbf a}\|_1\le2\) yield
\begin{equation}
    |f_{\mathrm{ReLU}}(\mathbf X)-\widehat f(\mathbf X)|
    \le C_{k,L}R^qM^{-2}
    \qquad\text{on }\{V\le R\}.
    \label{eq:relu-bounded-feature-error}
\end{equation}

\paragraph{Gaussian Tails and the Error Budget.}
The same column normalization gives
\(|\widehat f(\mathbf X)|\le2V^q\) everywhere, while the bounded
range of \(P_M\) gives
\(|f_{\mathrm{ReLU}}(\mathbf X)|\le2R^q\).
Hence the squared approximation error is at most \(16V^{2q}\)
on \(\{V>R\}\).
Each \(Z_j\) is standard Gaussian, although the coordinates need
not be independent. For a standard Gaussian variable \(Z\),
integrating its density and using
\(t^{2q}e^{-t^2/4}\le C_q\) gives
\(\mathbb E\left[|Z|^{2q}\mathbf1_{\{|Z|>R\}}\right]
\le C_qe^{-R^2/4}\) for \(R\ge1\).
Since
\(V^{2q}\mathbf1_{\{V>R\}}
\le\sum_{j=1}^r|Z_j|^{2q}\mathbf1_{\{|Z_j|>R\}}\),
combining this tail estimate with
\eqref{eq:relu-bounded-feature-error} gives
\begin{equation}
    \mathbb E\left[
        \bigl(f_{\mathrm{ReLU}}(\mathbf X)
        -\widehat f(\mathbf X)\bigr)^2
    \right]
    \le C_{k,L}\left(
        \dfrac{R^{2q}}{M^4}
        +r e^{-R^2/4}
    \right).
    \label{eq:relu-gaussian-approximation}
\end{equation}
Choose
\(R=\sqrt{C_{k,L}\log\!\left(\dfrac{2r}{\varepsilon}\right)}\)
and
\(M=2\left\lceil
C_{k,L}R^{q/2}\varepsilon^{-1/4}
\right\rceil\), with sufficiently large constants.
The two contributions in \eqref{eq:relu-gaussian-approximation}
are then at most \(\dfrac{\varepsilon}{8}\) each.
Consequently,
\begin{align*}
    \mathbb E\left[\bigl(f_{\mathrm{ReLU}}(\mathbf X)
        -f^\star(\mathbf X)\bigr)^2
    \right]
    \le 2\mathbb E\left[
        \bigl(f_{\mathrm{ReLU}}(\mathbf X)
        -\widehat f(\mathbf X)\bigr)^2
    \right]
    +2\mathbb E\left[\bigl(\widehat f(\mathbf X)-f^\star(\mathbf X)\bigr)^2
    \right]
    \le\varepsilon.
\end{align*}

\paragraph{Depth, Width, and Complexity.}
By \eqref{eq:relu-spline-representation}, each scalar replacement
uses \(M+1\) parallel ReLU units followed by an affine readout.
This readout is absorbed into the next layer's affine map, or into
the output layer at the final stage. The constant extensions are
already included in the same representation, so no additional
clipping layer is needed. The input and output rescalings are
absorbed into the first and final affine maps.
Thus the resulting network has exactly \(L-1\) hidden ReLU layers
and an affine output layer, with biases and widths bounded by
\begin{equation}
    \begin{aligned}
        r_\ell^{\mathrm{ReLU}}
        \le(M+1)r_\ell=O_{k,L}\!\left(r_\ell\varepsilon^{-1/4}
            \left[
                \log\left(\dfrac{2r}{\varepsilon}\right)
            \right]^{q/4}
        \right)
        =\widetilde O_{k,L}\left(r_\ell\varepsilon^{-1/4}
        \right).
    \end{aligned}
    \label{eq:relu-width-bound}
\end{equation}
For fixed \(k,L\), these parameters can be constructed in polynomial
time. No additional target queries or randomness are used, so the
success probability remains at least \(0.99\).
Finally, substituting
\(\delta=c_{k,L}\sqrt{\dfrac{\varepsilon}{r}}\)
into Theorem~\ref{thm:main} gives
\[
    \widetilde O_{k,L}\!\left(
        dr+r^{L^2+O(L)}
        \gamma^{O(L)}
        \kappa^{O(L^2)}
        \varepsilon^{-\mathrm e}
    \right)
\]
queries, with the additional factor \(r^{\mathrm e}\) absorbed
into \(r^{O(L)}\). This proves the corollary.
\end{proof}

\section{A Weight-Based Sufficient Condition for the Assumptions}

\label{sec:weight-based-assumptions}

We construct a class of networks to demonstrate that the assumptions in
Section~\ref{sec:problem-prelim} are not vacuous. Throughout this section, the hidden layers have a common width
\(r\), so that \(d=r_0\ge r_1=\cdots=r_{L-1}=r\ge3\) and \(r_L=1\). 

\begin{proposition}[Ordered diagonal separation]
\label{prop:weight-based-assumptions}
Fix an even integer \(k\ge4\) and a depth \(L\ge3\). There exist
constants \(c_0,c_1\in(0,1]\), depending only on \(k,L\), with the
following property. Let
\(\mathbf W_1\in\mathbb R^{d\times r}\) satisfy
\(\mathbf W_1^\top\mathbf W_1=\mathbf I_r\), and set
\(\mathbf W_L=\mathbf a=\dfrac{1}{r}\mathbf1_r\).
For each \(2\le m\le L-1\), choose the off-diagonal entries of
\(\mathbf W_m\in\mathbb R^{r\times r}\) to satisfy
\((\mathbf W_m)_{ij}=(\mathbf W_m)_{ji}>0\) for \(i\ne j\), and set
\((\mathbf W_m)_{ii}=1-\sum_{j\ne i}(\mathbf W_m)_{ij}\).
Suppose that, in these same coordinate labels at all hidden layers,
there are \(\varepsilon,\Delta>0\) such that, for every
\(2\le m\le L-1\),
\begin{equation}
    \max_{1\le i\le r}
    \bigl(1-(\mathbf W_m)_{ii}\bigr)\le\varepsilon,
    \qquad
    (\mathbf W_m)_{ii}-(\mathbf W_m)_{i+1,i+1}\ge\Delta
    \quad (1\le i<r),
    \label{eq:wc-diagonal-conditions}
\end{equation}
and
\begin{equation}
    r^2\varepsilon^2\le c_0\Delta.
    \label{eq:wc-small-coupling}
\end{equation}
Then Assumptions~\ref{ass:input-convexity},~\ref{ass:normalization},~\ref{ass:conditioning} and~\ref{ass:spectral-gap} all hold,
with the choices
\begin{equation}
    \kappa_0=2,
    \qquad
    \mu=1,
    \qquad
    \gamma=\dfrac{1}{c_1\Delta}.
    \label{eq:wc-assumption-parameters}
\end{equation}
In particular, every hidden weight matrix has full column rank, and
\begin{equation}
    \dfrac{\min_{i\ne j}|\theta_{\ell,i}-\theta_{\ell,j}|}
    {\max_i\theta_{\ell,i}}
    \ge c_1\Delta,
    \qquad 1\le\ell\le L-2.
    \label{eq:wc-relative-gap}
\end{equation}
\end{proposition}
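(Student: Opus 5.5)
The first three assumptions are direct checks, and the work lies in Assumption~\ref{ass:spectral-gap}, which we handle by a perturbative expansion around the decoupled network. Start with Assumption~\ref{ass:input-convexity}. The off-diagonal entries are positive by hypothesis. Each diagonal entry satisfies \((\mathbf W_m)_{ii}\ge1-\varepsilon>0\) once \(\varepsilon<1\); this follows from \eqref{eq:wc-small-coupling}, since \(\Delta\le1/(r-1)\) forces \(\varepsilon\) to be small. Finally \(\mathbf a=\frac1r\mathbf1\ge\mathbf0\).

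For Assumption~\ref{ass:normalization}, orthonormal columns of \(\mathbf W_1\) have unit norm. Each column of \(\mathbf W_m\) sums to one by the diagonal definition together with symmetry, and \(\mathbf1^\top\mathbf a=1\).

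For Assumption~\ref{ass:conditioning}, \(\operatorname{cond}(\mathbf W_1)=1\). Each \(\mathbf W_m\) is symmetric and doubly stochastic, so its row sums equal one and \(\mu=1\) holds exactly, including for \(\mathbf a\). By Gershgorin, every eigenvalue of \(\mathbf W_m\) lies in \([(\mathbf W_m)_{ii}-(1-(\mathbf W_m)_{ii}),\,1]\subseteq[1-2\varepsilon,1]\). Hence \(\operatorname{cond}\le(1-2\varepsilon)^{-1}\le2\) for \(\varepsilon\le1/4\), and full rank follows.

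For the spectral gap we compare each \(\theta_{\ell,i}\) with the decoupled model. Set \(\mathbf D_m:=\operatorname{diag}(\mathbf W_m)\) and write \(\mathbf W_m=\mathbf D_m+\mathbf E_m\), where \(\mathbf E_m\) has zero diagonal and \(\|\mathbf E_m\|_{1\to1}\le\varepsilon\). With \(\mathbf E_m=0\) the suffix separates as \(h_0^{(\ell)}(\mathbf u)=\frac1r\sum_i \beta_{\ell,i}u_i^{q_\ell}\), where \(\beta_{\ell,i}=\prod_{m>\ell}(\mathbf W_m)_{ii}^{k^{L-m}}\) with the appropriate exponents. For a separable sum of powers, scaling \(u_i\mapsto\beta_{\ell,i}^{-1/q_\ell}u_i\) preserves the shape of the body. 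Since \(U_i^2\) scales by \(\beta^{-2/q}\) and \((\partial_ih)^2\) scales by \(\beta^{2/q}\), the product \(\theta_{\ell,i}\) is the same for every \(i\). This is the coincident-signature phenomenon, so the zeroth-order term gives no gap, and the gap must come from the first-order effect of the couplings. Write \(h^{(\ell)}=h_0^{(\ell)}+h_1^{(\ell)}+O(\varepsilon^2)\). The first-order term \(h_1\) is a linear combination of monomials \(u_i^{a}u_j^{b}\) with coefficients proportional to \((\mathbf W_m)_{ij}\). Differentiate \(\theta_{\ell,i}\) in the direction of \(h_1\), using the divergence-theorem and radial-integration identities of Lemma~\ref{lem:population-moment-scales}. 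We expect \(\theta_{\ell,i}-\bar\theta\) to be a fixed nonzero \(k,L\)-dependent multiple of a quantity such as \(\sum_{m}c_m(1-(\mathbf W_m)_{ii})\), that is, an increasing function of the off-diagonal mass \(1-(\mathbf W_m)_{ii}\). The ordering in \eqref{eq:wc-diagonal-conditions}, with consecutive diagonal gaps \(\ge\Delta\) aligned across all layers, would then make these first-order contributions strictly monotone in \(i\), with consecutive differences \(\ge c\Delta\).

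The remaining step bounds the second-order remainder by \(Cr^2\varepsilon^2\), using Lemma~\ref{lem:sampling-body-radii}-type radius control and the row-sum bounds \(\sum_j(\mathbf W_m)_{ij}\le\varepsilon\) on perturbations of the sublevel body. Condition \eqref{eq:wc-small-coupling} then keeps this remainder below half the first-order gap. Finally, normalizing by \(\max_i\theta_{\ell,i}\asymp\bar\theta=\Theta_{k,L}(1)\) gives \eqref{eq:wc-relative-gap} and hence \(\gamma=1/(c_1\Delta)\).

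The main obstacle is the first-order computation. We must show that the derivative of \(\theta_{\ell,i}\) with respect to the couplings is governed by the diagonal deficits with a nonvanishing sign-definite constant, rather than cancelling by the same scaling invariance that kills the zeroth order. We would first try an explicit Dirichlet-type computation of the relevant mixed moments \(\mathbb E[U_i^{a}U_j^{b}]\) on the separable body.
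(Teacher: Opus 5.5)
Your checks of Assumptions~\ref{ass:input-convexity}--\ref{ass:conditioning} are fine; Gershgorin is an equivalent substitute for the paper's bound $\|\mathbf W_m-\mathbf I_r\|_2\le2\varepsilon$. One repair: the smallness of $\varepsilon$ should come from $(r-1)\Delta\le(\mathbf W_m)_{11}-(\mathbf W_m)_{rr}<\varepsilon$, which together with \eqref{eq:wc-small-coupling} gives $\varepsilon\le c_0/(r^2(r-1))$. Your bound $\Delta\le1/(r-1)$ presupposes the diagonal positivity you are proving.

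Your strategy for the spectral gap is the paper's: coincident signatures at zeroth order, a gap from the couplings at first order, and a remainder of order $r^2\varepsilon^2$. But the decisive step is only conjectured, and that step is the content of the proposition. Nonvanishing is not enough. The first-order response must have one fixed sign for every layer $m>\ell$ and for both placements of the exponents on a coupled pair; otherwise the common ordering in \eqref{eq:wc-diagonal-conditions} can cancel instead of producing a gap.

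The paper obtains this sign as follows.
\begin{itemize}
\item It interpolates along $\mathbf W_m(t)=\mathbf I_r+\tfrac{t}{\varepsilon}(\mathbf W_m-\mathbf I_r)$, with suffix $G_t=rh^{(\ell)}_t$. At $\mathbf I_r$ the pure-power deficit terms contribute nothing, by the rescaling invariance you identified.
\item The identity $\int_{\mathbb R^r}p\,e^{-g}\,d\mathbf u=\Gamma\bigl(1+\tfrac{r+\deg p}{q}\bigr)\int_{\{g\le1\}}p\,d\mathbf u$ turns the normalized signature into a product of two moments under the density $\propto e^{-G_t}$. Hence $t$-derivatives are covariances, with no boundary terms from the moving body, and at $t=0$ they are computed under the product density $\propto e^{-\sum_j u_j^q}$, whose one-dimensional moments are $m_q(v)$.
\item A single monomial $u_i^su_j^{q-s}$ contributes an explicit $\chi_q(s)$ to coordinate $i$. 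Log-convexity of $\Gamma$ gives $m_q(s+2)/(m_q(2)m_q(s))\ge3(s+1)/(s+3)>2$, hence $\chi_q(s)<0$ for even $4\le s\le q-4$.
\item Symmetry of $\mathbf W_m$ makes the row sum (exponent $s_m$ on $u_i$) and the column sum (exponent $q-s_m$ on $u_i$) both equal $1-(\mathbf W_m)_{ii}$. This yields the derivative $-\varepsilon^{-1}\sum_m b_{\ell m}(1-(\mathbf W_m)_{ii})$ with every $b_{\ell m}=-\tfrac{q}{s_m}\bigl(\chi_q(s_m)+\chi_q(q-s_m)\bigr)>0$.
\end{itemize}
The divergence and radial identities of Lemma~\ref{lem:population-moment-scales} cannot replace this computation. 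The relation $\mathbb E[U_i\partial_ih(\mathbf U)]=q/(n+q)$ holds for every admissible $h$, so it is blind to first-order variations.

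The remainder is also only asserted, and its power of $r$ is exactly what \eqref{eq:wc-small-coupling} is calibrated to. The $\sqrt r$ radius ratio of Lemma~\ref{lem:sampling-body-radii} is not the right control. The paper instead uses:
\begin{itemize}
\item $r$-independent envelopes $|\dot G_t|+|\ddot G_t|\le CQ$ and $(1-Ct)Q\le G_t\le Q$, from the bounded $t$-degree of $G_t$ and the $\ell_p$-contractivity of doubly stochastic matrices;
\item coefficient-sum bounds on $\partial_{ii}\partial_t^jG_t$, from nonnegative coefficients evaluated at $\mathbf 1_r$;
\item moment bounds $\mathbb E_t[|p|Q^j]\le Cr^j$.
\end{itemize}
The factor $r^2$ is the case $j=2$. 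The restriction $t\le\delta/r$ keeps $Z_t/Z_0\le(1-Ct)^{-r/q}$ bounded, where $Z_t=\int e^{-G_t}$; it requires $\varepsilon\le\delta/r$, which is where the smallness of $\varepsilon$ above is used.

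Finally, $\bar\theta$ is $\Theta_{k,L}(r^{-2})$, not $\Theta_{k,L}(1)$; compare the lower bound $q^2/(n+q)^2$ in Lemma~\ref{lem:population-moment-scales}. This is harmless only if both the first-order gap and the remainder are measured relative to $\bar\theta$, as in the normalized expansion \eqref{eq:wc-signature-expansion}.
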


The next three subsections prove the proposition. All constants denoted
by \(C\) below depend only on \(k,L\), may increase from line to line,
and are independent of \(d,r,\varepsilon,\Delta\), and the particular
weights. The constant \(c_0\) is chosen sufficiently small at the end.

\subsection{Normalization and Nonnegativity}
\label{subsec:wc-normalization}

The identity \(\mathbf W_1^\top\mathbf W_1=\mathbf I_r\) gives
\(\|\mathbf w_{1,j}\|_2=1\) for every \(j\). The definition of the
higher-layer diagonal entries gives \(\mathbf W_m\mathbf1_r=\mathbf1_r\).
Their symmetry therefore also gives
\(\mathbf1_r^\top\mathbf W_m=\mathbf1_r^\top\), so every column is
normalized as required.

To check positivity of the diagonal entries, telescope the ordered
gaps in any one higher hidden layer. Since every off-diagonal entry is
positive, \(0<1-(\mathbf W_m)_{ii}\le\varepsilon\), and hence
\((r-1)\Delta\le\varepsilon\).
Together with~\eqref{eq:wc-small-coupling}, this implies
\begin{equation}
    \varepsilon\le\dfrac{c_0}{r^2(r-1)}
    \le\dfrac{1}{18}<\dfrac{1}{4}.
    \label{eq:wc-implied-smallness}
\end{equation}
Consequently \((\mathbf W_m)_{ii}\ge1-\varepsilon>0\), and all the
higher hidden matrices are strictly positive and doubly stochastic.
Finally, \(a_i=\dfrac{1}{r}>0\) and
\(\mathbf1_r^\top\mathbf a=1\), which verifies the output-layer
requirements as well. Notice also that \(0<\Delta\le\varepsilon<1\);
thus the value of \(\gamma\) in~\eqref{eq:wc-assumption-parameters}
is at least one.

\subsection{Conditioning and Relative Row Balance}
\label{subsec:wc-conditioning}

All singular values of \(\mathbf W_1\) equal one. For
\(2\le m\le L-1\), symmetry and the definition of the diagonal give
\(\|\mathbf W_m-\mathbf I_r\|_1
=\|\mathbf W_m-\mathbf I_r\|_\infty\le2\varepsilon\), and therefore
\(\|\mathbf W_m-\mathbf I_r\|_2\le2\varepsilon\).
Moreover, double stochasticity gives \(\|\mathbf W_m\|_2\le1\), while
\(\mathbf W_m\mathbf1_r=\mathbf1_r\) gives the reverse inequality.
It follows that \(\sigma_{\max}(\mathbf W_m)=1\) and
\(\sigma_{\min}(\mathbf W_m)\ge1-2\varepsilon>0\). Thus
\(\operatorname{cond}(\mathbf W_m)
\le\dfrac{1}{1-2\varepsilon}\le2\).
This proves full column rank and the conditioning bound with
\(\kappa_0=2\).

For the row-balance condition, the higher hidden layers satisfy
\((\mathbf W_m\mathbf1_r)_i=1
=\dfrac{r_m}{r_{m-1}}\) for every \(i\).
At the output layer,
\((\mathbf W_L\mathbf1_1)_i=a_i
=\dfrac{1}{r}=\dfrac{r_L}{r_{L-1}}\).
Hence the relative row-balance lower bound holds with equality at
every relevant layer when \(\mu=1\).

\subsection{Uniform Relative Spectral Gap}
\label{subsec:wc-spectral-gap}

\begin{proof}[Proof of the spectral-gap assertion]
Fix \(1\le\ell\le L-2\), and write \(q=k^{L-\ell}\).
For \(m=\ell+1,\ldots,L-1\), consider the interpolation
\(\mathbf W_m(t)=\mathbf I_r+
\dfrac{t}{\varepsilon}(\mathbf W_m-\mathbf I_r)\),
\(0\le t\le1\).
Every such matrix is nonnegative and doubly stochastic: its diagonal
entries are at least \(1-t\), and its off-diagonal entries are
nonnegative. Let \(h_t^{(\ell)}\) be the corresponding suffix,
with the output weights kept equal to \(\dfrac{1}{r}\mathbf1_r\),
and set \(G_t=rh_t^{(\ell)}\).
Then \(G_0(\mathbf u)=Q(\mathbf u):=\sum_{j=1}^r u_j^q\), and
\(G_\varepsilon=rh^{(\ell)}\).
Write \(\theta_{\ell,i}(t)\) for the spectral signature defined from
\(h_t^{(\ell)}\) exactly as in Assumption~\ref{ass:spectral-gap}.
In particular, \(\theta_{\ell,i}(\varepsilon)=\theta_{\ell,i}\).
At \(t=0\), permutation symmetry makes these signatures equal;
denote their common positive value by \(\theta_\ell^{(0)}\).

\paragraph{Exponential-Integral Representation.}
A positive coordinate rescaling does not change a spectral signature.
Indeed, replacing \(g(\mathbf u)\) by
\(g(s_1u_1,\ldots,s_ru_r)\), with all \(s_j>0\), multiplies its
\(i\)-th position moment by \(s_i^{-2}\) and its \(i\)-th gradient
moment by \(s_i^2\), by the corresponding change of variables in the
uniform distribution. Multiplying a degree-\(q\) homogeneous
polynomial by a positive scalar is a special case of this rescaling.
Consequently, \(G_t\) and \(h_t^{(\ell)}\) have the same signatures.

For a positive-definite homogeneous polynomial \(g\) of degree \(q\)
and a homogeneous polynomial \(p\), integration over the sublevel
sets of \(g\) gives
\begin{equation}
    \int_{\mathbb R^r}p(\mathbf u)e^{-g(\mathbf u)}\,d\mathbf u
    =
    \Gamma\!\left(1+\dfrac{r+\deg p}{q}\right)
    \int_{\{g\le1\}}p(\mathbf u)\,d\mathbf u.
    \label{eq:wc-homogeneous-integral}
\end{equation}
Indeed, write
\(e^{-g(\mathbf u)}=\int_{g(\mathbf u)}^\infty e^{-v}\,dv\)
and interchange the integrals. In the inner integral over
\(\{g\le v\}\), rescale
\(\mathbf u=v^{\dfrac{1}{q}}\mathbf w\).
The remaining scalar integral is the displayed gamma factor.
Absolute integrability justifies the same argument when \(p\) is
not nonnegative.

Let \(Z_t=\int_{\mathbb R^r}e^{-G_t(\mathbf u)}\,d\mathbf u\), and
write \(\mathbb E_t\) and \(\operatorname{Cov}_t\) for expectation
and covariance under the density \(Z_t^{-1}e^{-G_t}\).
Define \(A_i(t)=\mathbb E_t[X_i^2]\) and
\(B_i(t)=\mathbb E_t[\partial_{ii}G_t(\mathbf X)]\).
Integration by parts gives
\(\mathbb E_t[(\partial_iG_t(\mathbf X))^2]=B_i(t)\).
All these integrals are well defined for \(t\) near zero; a uniform
coercive bound is established below.
At \(t=0\), the density is a product density, with one-dimensional
moments
\(m_q(v):=\mathbb E_0[|X_i|^v]
=\dfrac{\Gamma\!\left(\dfrac{v+1}{q}\right)}
{\Gamma\!\left(\dfrac{1}{q}\right)}\) for \(v\ge0\).
Thus \(A_i(0)=m_q(2)\) and
\(B_i(0)=q(q-1)m_q(q-2)\).
Applying~\eqref{eq:wc-homogeneous-integral} to
\(p=1\), \(p=u_i^2\), and \(p=(\partial_i g)^2\), the gamma factors
cancel between \(t\) and zero, leaving
\begin{equation}
    \dfrac{\theta_{\ell,i}(t)}{\theta_\ell^{(0)}}
    =
    \dfrac{A_i(t)B_i(t)}{m_q(2)\,q(q-1)m_q(q-2)}.
    \label{eq:wc-moment-product}
\end{equation}

\paragraph{First-Order Response To the Weights.}
A dot denotes differentiation with respect to \(t\).
Differentiating the normalized exponential integrals at zero gives
\begin{equation}
    \begin{aligned}
        A_i'(0)
        &=-\operatorname{Cov}_0(X_i^2,\dot G_0(\mathbf X)),\\
        B_i'(0)
        &=\mathbb E_0[\partial_{ii}\dot G_0(\mathbf X)]
        -q(q-1)\operatorname{Cov}_0
        (X_i^{q-2},\dot G_0(\mathbf X)).
    \end{aligned}
    \label{eq:wc-moment-derivatives}
\end{equation}
By~\eqref{eq:wc-moment-product}, the normalized derivative of the
signature is
\(\dfrac{A_i'(0)}{m_q(2)}+
\dfrac{B_i'(0)}{q(q-1)m_q(q-2)}\), which is linear in \(\dot G_0\).
If a term in \(\dot G_0\) is \(u_i^s u_j^{q-s}\), with \(i\ne j\)
and even \(s,q-s\ge4\), its contribution to this derivative is
\begin{equation}
    \chi_q(s)
    :=m_q(q-s)\left[
        2m_q(s)-\dfrac{m_q(s+2)}{m_q(2)}
        -\dfrac{(s-1)(q-s-1)m_q(s-2)}
        {q(q-1)m_q(q-2)}
    \right].
    \label{eq:wc-single-edge-response}
\end{equation}
Indeed, insert the monomial into~\eqref{eq:wc-moment-derivatives},
use independence, and simplify with
\(m_q(v+q)=\dfrac{v+1}{q}m_q(v)\).
The contribution to coordinate \(j\) is \(\chi_q(q-s)\), and the
contributions to all other coordinates are zero. A pure-power term
\(u_j^q\) contributes zero to every coordinate, since changing its
coefficient in \(Q\) is a coordinate rescaling.

We next check the sign in~\eqref{eq:wc-single-edge-response}.
The gamma function is log-convex, by H\"older's inequality applied
to its defining integral. Therefore
\(x\mapsto\dfrac{\Gamma(x+a)}{\Gamma(x)}\) is nondecreasing for
fixed \(a>0\). Using this fact with \(a=\dfrac{2}{q}\), together
with the gamma recurrence, gives
\begin{equation}
    \begin{aligned}
        \dfrac{m_q(s+2)}{m_q(2)m_q(s)}
        ={\dfrac{3(s+1)}{s+3}}
        \dfrac{
            \Gamma\!\left(1+\dfrac{s+3}{q}\right)
            \Gamma\!\left(1+\dfrac{1}{q}\right)
        }{
            \Gamma\!\left(1+\dfrac{s+1}{q}\right)
            \Gamma\!\left(1+\dfrac{3}{q}\right)
        }
        \ge\dfrac{3(s+1)}{s+3}>2
        \qquad (s\ge4).
    \end{aligned}
    \label{eq:wc-response-sign}
\end{equation}
The first two terms in the bracket
in~\eqref{eq:wc-single-edge-response} therefore have negative sum,
and its last term is also negative. Hence \(\chi_q(s)<0\)
whenever \(4\le s\le q-4\) and \(s\) is even.

For \(\ell+1\le m\le L-1\), write \(s_m=k^{m-\ell}\).
Differentiating the suffix recursion at the identity matrices yields
\begin{equation}
    \dot G_0(\mathbf u)
    =\dfrac{1}{\varepsilon}
    \sum_{m=\ell+1}^{L-1}\dfrac{q}{s_m}
    \left[
        \sum_{i\ne j}(\mathbf W_m)_{ij}u_i^{s_m}u_j^{q-s_m}
        -\sum_i\bigl(1-(\mathbf W_m)_{ii}\bigr)u_i^q
    \right].
    \label{eq:wc-network-first-variation}
\end{equation}
For completeness, at the identity matrices the input to
\(\mathbf W_m^\top\) has coordinates \(u_i^{s_m}\); the remaining
network is the sum of the \(\dfrac{q}{s_m}\)-th powers of its
coordinates. Differentiating one matrix entry gives precisely the
corresponding term in~\eqref{eq:wc-network-first-variation}.

Symmetry makes both the off-diagonal row sum and the off-diagonal
column sum at coordinate \(i\) equal to \(1-(\mathbf W_m)_{ii}\).
Thus the single-edge calculation gives
\begin{equation}
    \left.
    \dfrac{d}{dt}\dfrac{\theta_{\ell,i}(t)}{\theta_\ell^{(0)}}
    \right|_{t=0}
    =-\dfrac{1}{\varepsilon}
    \sum_{m=\ell+1}^{L-1}
    b_{\ell m}\bigl(1-(\mathbf W_m)_{ii}\bigr),
    \qquad
    b_{\ell m}:=-\dfrac{q}{s_m}
    \bigl(\chi_q(s_m)+\chi_q(q-s_m)\bigr)>0.
    \label{eq:wc-first-order-diagonals}
\end{equation}
Here \(s_m\ge k\ge4\) and \(q-s_m\ge4\), so the strict sign follows
from~\eqref{eq:wc-response-sign}. The finitely many coefficients
\(b_{\ell m}\) depend only on \(k,L,\ell,m\), not on the width or
the weights.

\paragraph{A Uniform Second-Order Remainder.}
We prove that, for some \(\delta>0\) depending only on \(k,L\),
\begin{equation}
    \left|
        \dfrac{d^2}{dt^2}
        \dfrac{\theta_{\ell,i}(t)}{\theta_\ell^{(0)}}
    \right|
    \le Cr^2,
    \qquad 0\le t\le\dfrac{\delta}{r}.
    \label{eq:wc-second-derivative}
\end{equation}
This estimate does not use any spectral separation.

We first bound the polynomial and its derivatives. Every nonnegative
doubly stochastic matrix is a contraction in each \(\ell_p\) norm,
\(1\le p<\infty\): Jensen's inequality gives
\(\sum_j|(\mathbf W^\top\mathbf v)_j|^p
\le\sum_{i,j}W_{ij}|v_i|^p=\sum_i|v_i|^p\).
Applying this observation through the suffix, or inducting directly
through its recursion, gives \(0\le G_\tau(\mathbf u)\le Q(\mathbf u)\)
for \(0\le\tau\le1\). At the all-ones vector every hidden coordinate
remains one, so
\(G_\tau(\mathbf1_r)=r\) and
\(\partial_iG_\tau(\mathbf1_r)=q\).

The degree of \(G_t\) as a polynomial in \(t\) is at most
\(\sum_{m=\ell+1}^{L-1}\dfrac{q}{s_m}
=\dfrac{q-k}{k-1}\), which depends only on \(k,L\).
Fixed-node univariate interpolation on \([0,1]\) therefore expresses
each of \(\dot G_t\) and \(\ddot G_t\) as a linear combination of
finitely many polynomials \(G_\tau\), with the sum of the absolute
coefficients at most \(C\), uniformly for \(t\in[0,1]\).
Consequently,
\begin{equation}
    |\dot G_t(\mathbf u)|+|\ddot G_t(\mathbf u)|\le CQ(\mathbf u),
    \qquad
    (1-Ct)Q(\mathbf u)\le G_t(\mathbf u)\le Q(\mathbf u).
    \label{eq:wc-polynomial-bounds}
\end{equation}
The lower bound follows by integrating the bound on \(\dot G_t\)
from zero.

We also need local derivative bounds without a factor of \(r\).
Each \(G_\tau\) has nonnegative coefficients and is homogeneous of
degree \(q\). Its second derivative \(\partial_{ii}G_\tau\) thus
has nonnegative coefficients whose sum is
\(\partial_{ii}G_\tau(\mathbf1_r)
\le(q-1)\partial_iG_\tau(\mathbf1_r)=q(q-1)\).
Applying the same interpolation formula after differentiating in
\(u_i\) twice shows that, for \(j=0,1,2\), the sum of the absolute
coefficients of \(\partial_{ii}\partial_t^jG_t\) is at most \(C\).
All these polynomials have degree \(q-2\).

Choose \(\delta\) small enough that \(Ct\le\dfrac{1}{2}\) when
\(0\le t\le\dfrac{\delta}{r}\).
By~\eqref{eq:wc-polynomial-bounds} and homogeneity,
\(1\le\dfrac{Z_t}{Z_0}\le(1-Ct)^{-\dfrac{r}{q}}\le C\).
For a polynomial \(p\) of bounded degree whose sum of absolute
coefficients is bounded independently of \(r\), we then have
\begin{equation}
    \mathbb E_t\bigl[|p(\mathbf X)|Q(\mathbf X)^j\bigr]
    \le Cr^j,
    \qquad j=0,1,2.
    \label{eq:wc-weighted-moment-bound}
\end{equation}
Here the bound is uniform for the polynomials just considered.
To verify it explicitly, take a monomial with multi-index
\(\alpha\), write
\(|\mathbf u|^\alpha=\prod_{i=1}^r|u_i|^{\alpha_i}\), and use
\begin{equation}
    \dfrac{1}{Z_0}
    \int_{\mathbb R^r}
    |\mathbf u|^\alpha Q(\mathbf u)^j e^{-aQ(\mathbf u)}\,d\mathbf u
    =
    a^{-\dfrac{r+|\alpha|}{q}-j}
    \dfrac{\Gamma\!\left(\dfrac{r+|\alpha|}{q}+j\right)}
    {\Gamma\!\left(\dfrac{r+|\alpha|}{q}\right)}
    \prod_{i=1}^r m_q(\alpha_i).
    \label{eq:wc-monomial-moment}
\end{equation}
For \(j=0\) this is a product of one-dimensional integrals;
differentiating in \(a\) gives the identities for \(j=1,2\).
With \(a=1-Ct\), the power of \(a\) is uniformly bounded because
\(rt\le\delta\), the gamma ratio is \(O_{k,L}(r^j)\), and the last
product is bounded because \(|\alpha|\) is bounded. Summing over
monomials proves~\eqref{eq:wc-weighted-moment-bound}.

To differentiate the moments, use the following identities, with all
integrals taken over \(\mathbb R^r\):
\begin{equation}
    \begin{aligned}
        \dfrac{d}{dt}\int p_t e^{-G_t}\,d\mathbf u
        &=\int(\dot p_t-p_t\dot G_t)e^{-G_t}\,d\mathbf u,\\
        \dfrac{d^2}{dt^2}\int p_t e^{-G_t}\,d\mathbf u
        &=\int\left[
            \ddot p_t-2\dot p_t\dot G_t
            +p_t\bigl((\dot G_t)^2-\ddot G_t\bigr)
        \right]e^{-G_t}\,d\mathbf u.
    \end{aligned}
    \label{eq:wc-integral-derivatives}
\end{equation}
The coercive lower bound in~\eqref{eq:wc-polynomial-bounds} and the
polynomial envelopes above justify differentiation under the
integral sign, including the derivatives used earlier at zero.
Apply~\eqref{eq:wc-integral-derivatives} to
\(p_t=1\), \(p_t=u_i^2\), and \(p_t=\partial_{ii}G_t\).
For each of these unnormalized integrals, the absolute value of its
\(j\)-th derivative, divided by \(Z_t\), is at most \(Cr^j\) for
\(j=0,1,2\), by~\eqref{eq:wc-weighted-moment-bound}.
The quotient rule, using
\(\dfrac{|Z_t'|}{Z_t}\le Cr\) and
\(\dfrac{|Z_t''|}{Z_t}\le Cr^2\), therefore gives
\(|A_i^{(j)}(t)|+|B_i^{(j)}(t)|\le Cr^j\) for \(j=0,1,2\).
The denominator in~\eqref{eq:wc-moment-product} is a positive constant
depending only on \(q\). The product rule now
proves~\eqref{eq:wc-second-derivative}.

By~\eqref{eq:wc-implied-smallness}, decreasing \(c_0\) ensures
\(\varepsilon\le\dfrac{\delta}{r}\). Taylor's theorem and
\eqref{eq:wc-first-order-diagonals} consequently yield the uniform
expansion
\begin{equation}
    \dfrac{\theta_{\ell,i}}{\theta_\ell^{(0)}}
    =1-\sum_{m=\ell+1}^{L-1}
    b_{\ell m}\bigl(1-(\mathbf W_m)_{ii}\bigr)
    +O_{k,L}(r^2\varepsilon^2).
    \label{eq:wc-signature-expansion}
\end{equation}

\paragraph{From Ordered Diagonals to A Relative Gap.}
Since there are only finitely many positive coefficients
\(b_{\ell m}\), there is a constant \(b>0\), depending only on
\(k,L\), such that
\(\sum_{m=\ell+1}^{L-1}b_{\ell m}\ge b\) for every
\(1\le\ell\le L-2\).
For \(i<j\), the common ordering gives
\((\mathbf W_m)_{ii}-(\mathbf W_m)_{jj}\ge(j-i)\Delta\ge\Delta\)
in every higher layer. Subtracting the two instances
of~\eqref{eq:wc-signature-expansion} therefore gives
\begin{equation}
    \dfrac{\theta_{\ell,i}-\theta_{\ell,j}}{\theta_\ell^{(0)}}
    \ge b\Delta-Cr^2\varepsilon^2
    \ge(b-Cc_0)\Delta
    \ge\dfrac{b\Delta}{2},
    \label{eq:wc-absolute-gap}
\end{equation}
where the last inequality follows by choosing \(c_0\) sufficiently
small. The sign of the first-order term
in~\eqref{eq:wc-signature-expansion} also gives
\(\max_i\theta_{\ell,i}\le
(1+Cr^2\varepsilon^2)\theta_\ell^{(0)}\le2\theta_\ell^{(0)}\),
after the same choice of \(c_0\). Dividing the last two bounds proves
\eqref{eq:wc-relative-gap} with
\(c_1=\min\{1,\dfrac{b}{4}\}\), uniformly over all
\(1\le\ell\le L-2\). This completes the proof of
Proposition~\ref{prop:weight-based-assumptions}.
\end{proof}

\subsection{An Explicit Nonempty Family}
\label{subsec:wc-explicit-family}

The conditions hold for every \(r\ge3\), not just for a particular
small width. Choose any \(\mathbf W_1\) with orthonormal columns,
set \(\mathbf a=\dfrac{1}{r}\mathbf1_r\), and, for
\(2\le m\le L-1\), take
\begin{equation}
    (\mathbf W_m)_{ij}
    =
    \begin{cases}
        \dfrac{t(i+j)}{r^2}, & i\ne j,\\[2mm]
        1-\dfrac{t}{r^2}
        \left((r-2)i+\dfrac{r(r+1)}{2}\right), & i=j,
    \end{cases}
    \qquad
    0<t\le\dfrac{c_0}{3r^3}.
    \label{eq:wc-explicit-weights}
\end{equation}
These matrices are symmetric, and the diagonal entries are precisely
one minus the corresponding off-diagonal row sums.
Their largest off-diagonal row sum is
\(\varepsilon=\dfrac{3(r-1)t}{2r}\), and their consecutive diagonal
gaps equal \(\Delta=\dfrac{(r-2)t}{r^2}\).
Moreover,
\begin{equation}
    \dfrac{r^2\varepsilon^2}{\Delta}
    =\dfrac{9t r^2(r-1)^2}{4(r-2)}
    \le3tr^3\le c_0,
    \label{eq:wc-explicit-smallness}
\end{equation}
where the first inequality uses
\(3(r-1)^2\le4r(r-2)\) for \(r\ge3\).
Thus Proposition~\ref{prop:weight-based-assumptions} applies.
In particular, setting \(t=\dfrac{c_0}{3r^3}\) gives
\(\Delta=\dfrac{c_0(r-2)}{3r^5}\) and
\(\gamma=\dfrac{3r^5}{c_0c_1(r-2)}=O_{k,L}(r^4)\),
while \(\kappa_0=2\) and \(\mu=1\).

The first-layer orthonormality is used only for its normalization and
conditioning, not for the spectral-gap argument. It can be replaced
by unit-norm columns and any prescribed bound on
\(\operatorname{cond}(\mathbf W_1)\); the conclusion then uses
\(\kappa_0=\max\{2,\operatorname{cond}(\mathbf W_1)\}\).
The restriction to \(r\ge3\) is natural for this construction:
a symmetric doubly stochastic \(2\times2\) matrix necessarily has
equal diagonal entries. Finally, the condition is local:
\eqref{eq:wc-implied-smallness} requires coupling of order at most
\(r^{-3}\) for fixed \(k,L\). No spectral-gap claim is made here for
symmetric matrices outside this weak-coupling regime.

\section{Proof of Challenges}
\label{app:Proof_of_Challenge}
\subsection{Local Minima and Fourth-Order Saddle Barriers}
\label{sec:direct-fitting-barriers}

\begin{proof}[Proof of Proposition~\ref{prop:fourth-order-barrier}]
We give two explicit realizable instances. In both cases, let
$f^\star(\mathbf{x})
=
\frac12(\mathbf{e}_1^\top\mathbf{x})^4
+
\frac12(\mathbf{e}_3^\top\mathbf{x})^4$.
Thus the teacher has
$\mathbf{W}_1^\star=[\mathbf{e}_1,\mathbf{e}_3]$ and
$\mathbf{a}^\star=(1/2,1/2)$.
Its first-layer columns are orthonormal, and its output weights are positive,
normalized, and row balanced, so all assumptions applicable to the two-layer
case are satisfied.

\paragraph{A Non-Global Local Minimum.}
Consider the student point
$\bar{\mathbf{w}}_1=\mathbf{e}_2$,
$\bar{\mathbf{w}}_2=\mathbf{e}_3$, and
$\bar{\mathbf{a}}=(1/2,1/2)$.
We work in the normalized parameterization and use the local coordinates
$\boldsymbol{\xi}=(p,q,s,t,u)$ defined by
$\mathbf{w}_1=p\mathbf{e}_1+\sqrt{1-p^2-q^2}\mathbf{e}_2+q\mathbf{e}_3$,
$\mathbf{w}_2=s\mathbf{e}_1+t\mathbf{e}_2+\sqrt{1-s^2-t^2}\mathbf{e}_3$,
$a_1=1/2+u$, and $a_2=1/2-u$.
The point $\bar{\boldsymbol{\theta}}$ corresponds to
$\boldsymbol{\xi}=\mathbf{0}$.

Let $c=2^{1/4}$ and introduce the four samples
$\mathcal G=
\{\mathbf{e}_2,-\mathbf{e}_2,
c\mathbf{e}_1+\mathbf{e}_2,
-c\mathbf{e}_1+\mathbf{e}_2\}$.
At $\boldsymbol{\xi}=\mathbf{0}$, their residuals are respectively
$1/2,1/2,-1/2,-1/2$.
For a point $\mathbf{x}=(x_1,x_2,0)\in\mathcal G$, the first variation of its
residual is
$D r_{\mathbf{x}}(\mathbf{0})[\boldsymbol{\xi}]
=
2x_2^3x_1p+x_2^4u$.
Hence
$\sum_{\mathbf{x}\in\mathcal G}
r_{\mathbf{x}}D r_{\mathbf{x}}(\mathbf{0})=0$,
so these four samples contribute zero gradient at
$\bar{\boldsymbol{\theta}}$.

We next add exact-fit samples to make this stationary point locally stable.
Let
$$\mathcal A=
\{(1,1,0),(-1,1,0),(1,1,1),(-1,1,1),(1,1,2)\}$$.
For every $\mathbf{x}\in\mathcal A$, one has
$(\mathbf{e}_1^\top\mathbf{x})^4
=
(\mathbf{e}_2^\top\mathbf{x})^4$,
and therefore the teacher and the student agree exactly at
$\bar{\boldsymbol{\theta}}$.
Writing an anchor point as $(\varepsilon,1,z)$ with
$\varepsilon\in\{\pm1\}$, its residual Jacobian at
$\boldsymbol{\xi}=\mathbf{0}$ is
$(2\varepsilon,\,2z,\,2\varepsilon z^3,\,2z^3,\,1-z^4)$.
For the five points above, the resulting Jacobian matrix is
$$\mathbf{J}=
\begin{pmatrix}
2&0&0&0&1\\
-2&0&0&0&1\\
2&2&2&2&0\\
-2&2&-2&2&0\\
2&4&16&16&-15
\end{pmatrix},$$
whose determinant is $384$.
Thus $\mathbf{J}$ is invertible and
$\mathbf{J}^\top\mathbf{J}\succ0$.

Form an empirical data multiset by taking the four points in $\mathcal G$ once
and each point in $\mathcal A$ exactly $M$ times.
Let $\mathbf{H}_{\mathcal G}$ denote the Hessian contribution of the four
points in $\mathcal G$ at $\boldsymbol{\xi}=\mathbf{0}$.
Since all anchor residuals vanish, their Hessian contribution is precisely
$\mathbf{J}^\top\mathbf{J}$ per copy, up to the common positive normalization
of the empirical loss. Therefore the total Hessian is proportional to
$\mathbf{H}_{\mathcal G}+M\mathbf{J}^\top\mathbf{J}$.
Choosing
$M>\|\mathbf{H}_{\mathcal G}\|_{\mathrm{op}}/
\lambda_{\min}(\mathbf{J}^\top\mathbf{J})$
makes this matrix positive definite.
Hence $\bar{\boldsymbol{\theta}}$ is a strict local minimum in the normalized
parameterization.

It is not global: its four residuals on $\mathcal G$ are nonzero, so its
empirical loss is strictly positive, whereas the teacher parameters achieve
zero loss on every sample.
This proves the existence of a non-global local minimum.

\paragraph{A Fourth-Order Saddle.}
For the second instance, use only the two samples
$\mathbf{x}_1=\mathbf{e}_1$ and $\mathbf{x}_2=\mathbf{e}_3$ and consider the
student point
$\widetilde{\mathbf{w}}_1=\mathbf{e}_1$,
$\widetilde{\mathbf{w}}_2=\mathbf{e}_2$, and
$\widetilde{\mathbf{a}}=(1/2,1/2)$.
The two residuals are $r_1=0$ and $r_2=-1/2$.
Since both student preactivations vanish at $\mathbf{x}_2$, the quartic
activation gives
$D r_2=D^2 r_2=D^3 r_2=0$
at this point.
For the empirical squared loss
$\mathcal L=\frac14(r_1^2+r_2^2)$, it follows that
$\nabla\mathcal L(\widetilde{\boldsymbol{\theta}})=0$ and
$\nabla^2\mathcal L(\widetilde{\boldsymbol{\theta}})
=
\frac12\nabla r_1\nabla r_1^\top\succeq0$.

Moreover, if a direction lies in the nullspace of this Hessian, then the
first-order variation of $r_1$ vanishes, so $r_1^2$ changes only at fourth
order, while $r_2=-1/2+O(\|\boldsymbol{\delta}\|^4)$.
Hence no descent term of order one, two, or three can occur along a Hessian-flat
direction.

Finally, perturb only the second hidden neuron along the feasible normalized
path
$\mathbf{w}_2(t)=\sqrt{1-t^2}\mathbf{e}_2+t\mathbf{e}_3$.
The first sample remains fitted exactly, while on the second sample the student
output is $\frac12t^4$.
Consequently,
$\mathcal L(\widetilde{\boldsymbol{\theta}}(t))
=
\frac14(-\frac12+\frac12t^4)^2
=
\frac1{16}-\frac18t^4+\frac1{16}t^8$.
Thus the loss decreases for all sufficiently small nonzero $t$, and its first
descent term is quartic.
Since the Hessian also has positive directions, this point is a genuine
fourth-order saddle.
\end{proof}
\subsection{Sample-Complexity Barrier for the NTK}
\label{subsec:ntk-gaussian-lower-bound}

We prove Proposition~\ref{prop:ntk-gaussian-lower-bound} by applying
the spherical kernel lower bound of
\citet[Theorem~3]{ghorbani2020linearized}.
The Gaussian-to-sphere reduction and the verification of a
nonvanishing highest-degree component for our admissible targets
are supplied below.

\paragraph{The Spherical Result.}
Let \(\nu_d=\operatorname{Unif}(S^{d-1}(\sqrt d))\).
Write \(P_{\le m}\) for the orthogonal projection in
\(L^2(\nu_d)\) onto restrictions of polynomials of total degree
at most \(m\), and set \(P_{>m}=I-P_{\le m}\).
For a fixed rotationally invariant kernel
\[
    K_d(\mathbf u,\mathbf v)
    =\psi_d\!\left(\frac{\langle\mathbf u,\mathbf v\rangle}{d}\right),
    \qquad
    \mathbf u,\mathbf v\in S^{d-1}(\sqrt d),
\]
and independent \(\mathbf U_i\sim\nu_d\), define
\begin{equation}
    R_{K,d}(f;\mathbf U_{1:n})
    :=\inf_{\boldsymbol\beta\in\mathbb R^n}
    \left\|
        f-\sum_{i=1}^n\beta_iK_d(\cdot,\mathbf U_i)
    \right\|_{L^2(\nu_d)}^2.
    \label{eq:ntk-spherical-oracle-risk}
\end{equation}
Theorem~3 of \citet{ghorbani2020linearized} states that, for a fixed
integer \(m\ge0\), if
\[
    n\le d^{m+1-\delta_d},
    \qquad \delta_d>0,
    \qquad \delta_d^2\log d\longrightarrow\infty,
\]
then the following holds for any deterministic target sequence
\(f_d\in L^2(\nu_d)\) and any fixed kernel sequence whose spherical
sections are square integrable. For every fixed \(\epsilon>0\),
with probability tending to one,
\begin{equation}
    \begin{aligned}
    \Big|&R_{K,d}(f_d;\mathbf U_{1:n})
       -R_{K,d}(P_{\le m}f_d;\mathbf U_{1:n})
       -\|P_{>m}f_d\|_{L^2(\nu_d)}^2\Big|\\
    &\le
    \epsilon\|f_d\|_{L^2(\nu_d)}
             \|P_{>m}f_d\|_{L^2(\nu_d)}.
    \end{aligned}
    \label{eq:ntk-gmmm-risk-decomposition}
\end{equation}
The kernel and target are fixed before the sample is drawn.
The risk in~\eqref{eq:ntk-spherical-oracle-risk} minimizes over
all coefficients, rather than over coefficients obtained by a
particular training algorithm. No assumption that all low-degree
kernel eigenvalues are positive is required for this lower bound.

\begin{proof}[Proof of Proposition~\ref{prop:ntk-gaussian-lower-bound}]
Throughout the proof, \(k,L\), and hence \(q=k^{L-1}\), are fixed.
We abbreviate \(K_d^{\mathrm{NTK}}\) by \(K_d\).

\paragraph{Step 1: NTK Symmetry, Homogeneity, and Integrability.}
For a finite-width bias-free network with the specified activation,
let \(F_{\boldsymbol\vartheta}\) denote its output as a function
of all trainable parameters \(\boldsymbol\vartheta\).
Induction through its \(L-1\) activations gives
\[
    F_{\boldsymbol\vartheta}(s\mathbf x)
    =s^qF_{\boldsymbol\vartheta}(\mathbf x).
\]
Differentiating in the parameters shows that every tangent feature
has the same degree of input homogeneity:
\[
    \nabla_{\boldsymbol\vartheta}
    F_{\boldsymbol\vartheta}(s\mathbf x)
    =s^q\nabla_{\boldsymbol\vartheta}
    F_{\boldsymbol\vartheta}(\mathbf x).
\]
The finite-width tangent kernel, and therefore its deterministic
infinite-width limit, consequently satisfy
\begin{equation}
    K_d(s\mathbf x,t\mathbf z)
    =s^qt^qK_d(\mathbf x,\mathbf z),
    \qquad s,t>0.
    \label{eq:ntk-separate-homogeneity}
\end{equation}
An orthogonal change of input coordinates can be absorbed into
an orthogonal change of the first-layer Gaussian weights.
This preserves their initialization law and the Euclidean inner
product of parameter gradients. The finite-width kernel is thus
rotation invariant in distribution, and its deterministic limit
satisfies
\[
    K_d(\mathbf O\mathbf x,\mathbf O\mathbf z)
    =K_d(\mathbf x,\mathbf z),
    \qquad \mathbf O\in O(d).
\]
This is also the multilayer NTK invariance described in
\citet[Section~3]{ghorbani2020linearized}.
Its restriction to \(S^{d-1}(\sqrt d)\) therefore has the form
used in~\eqref{eq:ntk-spherical-oracle-risk}.
The diagonal values of this fixed-depth monomial NTK are finite,
since its covariance and derivative-covariance recursions involve
only finite Gaussian polynomial moments.
Positive semidefiniteness and rotational invariance give
\[
    |K_d(\mathbf u,\mathbf v)|^2
    \le K_d(\mathbf u,\mathbf u)K_d(\mathbf v,\mathbf v)
    =\psi_d(1)^2<\infty
\]
on the sphere. Thus the square-integrability hypothesis
of~\eqref{eq:ntk-gmmm-risk-decomposition} holds.

\paragraph{Step 2: An Exact Gaussian-to-Sphere Reduction.}
For the Gaussian training inputs, write
\[
    R_i:=\frac{\|\mathbf X_i\|_2}{\sqrt d},
    \qquad
    \mathbf U_i:=\frac{\sqrt d\,\mathbf X_i}{\|\mathbf X_i\|_2},
    \qquad \mathbf X_i=R_i\mathbf U_i.
\]
The vectors \(\mathbf U_i\) are independent with law \(\nu_d\),
and are independent of the radii \(R_i\), which are strictly
positive almost surely. By~\eqref{eq:ntk-separate-homogeneity},
\[
    K_d(\cdot,\mathbf X_i)=R_i^qK_d(\cdot,\mathbf U_i).
\]
Consequently, the Gaussian samples and their normalized directions
generate exactly the same kernel span:
\begin{equation}
    \operatorname{span}\{K_d(\cdot,\mathbf X_i):1\le i\le n\}
    =\operatorname{span}\{K_d(\cdot,\mathbf U_i):1\le i\le n\}.
    \label{eq:ntk-equal-kernel-spans}
\end{equation}
Every function in this span is \(q\)-homogeneous.
An independent test input has the decomposition
\(\mathbf X=R\mathbf U\), with \(\mathbf U\sim\nu_d\)
independent of \(R\). For each integer \(j\ge0\), set
\begin{equation}
    A_{d,j}:=\mathbb E[R^{2j}]
    =\frac{2^j\Gamma(d/2+j)}{d^j\Gamma(d/2)}
    =\prod_{a=0}^{j-1}\left(1+\frac{2a}{d}\right),
    \label{eq:ntk-radial-moments}
\end{equation}
where the empty product is one.
For any two \(q\)-homogeneous functions \(v,f\) whose restrictions
to the sphere are square integrable, independence and homogeneity
give the exact identity
\begin{equation}
    \mathbb E_{\mathbf X\sim\mathcal N(\mathbf0,\mathbf I_d)}
    [(v(\mathbf X)-f(\mathbf X))^2]
    =A_{d,q}\|v-f\|_{L^2(\nu_d)}^2.
    \label{eq:ntk-gaussian-spherical-risk}
\end{equation}
In particular, \(A_{d,q}\ge1\) and
\(A_{d,q}=1+O_q(d^{-1})\).
Combining~\eqref{eq:ntk-equal-kernel-spans}
and~\eqref{eq:ntk-gaussian-spherical-risk} yields, pathwise,
\begin{equation}
    \begin{aligned}
    &\inf_{\boldsymbol\alpha\in\mathbb R^n}
    \mathbb E_{\mathbf X\sim\mathcal N(\mathbf0,\mathbf I_d)}
    \left[
      \left(f_d^\star(\mathbf X)
       -\sum_{i=1}^n\alpha_iK_d(\mathbf X,\mathbf X_i)\right)^2
    \right]\\
    &\hspace{15mm}
    =A_{d,q}R_{K,d}(f_d^\star;\mathbf U_{1:n}).
    \end{aligned}
    \label{eq:ntk-gaussian-spherical-oracle-identity}
\end{equation}
Because the infimum permits every coefficient vector, this
identity supplies a lower bound even when the learned coefficients
use all sample radii, labels, and additional randomness.
It does not assert equality of the Gaussian and spherical
empirical ridge-regression objectives.

\paragraph{Step 3: An Admissible Constant-Width Target Family.}
Let \(c_0,c_1\) be the constants in
Proposition~\ref{prop:weight-based-assumptions}, and fix
\(t=c_0/162\), independently of \(d\).
For every \(d\ge3\), choose
\[
    \mathbf W_1=(\mathbf e_1,\mathbf e_2,\mathbf e_3),
    \qquad \mathbf a=\frac13\mathbf1_3,
    \qquad \mathbf W_j=\mathbf M_t\quad(2\le j\le L-1),
\]
where
\begin{equation}
    \mathbf M_t=
    \begin{pmatrix}
      1-7t/9 & t/3 & 4t/9\\
      t/3 & 1-8t/9 & 5t/9\\
      4t/9 & 5t/9 & 1-t
    \end{pmatrix}.
    \label{eq:ntk-admissible-weights}
\end{equation}
The maximum off-diagonal row sum is \(t\), and the consecutive
diagonal gaps equal \(t/9\). Since
\[
    3^2t^2\le c_0(t/9),
\]
Proposition~\ref{prop:weight-based-assumptions} applies and verifies
all three assumptions in the statement, with
\[
    \kappa_0=2,\qquad \mu=1,\qquad
    \gamma=\frac9{c_1t}.
\]
These parameters depend only on \(k,L\), not on \(d\).
The resulting target has the form
\begin{equation}
    f_d^\star(\mathbf x)=g(x_1,x_2,x_3),
    \label{eq:ntk-fixed-dimensional-target}
\end{equation}
where \(g\) is a fixed homogeneous polynomial of degree \(q\).
In particular, \(g\) itself does not change with \(d\).

Write
\(g(\mathbf u)=\sum_{|\boldsymbol\beta|=q}
 b_{\boldsymbol\beta}\mathbf u^{\boldsymbol\beta}\),
and set \(b_\star:=b_{(q,0,0)}=g(1,0,0)\).
Keeping only the path that uses the first neuron at every layer
gives
\begin{equation}
    b_\star\ge
    \frac13\left(1-\frac{7t}{9}\right)^{s_{k,L}}>0,
    \qquad s_{k,L}:=\sum_{j=1}^{L-2}k^j.
    \label{eq:ntk-positive-leading-coefficient}
\end{equation}
Column normalization, positivity, and the even activation also give
\[
    0\le g(\mathbf u)\le
    \max_{1\le j\le3}|u_j|^q.
\]
Hence, for independent standard Gaussian \(G_1,G_2,G_3\),
\[
    \mathbb E[g(G_1,G_2,G_3)^2]
    \le3\mathbb E[|G_1|^{2q}]=3(2q-1)!!.
\]
Using~\eqref{eq:ntk-gaussian-spherical-risk} with \(v=0\) yields
\begin{equation}
    \sup_{d\ge3}\|f_d^\star\|_{L^2(\nu_d)}\le M,
    \qquad M:=\sqrt{3(2q-1)!!}.
    \label{eq:ntk-target-norm-bound}
\end{equation}

\paragraph{Step 4: Nonvanishing Degree-\(q\) Spherical-Harmonic Energy.}
Let \(P_q=P_{\le q}-P_{\le q-1}\).
Its range is the space \(V_{d,q}\) of restrictions of homogeneous
harmonic polynomials of degree \(q\), with dimension
\[
    B(d,q)=\binom{d+q-1}{q}-\binom{d+q-3}{q-2}.
\]
Use the Gegenbauer normalization of
\citet[Section~5.2]{ghorbani2020linearized}, and define
\[
    Z_{q,d}(\mathbf u)
    :=\sqrt{B(d,q)}\,
      Q_q^{(d)}(\sqrt d\,u_1).
\]
Then \(Z_{q,d}\in V_{d,q}\) and
\(\|Z_{q,d}\|_{L^2(\nu_d)}=1\), by Eq.~(32) of that paper.
Consequently,
\begin{equation}
    \|P_qf_d^\star\|_{L^2(\nu_d)}^2
    \ge
    |\langle f_d^\star,Z_{q,d}\rangle_{L^2(\nu_d)}|^2.
    \label{eq:ntk-harmonic-test-bound}
\end{equation}
Eq.~(43) of \citet{ghorbani2020linearized} gives, for fixed \(q\),
the following coefficientwise convergence of polynomials:
\begin{equation}
    \sqrt{B(d,q)}Q_q^{(d)}(\sqrt d\,s)
    \longrightarrow\frac{\operatorname{He}_q(s)}{\sqrt{q!}}.
    \label{eq:ntk-gegenbauer-hermite-limit}
\end{equation}
Here \(\operatorname{He}_q\) is the probabilists' Hermite polynomial,
normalized by
\(\mathbb E[\operatorname{He}_q(G_1)^2]=q!\).

For completeness, convergence of all joint moments needed below
follows from an exact radial identity. For
\(\boldsymbol\beta\in\mathbb Z_{\ge0}^3\), writing
\(j=|\boldsymbol\beta|\), the decomposition
\(\mathbf X=R\mathbf U\) and~\eqref{eq:ntk-radial-moments} imply
\begin{equation}
    \mathbb E_{\nu_d}\left[
       U_1^{2\beta_1}U_2^{2\beta_2}U_3^{2\beta_3}\right]
    =\frac{\prod_{a=1}^3(2\beta_a-1)!!}{A_{d,j}}
    \longrightarrow\prod_{a=1}^3(2\beta_a-1)!!.
    \label{eq:ntk-joint-moment-limit}
\end{equation}
We use \((-1)!!=1\); moments with an odd exponent in any
coordinate vanish by reflection symmetry.
Since \(g\) and the polynomials
in~\eqref{eq:ntk-gegenbauer-hermite-limit} have fixed degree,
coefficientwise convergence and~\eqref{eq:ntk-joint-moment-limit}
yield
\begin{equation}
    \langle f_d^\star,Z_{q,d}\rangle_{L^2(\nu_d)}
    \longrightarrow
    \frac{\mathbb E[
       g(G_1,G_2,G_3)\operatorname{He}_q(G_1)]}{\sqrt{q!}}.
    \label{eq:ntk-harmonic-correlation-limit}
\end{equation}
Hermite orthogonality gives
\[
    \mathbb E[G_1^j\operatorname{He}_q(G_1)]=0
    \quad(0\le j<q),
    \qquad
    \mathbb E[G_1^q\operatorname{He}_q(G_1)]=q!.
\]
Every monomial of \(g\) has total degree \(q\).
Thus only \(b_\star u_1^q\) contributes to the expectation
in~\eqref{eq:ntk-harmonic-correlation-limit}; independence of the
Gaussian coordinates gives
\[
    \mathbb E[g(G_1,G_2,G_3)\operatorname{He}_q(G_1)]
    =b_\star q!.
\]
Combining this with~\eqref{eq:ntk-harmonic-test-bound} proves
\begin{equation}
    \liminf_{d\to\infty}
    \|P_qf_d^\star\|_{L^2(\nu_d)}^2
    \ge q!b_\star^2>0.
    \label{eq:ntk-positive-top-energy}
\end{equation}
In particular, with
\[
    E_0:=\frac{q!b_\star^2}{2}>0,
\]
we have
\(\|P_qf_d^\star\|_{L^2(\nu_d)}^2\ge E_0\)
for all sufficiently large \(d\).

\paragraph{Step 5: Apply the Spherical Theorem and Conclude.}
Fix \(\eta\in(0,q)\), and suppose \(n\le d^{q-\eta}\).
Apply~\eqref{eq:ntk-gmmm-risk-decomposition} with
\[
    m=q-1,\qquad \delta_d=\eta.
\]
The integer \(m\) is fixed, and
\(\delta_d^2\log d=\eta^2\log d\to\infty\).
Since \(f_d^\star\) has degree \(q\),
\(P_{>q-1}f_d^\star=P_qf_d^\star\).
The nonnegativity of
\(R_{K,d}(P_{\le q-1}f_d^\star;\mathbf U_{1:n})\) therefore gives,
with probability tending to one,
\[
    R_{K,d}(f_d^\star;\mathbf U_{1:n})
    \ge E_d-\epsilon\|f_d^\star\|_{L^2(\nu_d)}\sqrt{E_d},
    \qquad
    E_d:=\|P_qf_d^\star\|_{L^2(\nu_d)}^2.
\]
Choose the fixed theorem tolerance
\(\epsilon=\sqrt{E_0}/(2M)\).
Equations~\eqref{eq:ntk-target-norm-bound}
and~\eqref{eq:ntk-positive-top-energy} imply
\[
    \epsilon\|f_d^\star\|_{L^2(\nu_d)}\sqrt{E_d}
    \le\frac{\sqrt{E_0}\sqrt{E_d}}2
    \le\frac{E_d}2.
\]
Consequently,
\begin{equation}
    \mathbb P\!\left(
       R_{K,d}(f_d^\star;\mathbf U_{1:n})\ge\frac{E_0}{2}
    \right)\longrightarrow1.
    \label{eq:ntk-spherical-positive-risk}
\end{equation}
The exact identity~\eqref{eq:ntk-gaussian-spherical-oracle-identity},
together with \(A_{d,q}\ge1\), transfers this bound to the best
possible predictor in the Gaussian sample-centered kernel span.
Set
\[
    \varepsilon_\star:=\min\left\{\frac14,\frac{E_0}{4}\right\}.
\]
Then every predictor stated above has Gaussian risk at least
\(2\varepsilon_\star\) on the same event.
The event depends only on the sample directions and holds for
all coefficient vectors simultaneously, so additional algorithmic
randomness does not affect the lower bound.

In particular, the probability of achieving risk at most
\(\varepsilon_\star\) tends to zero whenever
\(n\le d^{q-\eta}\).
For a learner with success probability at least \(0.99\) for all
sufficiently large \(d\), even a subsequence satisfying this
sample-size inequality is impossible. It follows that its
worst-case sample complexity is
\(\Omega_{k,L,\eta}(d^{q-\eta})\) for every fixed \(\eta\in(0,q)\).
Equivalently, its necessary sample-size exponent satisfies
\[
    \liminf_{d\to\infty}\frac{\log n(d)}{\log d}\ge q.
\]
\end{proof}

\paragraph{Scope of the Conclusion.}
The hard targets are chosen before the data are sampled and satisfy
all of the manuscript's assumptions; no additional highest-degree
energy assumption is imposed on the entire target class.
The lower bound concerns fixed, sample-centered infinite-width NTK
regression, including zero-initialized kernel gradient descent.
It does not rule out passive feature-learning algorithms,
data-dependent kernels, or active queries.
The use of Theorem~3 also requires \(k,L\) to remain fixed as
\(d\to\infty\); the exponent statement alone is not a claim of an
\(\Omega(d^q)\) lower bound with no exponent slack.

\section{Experiment Details}
\label{app:activation_experiments}


We present a numerical illustration of parameter recovery on an
$8\rightarrow3\rightarrow3\rightarrow1$ polynomial network
with activation $\sigma(t)=t^k$ for $k\in\{4,6,8\}$.
All configurations use the same target weights and numerical settings, with only the activation degree changed.

\begin{figure}[H]
    \centering
    \includegraphics[width=\linewidth]{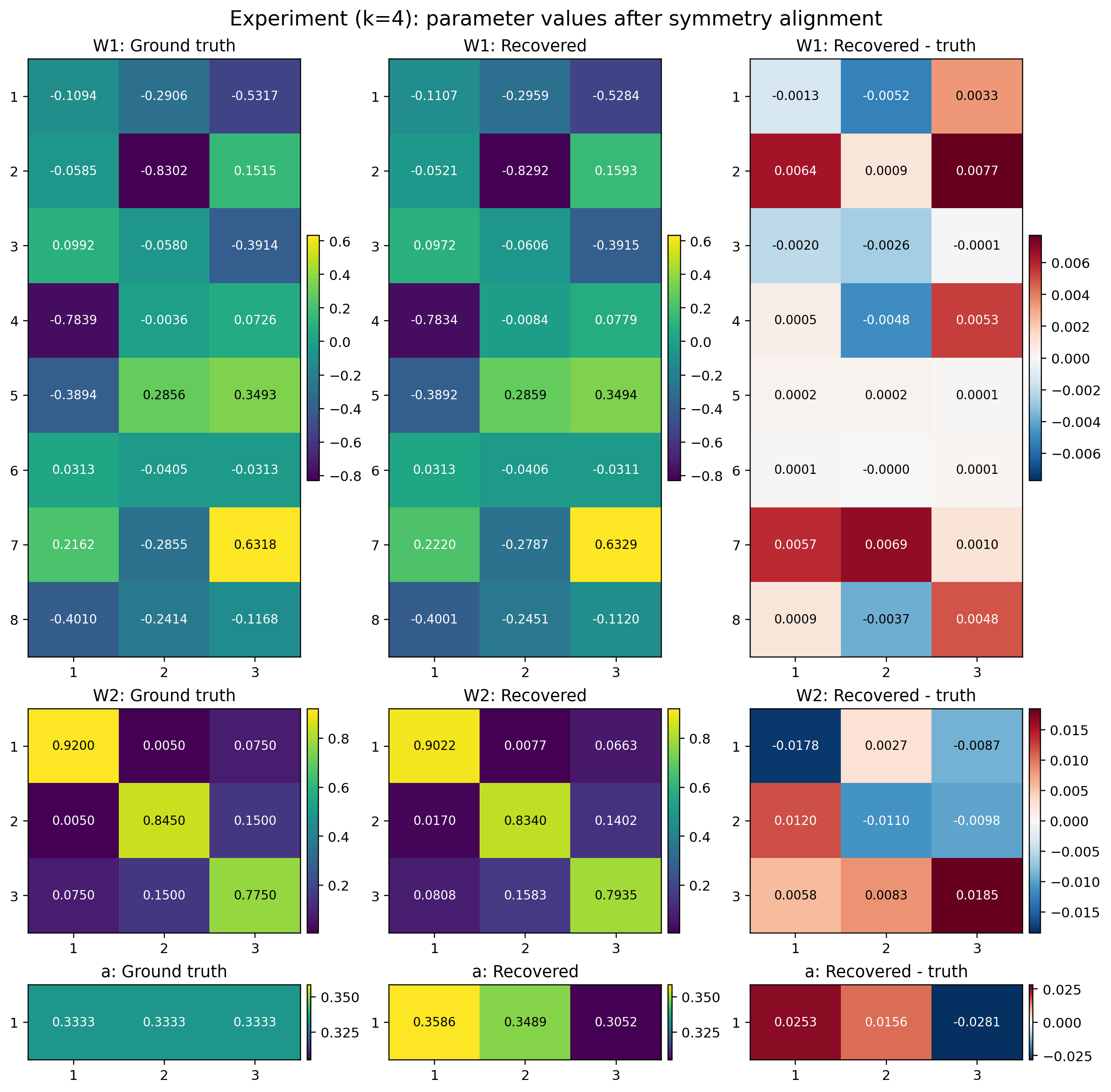}
    \caption{
    Parameter recovery for the $8\!\to\!3\!\to\!3\!\to\!1$ network
    with fourth-power activation ($k=4$). 
    }
    \label{fig:k4_weight_recovery}
\end{figure}

\begin{figure}[t]
    \centering
    \includegraphics[width=\linewidth]{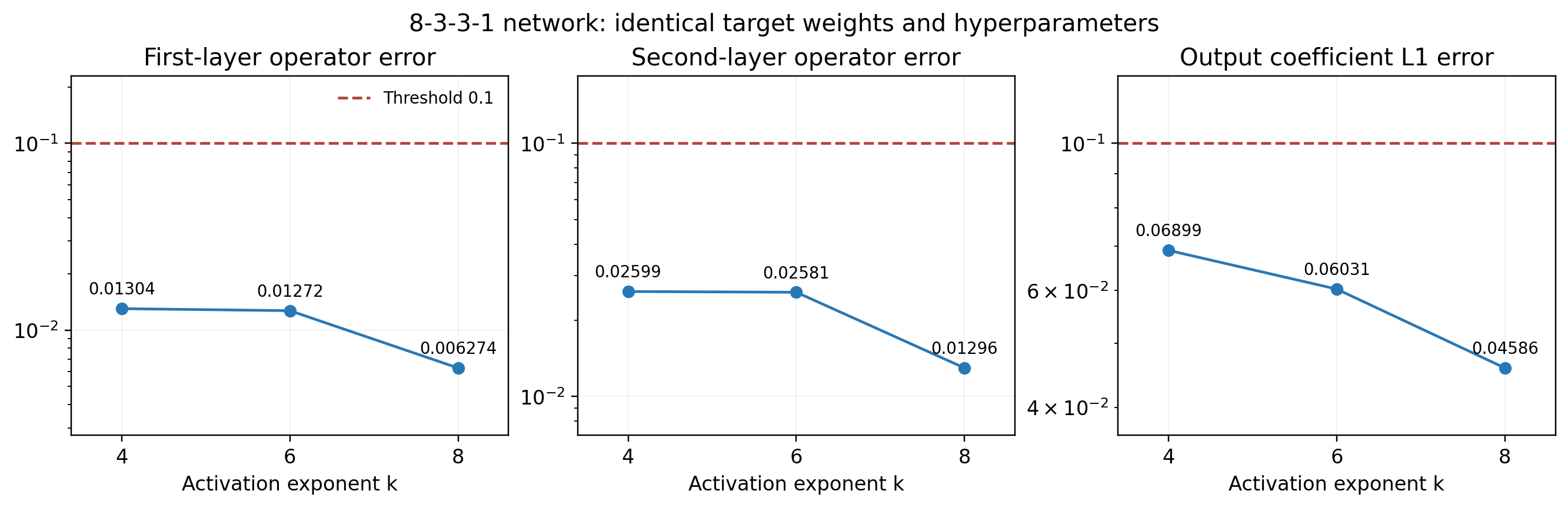}
    \caption{
    Recovery errors comparison for three activation degrees. 
    }
    \label{fig:activation_errors}
\end{figure}

The first hidden layer is recovered by column-space estimation followed by ASPIRE, using empirical position and gradient moments constructed from independent Hit-and-Run chains. The treatment of the second hidden layer and output layer differs slightly from the procedure analyzed in the main text: we use multiple Hessian evaluations for the second hidden layer and ordinary least squares for the output weights. Thus, the experiment is primarily intended to validate the core ASPIRE mechanism for recovering hidden directions.

Before evaluation, we align hidden-unit permutations across adjacent layers and resolve first-layer column signs.
Figure~\ref{fig:k4_weight_recovery} visualizes the recovered parameters for $k=4$, while Figure~\ref{fig:activation_errors} reports the recovery errors for all three activation degrees.
The results show accurate recovery across all tested settings, providing a numerical validation of the core ASPIRE procedure.

\end{document}